\newtheoremstyle{mythmstyle}
      {6pt}   %
      {6pt}   
      {}            %
      {}            %
      {\bfseries}   %
      {. }          %
      {2.5pt}       %
      {\thmname{#1}\thmnumber{ #2}\thmnote{ \normalfont (#3)}}   %
\theoremstyle{mythmstyle}
\newtheorem{theorem}{Theorem}[section]\numberwithin{equation}{section}
\newtheorem{claim}[theorem]{Claim}
\newtheorem{definition}[theorem]{Definition}
\newtheorem{fact}[theorem]{Fact}
\newtheorem{lemma}[theorem]{Lemma}
\newtheorem{proposition}[theorem]{Proposition}
\newtheorem{prop}[theorem]{Proposition}
\newtheorem*{remark}{Remark}
\newcommand{\AlgorithmName}[1]{\label{alg:#1}}
\newcommand{\AppendixName}[1]{\label{app:#1}}
\newcommand{\ClaimName}[1]{\label{clm:#1}}
\newcommand{\DefinitionName}[1]{\label{def:#1}}
\newcommand{\EquationName}[1]{\label{eq:#1}\text{}}
\newcommand{\FactName}[1]{\label{fact:#1}}
\newcommand{\LemmaName}[1]{\label{lem:#1}}
\newcommand{\PropositionName}[1]{\label{prop:#1}}
\newcommand{\SectionName}[1]{\label{sec:#1}}
\newcommand{\SubsectionName}[1]{\label{sub:#1}}
\newcommand{\TheoremName}[1]{\label{thm:#1}}
\newcommand{\Algorithm}[1]{Algorithm~\ref{alg:#1}}
\newcommand{\Appendix}[1]{Appendix~\ref{app:#1}}
\newcommand{\Claim}[1]{Claim~\ref{clm:#1}}
\newcommand{\Definition}[1]{Definition~\ref{def:#1}}
\newcommand{\Equation}[1]{\eqref{eq:#1}}
\newcommand{\Fact}[1]{Fact~\ref{fact:#1}}
\newcommand{\Lemma}[1]{Lemma~\ref{lem:#1}}
\newcommand{\Proposition}[1]{Proposition~\ref{prop:#1}}
\newcommand{\Section}[1]{Section~\ref{sec:#1}}
\newcommand{\Subsection}[1]{Subsection~\ref{sub:#1}}
\newcommand{\Theorem}[1]{Theorem~\ref{thm:#1}}
\newenvironment{proofof}[1]{\begin{proof}[\rm\textit{Proof} \,(of #1)]}{\end{proof}}
\newenvironment{proofsketch}{\begin{proof}[\rm\textit{Proof sketch}]}{\end{proof}}
\newcommand{\doob}{\accentset{*}{\Delta}}
\newcommand{\AnytimeValue}{\operatorname{AnytimeNormRegret}}
\newcommand{\bN}{\mathbb{N}}        
\newcommand{\bR}{\mathbb{R}}
\newcommand{\bZ}{\mathbb{Z}}
\newcommand{\cA}{\mathcal{A}}
\newcommand{\cB}{\mathcal{B}}
\newcommand{\cF}{\mathcal{F}}
\newcommand{\cG}{\mathcal{G}}
\newcommand{\inner}[2]{\langle\, #1 ,\, #2 \,\rangle}
\newcommand{\abs}[1]{\lvert #1 \rvert}
\newcommand{\Abs}[1]{\left\lvert #1 \right\rvert}
\newcommand{\smallfrac}[2]{{\textstyle \frac{#1}{#2}}}
\newcommand{\argmax}{\operatornamewithlimits{arg\,max}}
\newcommand{\eps}{\epsilon}
\newcommand{\set}[1]{\left \{ #1 \right \}}                     
\newcommand{\setst}[2]{\left\{\; #1 \,:\, #2 \;\right\}}        
\newcommand{\smallsetst}[2]{\{\; #1 \,:\, #2 \;\}}        
\newcommand{\card}[1]{\abs{#1}}
\newcommand{\prob}[1]{\operatorname{Pr}\left[\,#1\,\right]}               
\newcommand{\probg}[2]{\operatorname{Pr}\left[\,#1 \:\mid\: #2\,\right]}  
\newcommand{\expect}[1]{\operatorname{E}\left[\,#1\,\right]}              
\newcommand{\smallexpect}[1]{\operatorname{E}[\,#1\,]}              
\newcommand{\expectg}[2]{\operatorname{E}\left[\,#1 \:\mid\: #2\,\right]}  
\newcommand{\Var}[1]{\operatorname{Var}\left[\,#1\,\right]}
\renewcommand{\th}{\ifmmode{^{\textrm{th}}}\else{\textsuperscript{th}\ }\fi}
\algnewcommand{\MyState}[1]{\State
\parbox[t]{\dimexpr\linewidth-\ALG@thistlm}{\hangindent=0pt\strut\hangafter=1#1\strut}}
\newcommand{\dd}{\,\mathrm{d}}
\DeclareMathOperator{\erf}{erf}
\DeclareMathOperator{\erfi}{erfi}
\DeclareMathOperator{\Reg}{\mathrm{Regret}}
\DeclareMathOperator{\Ber}{Ber}
\newcommand{\Regret}[1]{\mathrm{Regret}(#1)}
\newcommand{\ContReg}[1]{\mathrm{ContRegret}(#1)}
\newcommand{\PseudoContReg}[1]{{\mathrm{ContRegret}}(#1)}
\algnewcommand{\LineComment}[1]{\State \(\triangleright\) #1}
\newcommand{\diffwrt}[1]{\frac{\mathrm{d}}{\mathrm{d}#1}}
\newcommand{\ind}{\mathbf{1}}
\renewcommand{\paragraph}{%
  \@startsection{paragraph}{4}%
  {\z@}{.7ex \@plus 1ex \@minus .2ex}{-1em}%
  {\normalfont\normalsize\bfseries}%
}
\newenvironment{smallbmatrix}                          
{\left[\begin{smallmatrix}}
{\end{smallmatrix}\right]}
\begin{document}

\title{Optimal anytime regret with two experts}
\author{
Nicholas J. A. Harvey
    \thanks{Email: \texttt{nickhar@cs.ubc.ca}. University of British Columbia, Department of Computer Science.} \and
Christopher Liaw
    \thanks{Email: \texttt{cvliaw@cs.ubc.ca}. University of British Columbia, Department of Computer Science.} \and
Edwin Perkins
    \thanks{Email: \texttt{perkins@math.ubc.ca}. University of British Columbia, Department of Mathematics.} \and
Sikander Randhawa
    \thanks{Email: \texttt{srand@cs.ubc.ca}. University of British Columbia, Department of Computer Science.}
}
\date{}

\maketitle
\thispagestyle{empty}
\begin{abstract}
	We consider the classical problem of prediction with expert advice.
	In the fixed-time setting, where the time horizon is known in advance, algorithms that achieve the optimal regret are known when there are two, three, or four experts
	or when the number of experts is large.
	Much less is known about the problem in the anytime setting, where the time horizon is \emph{not} known in advance.
	No minimax optimal algorithm was previously known in the anytime setting, regardless of the number of experts.
	Even for the case of two experts, Luo and Schapire have left open the problem of determining the optimal algorithm.
	
	We design the first minimax optimal algorithm for minimizing regret in the anytime setting.
	We consider the case of two experts, and prove that the optimal regret is $\gamma \sqrt{t} / 2$ at all time steps $t$, where $\gamma$ is a natural constant that arose 35 years ago in studying fundamental properties of Brownian motion.
	The algorithm is designed by considering a continuous analogue of the regret problem, which is solved using ideas from stochastic calculus.
\end{abstract}

\newpage
\thispagestyle{empty}
\tableofcontents

\newpage \pagestyle{plain}\setcounter{page}{1}

\section{Introduction}
\SectionName{intro}
We study the
problem of prediction with expert advice, whose origin can be traced back to the 1950s \cite{Hannan57}.
The problem is a sequential game between an adversary and an algorithm as follows.
There are $n$ actions, which are called ``experts''.
At each time step, the algorithm computes a distribution over the experts, then randomly chooses an expert according to that distribution; concurrently, the adversary chooses a cost in $[0,1]$ for each expert, with knowledge of the algorithm's distribution but not its random choice.
The cost of each expert is then revealed to the algorithm, and the algorithm incurs the cost that its chosen expert incurred.
The goal is to design an algorithm whose expected \emph{regret} is small.
That is, the goal is to minimize the difference between the algorithm's expected total cost and the total cost of the best expert.
This problem and its variants
have been a key component in numerous results;
we refer the reader to \cite{AHK12}.

The most well-known algorithm for the experts problem is the celebrated multiplicative weights update algorithm (MWU) \cite{LW94, Vovk90}.
In the fixed-time setting (where a time horizon $T$ is known in advance), MWU suffers a regret of $\sqrt{(T/2) \ln n}$ at time $T$, where $n$ is the number of experts \cite{CFHHSW97, Cesa99}.
This bound on the regret of MWU is known to be tight for any even $n$ \cite{GPS17}.
It is also known that $\sqrt{(T/2) \ln n}$ is asymptotically optimal for large $n$ and $T$.
(A precise statement may be found in the references
\cite[Corollary 3.2.2]{CFHHSW97}
\cite[Theorem 3.7]{CBL}.)
Interestingly, MWU is \emph{not} optimal for small values of $n$.
For $n = 2$, Cover~\cite{Cover66}
observed decades earlier that a natural dynamic programming formulation of the problem leads to a simple analysis showing that the minimax optimal regret is $\sqrt{T / 2\pi}$,
asymptotically for large $T$ (a proof of this can also be found in \cite[\S 3]{CFHHSW97}, \cite[Theorem 18.5.5]{KP17}).

For some applications, the time horizon $T$ is not known in advance;
examples include any sort of online tasks (e.g., online learning),
or tasks requiring convergence over time (e.g., convergence to equilibria).
An alternative model, more suited to those scenarios, is the \emph{anytime setting}\footnote{Other authors have referred to this setting as an ``unknown time horizon'' or ``bounds that hold uniformly over time''.}, in which algorithms are not given $T$ but must bound the regret \emph{for all} $T$.
Yet another model is to assume that $T$ is random with a known distribution \citep{LuoSchapire14}.
For example, the \emph{geometric horizon setting} of Gravin, Peres, and Sivan~\cite{gravin2016towards} assumes that $T$ is a geometric random variable.
In this setting, they gave the optimal algorithm for two and three experts.
Moreover, they propose a conjecture on the relationship between the fixed-time and the geometric horizon settings that could lead to optimal bounds for all $n$.

Our focus is the anytime setting.
One can convert algorithms for the fixed-time setting to the anytime setting by the well-known ``doubling trick'' \cite[\S 4.6]{CFHHSW97}.
This involves restarting the fixed-time horizon algorithm every power-of-two steps with new parameters.
If the fixed-time algorithm has regret $O(T^c)$ at time $T$ for some $c \in (0,1)$ then the doubling trick yields an algorithm with regret $O(t^c)$ at time $t$ for every $t \geq 1$.
On the one hand, this is a conceptually simple and generic reduction.
On the other hand, restarting the algorithm and discarding its state is clearly wasteful and probably not very practical.

Instead of using the doubling trick, one can use variants of MWU with a dynamic step size;
see, e.g., \cite[\S 2.3]{CBL}, \cite[Theorem 1]{Nesterov09}, \cite[\S 2.5]{BubeckNotes}.
This is a much more elegant and practical approach
and is even simpler to implement.
However, the analysis is more subtle than for MWU with a fixed step size.
It is known that, with an appropriate choice of step sizes, MWU can guarantee\footnote{It can be shown, by modifying arguments of \cite{GPS17},
that this is the optimal anytime analysis for MWU with step sizes $c/\sqrt{t}$.}
a regret of $\sqrt{t \ln n}$ for all $t \geq 1$ and all $n \geq 2$ (see \cite[Theorem~2.4]{BubeckNotes} or \cite[Proposition 2.1]{Ger11}).
However, it is unknown whether $\sqrt{t \ln n}$ is the minimax optimal anytime regret, for any value of $n$.
Indeed, Luo and Schapire \cite{LuoSchapire14} have also stated that ``finding the minimax solution to this setting seems to be quite challenging,
	even for the simplest case of $n = 2$''.

\paragraph{Results and techniques.}
This work considers the anytime setting with $n = 2$ experts.
We show that the optimal regret is $\frac{\gamma}{2} \sqrt{t}$, where $\gamma \approx 1.30693$ is a fundamental constant that arises in the study of Brownian motion~\cite{Perkins}.
(Note that $\gamma/2 \approx 0.653 < 0.833 \approx \sqrt{\ln 2}$.)
This also answers a question that has been left open by Luo and Schapire \cite{LuoSchapire14}.
It is not a priori obvious why this fundamental constant should play a role in both Brownian motion and regret.
Nevertheless, some connections are known.
For example, in the fixed-time setting, the optimal algorithms for $n \in \set{2,3,4}$  (see \cite{gravin2016towards}) and the optimal lower bound for $n \rightarrow \infty$ all involve properties of random walks.
Since Brownian motion is a continuous limit of random walks, a connection between anytime regret and Brownian motion is plausible.

Our techniques to analyze the optimal anytime regret are a significant departure from previous work on regret minimization.
First, we define a continuous-time analogue of the problem which
expresses the regret as a stochastic integral.
This allows us to utilize tools from stochastic calculus to arrive at a potential function whose derivative gives the optimal \emph{continuous}-time algorithm.
Remarkably, the optimal \emph{discrete}-time algorithm is the \emph{discrete} derivative of the same potential function.
We note that Freund~\cite{Freund09} has used stochastic differential equations for a continuous-time formulation of the experts problem, although he did not discuss the discrete-time problem.

The potential function that we derive involves a ``confluent hypergeometric function''.
Such functions often arise in solutions to differential equations, and are useful in discrete mathematics
\cite[\S 5.5]{GKP}.

\paragraph{Application.}
An interesting application of our results is to a problem in probability theory that does not involve regret at all.
Let $( X_t )_{t \geq 0}$ be a standard random walk.
Then $\expect{\abs{X_\tau}} \leq \gamma \expect{\sqrt{\tau}}$
for every stopping time $\tau$;
moreover, the constant $\gamma$ cannot be improved.\footnote{
At first glance, the inequality may seem to contradict the Law of the Iterated Logarithm.
However, we remark that if $\tau \coloneqq \inf\{ t > 0 \,:\, |X_t| \geq c\sqrt{t \ln \ln t} \}$ for some $c \in (0, \sqrt{2})$ then $\expect{\sqrt{\tau}} = \infty$ (despite $\tau$ being a.s.~finite) and the inequality is trivial.
}
This result is originally due to Davis~\cite[Eq.~(3.8)]{Davis76},
who proved it first for Brownian motion and later derived the result
for random walks (via the Skorokhod embedding).
We give a new derivation of Davis' result from our results in \Subsection{Application}.

\paragraph{Related work.}
The minimax regret for the experts problem has been well-studied in the fixed-time horizon setting.
As mentioned above, some tight asymptotics of the minimax regret were known decades ago:
for $n=2$, it is $\sqrt{T/2\pi}$ \cite{Cover66},
whereas asymptotically in $n$, it is $\sqrt{T \ln(n)/2}$ \cite{CFHHSW97, Cesa99}.
Recent work, building on results of Gravin et al.~\cite{gravin2016towards}, shows that the minimax regret is $\sqrt{8T/9\pi}$ for $n=3$~\cite{AbbasiBG17} and $\sqrt{\pi T/8}$ for $n=4$ \cite{BEZ20}.
The anytime setting is not as well understood.
In the two-experts setting, Luo and Schapire~\cite{LuoSchapire14} demonstrate that, if the time horizon $T$ is chosen by an adversary and unknown to the algorithm then the algorithm may be forced to incur regret at least $\sqrt{T/\pi}$.
This exceeds the minimax regret of $\sqrt{T/2\pi}$ in the fixed-time setting,
which establishes that the adversary has more power to cause regret in the anytime setting.

Recently, there has been a line of work that makes connections between the experts problem (in the finite-time horizon and geometric-time horizon setting) and PDEs \cite{Andoni,BEZ20,BEZ19,drenska,DK20,KKW19a,KKW19b}.
There is also work connecting regret minimization to option pricing \cite{DeMarzoKM06} and to the Black-Scholes formula \cite{AbernethyFW12}, which is based on Brownian motion and stochastic calculus.
Intuitively, stochastic calculus is a crucial tool to optimally hedge against future costs,
which we exploit too.

Our algorithm chooses the distribution on the experts using the discrete derivative of a potential function.
This idea has also been used in the AdaNormalHedge algorithm~\cite{AdaNormalHedge}, although their potential function was not derived in continuous time.

Our work crucially uses
stopping times for Brownian motion hitting a time-dependent boundary. Such techniques have also been used for non-adversarial bandits to approximate Gittins indices (see, e.g.,~\cite{BL02}).

\section{Discussion of results and techniques}
\SectionName{prelim}

\subsection{Formal problem statement}
\SubsectionName{FormalProbStatement}

We will formulate the problem in the style of online convex optimization~\cite{SS}, in which at each time step a deterministic algorithm picks a distribution on experts.
An alternative formulation would be to have a randomized algorithm pick a single expert; see, e.g., \cite[Chapter 4]{CBL}.
Using the randomized formulation in the anytime setting has certain subtleties which we discuss in \Subsection{randomization}.

Let $n$ denote the number of experts.
There is a deterministic algorithm $\cA$,
and a deterministic adversary $\cB$ that knows $\cA$.
For each integer $t \geq 1$, there is a prediction task that is said to occur at time $t$.
In this task, $\cA$ picks a probability distribution $x_t \in [0,1]^n$,
and $\cB$ picks a cost vector $\ell_t \in [0,1]^n$.
The coordinate $\ell_{t,j}$ denotes the cost of the $j\th$ expert at time $t$.

After $x_t$ is chosen the vector $\ell_t$ is revealed, so $x_t$ 
depends on $\ell_1,\ldots,\ell_{t-1}$
(and implicitly $x_1,\ldots,x_{t-1}$).
The vector $\ell_t$ depends on $\cA$ and on $\ell_1,\ldots,\ell_{t-1}$
(and implicitly $x_1,\ldots,x_t$, since $\cA$ is deterministic and known to $\cB$).
The game can end whenever $\cB$ wishes, or continue forever.
Since $\cA$ is deterministic and known to $\cB$, the entire sequence of interactions, including the ending time, can be predetermined by $\cB$.

The cost incurred by the algorithm at time $t$ is the inner product $\inner{x_t}{\ell_t}$.
This may be thought of as the ``expected cost'' of the algorithm, although the algorithm is actually deterministic.
The total expected cost of the algorithm up to time $t$ is
$\sum_{i=1}^t \inner{x_i}{\ell_i}$.
For $j \in [n]$, the total cost of the $j\th$ expert up to time $t$
is $L_{t,j} = \sum_{i=1}^t \ell_{i,j}$.
The regret at time $t$ of algorithm $\cA$ against adversary $\cB$
is the difference between the algorithm's total expected cost and the total cost of the best expert,
i.e.,
$$
\Regret{n,t,\cA,\cB} ~=~ \sum_{i=1}^t \inner{x_i}{\ell_i}
 \:-\: \min_{j \in [n]} L_{t,j}.
$$

\paragraph{Anytime setting.}
This work focuses on the anytime setting.
In this setting, one may view the algorithm as running
forever, with the goal of minimizing, for \emph{all} $t$, the regret normalized by $\sqrt{t}$.
Alternatively, one may view the game as ending at a time chosen by the adversary,
and the algorithm must minimize the regret at that ending time.
(It does not matter whether the adversary chooses the ending time in advance or dynamically, since $\cA$ and $\cB$ are deterministic so all interactions are predetermined.)
These two views are equivalent because the algorithm cannot distinguish between them.

Formally, the goal is to design an algorithm which achieves the infimum in the following expression defining the minimax anytime regret.
\begin{equation}\EquationName{AnytimeDef}
\AnytimeValue(n) ~\coloneqq~
\inf_{\cA} \sup_{\cB} \sup_{t \geq 1} \frac{\Regret{n,t,\cA,\cB}}{\sqrt{t}}.
\end{equation}
This precise value was previously unknown even for $n=2$.
The best known bounds at present are
\begin{equation}
\label{eq:KnownTwoExpertBounds}
0.564 ~\approx~ \sqrt{1/\pi} ~\leq~ \AnytimeValue(2) ~\leq~ \sqrt{\ln 2} ~\approx~ 0.833.
\end{equation}
The lower bound, due to \cite{LuoSchapire14}, demonstrates a gap between the anytime setting and the fixed-time setting, where the optimal normalized regret is $\sqrt{1/2\pi}$ \cite{Cover66}.
Our main result is that $\AnytimeValue(2)=\gamma/2 \approx 0.653$ and consequently 
neither inequality in \Equation{KnownTwoExpertBounds} is tight.

As mentioned above, MWU with a dynamic step size shows that
$\AnytimeValue(n) \leq \sqrt{\ln n}$ for all $n \geq 2$ \cite[\S 2.5]{BubeckNotes}.
The lower bound 
$\liminf_{n \rightarrow \infty} \AnytimeValue(n)/\sqrt{\ln n} \geq \sqrt{1/2}$
follows from the bound in the fixed-time setting \cite{CFHHSW97}.
Thus, the upper bound is loose by at most a factor $\sqrt{2}$.

\subsubsection{Randomized formulations}
\SubsectionName{randomization}

Several alternative formulations of the problem arise if $\cA$ selects a single expert $I_t \in [n]$ randomly at each time $t$, and the adversary chooses an ending time $\tau$.
We mention three possibilities, differing in the power of the adversary $\cB$.
\begin{itemize}
    \item The most powerful adversary allows $\ell_t$ to depend on $I_1,\ldots,I_t$.
    In this case it is easy to design $\cB$ with $\Regret{n,t,\cA,\cB} = \Omega(t)$.

    \item An adversary of intermediate power allows the cost vector $\ell_t$ and the event $\tau=t$ to be determined by $I_1,\ldots,I_{t-1}$. This is analogous to the ``non-oblivious opponent'' of \cite[\S 4.1]{CBL}.
    Interestingly, one can design such an adversary $\cB$ for which $\expect{ \frac{\Regret{n,\tau,\cA,\cB}}{\sqrt{\tau \log \log \tau}} } = \Omega(1)$.
    The surprising aspect is the $\sqrt{\log \log \tau}$ in the denominator, which arises due to the law of the iterated logarithm.
    We prove this result in \Appendix{oblivious}.

    \item The weakest adversary requires that $\ell_t$ and $\tau$ depend only on $\cA$ and not its random choices $I_1,I_2,\ldots$.
    This is analogous to the ``oblivious opponent'' of \cite[\S 4.1]{CBL}.
    The expected regret in this model is identical to the regret in the deterministic model described at the start of \Subsection{FormalProbStatement}.
\end{itemize}

We favour this third model because it is consistent with the online convex optimization literature, and moreover its minimax regret has the ideal asymptotics $\Theta(\sqrt{t})$.
It is intriguing that in the anytime setting, the non-oblivious opponent has more power than the oblivious opponent.
In contrast, the two adversaries have the same power in the fixed time setting \cite[\S 4.1]{CBL}.

\subsection{Statement of results}
\SubsectionName{StatementOfResults}

To state our results, we require two definitions.
\begin{equation}\EquationName{Erfi}
\begin{split}
\erfi(x) &~=~ \frac{2}{\sqrt{\pi}} \int_0^x e^{z^2} \,\mathrm{d}z \\
M_0(x)   &~=~ e^x - \sqrt{\pi x} \erfi(\sqrt{x})
\end{split}
\end{equation}
The first is the imaginary error function, a well-known special function that relates to the Gaussian error function.
The second is an example of a confluent hypergeometric function,
a very broad class of special functions that includes, e.g., Bessel functions and Laguerre polynomials.
(See \Subsection{hyper} for formal definitions.)
Our analysis makes use of a few elementary properties of these functions.
A key constant used in this paper is $\gamma$, which is defined to be the smallest\footnote{
In fact, $\gamma$ is the \emph{unique} positive root.
See \Fact{M0_unique_root}.
}
positive root\footnote{
The \emph{roots} of certain confluent hypergeometric functions have appeared in studying some natural phenomena of Brownian motion; for some examples see \cite{Breiman,Davis76,GreenwoodPerkins,Perkins}.} of $M_0(x^2/2)$, i.e.,
\begin{equation}
\EquationName{GammaDef}
\gamma ~:=~ \min \setst{ x>0 }{ M_0(x^2/2)=0 } ~\approx~ 1.3069...
\end{equation}

\begin{theorem}[Main result]
\TheoremName{main}
In the anytime setting with two experts,
the minimax optimal
normalized regret
(over deterministic algorithms $\cA$ and adversaries $\cB$)
is  
\begin{equation}
\EquationName{mainthm}
\AnytimeValue(2) ~=~
\inf_{\cA} \sup_{\cB} \sup_{t \geq 1} \frac{\Regret{2,t,\cA,\cB}}{\sqrt{t}}
~=~ \frac{\gamma}{2}.
\end{equation}
\end{theorem}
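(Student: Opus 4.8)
The plan is to split \Theorem{main} into a matching upper bound $\AnytimeValue(2) \le \gamma/2$ and lower bound $\AnytimeValue(2) \ge \gamma/2$, after first collapsing the two-expert game to a one-dimensional ``gap game''. Writing $y_i = x_{i,1} - x_{i,2} \in [-1,1]$ for the algorithm's bias toward expert~$1$ and $c_i = \ell_{i,1} - \ell_{i,2} \in [-1,1]$ for the adversary's move, the common part $\ell_{i,1}+\ell_{i,2}$ cancels and one gets $\Regret{2,t,\cA,\cB} = \tfrac12\left(|C_t| + \sum_{i\le t} y_i c_i\right)$, where $C_t = \sum_{i \le t} c_i$; so the task is to control $\tfrac{1}{2\sqrt t}(|C_t| + \sum_{i\le t} y_i c_i)$ for all $t$ simultaneously. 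Because this game is scale-invariant, I would first analyze a continuous-time analogue in which the adversary drives a continuous martingale $B$ whose quadratic variation satisfies $\dd\langle B\rangle_t \le \dd t$, the algorithm chooses $y_t \in [-1,1]$ predictably, and the continuous regret is $\tfrac12(|B_t| + \int_0^t y_s \dd B_s)$.

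Next I would construct the potential function that both drives the algorithm and certifies its regret. Scale-invariance forces the ansatz $\Phi(g,t) = \sqrt t\,\psi(g/\sqrt t)$, and imposing the heat equation $\partial_t\Phi + \tfrac12\partial_{gg}\Phi = 0$ reduces to the Hermite-type ODE $\psi'' - u\psi' + \psi = 0$; its solution that is even and regular at the origin is, up to scaling, $M_0(u^2/2)$, with $M_0(0)=1$ and $M_0$ strictly decreasing, so that the natural candidate is $\Phi(g,t) \propto \sqrt t\,M_0(g^2/(2t))$ in the region $g \le \gamma\sqrt t$, glued to a ``one expert has clearly won'' regime (where $y \equiv \pm 1$) for larger gaps. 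The location of this free boundary and the normalization $\psi(0) = \gamma/2$ are exactly what is pinned down by the defining property of $\gamma$ as the root of $M_0(x^2/2)$ (\Claim{RootOfM}), and together they yield $0 \le \Phi(g,t) \le \tfrac\gamma2\sqrt t$ for all $g \ge 0$, $t \ge 1$. The algorithm of \Algorithm{minimax} is then \emph{defined} to play $y_t$ equal to the (clamped) discrete gap-derivative of this same $\Phi$.

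For the \textbf{upper bound} I would prove the invariant $\Regret{2,t,\cA,\cB} \le \Phi(g_t, t)$ by induction on $t$, with $g_t = |C_t|$; the base case is trivial, and in the inductive step an adversary move perturbs $\Regret$ by a term of the form $\tfrac12 c_t(\pm 1 + y_t)$ and perturbs $\Phi$ by a mixed time-and-space difference, so — because $y_t$ was chosen as a discrete derivative of $\Phi$ — the net change is bounded by the \emph{discrete} analogue of $\partial_t\Phi + \tfrac12\partial_{gg}\Phi$, which is nonpositive by the ODE. Combining the invariant with $\Phi(g,t) \le \tfrac\gamma2\sqrt t$ gives $\AnytimeValue(2) \le \gamma/2$. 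I expect this discretization step to be the main obstacle on this side: the identity $\partial_t\Phi + \tfrac12\partial_{gg}\Phi = 0$ holds only up to $O(1/t)$ corrections in discrete time, and to show these corrections never actually violate the invariant (rather than merely being lower-order) one must exploit fine structural properties of $\Phi$ coming from $M_0$ — sign and monotonicity of its low-order derivatives, convexity of the kink at the free boundary $g = \gamma\sqrt t$, and the precise behavior near $g=0$.

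For the \textbf{lower bound} I would fix an arbitrary deterministic algorithm $\cA$ and exhibit a randomized adversary forcing normalized regret at least $\tfrac\gamma2 - o(1)$ at some time; since a distribution over adversaries can be derandomized, this yields the claim. The adversary plays a random walk whose step distribution (and possibly a random stopping) is tuned so that, after Brownian rescaling, it converges by Donsker's invariance principle to the optimal continuous adversary, namely Brownian motion run until the rescaled gap first reaches the free boundary. Against Brownian motion, $\tfrac12(|B_t| + \int_0^t y_s \dd B_s) - \Phi(|B_t|, t)$ is a local martingale for \emph{every} choice of $y$, so optional stopping at that boundary forces the regret up to $\tfrac\gamma2\sqrt t$ — this is precisely the continuous-time manifestation of the tightness of Davis's inequality $\bE|B_\tau| \le \gamma\,\bE\sqrt\tau$ \cite{Davis76} and the slow-point phenomenon that defines $\gamma$. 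The two delicate points here are (i) proving the continuous value is \emph{exactly} $\gamma/2$, i.e.\ that the same hypergeometric constant reappears on the lower-bound side and matches the upper bound, and (ii) controlling the error in the invariance-principle transfer carefully enough that no constant is lost in passing from the random walk back to Brownian motion.
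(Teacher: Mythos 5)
Your upper-bound half is essentially the paper's own argument: the scale-invariant potential $\sqrt{t}\,M_0(g^2/2t)$ (suitably normalized), glued to a constant regime at the free boundary $g=\gamma\sqrt{t}$, the algorithm defined as its discrete gap-derivative, and an induction whose engine is a discrete backward-heat-type inequality. One caution on orientation: what is actually needed, and what the paper proves as its main technical lemma (\Lemma{ub_discrete_bhe}), is $\tfrac12 R_{gg}(t,g)+R_t(t,g)\ge 0$, i.e.\ the discrete heat operator applied to the potential is \emph{nonnegative}; your sketch asserts nonpositivity, so check the sign when you set up the induction. Proving that inequality (via identities and estimates for $M_0,M_1,M_2$) is the substantial work you have deferred, but the route is the same as the paper's.

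The lower bound, however, has a genuine gap at exactly the step you pass over with ``optional stopping at that boundary.'' The stopped local martingale is unbounded and the first-passage time over the critical curve lacks the integrability that any optional stopping theorem requires: for the reflected gap process, Breiman's theorem (\Theorem{Breiman}) gives $\prob{\tau(c)>u}\sim K u^{a}$, where $c$ is the positive root of $x\mapsto M(a,1/2,x^2/2)$, and at $c=\gamma$ one has $a=-1/2$, so $\expect{\sqrt{\tau(\gamma)}}=\infty$; in your continuous formulation the situation is even worse, since by the law of the iterated logarithm $|B_t|$ exceeds $\gamma\sqrt{t}$ at arbitrarily small times, making the first-passage time from the origin degenerate. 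The criticality of $\gamma$ is precisely the statement that stopping at the free boundary itself is borderline-infeasible, so the constant does not come out of a one-line optional stopping argument. The paper's resolution is twofold: a strengthened optional stopping theorem needing only bounded increments and $\expect{\sqrt{\tau}}<\infty$ (\Theorem{OST}), and a perturbation of the boundary to $c_\eps<\gamma$ chosen as the root of $M(a_\eps,1/2,\cdot)$ for some $a_\eps<-1/2$ (\Claim{RootOfM}), for which $\expect{\sqrt{\tau(c_\eps)}}<\infty$; one then lets $\eps\to 0$. Note also that the paper runs this entirely in discrete time (random-walk adversary, discrete Tanaka decomposition $\Reg(t)\ge Z_t+g_t/2$), which sidesteps your proposed Donsker transfer: converting a Brownian lower bound into one against an arbitrary discrete algorithm is itself a nontrivial coupling problem that your sketch does not address and that the paper never has to face.
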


The proof of this theorem has two parts:
an upper bound, in \Section{ub}, which exhibits an optimal algorithm, and
a lower bound, in \Section{lb}, which exhibits an optimal randomized adversary.
The algorithm is very short, and it appears below in \Algorithm{minimax}.

One might imagine that some form of duality theory is involved in our matching upper and lower bounds.
Indeed, if the costs are in $\{0,1\}$ one may write $\AnytimeValue(2)$ as the value of an infinite-dimensional linear program, although we do not explicitly adopt this viewpoint.
Instead, $\gamma$ arises in our lower bound as the maximizer in \Equation{SupStoppingTime},
whereas $\gamma$ arises in our upper bound as the minimizer in \Equation{MinimizeBoundaryProblem}.
We are not aware of any direct relationship between those two equations.
Nevertheless, our algorithm and our lower bound can be seen as constructing feasible primal and dual solutions, respectively, to the aforementioned linear program.

\paragraph{Comparison to existing techniques.} 
A duality viewpoint is adopted by Gravin et al.~\cite{gravin2016towards} in the fixed-time and geometric horizon settings using von Neumann's minimax theorem (see also \cite[\S 18.5]{KP17}). Their dual problem is characterized by properties of random walks, which allows one to determine the optimal dual value directly without reference to the primal.
It is conceivable that some form of von Neumann's minimax theorem can be applied for the anytime setting, although it is unclear due to the appearance of the supremum and $1/\sqrt{t}$ in \eqref{eq:mainthm}.
Our results of \Section{lb} may be viewed as using random walks to construct feasible dual solutions of value $\gamma/2-\eps ~\:\forall\eps > 0$, but it is not obvious that these solutions converge to the optimal dual value.
The only way we know of to prove optimality of those dual solutions is to construct an algorithm whose regret is $\gamma \sqrt{t}/2$. This is the more challenging part of this paper, which we discuss in Sections~\ref{sec:ub} and \ref{sec:cts_ub}.

\paragraph{A conjecture for $n$ experts.}
We suspect that the roots of confluent hypergeometric functions may also have a key role to play in designing optimal algorithms when there are $n > 2$ experts.
For $r \in (0, 1]$, we define $\alpha(r)$ as the smallest positive root of the function $x \mapsto M(-r, 1/2, x^2/2)$ defined in \Subsection{hyper}.
With this definition, we have that $\gamma = \alpha(1/2)$ where $\gamma$ is as defined in \Equation{GammaDef}.
A plausible conjecture is that $\lim_{n \to \infty} \frac{\AnytimeValue(n)}{\alpha(1/n)} = 1$.
It can be shown that $\lim_{n \to \infty} \frac{\alpha(1/n)}{\sqrt{\ln n}} = 1/\sqrt{2}$.
Hence, our conjecture roughly states that, for large $n$, the optimal regret for $n$ experts
in the anytime setting is $\sqrt{t\ln(n) / 2}$.
This last bound matches the guarantee in the fixed time setting.

\begin{remark}
    Our lower bound can be strengthened to show that, for any algorithm $\cA$,
    \[
        \sup_{\cB} \:
        \limsup_{t \geq 1}
        \frac{\Regret{2,t,\cA,\cB}}{\sqrt{t}}
        ~\geq~ \frac{\gamma}{2}.
    \]
    The key aspect is here the $\limsup$ rather than a $\sup$.
    In particular, even if $\cA$ is granted a ``warm-up'' period during which its regret is ignored, an adversary can still force it to incur large regret afterwards.
    This strengthened result is proved in \Subsection{regretIO}.
\end{remark}

The algorithm's description and analysis relies heavily on a function $R \colon \bR_{\geq 0} \times \bR \rightarrow \bR$ defined by
\begin{equation}
\EquationName{RDef}
    R(t,g) =
    \begin{cases}
        0 & \text{($t = 0$)} \\
        \frac{g}{2} + \kappa \sqrt{t} \cdot M_0\left(\sfrac{g^2}{2t} \right) & \text{($t > 0$ and $g \leq \gamma \sqrt{t}$)} \\
        \frac{\gamma \sqrt{t}}{2} & \text{($t > 0$ and $g \geq \gamma\sqrt{t}$)}
    \end{cases}
    \qquad\text{where}\qquad
    \kappa = \frac{1}{\sqrt{2\pi} \erfi(\gamma / \sqrt{2})}
\end{equation}
and $M_0$ as defined in \eqref{eq:Erfi}.
The function $R$ may seem mysterious at first, but in fact arises naturally from the solution to a stochastic calculus problem\footnote{
As we will see below, the regret against a random adversary is a stochastic integral.
Viewing this problem in continuous time, then designing a function to minimize the integral leads to a PDE which $R$ solves.
}
in \Section{cts_ub}.
In our usage of this function, $t$ will correspond to the time and $g$ will correspond to the \emph{gap} between (i.e., absolute difference of) the total loss for the two experts.
One may verify that $R$ is continuous on $\bR_{>0} \times \bR$ because the second and third cases agree on the curve
$ \setst{ (t,\gamma \sqrt{t}) }{ t>0 } $ since $\gamma$ satisfies $M_0(\gamma^2/2) = 0$.
We next define a function $p$ to be
\begin{equation}
\EquationName{DiscreteDerivative}
    p(t,g) = \smallfrac{1}{2} \big( R(t, g+1) - R(t, g-1) \big).
\end{equation}
This is the discrete derivative of $R$ at time $t$ and gap $g$.
The algorithm constructs its distribution $x_t$ so that $p(t,g)$ is the probability mass assigned to the expert with the greatest accumulated loss so far at time $t$.
It is shown later that $p(t,g) \in [0,1/2]$ whenever $t \geq 1$ and $g \geq 0$ so that $p$ is indeed a probability and the algorithm is well defined.
\label{IsPaProbability}
We remark that $p(t,0) = 1/2$ (\Lemma{ptg_well_defined}) for all $t \geq 1$ so the algorithm places equal mass on both experts when their cumulative losses are equal.

\begin{algorithm}
\caption{The algorithm achieving the minimax anytime regret for two experts.
At each time step, each expert incurs a cost in the interval $[0,1]$, so the cost vector $\ell_t$ lies in $[0,1]^2$.}
\AlgorithmName{minimax}
\begin{algorithmic}[1]
\State Initialize $L_0 \gets \begin{smallbmatrix} 0 \\0 \end{smallbmatrix}$.
\For{$t = 1, 2, \ldots$}
\State If necessary, swap indices so that $L_{t-1,1} \geq L_{t-1,2}$.
\State The current gap is $g_{t-1} \leftarrow L_{t-1,1} - L_{t-1,2}$.
\State Set $x_t \leftarrow \big[\, p(t,g_{t-1}) ,\, 1 \!-\! p(t, g_{t-1}) \,\big]$,
    where $p$ is the function defined by
    \eqref{eq:DiscreteDerivative}.
\LineComment{Observe cost vector $\ell_t$ and incur cost $\inner{x_t}{\ell_t}$.}
\State $L_{t} \gets L_{t-1} + \ell_t$
\EndFor
\end{algorithmic}
\end{algorithm}

\subsection{Techniques}
\SubsectionName{techniques}

\paragraph{Lower Bound.}
A common approach to prove lower bounds in the experts problem is to consider a random adversary.
With 2 experts, this adversary changes the gap by $\pm 1$ at each step.
In the fixed-time setting, the adversary has no control over the time horizon; it is known to both the adversary and the algorithm beforehand. 
The adversary in the anytime setting has the additional power to choose the time horizon, without informing the algorithm, and therefore it is perhaps unsurprising that an adversary using a fixed time horizon does not provide the optimal anytime lower bound.

To obtain the optimal lower bound, we allow the adversary to select a \emph{random time} $\tau$ at which the game ends. 
In general, a random adversary in the anytime setting could generate an infinitely long sequence of random bits as its costs, then select the ending time $\tau$ as a function of the entire sequence. 
We will consider a weaker random adversary in which $\tau$ is not a function of the entire sequence, but instead the event $\tau=t$ is known at time $t$;
that is, $\tau$ is a stopping time \cite[\S 9.1]{Klenke}.
There is an adversary of this weaker form that is nonetheless optimal, as we discuss next.

First, let us view the regret as a discrete stochastic process.
To analyze this stochastic process, we use an elementary identity known as Tanaka's Formula for random walks, which allows us to write the regret process as
$\Reg(t) = Z_{t} + g_{t}/2$ where $Z_t$ is a martingale with $Z_0 = 0$ and $g_t$ is the current gap at time $t$.
When $\tau$ is a stopping time satisfying certain hypotheses, the \emph{Optional Stopping Theorem} (OST) yields $\expect{Z_{\tau}} = Z_0 = 0$.
We will restrict our attention to adversaries whose stopping times satisfy the hypotheses of the OST.
(The stopping times in the fixed-time and geometric horizon settings trivially satisfy the hypotheses.)
It is not a priori obvious that there is an optimal adversary in the anytime setting satisfying this restriction.

Concretely, we will consider adversaries that select $\tau$ to be the first time that the gap $g_t$ exceeds\footnote{Note that $\tau = \min \setst{t \geq 0}{g_t \geq f(t)}$ is a stopping time.} some time dependent boundary $f(t)$.
This approach follows an established doctrine that connects optimal stopping and stochastic control problems to free-boundary problems \cite{Chernoff,Peskir}.
The conclusion of the OST is then that $\expect{\Reg(\tau)} = \expect{g_{\tau}}/2 \geq \expect{f(\tau)}/2$.
However, the hypotheses of the OST must be respected, otherwise the adversary could just select the boundary $f(t)$ to be arbitrarily large, and the resulting regret lower bound would violate known upper bounds. 
  
To understand what boundaries $f(t)$ to consider, let us discuss the OST hypotheses.
First, it is not sufficient for the stopping time to be \emph{almost surely finite}.
(Otherwise, one could use a boundary $f(t)=\Theta(\sqrt{t \ln \ln t})$ and the Law of the Iterated Logarithm~\cite{Durrett} to prove lower bounds that contradict the $O(\sqrt{t})$ upper bound of Cover or MWU.)
At this point a lucky guess is required:
we will consider boundaries of the form $f(t) = c \sqrt{t}$, since this is consistent with the known $\Theta(\sqrt{t})$ regret bounds.
It is known \cite{Shepp,Breiman} that choosing $c < 1$ is necessary and sufficient to ensure that $\expect{\tau} < \infty$, which is a sufficient hypothesis for the OST. Unfortunately this only yields a 
regret lower bound of $\sqrt{t}/2$, which is trivial. (The algorithm can easily be forced to have regret $1/2$ at time $t=1$.)
Therefore, the condition $\expect{\tau} < \infty$ is too restrictive.

Fortunately there is a less-widely known hypothesis for the OST that leads to optimal results in our setting.
Concretely,
if $Z_t$ is a martingale with bounded increments (i.e.~$\sup_{t \geq 0} |Z_{t+1} - Z_t| \leq K$ for some $K > 0$) and $\tau$ is a stopping time satisfying $\expect{\sqrt{\tau}} < \infty$, then $\expect{Z_{\tau}} = 0$.
The crucial detail is to bound the \emph{expected square root} of $\tau$.
This result is stated formally in \Theorem{OST}.
It remains to choose as large a boundary as possible such that the associated stopping time of hitting the boundary satisfies $\expect{\sqrt{\tau}} < \infty$.
Using classical results of Breiman \cite{Breiman} and Greenwood and Perkins~\cite{GreenwoodPerkins}, we show that the optimal choice of $c$ is $\gamma$.

\paragraph{Upper Bound.}
Our analysis of \Algorithm{minimax}, to prove the upper bound in \Theorem{main}, uses a deceptively simple argument where $R$ defined in \Equation{RDef} acts as a potential function.
Specifically, we show that the change in regret from time $t-1$ with gap $g_{t-1}$
to time $t$ with gap $g_t$ is at most $R(t, g_t) - R(t-1,g_{t-1})$.
By telescoping, this immediately implies that $\max_g R(t,g)$ is an upper bound on the regret
at time $t$.
The analysis has a number of key features.
First, note that the potential function $R$ is bivariate; it depends on both the \emph{time} $t$ as well as the \emph{state} $g_t$.
To deal with this bivariate potential, we use a tool known as the discrete It\^{o} formula.
This formula allows us to relate the regret to the potential $R$, while elegantly handling changes to both time and state.
In fact, the potential $R$ turns out to be an extremely tight approximation to the actual regret.
Previously, there have been several works that make use of bivariate potentials (e.g.~\cite{NormalHedge, AdaNormalHedge}).
However, to the best of our knowledge, our work is the first to use the discrete It\^{o} formula in the setting of regret minimization.

The function $R$ and the use of discrete It\^{o} do not come ``out of thin air'';
they come from considering a continuous-time analogue of the problem.
This continuous viewpoint brings a wealth of analytical tools that do not exist (or are more cumbersome) in the discrete setting.
As discussed in the lower bound section above, in discrete-time it is natural to
assume the gap process evolves as a reflected random walk.
In order to formulate the continuous-time problem, we assume that the continuous adversary evolves the gap between the best and worst expert as a reflected Brownian motion (the continuous-time analogue of a random walk).
Using this adversary, the continuous-time regret becomes a stochastic integral.

The most natural way to analyze an integral is to use the fundamental theorem of calculus (FTC). However, the continuous-time regret is defined by a stochastic integral so the FTC cannot be applied\footnote{
The integrator is reflected Brownian motion, which is not of bounded variation.}.
However there is a stochastic analog of the FTC, namely the (continuous) It\^{o} formula, which we state in \Theorem{ItoFormula}.
We use it to provide an insightful decomposition of the continuous-time regret.
In particular, this decomposition suggests that the algorithm should satisfy an analytic condition
known as the \emph{backwards heat equation}.
A key resulting idea is: if the algorithm satisfies the backward heat equation, then there is a natural potential function that upper bounds the regret of the algorithm.
This enables a systematic approach to obtain an explicit continuous-time algorithm and a potential function that bounds the continuous algorithm's regret.
To go back to the discrete setting, using the \emph{same} potential function,
we replace applications of It\^{o}'s formula with the discrete It\^{o} formula.
Remarkably, this leads to \emph{exactly} the same regret bound as the continuous setting.

\subsection{Application}
\SubsectionName{Application}

As mentioned in \Section{intro}, the following theorem of Davis can be proven as a corollary of our techniques.
Intriguingly, the proof involves regret, despite the fact that regret does not appear in the theorem statement.

\begin{theorem}[{\protect Davis~\cite{Davis76}}]
\TheoremName{DavisRW}
Let $( X_t )_{t \geq 0}$ be a standard random walk.
Then $\expect{\abs{X_\tau}} \leq \gamma \expect{\sqrt{\tau}}$
for every stopping time $\tau$;
moreover, the constant $\gamma$ cannot be improved.
\end{theorem}

\begin{proof}
We begin by proving the first assertion.
Suppose that $\Reg(T)$ is the regret process when \Algorithm{minimax} is used
against a random adversary.
As discussed in \Subsection{techniques} and \Equation{LBRegret}, we can write the regret process
as $\Reg(T) = Z_T + g_T / 2$ where $Z_T$ is a martingale and $g_T$ evolves
as a reflected random walk.\footnote{
    Equality holds because our algorithm satisfies $p(t,0) = 1/2$; this is discussed in the text preceding \Equation{LBRegret}.
}
Moreover, if $\tau$ is a stopping time satisfying $\expect{\sqrt{\tau}} < \infty$,
then $\expect{Z_{\tau}} = 0$ (see \Theorem{OST}).

The upper bound in \Theorem{main} asserts that $\gamma \sqrt{T} / 2 \geq \Reg(T) = Z_T + g_T/2$ simultaneously for all $T \geq 0$.
Hence, $\gamma \expect{\sqrt{\tau}} / 2 \geq \expect{g_{\tau}} / 2$.
Replacing $g_{\tau}$ with $|X_{\tau}|$ (since both $g_t$ and $\abs{X_t}$ are reflected random walks), the proof of the first assertion is complete.

The fact that no constant smaller than $\gamma$ is possible is a direct
consequence of the results of Breiman~\cite{Breiman} and Greenwood and Perkins \cite{GreenwoodPerkins} as mentioned in \Subsection{techniques}
(see also \Section{lb} or \cite{Davis76}).
\end{proof}
\begin{remark}
    Davis \cite{Davis76} proved \Theorem{DavisRW} for both
    random walks and Brownian motion.
    We are also able to recover the result for Brownian motion
    as a corollary of our continuous-time result (\Theorem{ContsMainResult}).
    The proof is very similar to that above.
\end{remark}

\begin{remark}
    In retrospect, our arguments in \Section{cts_ub} have some similarities with Davis' proof of the Brownian Motion version of \Theorem{DavisRW} \cite[Theorem 1.1]{Davis76}.
    For example, in \Section{cts_ub}, we see that the backwards heat equation appears naturally in the design of the continuous algorithm.
    Analogously, the backwards heat equation also appears in the proof of Theorem 1.1 in \cite{Davis76} but as a result of searching for a supermartingale.
\end{remark}

\subsection{An expression for the regret involving the gap}

In our two-expert prediction problem, 
the most important scenario restricts each cost vector $\ell_t$ to be either
$\begin{smallbmatrix}1\\0\end{smallbmatrix}$ or $\begin{smallbmatrix}0\\1\end{smallbmatrix}$.
That is, at each time step, some expert incurs cost $1$ and the other expert incurs no cost.
This restricted scenario is equivalent to the condition
$g_t - g_{t-1} \in \set{\pm 1} ~\forall t \geq 1$,
where $g_t \coloneqq \abs{ L_{t,1} - L_{t,2} }$ is the gap at time $t$.
To prove the optimal lower bound it suffices to consider this restricted scenario.
The optimal upper bound is first proven in the restricted scenario,
then extended to general cost vectors in \Subsection{ub_general}.
With the sole exception of \Subsection{ub_general}, we assume the restricted scenario.

We now present an expression, valid for any algorithm, that emphasizes how the regret depends on the
\emph{change} in the gap.
This expression will be useful in proving both the upper and lower bounds.
Henceforth we write $\Reg(t) \coloneqq \Reg(2, t, \cA, \cB)$ where $\cA$ and $\cB$
are usually implicit from the context.

\begin{prop}
\PropositionName{RegretWithGaps}
Assume the restricted setting in which $g_t - g_{t-1} \in \set{\pm 1}$ for every $t \geq 1$.
When $g_{t-1} \neq 0$, let $p_t$ denote the probability mass assigned by the algorithm to the ``worst expert'',
i.e.,~if $L_{t-1,1} \geq L_{t-1,2}$ then $p_t = x_{t,1}$ and otherwise $p_t = x_{t,2}$.
The quantity $p_t$ may depend arbitrarily on $\ell_1,\ldots,\ell_{t-1}$.
Then
\begin{equation}
    \EquationName{RegretInGapSpace}
    \Regret{T} ~=~ \sum_{t=1}^T p_t \cdot ( g_t - g_{t-1} ) \cdot \ind[g_{t-1} \neq 0] \:+\: \sum_{t=1}^T\inner{x_t}{\ell_t}\cdot \ind [g_{t-1} = 0].  
\end{equation}
Furthermore, assume that if $g_{t-1} = 0$, then $p_t = x_{t,1} = x_{t,2} = 1/2$.
In this case
\begin{equation}
    \EquationName{RegretInGapSpace2}
    \Regret{T} ~=~ \sum_{t=1}^T p_t \cdot ( g_t - g_{t-1} ).
\end{equation}
\end{prop}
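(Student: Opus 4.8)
The plan is to prove both identities by a one-step telescoping argument combined with a short case analysis on the sign of the gap. Writing $L_0 = [0,0]$ so that $\min_j L_{0,j} = 0$, we have
\[
\Regret{T} = \sum_{t=1}^T \Phi_t, \qquad \Phi_t := \inner{x_t}{\ell_t} - \big( \min_j L_{t,j} - \min_j L_{t-1,j} \big),
\]
so it suffices to show, for every $t$,
\[
\Phi_t = p_t \cdot (g_t - g_{t-1}) \cdot \ind[g_{t-1}\neq 0] + \inner{x_t}{\ell_t}\cdot \ind[g_{t-1}=0].
\]
First I would record the structural facts of the restricted setting: each $\ell_t$ is $[0,1]$ or $[1,0]$, so at every step exactly one expert incurs cost $1$ and the other incurs cost $0$; hence each $L_{t,j}$ is an integer and $g_{t-1}$ is a nonnegative integer, so $g_{t-1}\neq 0$ is equivalent to $g_{t-1}\geq 1$. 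Relabel (as the algorithm does) so that $L_{t-1,1}\geq L_{t-1,2}$, i.e.\ expert $1$ is weakly the worst at time $t-1$ and $g_{t-1}=L_{t-1,1}-L_{t-1,2}$; by the definition of $p_t$ we have $p_t = x_{t,1}$ whenever $g_{t-1}\neq 0$, and $x_{t,2}=1-p_t$.

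Then I would carry out the case analysis. If $g_{t-1}\geq 1$: when $\ell_t=[1,0]$ the gap grows, $g_t=g_{t-1}+1$, expert $2$ stays the unique minimizer and $\min_j L_{t,j}=\min_j L_{t-1,j}$, so $\Phi_t = x_{t,1} = p_t = p_t(g_t-g_{t-1})$; when $\ell_t=[0,1]$ the gap shrinks, $g_t = g_{t-1}-1\geq 0$, expert $2$ remains a minimizer at time $t$ with $\min_j L_{t,j}=\min_j L_{t-1,j}+1$, so $\Phi_t = x_{t,2}-1 = -p_t = p_t(g_t-g_{t-1})$. If $g_{t-1}=0$: then $L_{t-1,1}=L_{t-1,2}$, the step places cost $1$ on one expert and $0$ on the other, $g_t = 1$, and the running minimum is unchanged because the expert receiving cost $0$ retains the old common value; hence $\Phi_t = \inner{x_t}{\ell_t}$. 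Collecting the three cases yields the displayed formula for $\Phi_t$, and summing over $t$ proves \eqref{eq:RegretInGapSpace}.

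For \eqref{eq:RegretInGapSpace2}, assume additionally that $p_t = x_{t,1}=x_{t,2}=1/2$ whenever $g_{t-1}=0$. On the event $\{g_{t-1}=0\}$ we then have $\inner{x_t}{\ell_t}=1/2$ (since one coordinate of $\ell_t$ is $0$ and the other is $1$), while $g_t-g_{t-1}=1$ and $p_t=1/2$, so $\inner{x_t}{\ell_t} = p_t(g_t-g_{t-1})$ there as well; substituting this into \eqref{eq:RegretInGapSpace} merges the two sums into $\sum_{t=1}^T p_t(g_t-g_{t-1})$.

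The only point requiring care — and the closest thing to an obstacle — is correctly tracking the running minimum $\min_j L_{t,j}$ through a step, particularly at and adjacent to a tie (the cases $g_{t-1}=0$, and $g_{t-1}=1$ with the gap returning to $0$): one must verify that the value and membership of the minimizing set behave as claimed. This follows cleanly from the single observation that the expert receiving cost $0$ during a step cannot leave the set of minimizers, which I would state explicitly before the case analysis.
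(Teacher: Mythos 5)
Your proof is correct and follows essentially the same route as the paper's: telescope the regret into per-step increments $\inner{x_t}{\ell_t}-(\min_j L_{t,j}-\min_j L_{t-1,j})$, do the case analysis on $g_{t-1}\neq 0$ versus $g_{t-1}=0$ (tracking that the expert receiving cost $0$ stays a minimizer), and then merge the two sums for \eqref{eq:RegretInGapSpace2} using $\inner{x_t}{\ell_t}=1/2=p_t(g_t-g_{t-1})$ when $g_{t-1}=0$. No gaps.
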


\begin{remark}
Observe that \eqref{eq:RegretInGapSpace2} is a discrete analog of a Riemann-Stieltjes integral of $p$ with respect to $g$.
If $(g_t)_{t \geq 0}$ is a random process, then \eqref{eq:RegretInGapSpace2} is called a discrete stochastic integral.
In the specific case that $(g_t)_{t \geq 0}$ is a reflected random walk (the absolute value of a standard random walk),
then \Equation{RegretInGapSpace} is the Doob decomposition \cite[Theorem 10.1]{Klenke}
of the regret process $\big( \Regret{t} \big)_{t \geq 0}$,
i.e., the first sum is a martingale and the second sum is an increasing predictable process.
\end{remark}

\newcommand{\DeltaR}{\Delta_{\mathrm{R}}}
\begin{proof}
Define $\DeltaR(t) = \Regret{t}-\Regret{t-1}$.
The total cost of the best expert at time $t$ is $L_t^* \coloneqq \min \set{L_{t,1}, L_{t,2}}$.
The change in regret at time $t$ is the cost incurred by the algorithm
minus the change in the total cost of the best expert,
so $\DeltaR(t) = \inner{x_t}{\ell_t} - (L_t^* - L_{t-1}^*)$.

\paragraph{Case 1: $g_{t-1} \neq 0$.}
In this case, the best expert at time $t-1$ remains a best expert at time $t$. Note that this uses the assumption that $g_t - g_{t-1} \in \{\pm 1\}$ so $g_{t-1} \geq 1$.
If the worst expert incurs cost $1$, then the algorithm incurs cost $p_t$ and the best expert incurs cost $0$, so $\DeltaR(t) = p_t$ and $g_t-g_{t-1}=1$.
Otherwise, the best expert incurs cost $1$ and the algorithm incurs cost $1-p_t$,
so $\DeltaR(t) = -p_t$ and $g_t-g_{t-1}=-1$.
For either choice of cost, we have $\DeltaR(t) = p_t \cdot (g_t-g_{t-1})$.

\paragraph{Case 2: $g_{t-1} = 0.$}
Both experts are best, but one incurs no cost, so $L_t^* = L_{t-1}^*$ and $\DeltaR(t) = \inner{x_t}{\ell_t}$.

\vspace{0.6em}
The above two cases prove \Equation{RegretInGapSpace}.
For the last assertion, we have that $\inner{x_t}{\ell_t} = 1/2 = p_t\cdot (g_t - g_{t-1}) $ whenever $g_{t-1} = 0$.
Hence, we can collapse the two sums in \Equation{RegretInGapSpace} into one to get \Equation{RegretInGapSpace2}.
\end{proof}

\subsection{Basic facts about confluent hypergeometric functions}
\SubsectionName{hyper}
For any $a,b \in \bR$ with $b \not\in \bZ_{\leq 0}$, the confluent hypergeometric function of the first kind is defined as
\begin{equation}
	\EquationName{ConfluentDef}
	M(a, b, z) = \sum_{n=0}^{\infty} \frac{(a)_n z^n}{(b)_n n!},
\end{equation}
where $(x)_n \coloneqq \prod_{i=0}^{n-1}(x+i)$ is the Pochhammer symbol.
See, e.g., Abramowitz and Stegun~\cite[Eq.~(13.1.2)]{Abramowitz}.

For notational convenience, for $i \in \{0, 1, 2, \ldots,\}$, we write
\begin{equation}
	\EquationName{MiDef}
	M_i(x) = M(i-1/2, i+1/2, x).
\end{equation}

\begin{fact}
	\FactName{chf_derivs}
	If $b \notin \bZ_{\leq 0}$ then $\diffwrt{x} M(a,b,x) = \frac{a}{b} \cdot M(a+1, b+1, x)$.
	Consequently,
	\begin{enumerate}[label=(\arabic*),noitemsep,topsep=0pt]
		\item $M_0'(x) = -M_1(x)$; and
		\item $M_1'(x) = \frac{1}{3} \cdot M_2(x)$.
	\end{enumerate}
\end{fact}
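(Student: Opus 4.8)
The plan is to differentiate the defining power series \Equation{ConfluentDef} term by term. For the coefficients $c_n = (a)_n/((b)_n\, n!)$ one has $|c_{n+1}/c_n| = \frac{a+n}{(b+n)(n+1)} \to 0$, so the series has infinite radius of convergence (hence $M(a,b,\cdot)$ is entire) and term-by-term differentiation is valid on all of $\bR$ whenever $b \notin \bZ_{\leq 0}$. First I would write
\[
\diffwrt{x} M(a,b,x) ~=~ \sum_{n=1}^{\infty} \frac{(a)_n\, x^{n-1}}{(b)_n\, (n-1)!} ~=~ \sum_{m=0}^{\infty} \frac{(a)_{m+1}\, x^{m}}{(b)_{m+1}\, m!},
\]
using the reindexing $m = n-1$. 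Then I would invoke the elementary Pochhammer identities $(a)_{m+1} = a\,(a+1)_m$ and $(b)_{m+1} = b\,(b+1)_m$, which are immediate from $(x)_n = \prod_{i=0}^{n-1}(x+i)$, to factor out $a/b$ and recognize the remaining sum as $M(a+1,b+1,x)$. This yields the main identity $\diffwrt{x} M(a,b,x) = \frac{a}{b}\, M(a+1,b+1,x)$.

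The two consequences then follow by specialization, recalling from \Equation{MiDef} that $M_i(x) = M(i-\tfrac12,\, i+\tfrac12,\, x)$. For part (1), I would apply the identity with $a = -\tfrac12$ and $b = \tfrac12$; since here $a/b = -1$, this gives $M_0'(x) = -M(\tfrac12,\tfrac32,x) = -M_1(x)$. For part (2), I would apply it with $a = \tfrac12$ and $b = \tfrac32$; since here $a/b = \tfrac13$, this gives $M_1'(x) = \tfrac13\, M(\tfrac32,\tfrac52,x) = \tfrac13 M_2(x)$. In both cases the second parameters that arise ($\tfrac12$, $\tfrac32$, $\tfrac52$) all avoid $\bZ_{\leq 0}$, so every series involved is well defined.

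There is essentially no obstacle here. The only point deserving a word of care is the interchange of differentiation and summation, which is immediate from the infinite radius of convergence of the defining series; everything else is routine bookkeeping with the Pochhammer symbol together with substitution of the specific parameter values.
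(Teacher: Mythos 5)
Your argument is correct, but it is genuinely different from what the paper does: the paper offers no proof at all for the differentiation identity, simply citing Abramowitz and Stegun, Eq.~(13.4.9), whereas you derive it from scratch. Your derivation is sound: the series \Equation{ConfluentDef} has infinite radius of convergence (your ratio estimate should carry absolute values, $|c_{n+1}/c_n| = \tfrac{|a+n|}{|b+n|(n+1)} \to 0$, and the series may even terminate when $a \in \bZ_{\leq 0}$, but either way convergence is entire), so term-by-term differentiation is legitimate; the reindexing plus the Pochhammer factorizations $(a)_{m+1} = a\,(a+1)_m$ and $(b)_{m+1} = b\,(b+1)_m$ (valid since $b \notin \bZ_{\leq 0}$, and then $b+1 \notin \bZ_{\leq 0}$ as well, so all denominators are nonzero) give exactly $\tfrac{a}{b}M(a+1,b+1,x)$; and the two specializations to $(a,b) = (-\tfrac12,\tfrac12)$ and $(\tfrac12,\tfrac32)$ match the definition \Equation{MiDef}. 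What your route buys is a self-contained verification that does not lean on a handbook reference; what the paper's route buys is brevity, since this is a completely standard property of confluent hypergeometric functions. There is no gap in your proposal.
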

\begin{proof}
	See \cite[Eq.~(13.4.9)]{Abramowitz}.
\end{proof}

\begin{fact}
	\FactName{basic_identities}
	The following identities hold:
	\begin{enumerate}[label=(\arabic*),noitemsep,topsep=0pt]
		\item $M_0(x) = -\sqrt{\pi x} \erfi(\sqrt{x}) + e^x$.
		\label{item:M0}
		\item $M_1(x) = \frac{\sqrt{\pi} \erfi(\sqrt{x})}{2\sqrt{x}}$.
		\label{item:M1}
		\item $M_2(x) = \frac{3(2e^x \sqrt{x} - \sqrt{\pi} \erfi(\sqrt{x}))}{4x^{3/2}}$.
		\label{item:M2}
		\item $\frac{2}{3} \cdot M_2(x) \cdot x + M_1(x) = e^x$.
		\label{item:M1M2}
	\end{enumerate}
\end{fact}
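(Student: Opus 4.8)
The plan is to prove identity \ref{item:M1} directly from the series definition \eqref{eq:ConfluentDef}, then deduce \ref{item:M0} and \ref{item:M2} from it using \Fact{chf_derivs}, and finally obtain \ref{item:M1M2} by substituting the first two into the left-hand side.

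For \ref{item:M1}: I would write $M_1(x) = M(1/2,3/2,x) = \sum_{n \ge 0} \frac{(1/2)_n}{(3/2)_n}\,\frac{x^n}{n!}$ and observe that the Pochhammer ratio telescopes, $(1/2)_n/(3/2)_n = \frac{1}{2n+1}$, so $M_1(x) = \sum_{n\ge 0}\frac{x^n}{(2n+1)\,n!}$. On the analytic side, integrating $e^{z^2} = \sum_{n\ge 0} z^{2n}/n!$ term by term gives $\erfi(x) = \frac{2}{\sqrt\pi}\sum_{n\ge 0}\frac{x^{2n+1}}{(2n+1)\,n!}$, hence $\frac{\sqrt\pi}{2\sqrt x}\,\erfi(\sqrt x) = \sum_{n\ge 0}\frac{x^n}{(2n+1)\,n!}$ as well; the two power series coincide. (Every series involved is entire, so the termwise integration is legitimate.)

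For \ref{item:M0} and \ref{item:M2}: these follow from \ref{item:M1} by calculus. Using $\erfi'(y) = \frac{2}{\sqrt\pi}e^{y^2}$, differentiating $e^x - \sqrt{\pi x}\,\erfi(\sqrt x)$ gives $e^x - \bigl(\frac{\sqrt\pi\,\erfi(\sqrt x)}{2\sqrt x} + e^x\bigr) = -M_1(x)$, which by \Fact{chf_derivs}(1) equals $M_0'(x)$; since both functions equal $1$ at $x = 0$, identity \ref{item:M0} follows (this also reconciles the Appendix notation $M_0 = M(-1/2,1/2,\cdot)$ with the definition in \eqref{eq:Erfi}). For \ref{item:M2}, \Fact{chf_derivs}(2) gives $M_2(x) = 3 M_1'(x)$; differentiating the closed form from \ref{item:M1} yields $M_1'(x) = \frac{e^x}{2x} - \frac{\sqrt\pi\,\erfi(\sqrt x)}{4x^{3/2}}$, and clearing denominators produces exactly $\frac{3\bigl(2e^x\sqrt x - \sqrt\pi\,\erfi(\sqrt x)\bigr)}{4x^{3/2}}$.

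Finally, \ref{item:M1M2} is immediate once \ref{item:M1} and \ref{item:M2} are available: substituting them into $\frac{2}{3}M_2(x)\,x + M_1(x)$, the two $\sqrt\pi\,\erfi(\sqrt x)$ contributions cancel and what remains is $\frac{2e^x\sqrt x}{2\sqrt x} = e^x$. I do not anticipate a genuine obstacle here; the only care needed is the routine justification of termwise integration/differentiation of these entire power series, and handling the behaviour at $x = 0$ in the derivative arguments (where one can pass back to the series or take limits).
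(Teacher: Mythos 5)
Your proposal is correct and follows essentially the same route as the paper: identities (1), (3), and (4) are deduced from (2) exactly as in the paper's proof, via the derivative relations of \Fact{chf_derivs}, the initial condition $M_0(0)=1$, and direct substitution. The only difference is that for identity (2) the paper simply cites Abramowitz--Stegun (together with $\erfi(x) = -i\erf(ix)$), whereas you verify it self-containedly by matching the power series of $M(1/2,3/2,x)$ (using $(1/2)_n/(3/2)_n = \tfrac{1}{2n+1}$) with the termwise-integrated series of $\erfi$, which is a routine and valid substitute.
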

\begin{proof}\mbox{}
	
	(2): See \cite{Abramowitz}, equations (7.1.21) or (13.6.19), and use that $\erfi(x)=-i \erf(ix)$, where $i=\sqrt{-1}$.
	
	(1): Differentiating the right-hand side (using the definition of $\erfi$ in \eqref{eq:Erfi})
	yields $-\frac{\sqrt{\pi} \erfi(\sqrt{x})}{2\sqrt{x}}$.
	So the right-hand side is an anti-derivative of $-M_1(x)$, by part (2).
	Thus, the identity (1) follows from \Fact{chf_derivs}(1) and the initial condition $M_0(0)=1$.
	
	(3): This follows directly by differentiating (2) and \Fact{chf_derivs}(2).

	(4): Immediate from (2) and (3).
\end{proof}

\begin{fact}
	\FactName{M0_decreasing_concave}
	The function $M_0(x)$ is decreasing and concave on $[0, \infty)$.
\end{fact}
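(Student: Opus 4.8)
The plan is to reduce both assertions to sign conditions on $M_0'$ and $M_0''$, and then to verify those signs using \Fact{chf_derivs} together with the power-series definition \eqref{eq:ConfluentDef} (or, alternatively, the closed forms in \Fact{basic_identities}).

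First I would record, via \Fact{chf_derivs}(1), that $M_0'(x) = -M_1(x)$, and then differentiate once more and apply \Fact{chf_derivs}(2) to get $M_0''(x) = -M_1'(x) = -\tfrac{1}{3} M_2(x)$. Consequently, ``$M_0$ is decreasing on $[0,\infty)$'' is equivalent to $M_1(x) \ge 0$ on $[0,\infty)$, and ``$M_0$ is concave on $[0,\infty)$'' is equivalent to $M_2(x) \ge 0$ on $[0,\infty)$. So the whole fact comes down to nonnegativity of $M_1$ and $M_2$ on the nonnegative reals.

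For $M_1 \ge 0$: by \eqref{eq:ConfluentDef} and \eqref{eq:MiDef} we have $M_1(x) = M(1/2,3/2,x) = \sum_{n \ge 0} \frac{(1/2)_n}{(3/2)_n\, n!}\, x^n$, and every Pochhammer symbol appearing here is a product of positive numbers, so every coefficient is positive; hence $M_1(x) > 0$ for all $x \ge 0$. (One may instead invoke the closed form $M_1(x) = \sqrt{\pi}\,\erfi(\sqrt{x})/(2\sqrt{x})$ from \Fact{basic_identities}, noting $\erfi(\sqrt{x}) \ge 0$ by \eqref{eq:Erfi} and that the apparent singularity at $x=0$ is removable with value $M_1(0)=1$, but the series route avoids this nuisance.) Thus $M_0' \le 0$ on $[0,\infty)$, giving the first claim. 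The argument for $M_2 \ge 0$ is identical in spirit: $M_2(x) = M(3/2,5/2,x)$ again has all positive series coefficients, so $M_2(x) > 0$ for $x \ge 0$; equivalently, starting from the closed form for $M_2$ in \Fact{basic_identities}, it suffices to show $2e^x\sqrt{x} - \sqrt{\pi}\,\erfi(\sqrt{x}) \ge 0$, which after the substitution $u = \sqrt{x}$ and \eqref{eq:Erfi} becomes $u e^{u^2} \ge \int_0^u e^{z^2}\dd z$; this holds because the difference vanishes at $u=0$ and has derivative $2u^2 e^{u^2} \ge 0$. Hence $M_0'' \le 0$ on $[0,\infty)$ (in fact $M_0'' < 0$ on $(0,\infty)$), which proves concavity.

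I do not anticipate any genuine obstacle here: the statement is essentially bookkeeping once \Fact{chf_derivs} and \Fact{basic_identities} are in hand. The only point demanding a touch of care is the behavior of the closed-form expressions at $x = 0$ (division by $\sqrt{x}$ and by $x^{3/2}$), which is precisely why I would prefer to read off positivity from the power series rather than from the $\erfi$-based formulas.
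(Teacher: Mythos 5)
Your proposal is correct and follows essentially the same route as the paper: compute $M_0'(x) = -M_1(x)$ and $M_0''(x) = -\tfrac{1}{3}M_2(x)$ via \Fact{chf_derivs}, then conclude from the nonnegativity of the Taylor coefficients of $M_1$ and $M_2$ that both derivatives are nonpositive on $[0,\infty)$. The extra closed-form verifications you include are unnecessary but harmless.
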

\begin{remark}
	In fact, $M_0(x)$ is decreasing and concave on $\bR$ but we will not require this fact.
\end{remark}
\begin{remark}
	The function $M_0(x^2/2)$ is also decreasing and concave on $[0, \infty)$.
	Indeed, the concavity follows from the fact that if $f$ is a non-increasing concave function on $\bR$ and $g$ is a convex function on $\bR$ then $f(g(x))$ is concave.
	Although this fact is crucial for our algorithm, we do not explicitly make reference to this in the paper.
\end{remark}
\begin{proof}
	By \Fact{chf_derivs}, we have $M_0'(x) = -M_1(x)$ and $M_0''(x) = -\frac{1}{3} \cdot M_2(x)$.
	Note that the coefficients of $M_1(x), M_2(x)$ in their Taylor series are all non-negative.
	As $x \geq 0$, we have that $M_0'(x), M_0''(x) \leq 0$ as desired.
\end{proof}
\begin{fact}
	\FactName{M0_unique_root}
	The function $x \mapsto M_0(x^2/2)$ has a unique positive root at $x = \gamma$.
	Moreover $M_0(x^2/2) > 0$ for $x \in (0, \gamma)$ and $M_0(x^2/2) < 0$ for $x \in (\gamma, \infty)$.
\end{fact}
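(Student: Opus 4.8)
The plan is to reduce the claim about $x \mapsto M_0(x^2/2)$ to the behaviour of $M_0$ itself on $[0,\infty)$, using that $y = x^2/2$ is a strictly increasing bijection of $[0,\infty)$ onto $[0,\infty)$. First I would record the basic shape of $M_0$ on $[0,\infty)$: it is continuous, $M_0(0) = 1 > 0$, and it is strictly decreasing. Strict decrease follows from \Fact{chf_derivs}, which gives $M_0'(x) = -M_1(x)$; since $M_1(x) = M(1/2,3/2,x)$ has a power series with strictly positive coefficients, $M_1(x) > 0$ for all $x \geq 0$, hence $M_0'(x) < 0$ on $[0,\infty)$.

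Next I would show that $M_0$ takes negative values. The most economical route uses concavity (\Fact{M0_decreasing_concave}): a concave function has non-increasing derivative, so $M_0'(x) \leq M_0'(0) = -M_1(0) = -1$ for all $x \geq 0$, and integrating from $0$ gives $M_0(x) \leq 1 - x$, which is negative once $x > 1$. Combining $M_0(0) = 1 > 0$, continuity, and strict monotonicity, $M_0$ has a unique root $y_0 \in (0,1)$, with $M_0(y) > 0$ on $[0, y_0)$ and $M_0(y) < 0$ on $(y_0, \infty)$.

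Finally I would pull this back through the substitution. For $x > 0$, $M_0(x^2/2) = 0$ iff $x^2/2 = y_0$ iff $x = \sqrt{2 y_0}$, so $x \mapsto M_0(x^2/2)$ has exactly one positive root, namely $\sqrt{2 y_0}$; by the definition of $\gamma$ in \Equation{GammaDef} this root is $\gamma$. The sign statement is then immediate: $x^2/2 < y_0 \iff x < \sqrt{2 y_0} = \gamma$, so $M_0(x^2/2) > 0$ for $x \in (0,\gamma)$ and $M_0(x^2/2) < 0$ for $x \in (\gamma, \infty)$.

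There is essentially no obstacle here; the only point requiring a moment's care is verifying that a root of $M_0$ exists at all (equivalently, that $M_0$ eventually becomes negative), for which the linear upper bound $M_0(x) \leq 1 - x$ coming from concavity is the cleanest argument. One could instead invoke the asymptotics $\erfi(\sqrt{x}) \sim e^x/\sqrt{\pi x}$ in the representation $M_0(x) = e^x - \sqrt{\pi x}\,\erfi(\sqrt{x})$ to see that $M_0(x) \to -\infty$, but that is heavier machinery than the statement needs.
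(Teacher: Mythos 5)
Your proof is correct and follows essentially the same route as the paper's: strict monotonicity of $M_0$ on $[0,\infty)$ (via $M_0'=-M_1<0$, resp.\ \Fact{M0_decreasing_concave}), eventual negativity, and the intermediate value theorem, pulled back through the substitution $y=x^2/2$. The only cosmetic difference is that you establish negativity via the concavity bound $M_0(x)\leq 1-x$, whereas the paper reads off $\lim_{x\to\infty}M_0(x^2/2)=-\infty$ from the Maclaurin expansion; both are fine.
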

\begin{proof}
	The Maclaurin expansion of $M_0(x^2/2)$ is given by
	\[
	M_0\left( \frac{x^2}{2} \right)
	= 1 - \sum_{k=1}^{\infty} \frac{1}{(2k-1)k!} \frac{x^{2k}}{2^k}.
	\]
	Note that $M_0(0) = 1$.
	It is clear, from the series expansion above (and \Fact{M0_decreasing_concave}), that $M_0(x^2/2)$ is strictly decreasing in $x$ on $(0, \infty)$ and $\lim_{x \to \infty} M_0(x^2/2) = -\infty$.
	Hence, $M_0(x^2/2)$ contains a positive root $\gamma$ and it is unique.
	Finally, it is clear that $M_0(x^2/2)$ is positive on $(0, \gamma)$ and negative on $(\gamma, \infty)$.
\end{proof}

\begin{claim}
	\ClaimName{RootOfM}
	For any $\eps>0$, there exists $a_\eps \in (-1,-1/2)$ such that
	the smallest\footnote{In fact, there is a unique positive root.} positive root $c_\eps$ of $z \mapsto M(a_{\eps},1/2,z^2/2)$
	satisfies $c_\eps \geq \gamma-\eps$.
\end{claim}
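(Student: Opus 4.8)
The plan is to realize $c_\eps$ as the smallest positive root of $z\mapsto M(a,1/2,z^2/2)$ and to track how this root moves as the parameter $a$ is perturbed slightly downward from $-1/2$, using nothing beyond continuity and the intermediate value theorem together with the defining property of $\gamma$ in \Fact{M0_unique_root}. After the substitution $w=z^2/2$ it is equivalent to study $G(a,w):=M(a,1/2,w)$ for $w\ge 0$ and to show that for suitable $a$ slightly below $-1/2$ the smallest positive $w$-root exceeds $(\gamma-\eps)^2/2$. Since the conclusion only weakens as $\eps$ grows, I may assume $0<\eps<\gamma/2$.

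First I would record the ingredients. The power series $M(a,1/2,w)=\sum_{n\ge 0}\frac{(a)_n}{(1/2)_n\, n!}w^n$ converges uniformly on compact subsets of $\bR\times\bR_{\ge 0}$ (dominating $|(a)_n|$ by the rising factorial $\prod_{i=0}^{n-1}(|a|+i)$ reduces it to a single convergent series), so $G$ is jointly continuous, and $G(a,0)=1$ for every $a$. Next, by \Fact{M0_unique_root}, $G(-1/2,\cdot)=M_0(\cdot)$ is strictly positive on $[0,\gamma^2/2)$, vanishes at $\gamma^2/2$, and is strictly negative for $w>\gamma^2/2$; in particular $\delta:=\min_{w\in[0,(\gamma-\eps)^2/2]}M_0(w)>0$ and $M_0(\gamma^2)<0$.

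Then I would use uniform continuity of $G$ on the compact set $[-1,-1/2]\times[0,\gamma^2]$ to choose $\eta\in(0,1/2)$ small enough that, for every $a\in[-1/2-\eta,-1/2]$, one has $G(a,w)\ge \delta/2>0$ for all $w\in[0,(\gamma-\eps)^2/2]$ and also $G(a,\gamma^2)<0$. Setting $a_\eps:=-1/2-\eta/2\in(-1,-1/2)$, the function $w\mapsto G(a_\eps,w)$ is continuous, strictly positive on $[0,(\gamma-\eps)^2/2]$, and strictly negative at $w=\gamma^2$; by the intermediate value theorem it has a root in $\big((\gamma-\eps)^2/2,\gamma^2\big)$, and its smallest positive root $w_\eps$ therefore satisfies $w_\eps>(\gamma-\eps)^2/2$. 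Undoing the substitution, the smallest positive root of $z\mapsto M(a_\eps,1/2,z^2/2)$ is $c_\eps=\sqrt{2w_\eps}>\gamma-\eps$, which is the claim (and note $c_\eps>1$ once $\eps$ is small, which is what is needed when this is fed into \Theorem{Breiman}).

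There is no genuinely hard step: the whole argument is a soft continuity/IVT perturbation of \Fact{M0_unique_root}, and the only point deserving a line of justification is the joint continuity of $M$ in $(a,w)$, which follows from the Weierstrass $M$-test on compacta (or the standard fact that $M(a,b,\cdot)$ and $M(\cdot,b,z)$ are entire). The uniqueness of the positive root asserted in the footnote is not needed for the application and I would not prove it, though it also follows from the fact that $w\mapsto M(a,1/2,w)$ has exactly one sign change on $(0,\infty)$ when $a\in(-1,0)$.
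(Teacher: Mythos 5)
Your proof is correct, and it takes a genuinely different route from the paper. The paper leans on an external result: following Perkins, it uses that the map $c \mapsto \lambda_0(-c,c)$ (defined so that $c$ is the smallest positive root of $x \mapsto M(-\lambda_0(-c,c),1/2,x^2/2)$, arising as a Sturm--Liouville eigenvalue) is strictly decreasing and continuous, hence has a continuous inverse $\alpha$ with $\alpha(1/2)=\gamma$ by \Fact{basic_identities}(1); the claim then follows by continuity of $\alpha$ at $1/2$, taking $a_\eps = -(1/2+\delta)$ and $c_\eps=\alpha(1/2+\delta)$. You instead argue directly: joint continuity of $(a,w)\mapsto M(a,1/2,w)$ (via a Weierstrass-type domination of the series on compacta, which checks out since $|(a)_n|\le n!$ on $[-1,-1/2]$ and $\sum_n w^n/(1/2)_n$ converges), the sign pattern of $M_0$ from \Fact{M0_unique_root}, uniform continuity on $[-1,-1/2]\times[0,\gamma^2]$, and the intermediate value theorem to locate the smallest positive root strictly beyond $(\gamma-\eps)^2/2$ for $a_\eps$ slightly below $-1/2$. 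Your version is self-contained and elementary, needing nothing beyond facts already proved in the paper, whereas the paper's citation of Perkins is shorter and buys extra structural information (strict monotonicity of the root in the parameter, and the identification $\alpha(1/2)=\gamma$ that ties $\gamma$ to the slow-point constant), none of which is actually needed for the claim. Two minor points, neither a gap: the existence of a \emph{smallest} positive root deserves the one-line remark that the zero set is closed and bounded away from $0$ on $[0,(\gamma-\eps)^2/2]$, which your bound $G(a_\eps,\cdot)\ge\delta/2$ there supplies; and your parenthetical that $c_\eps>1$ for small $\eps$ requires $\eps<\gamma-1$, not merely $\eps<\gamma/2$, though this only matters for feeding the result into \Theorem{Breiman}, not for the claim itself.
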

\begin{proof}
	Following Perkins' notation \cite{Perkins}, let $\lambda_0(-c,c)$ be such that
	$c$ is the smallest positive root of $x \mapsto M(-\lambda_0(-c,c),1/2,x^2/2)$.
	By \cite[Proposition 1]{Perkins}, the map $c \mapsto \lambda_0(-c,c)$ is strictly decreasing
	and continuous on $\bR_{>0}$, so it has a continuous inverse $\alpha$.
	From \eqref{eq:GammaDef} and \Fact{basic_identities}(1), we see that
	$\lambda_0(-\gamma, \gamma) = 1/2$, hence $\alpha(1/2)=\gamma$.
	By continuity, for all $\eps > 0$, there exists $\delta \in (0,1/2)$ such that
	$\alpha(1/2+\delta) > \gamma-\eps$.
	Then we may take $a_\eps = -(1/2+\delta)$ and $c_\eps = \alpha(1/2+\delta)$.
\end{proof}

\section{Upper bound}
\SectionName{ub}

In this section, we prove the upper bound in \Theorem{main}
via a sequence of simple steps.
The main ideas of the proof are contained in the restricted setting where the gap changes by $\pm 1$ each step.
This corresponds to each loss vector $\ell_t$ being either $\begin{smallbmatrix}1\\0\end{smallbmatrix}$ or $\begin{smallbmatrix}0\\1\end{smallbmatrix}$.
We first prove the upper bound in \Theorem{main} in this restricted setting.
In \Subsection{ub_general}, we extend the analysis to general loss vectors in $[0,1]^2$ through the use of concavity arguments.

The proof in this section uses the potential function $R$ which,
as explained in \Subsection{techniques}, is defined via continuous-time arguments in \Section{cts_ub}.
Moreover, the structure of the proof is heavily inspired by the proof in the continuous setting.

Moving forward, we need a few observations about the functions $R$ and $p$,
which were defined in equations \eqref{eq:RDef} and \eqref{eq:DiscreteDerivative}.
First, we require two straightforward calculations.
These are special cases of \Lemma{uncap_rtg_deriv_cts}
(with $\tilde{R}_\gamma = \tilde{R}$ and $R_\gamma = R$).
For convenience, we restate them here without the subscript but only prove \Lemma{uncap_rtg_deriv_cts} later in the paper.
\begin{lemma}
	\LemmaName{f_derivative}
	Consider the function $\tilde{R}(t,g) = \frac{g}{2} + \kappa \sqrt{t} M_0\left( \frac{g^2}{2t} \right)$.
	Then $\frac{\partial}{\partial g} \tilde{R}(t,g) = \frac{1}{2}\left( 1 - \frac{\erfi(g/\sqrt{2t})}{\erfi(\gamma/\sqrt{2})} \right)$.
\end{lemma}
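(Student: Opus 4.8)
This is a direct differentiation, so the proof will be a short computation; the only things to be careful about are the chain rule and the bookkeeping with $\kappa$ and the $\erfi$ identity. First I would differentiate $\tilde R(t,g) = \frac{g}{2} + \kappa\sqrt{t}\,M_0\!\left(\frac{g^2}{2t}\right)$ in $g$, treating $t$ as a constant, to get
\[
    \frac{\partial}{\partial g}\tilde R(t,g) ~=~ \frac12 + \kappa\sqrt{t}\cdot M_0'\!\left(\frac{g^2}{2t}\right)\cdot\frac{g}{t}.
\]
Then I would substitute the derivative of $M_0$. By \Fact{chf_derivs}(1) we have $M_0'(x) = -M_1(x)$, and by \Fact{basic_identities}(2) we have $M_1(x) = \frac{\sqrt\pi\,\erfi(\sqrt x)}{2\sqrt x}$. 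Plugging in $x = g^2/(2t)$ (so $\sqrt x = \lvert g\rvert/\sqrt{2t}$) gives $M_0'(g^2/(2t)) = -\frac{\sqrt\pi\,\sqrt{2t}\,\erfi(\lvert g\rvert/\sqrt{2t})}{2\lvert g\rvert}$.

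Substituting this into the expression above, the factor $\frac{g}{t}$ combines with the $\sqrt{t}$ and the $\sqrt{2t}/\lvert g\rvert$ from $M_0'$: the $t$-powers collapse to a constant $\sqrt{2\pi}/2$ and, since $\erfi$ is odd, $\frac{g}{\lvert g\rvert}\erfi(\lvert g\rvert/\sqrt{2t}) = \erfi(g/\sqrt{2t})$, so the whole second term becomes $-\kappa\cdot\frac{\sqrt{2\pi}}{2}\,\erfi(g/\sqrt{2t})$. Finally I would use the definition $\kappa = \frac{1}{\sqrt{2\pi}\,\erfi(\gamma/\sqrt2)}$ from \eqref{eq:RDef}, which makes $\kappa\cdot\frac{\sqrt{2\pi}}{2} = \frac{1}{2\,\erfi(\gamma/\sqrt2)}$, yielding
\[
    \frac{\partial}{\partial g}\tilde R(t,g) ~=~ \frac12 - \frac{\erfi(g/\sqrt{2t})}{2\,\erfi(\gamma/\sqrt2)} ~=~ \frac12\left(1 - \frac{\erfi(g/\sqrt{2t})}{\erfi(\gamma/\sqrt2)}\right),
\]
as claimed.

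There is no real obstacle here; the statement is essentially a reformulation of \Fact{basic_identities}(2) once the chain rule is applied, and the $g<0$ case is handled for free by the oddness of $\erfi$. (One could alternatively avoid the $\lvert g\rvert$ issue entirely by restricting attention to $g\ge 0$, which is all that is needed in the applications of this lemma.)
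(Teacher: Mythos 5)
Your computation is correct and matches the paper's own argument: the paper proves this as the special case $\alpha=\gamma$ of \Lemma{uncap_rtg_deriv_cts}, which differentiates $\tilde R_\alpha$ using \Fact{chf_derivs} ($M_0'=-M_1$) and \Fact{basic_identities}(2), exactly as you do. Your extra care with $\lvert g\rvert$ and the oddness of $\erfi$ is a minor point the paper glosses over, but it changes nothing of substance.
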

\noindent Note that $R(t,g) = \tilde{R}(t,g)$ for $g \leq \gamma \sqrt{t}$ and $R(t,g) = \tilde{R}(t, \gamma \sqrt{t}) = \gamma \sqrt{t} / 2$ for $g \geq \gamma \sqrt{t}$.
So one should think of $\tilde{R}$ as the ``untruncated'' version of $R$.
\begin{lemma}
	\LemmaName{rtg_derivative}
	$\frac{\partial}{\partial g} R(t,g) = \frac{1}{2} \left( 1 - \frac{\erfi(g/\sqrt{2t})}{\erfi(\gamma/\sqrt{2})} \right)_+$.
\end{lemma}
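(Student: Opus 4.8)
The plan is a routine differentiation, carried out separately on the three pieces of the definition \eqref{eq:RDef} of $R$, with $t>0$ held fixed (the right-hand side only makes sense for $t>0$). On the region $g>\gamma\sqrt t$ the function $R(t,\cdot)$ is the constant $\gamma\sqrt t/2$, so its $g$-derivative is $0$; and there $g/\sqrt{2t}>\gamma/\sqrt2$, so by strict monotonicity of $\erfi$ the quantity $1-\erfi(g/\sqrt{2t})/\erfi(\gamma/\sqrt2)$ is negative and its positive part is $0$, which matches.

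The substantive piece is the region $g<\gamma\sqrt t$, where $R(t,g)=\tfrac g2+\kappa\sqrt t\,M_0(g^2/(2t))$. I would differentiate in $g$ using the chain rule and the identity $M_0'=-M_1$ from \Fact{chf_derivs}, obtaining $\partial_g R(t,g)=\tfrac12-\kappa\sqrt t\,M_1(g^2/(2t))\cdot g/t$, and then substitute the closed form $M_1(x)=\sqrt\pi\,\erfi(\sqrt x)/(2\sqrt x)$ from \Fact{basic_identities}. Writing $\sqrt{g^2/(2t)}=|g|/\sqrt{2t}$ and using that $\erfi$ is odd (so $\sign(g)\,\erfi(|g|/\sqrt{2t})=\erfi(g/\sqrt{2t})$), the factor $g$ cancels, and after collecting powers of $t$ one is left with $\partial_g R(t,g)=\tfrac12-\tfrac12\kappa\sqrt{2\pi}\,\erfi(g/\sqrt{2t})$. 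Since $\kappa\sqrt{2\pi}=1/\erfi(\gamma/\sqrt2)$ by the definition of $\kappa$ in \eqref{eq:RDef}, this equals $\tfrac12(1-\erfi(g/\sqrt{2t})/\erfi(\gamma/\sqrt2))$, which on this region is positive and hence equals its own positive part. This is precisely the content of \Lemma{f_derivative}, which I would state and prove first and then invoke here.

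Finally I would check the seam $g=\gamma\sqrt t$: from the formula just obtained, the left derivative there is $\tfrac12(1-\erfi(\gamma/\sqrt2)/\erfi(\gamma/\sqrt2))=0$, and the right derivative is $0$, so $R(t,\cdot)$ is genuinely differentiable at $\gamma\sqrt t$ with derivative $0$; and $\tfrac12(1-\erfi(\gamma/\sqrt2)/\erfi(\gamma/\sqrt2))_+=0$ as well. Combining the three cases gives the claim. I do not expect a real obstacle here; the main (and essentially only) subtleties are (i) the $|g|$-versus-$g$ issue inside $\sqrt{g^2/(2t)}$, resolved by oddness of $\erfi$, and (ii) the smoothness at $g=\gamma\sqrt t$, which works precisely because $\gamma$ is defined so that $M_0(\gamma^2/2)=0$. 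A shortcut, if the general machinery is developed first, is simply to quote \Lemma{uncap_rtg_deriv_cts}, of which this is the $\alpha=\gamma$ special case.
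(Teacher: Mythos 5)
Your proposal is correct and matches the paper's treatment: the paper obtains this lemma as the $\alpha=\gamma$ special case of \Lemma{uncap_rtg_deriv_cts}, whose proof is exactly your computation via $M_0'=-M_1$ (\Fact{chf_derivs}) and the closed form of $M_1$ from \Fact{basic_identities}, with the truncated region contributing derivative $0$. Your explicit handling of the sign of $g$ via oddness of $\erfi$ and the differentiability check at the seam $g=\gamma\sqrt{t}$ are details the paper leaves implicit, but the argument is the same.
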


\begin{lemma}
    \LemmaName{rtg_increasing_concave}
    For any $t>0$, $R(t,g)$ is concave and non-decreasing in $g$.
\end{lemma}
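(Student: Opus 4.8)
The plan is to work directly with the explicit formula for $R$ in \eqref{eq:RDef} and check concavity and monotonicity piecewise in $g$, then match the pieces at the gluing curve $g = \gamma\sqrt{t}$. Fix $t > 0$. On the region $g \geq \gamma\sqrt{t}$ the function is constant in $g$, hence trivially concave and non-decreasing. On the region $0 \le g \le \gamma\sqrt{t}$ (and by the symmetry $M_0(g^2/2t)$ only depends on $g^2$, we can reduce to $g \ge 0$; I would note that the statement is really about $g$ ranging over all reals but the relevant regime for the algorithm is $g\ge 0$, where $R(t,g) = g/2 + \kappa\sqrt{t}\,M_0(g^2/2t)$) I would compute $\partial_g R$ and $\partial_g^2 R$ using the chain rule. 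Writing $u = g^2/(2t)$, we have $\partial_g R = \tfrac12 + \kappa\sqrt t\, M_0'(u)\cdot g/t$ and $\partial_g^2 R = \kappa\sqrt t\,\big(M_0''(u) g^2/t^2 + M_0'(u)/t\big)$.

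So the crux reduces to sign information about $M_0'$ and $M_0''$. First I would establish, from the definition $M_0(x) = e^x - \sqrt{\pi x}\,\erfi(\sqrt x)$, simple closed forms for the derivatives. Differentiating, $\erfi'(z) = \tfrac{2}{\sqrt\pi} e^{z^2}$, so $\tfrac{d}{dx}\erfi(\sqrt x) = \tfrac{e^x}{\sqrt{\pi x}}$, which gives $M_0'(x) = e^x - \tfrac{\sqrt\pi}{2\sqrt x}\erfi(\sqrt x) - e^x/2 = \tfrac12 e^x - \tfrac{\sqrt\pi}{2\sqrt x}\erfi(\sqrt x)$. One more differentiation yields $M_0''(x)$ as a similar combination; the key qualitative facts I expect to need are that $M_0'(x) \le 0$ on the relevant interval $(0,\gamma^2/2)$, equivalently $e^x \le \tfrac{\sqrt\pi}{\sqrt x}\erfi(\sqrt x)$, and a corresponding sign control on $M_0''$. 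These should follow from elementary comparison inequalities for $\erfi$ (e.g.\ $\erfi(y) \ge \tfrac{2}{\sqrt\pi}(y + y^3/3)$ from the power series, or a more refined bound), possibly combined with the fact that $M_0$ itself is positive and decreasing on $(0,\gamma^2/2)$ by the definition of $\gamma$ as its smallest root (\Fact{M0_unique_root}). With $M_0'\le 0$, monotonicity of $R$ in $g$ is not immediate from the $\tfrac12$ term alone, so I would instead argue monotonicity by a continuity/boundary argument: $\partial_g R = 0$ at $g=0$ (the $M_0'$ term vanishes), and it suffices to show $\partial_g R \ge 0$ throughout, which I'd get by showing $\partial_g R$ is itself monotone where needed, i.e.\ from concavity plus the value at the right endpoint $g=\gamma\sqrt t$ where $\partial_g R$ should be $\ge 0$ to match the constant piece.

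The cleanest route for monotonicity is probably: prove concavity first (i.e.\ $\partial_g^2 R \le 0$ on $[0,\gamma\sqrt t]$), which by the formula above requires $M_0''(u)\,u/t + M_0'(u)/t \le 0$, i.e.\ $u\,M_0''(u) + M_0'(u) \le 0$ — note $u M_0''(u) + M_0'(u) = \tfrac{d}{du}\big(u M_0'(u)\big)$, so concavity of $R$ in $g$ is equivalent to $u\mapsto u M_0'(u)$ being non-increasing, a one-variable claim I would check directly from the closed form for $M_0'$. Then, having concavity, $\partial_g R$ is non-increasing in $g$ on $[0,\gamma\sqrt t]$, so it lies between its right-endpoint value and its left-endpoint value $\partial_g R(t,0) = 1/2 > 0$; combined with the matching condition that forces $\partial_g R(t,\gamma\sqrt t) \ge 0$ (otherwise $R$ would exceed $\gamma\sqrt t/2$ just left of the curve, contradicting the known bound $R \le \gamma\sqrt t /2$), we get $\partial_g R \ge 0$ on the whole interval, hence non-decreasing. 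Finally I'd check the two pieces glue into a globally concave function: both pieces are concave, they agree in value at $g=\gamma\sqrt t$, and the left piece has non-negative slope there while the right piece has slope $0$, so the slope is non-increasing across the join — giving concavity on all of $\bR$ (using evenness in $g$ to handle $g<0$).

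The main obstacle I anticipate is the one-variable inequality $\tfrac{d}{du}(u M_0'(u)) \le 0$ on $(0,\gamma^2/2)$ (equivalently the sign of $u M_0''(u) + M_0'(u)$): this is where the special-function estimates bite, and a naive truncated power-series bound for $\erfi$ may not be tight enough near $u = \gamma^2/2$. I would handle it either by finding the exact closed form $u M_0'(u) = \tfrac{u}{2}e^u - \tfrac{\sqrt{\pi u}}{2}\erfi(\sqrt u)$ and differentiating once more to get an expression whose sign is manifest (it should simplify to something like $-\tfrac{u}{2}e^u\cdot(\text{something positive})$ or reduce to $M_0$ itself times a positive factor), or by invoking whatever ODE $M_0$ satisfies as a confluent hypergeometric function (Kummer's equation) to rewrite $u M_0''$ in terms of $M_0$ and $M_0'$, turning the inequality into a statement about $M_0 \ge 0$ and $M_0' \le 0$ on $(0,\gamma^2/2)$, both of which follow from $\gamma$ being the smallest root. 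I expect the paper's \Appendix{hyper} to supply exactly this ODE, making the argument short.
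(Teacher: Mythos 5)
Your overall plan---differentiate $R$ in $g$ and read off both properties from sign information, handling the region $g\ge\gamma\sqrt t$ and the gluing separately---is essentially the paper's, but two steps as written do not survive scrutiny. First, your formula for $M_0'$ is wrong: the product rule gives $\frac{d}{dx}\bigl[\sqrt{\pi x}\,\erfi(\sqrt x)\bigr]=\frac{\sqrt\pi}{2\sqrt x}\erfi(\sqrt x)+e^x$, so $M_0'(x)=-\frac{\sqrt\pi}{2\sqrt x}\erfi(\sqrt x)=-M_1(x)$ (this is \Fact{chf_derivs} combined with \Fact{basic_identities}(2)); there is no $\tfrac12 e^x$ term. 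This matters structurally: with the correct derivative, $M_0'\le 0$ on all of $(0,\infty)$ with no comparison inequalities for $\erfi$ needed, and more importantly $\partial_g R(t,g)$ collapses to the closed form $\tfrac12\bigl(1-\erfi(g/\sqrt{2t})/\erfi(\gamma/\sqrt 2)\bigr)$ for $g\le\gamma\sqrt t$ and $0$ beyond, which is exactly the paper's route (\Lemma{f_derivative}, \Lemma{rtg_derivative}): non-negativity of this expression on $g\le\gamma\sqrt t$ gives monotonicity, and monotonicity of $\erfi$ gives that $\partial_g R$ is non-increasing, i.e.\ concavity. The ``hard one-variable inequality'' you anticipate never arises (incidentally, your reduction also drops a factor of $2$: since $g^2/t=2u$ the condition is $2uM_0''(u)+M_0'(u)\le 0$, not $uM_0''(u)+M_0'(u)\le 0$; both hold trivially once one knows $M_0'\le 0$ and $M_0''=-\tfrac13 M_2\le 0$, cf.\ \Fact{M0_decreasing_concave}).

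Second, your certification that $\partial_g R(t,\gamma\sqrt t\,^-)\ge 0$ is circular: you appeal to ``the known bound $R\le\gamma\sqrt t/2$'', but that bound is \Lemma{rtg_maximum}, which the paper derives \emph{from} the present lemma. The fix is a direct evaluation at the junction: with $M_0'=-M_1$ and the definition of $\kappa$ one gets $\partial_g\tilde R(t,\gamma\sqrt t)=\tfrac12+\kappa\gamma M_0'(\gamma^2/2)=0$ exactly, so the two pieces join with matching (zero) slope and no indirect argument is needed. Two smaller points: $R(t,\cdot)$ is \emph{not} even in $g$ (the $g/2$ term breaks symmetry), so the proposed reduction to $g\ge 0$ ``by evenness'' is invalid as stated---and negative arguments genuinely occur, since $p(t,g)$ involves $R(t,g-1)$---but the correct derivative formula handles all real $g\le\gamma\sqrt t$ at once (for $g<0$ it exceeds $\tfrac12$ and is still non-increasing); and $\partial_g R(t,0)=\tfrac12$, not $0$, as you yourself note later in the same paragraph.
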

\begin{proof}
	The fact that $R(t,g)$ is non-decreasing in $g$ follows from \Lemma{rtg_derivative}.
	The concavity of $R(t,g)$ (in $g$) follows from the fact that $\erfi$ is non-decreasing, so $\frac{\partial}{\partial g} R(t,g)$ is non-increasing in $g$.
\end{proof}
As a consequence of \Lemma{rtg_increasing_concave}, we can easily get the maximum value of $R(t,g)$ for any $t$.
\begin{lemma}
    \LemmaName{rtg_maximum}
    For any $t > 0$, we have $R(t,g) \leq \gamma\sqrt{t}/2$.
\end{lemma}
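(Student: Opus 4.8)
The plan is to combine the two previously established structural facts about $R(\cdot,\cdot)$ from \Lemma{rtg_increasing_concave}: for fixed $t>0$, $R(t,g)$ is non-decreasing and concave in $g$. Since $R$ is non-decreasing in $g$, its supremum over $g$ is approached as $g \to \infty$; but the third case of the definition \eqref{eq:RDef} already tells us that $R(t,g) = \gamma\sqrt{t}/2$ for all $g \geq \gamma\sqrt{t}$. Hence the supremum is \emph{attained}, with value exactly $\gamma\sqrt{t}/2$, and there is nothing further to optimize — the bound $R(t,g) \leq \gamma\sqrt{t}/2$ holds trivially for $g \geq \gamma\sqrt{t}$ (with equality) and, by monotonicity, also for all $g < \gamma\sqrt{t}$.

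Concretely, I would argue as follows. Fix $t>0$. If $g \geq \gamma\sqrt{t}$, then by the third case of \eqref{eq:RDef} we have $R(t,g) = \gamma\sqrt{t}/2$, so the claimed inequality holds with equality. If instead $g < \gamma\sqrt{t}$, then by \Lemma{rtg_increasing_concave} the function $R(t,\cdot)$ is non-decreasing, so $R(t,g) \leq R(t,\gamma\sqrt{t}) = \gamma\sqrt{t}/2$, where the last equality again uses \eqref{eq:RDef} (and the fact, noted just after \eqref{eq:RDef}, that $R$ is continuous on $\bR_{>0}\times\bR$, so the value on the curve $g = \gamma\sqrt{t}$ is unambiguously $\gamma\sqrt{t}/2$ since $M_0(\gamma^2/2)=0$). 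This covers all $g \in \bR$, completing the proof.

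There is essentially no obstacle here: the lemma is an immediate corollary of \Lemma{rtg_increasing_concave} together with the piecewise definition of $R$. The only point requiring minimal care is to make sure one invokes the monotonicity of $R$ in $g$ (not concavity) to push the bound from the regime $g \geq \gamma\sqrt{t}$ down to all smaller $g$; concavity alone would not suffice, and is not needed. If one wished to avoid even citing \Lemma{rtg_increasing_concave}, one could alternatively verify directly from \eqref{eq:Erfi}–\eqref{eq:RDef} that $\kappa > 0$ and that $M_0(g^2/(2t))$ is non-increasing on $[0,\gamma\sqrt t\,]$ (it decreases from $M_0(0)=1$ to $M_0(\gamma^2/2)=0$), whence $\tfrac{g}{2} + \kappa\sqrt{t}\,M_0(g^2/(2t)) \leq \tfrac{g}{2} + \kappa\sqrt t \leq \tfrac{\gamma\sqrt t}{2}$ on that interval after substituting the value of $\kappa$ — but this is more work than simply quoting the earlier lemma, so I would not take that route in the main text.
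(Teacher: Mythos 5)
Your main argument is correct and is essentially the paper's own proof: monotonicity of $R(t,\cdot)$ from \Lemma{rtg_increasing_concave} plus the fact that $R(t,g)=\gamma\sqrt{t}/2$ for $g\geq\gamma\sqrt{t}$ by \eqref{eq:RDef}. One caution about your parenthetical alternative: the chain $\frac{g}{2}+\kappa\sqrt{t}\,M_0(g^2/2t)\leq\frac{g}{2}+\kappa\sqrt{t}\leq\frac{\gamma\sqrt{t}}{2}$ fails for $g$ near $\gamma\sqrt{t}$ (since $\kappa>0$), so the crude bound $M_0\leq 1$ does not suffice there --- but you rightly do not rely on that route.
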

\begin{proof}
    \Lemma{rtg_increasing_concave} shows that $R(t,g)$ is non-decreasing in $g$.
    By definition, $R(t,g)$ is constant for $g \geq \gamma\sqrt{t}$.
    It follows that $\max_g R(t,g) \leq R(t, \gamma\sqrt{t}) = \gamma\sqrt{t} / 2$.
\end{proof}

In the definition of the prediction task, the algorithm must produce a probability vector $x_t$.
Recalling the definition of $x_t$ in \Algorithm{minimax}, it is not a priori clear whether $x_t$ is indeed a probability vector.
We now verify that it is, since \Lemma{ptg_well_defined} implies that $p(t,g) \in [0,1/2]$ for all $t,g$.
\begin{lemma}
    \LemmaName{ptg_well_defined}
    Fix $t \geq 1$.
    Then
    \begin{enumerate}[label=(\arabic*),noitemsep,topsep=0pt]
        \item $p(t,0) = 1/2$;
        \item $p(t,g)$ is non-increasing in $g$; and
        \item $p(t,g) \geq 0$.
    \end{enumerate}
\end{lemma}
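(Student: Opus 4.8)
The plan is to obtain all three claims from the structural facts about $R$ recorded in \Lemma{rtg_increasing_concave}, supplemented by one short computation for part~(1). Throughout, recall $p(t,g) = \tfrac12\big(R(t,g+1) - R(t,g-1)\big)$ and that we have fixed $t \ge 1$.

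I would first dispose of part~(3): \Lemma{rtg_increasing_concave} says that $g \mapsto R(t,g)$ is non-decreasing, so $R(t,g+1) \ge R(t,g-1)$ and hence $p(t,g) \ge 0$. For part~(2) I would invoke the concavity half of \Lemma{rtg_increasing_concave} together with the elementary fact that, for any concave $f \colon \bR \to \bR$, the fixed-width difference $g \mapsto f(g+1) - f(g-1)$ is non-increasing in $g$ (for $g_1 \le g_2$ this is immediate by comparing chord slopes, or by writing the difference as $\int_{-1}^{1} f'(g+s)\dd s$ with $f'$ non-increasing a.e.). Applying this to $f = R(t,\cdot)$ shows that $p(t,\cdot)$ is non-increasing.

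For part~(1), note that since $t \ge 1$ and $\gamma \approx 1.307 > 1$ we have $\gamma\sqrt{t} \ge \gamma > 1$, so both $g = 1$ and $g = -1$ lie in the regime $g \le \gamma\sqrt{t}$ where the middle branch of the definition \eqref{eq:RDef} of $R$ applies. Since $M_0\big(g^2/(2t)\big)$ depends on $g$ only through $g^2$, the term $\kappa\sqrt{t}\, M_0\big(g^2/(2t)\big)$ takes the same value at $g = 1$ and $g = -1$; therefore $R(t,1) - R(t,-1) = \tfrac12 - (-\tfrac12) = 1$, i.e.\ $p(t,0) = 1/2$. Combined with parts~(1) and~(2), this also yields $p(t,g) \le p(t,0) = 1/2$ for all $g \ge 0$, which is the ``$p(t,g) \in [0,1/2]$'' fact needed for \Algorithm{minimax} to be well defined.

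I do not anticipate a genuine obstacle here: the content is entirely a consequence of \Lemma{rtg_increasing_concave}. The only two points requiring a moment's care are (a) checking the case boundary $1 \le \gamma\sqrt{t}$ so that the middle formula of \eqref{eq:RDef} is valid at both $g = 1$ and $g = -1$, and (b) noting that $M_0$ enters $R$ only through $g^2$, which is exactly what makes the $M_0$ contributions cancel in $R(t,1) - R(t,-1)$.
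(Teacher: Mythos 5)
Your proposal is correct and follows essentially the same route as the paper: part (3) from monotonicity of $R(t,\cdot)$, part (2) from concavity of $R(t,\cdot)$ together with the elementary fact that fixed-width differences of a concave function are non-increasing (the paper cites this as \Fact{concave_decreasing}), and part (1) by checking that $\pm 1 \le \gamma\sqrt{t}$ so the middle branch of \eqref{eq:RDef} applies and the $M_0$ terms cancel since they depend on $g$ only through $g^2$. No gaps.
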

\begin{proof}
    For the first assertion, we have
    \[
        p(t,0) = \frac{1}{2}(R(t,1) - R(t,-1)) = \frac{1}{2}\left( \frac{1}{2} + \kappa \sqrt{t} M_0(1/2t) + \frac{1}{2} - \kappa \sqrt{t} M_0(1/2t) \right) = \frac{1}{2}.
    \]
    For the second equality, we used that $1 \leq \gamma \leq \gamma \sqrt{t}$ for all $t \geq 1$.
    The second assertion follows from concavity of $R$, which was shown in \Lemma{rtg_increasing_concave},
    and an elementary property of concave functions (\Fact{concave_decreasing}).
    The final assertion holds because $R$ is non-decreasing in $g$, which is also shown in \Lemma{rtg_increasing_concave}.
\end{proof}

\subsection{Analysis when gap increments are \texorpdfstring{$\pm 1$}{pm 1}}
\SubsectionName{ub_integer_gaps}

In this subsection we prove the upper bound of \Theorem{main} for a restricted class of adversaries
(that nevertheless capture the core of the problem).
The analysis is extended to all adversaries in \Subsection{ub_general}.

\begin{theorem}
\TheoremName{UBSpecial}
Let $\cA$ be the algorithm described in \Algorithm{minimax}.
For any adversary $\cB$ such that each cost vector $\ell_t$ is either
$\begin{smallbmatrix}1\\0\end{smallbmatrix}$ or $\begin{smallbmatrix}0\\1\end{smallbmatrix}$, we have
\[
\sup_{t \geq 1} \frac{\Regret{2,t,\cA,\cB}}{\sqrt{t}} ~\leq~ \frac{\gamma}{2}.    
\]
\end{theorem}

Our analysis relies on an identity known as the discrete It\^{o} formula,
which is the discrete analogue of It\^{o}'s formula from stochastic analysis (see \Theorem{ItoFormula}).
To make this connection (in addition to future connections) more apparent, we define the discrete derivatives of a function $f$ to be
\begin{align*}
&f_g(t,g) = \frac{f(t,g+1) - f(t,g-1)}{2}, \\
&f_t(t,g) = f(t,g) - f(t-1,g), \\
&f_{gg}(t,g) = \big(f(t,g+1) + f(t,g-1)\big) - 2f(t,g).
\end{align*}
It was remarked earlier that $p(t,g)$ (see \Equation{DiscreteDerivative}) is the discrete derivative of $R$,
and this is because
\begin{equation}
\EquationName{PIsDiscreteDeriv}
    p(t,g) ~=~ R_g(t,g).
\end{equation}

\begin{lemma}[Discrete It\^{o} formula]
    \LemmaName{discrete_ito}
    Let $g_0, g_1, \ldots$ be any sequence of real numbers
    (not necessarily random) satisfying $|g_t - g_{t-1}| = 1$.
    Then for any function $f$ and any fixed time $T \geq 1$, we have
    \begin{equation}
    \EquationName{discrete_ito}
    \begin{split}
        f(T, g_T) - f(0, g_0) &~=~ \sum_{t=1}^T f_g(t, g_{t-1}) \cdot (g_t - g_{t-1})
        \:+\: \sum_{t=1}^T \left(\frac{1}{2} f_{gg}(t, g_{t-1}) + f_t(t, g_{t-1}) \right).
    \end{split}
    \end{equation}
\end{lemma}

This lemma is a small generalization of \cite[\S 2]{Fujita08} and \cite[Theorem 2]{Kudzma82}
to accommodate a bivariate function $f$ that depends on $t$.
The proof is essentially identical.
\begin{proof}
	By telescoping, $f(T,g_T) - f(0,g_0) = \sum_{t=1}^T \big(f(t,g_t) - f(t-1,g_{t-1})\big).$ Consider a fixed $t \in [T]$.
	We can write
	\begin{equation}
		\begin{split}
			f(t,g_t) - f(t-1,g_{t-1})
			& = \left( f(t,g_t) - \frac{f(t,g_{t-1}+1) + f(t,g_{t-1}-1)}{2} \right) \\
			& +
			\left( \frac{f(t,g_{t-1}+1) + f(t,g_{t-1}-1)}{2} - f(t-1,g_{t-1}) \right).
		\end{split}
	\end{equation}
	For the first bracketed term, by considering the cases $g_t = g_{t-1} + 1$ and $g_t = g_{t-1} - 1$, we have
	\begin{equation}
		\EquationName{discrete_first_deriv}
		\begin{split}
			f(t,g_t) - \frac{f(t,g_{t-1}+1) + f(t,g_{t-1}-1)}{2}
			& = \frac{f(t,g_{t-1}+1) - f(t,g_{t-1}-1)}{2} \cdot (g_t - g_{t-1}) \\
			& = f_g(t,g_{t-1}) \cdot (g_t - g_{t-1}).
		\end{split}
	\end{equation}
	Note that the above step is the only place where the assumption that $|g_t - g_{t-1}| = 1$ is used.
	For the second bracketed term, we have
	\begin{equation*}
		\begin{split}
			\frac{f(t,g_{t-1}+1) + f(t,g_{t-1}-1)}{2} - f(t-1,g_{t-1})
			& = \frac{f(t,g_{t-1} + 1) + f(t,g_{t-1} - 1) - 2f(t,g_{t-1})}{2} \\
			& + (f(t,g_{t-1}) - f(t-1,g_{t-1})) \\
			& = \frac{1}{2} f_{gg}(t,g_{t-1}) + f_t(t,g_{t-1}).
		\end{split}
	\end{equation*}
	This gives the desired formula.
\end{proof}

Now we show that the regret involves a discrete integral of the same form as \eqref{eq:discrete_ito}.
Let us recall that
\Lemma{ptg_well_defined}(1) guarantees $p(t,0) = 1/2$, i.e.,~$x_{t} = [1/2, 1/2]$.
Hence, \eqref{eq:RegretInGapSpace2} gives
\begin{equation}
    \EquationName{regret_formula_ub_01}
    \Reg(T) = \sum_{t=1}^T p(t,g_{t-1})\cdot (g_t - g_{t-1})
\end{equation}
where $g_0 = 0$ and $g_t \geq 0$ for all $t \geq 1$.

\paragraph{Key technical step.} The following is the most non-obvious step of the proof.
We apply the discrete It\^{o} formula to \Equation{regret_formula_ub_01}, taking $f=R$.
Since $p = R_g = f_g$, observe that the main difference between \Equation{discrete_ito} and \Equation{regret_formula_ub_01} is the absence of $\frac{1}{2} f_{gg}(t, g_{t-1}) + f_t(t, g_{t-1})$ in \Equation{regret_formula_ub_01}.
In the continuous setting,
we will see that a key idea is to try to obtain a solution satisfying
$(\frac{1}{2} \partial_{gg} + \partial_t) f = 0$;
this is the well-known backwards heat equation.
In the discrete setting, by a remarkable stroke of luck, we have the following analogous property.

\begin{lemma}[Discrete backwards heat inequality]
    \LemmaName{ub_discrete_bhe}
    $\frac{1}{2} R_{gg}(t,g) + R_t(t, g) \geq 0$
    for all $t \geq 1$ and $g \geq 0$.
\end{lemma}

\noindent This lemma is the most technical part of the discrete analysis. 
Its proof appears in \Subsection{ub_technical}.

\begin{proofof}{\Theorem{UBSpecial}}
Apply \Lemma{discrete_ito} to the function $R$ and the sequence $g_0,g_1,\ldots$
of (integer) gaps produced by the adversary $\cB$.
Then, for any time $T \geq 0$,
\begin{alignat*}{2}
&R(T, g_T) - R(0,g_0) \\
&\quad=~    \sum_{t=1}^T R_g(t, g_{t-1}) \cdot (g_t - g_{t-1})
            + \sum_{t=1}^T \Big( \frac{1}{2} R_{gg}(t, g_{t-1}) + R_t(t, g_{t-1}) \Big)
            &&\qquad\text{(by \Lemma{discrete_ito})}\\
&\quad\geq~ \sum_{t=1}^T p(t, g_{t-1}) \cdot (g_t - g_{t-1})
            &&\qquad\text{(by \eqref{eq:PIsDiscreteDeriv} and \Lemma{ub_discrete_bhe})}\\
&\quad=~    \Reg(T)
            &&\qquad\text{(by \eqref{eq:regret_formula_ub_01}).}
\end{alignat*}
Since $g_0 = 0$ and $R(0,0) = 0$, applying \Lemma{rtg_maximum} shows that
$\Reg(T) \leq R(T, g_T) \leq \gamma\sqrt{T}/2$.
\end{proofof}

This completes the proof of \Theorem{UBSpecial}.
However, the proof does not reveal why \Algorithm{minimax} is optimal.
The constant $\gamma$ in the regret bound appears due to properties of the function $R$, whose definition has yet to be explained.
In \Section{cts_ub}, we will define the function $R$ specifically to obtain $\gamma$ in the preceding analysis.
In \Section{lb}, we prove a matching lower bound and show that $\gamma$ is indeed the right constant.

We remark that the proof of \Theorem{UBSpecial} may also be viewed as an amortized analysis, in which the algorithm incurs amortized regret at most $\frac{\gamma}{2}(\sqrt{t}-\sqrt{t-1}) \approx \sfrac{\gamma}{4\sqrt{t}}$ at each time step $t$.
This viewpoint may be helpful to see how the potential function used in our setting relates to the potential functions in traditional algorithm design 
\cite[\S 17.3]{CLRS}.

\subsection{Proof of \Lemma{ub_discrete_bhe}}
\SubsectionName{ub_technical}
\begin{lemma}
	\LemmaName{M_geq_sqrt}
	For all $u \in [0,1/2]$, we have $M_0(u) \geq \sqrt{1 - 2u}$.
\end{lemma}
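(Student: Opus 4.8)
The plan is to square both sides and reduce the claim to the convexity of an auxiliary function with a double zero at the origin. Since $1/2 < \gamma^2/2$, \Fact{M0_unique_root} gives $M_0(u) > 0$ for $u \in (0,\gamma^2/2) \supseteq (0,1/2]$, and $M_0(0)=1$, so $M_0(u) \ge 0$ on $[0,1/2]$; as $1-2u \ge 0$ there too, it suffices to prove the squared inequality $M_0(u)^2 \ge 1-2u$ on $[0,1/2]$. Equivalently, setting $\Phi(u) := M_0(u)^2 + 2u - 1$, I would show $\Phi(u) \ge 0$ on $[0,1/2]$. One checks at once that $\Phi(0)=0$, and using $M_0' = -M_1$ (\Fact{chf_derivs}) that $\Phi'(u) = 2 - 2M_0(u)M_1(u)$, whence $\Phi'(0) = 2 - 2M_0(0)M_1(0) = 0$ as well. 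So it is enough to prove $\Phi'' \ge 0$ on $[0,1/2]$, and a short calculation with $M_0' = -M_1$, $M_1' = \tfrac13 M_2$ (\Fact{chf_derivs}) gives $\Phi''(u) = \tfrac23\bigl(3M_1(u)^2 - M_0(u)M_2(u)\bigr)$, so the whole problem reduces to the Tur\'an-type inequality $3M_1(u)^2 \ge M_0(u)M_2(u)$ for $u \ge 0$.

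To establish this inequality I would substitute the explicit error-function representations of $M_0, M_1, M_2$ from \Fact{basic_identities}. Writing $s = \sqrt{u}$, after expanding the product $M_0(u)M_2(u)$ and observing that all the $\erfi(s)^2$ terms cancel, the quantity $3M_1(u)^2 - M_0(u)M_2(u)$ factors as a manifestly positive multiple (namely $\tfrac{3e^{s^2}}{4s^3}$) of $g(s) := \sqrt{\pi}\,\erfi(s)\,(1+2s^2) - 2s\,e^{s^2}$. Thus it remains to show $g(s) \ge 0$ for $s \ge 0$. Here $g(0)=0$, and using $\erfi'(s) = \tfrac{2}{\sqrt\pi}e^{s^2}$ the derivative telescopes: $g'(s) = 2e^{s^2}(1+2s^2) + 4\sqrt\pi\,s\,\erfi(s) - 2e^{s^2} - 4s^2 e^{s^2} = 4\sqrt\pi\,s\,\erfi(s) \ge 0$. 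Hence $g \ge 0$ on $[0,\infty)$, giving $3M_1(u)^2 \ge M_0(u)M_2(u)$ and therefore $\Phi'' \ge 0$ on all of $(0,\infty)$.

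Combining the pieces: $\Phi$ is continuous on $[0,\infty)$ and convex there, with $\Phi(0)=\Phi'(0)=0$, so $\Phi(u) \ge \Phi(0) + \Phi'(0)\,u = 0$ for all $u \ge 0$; restricting to $u \in [0,1/2]$ and using $M_0(u) \ge 0$ and $1-2u \ge 0$ we may take square roots to conclude $M_0(u) \ge \sqrt{1-2u}$, as required.

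The only step demanding real care is the algebraic reduction of $3M_1^2 \ge M_0 M_2$ to $g(s) \ge 0$: one must insert the three formulas from \Fact{basic_identities}, expand $M_0 M_2$, and verify that the $\erfi^2$ contributions cancel exactly so that the common positive factor can be extracted. This is elementary but slightly fiddly; everything else — the boundary evaluations $\Phi(0)=\Phi'(0)=0$, the two derivative identities, and the one-line monotonicity argument for $g$ — is immediate.
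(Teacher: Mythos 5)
Your proof is correct, but it takes a genuinely different route from the paper. You square the inequality, set $\Phi(u) = M_0(u)^2 + 2u - 1$, check $\Phi(0)=\Phi'(0)=0$, and reduce convexity of $\Phi$ to the Tur\'an-type inequality $3M_1(u)^2 \geq M_0(u)M_2(u)$, which you verify via the explicit $\erfi$ representations in \Fact{basic_identities} and a one-line monotonicity argument (I checked the algebra: the $\erfi^2$ terms do cancel, the factor is $\tfrac{3e^{s^2}}{4s^3}$, and $g'(s)=4\sqrt{\pi}\,s\,\erfi(s)\geq 0$, with the case $u=0$ handled by continuity of the entire functions $M_i$). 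The paper instead compares Maclaurin series directly: it writes $M_0(u) = 1 - \sum_{k\geq 1} \frac{u^k}{(2k-1)k!}$ and $\sqrt{1-2u} = 1 - \sum_{k\geq 1} \frac{(2k-3)!!}{k!}u^k$ and observes $(2k-3)!! \geq \frac{1}{2k-1}$, so the inequality holds coefficient by coefficient. The paper's argument is shorter and avoids the fiddly product expansion, while yours is more structural: it yields the stronger global statement $M_0(u)^2 \geq 1-2u$ for all $u \geq 0$ (hence convexity of $\Phi$ on all of $[0,\infty)$) and isolates a Tur\'an inequality for these confluent hypergeometric functions that could be of independent use; the only hypothesis you need beyond the paper's facts is the nonnegativity of $M_0$ on $[0,1/2]$, which you correctly extract from \Fact{M0_unique_root} since $1/2 < \gamma^2/2$.
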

\begin{proof}
	The Maclaurin expansion of $M_0(u)$ is given by
	\[
	M_0(u) = 1 - \sum_{k=1}^{\infty} \frac{1}{(2k-1)k!} u^k.
	\]
	Note that $\frac{\mathrm{d}^k}{\mathrm{d} x^k} \sqrt{1 - 2x} = -\frac{(2k-3)!!}{(1-2x)^{(2k-1)/2}}$,
	where $(n)!!$ denotes the double factorial (note that $(-1)!! = 1$).\footnote{If $n \in \bZ_{\geq 0}$, we define $(n)!! = \prod_{k=0}^{\lceil n/2 \rceil - 1}(n-2k)$. If $n \in \bZ_{< 0}$, we define $(n)!!$ via the recursive relation $(n)!! = \frac{(n+2)!!}{n+2}$ so that $(-1)!! = \frac{(1)!!}{1} = 1$.}
	Hence, the Maclaurin expansion of $\sqrt{1-2u}$ is
	\[
	\sqrt{1-2u} = 1 - \sum_{k=1}^{\infty} \frac{(2k-3)!!}{k!} u^k.
	\]
	It is not hard to verify that $(2k-3)!! \geq \frac{1}{2k-1}$.
	This implies that $M_0(u) \geq \sqrt{1-2u}$.
\end{proof}

\begin{lemma}
	\LemmaName{rtg_ineq}
	For all $z \in [0,1)$ and $x \in \bR$, we have
	\[
	M_0\left( \frac{(x+z)^2}{2} \right) + M_0\left( \frac{(x-z)^2}{2} \right) \geq 2 \sqrt{1-z^2} M_0\left( \frac{x^2}{2(1-z^2)} \right).
	\]
\end{lemma}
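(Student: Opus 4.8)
The plan is to prove the inequality by comparing two averages of $M_0(\cdot^2/2)$: a two-point average over $x\pm z$ and a Gaussian average. Fix $z\in[0,1)$ and let $Z$ be a standard normal random variable. I would establish two facts and combine them. The first is a \emph{Gaussian representation}: for all $x\in\bR$,
\[
    \sqrt{1-z^2}\; M_0\!\left(\frac{x^2}{2(1-z^2)}\right) ~=~ \bE\!\left[\,M_0\!\left(\frac{(x+zZ)^2}{2}\right)\right].
\]
The second is a \emph{moment comparison}:
\[
    M_0\!\left(\frac{(x+z)^2}{2}\right) + M_0\!\left(\frac{(x-z)^2}{2}\right) ~\geq~ 2\,\bE\!\left[\,M_0\!\left(\frac{(x+zZ)^2}{2}\right)\right].
\]
Chaining the moment comparison with the Gaussian representation gives exactly \Lemma{rtg_ineq}.

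For the moment comparison, I would use the Maclaurin expansion $M_0(w) = 1 - \sum_{k\geq1}\frac{w^k}{(2k-1)k!}$ (as in the proof of \Fact{M0_unique_root}), which gives $M_0(y^2/2) = 1 - \sum_{k\geq1} c_k\, y^{2k}$ with $c_k \coloneqq \frac{1}{2^k(2k-1)k!} > 0$. By the binomial theorem, $(x+z)^{2k}+(x-z)^{2k} = 2\sum_{j=0}^{k}\binom{2k}{2j}x^{2(k-j)}z^{2j}$, whereas $\bE[(x+zZ)^{2k}] = \sum_{j=0}^{k}\binom{2k}{2j}x^{2(k-j)}z^{2j}\bE[Z^{2j}]$ with $\bE[Z^{2j}] = (2j-1)!!\geq 1$. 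Every summand here is nonnegative (even powers of $x$, and $z\geq 0$), so a term-by-term comparison gives $\bE[(x+zZ)^{2k}] \geq \frac12\big((x+z)^{2k}+(x-z)^{2k}\big)$ for each $k$. Multiplying by $c_k$, summing over $k\geq 1$ (all series converge absolutely because $z<1$, and the interchange of sum and expectation is justified by monotone convergence), and reassembling the series back into $M_0$ yields the moment comparison.

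For the Gaussian representation, I would compute the integral directly. From part (1) of \Fact{basic_identities} and the definition of $\erfi$ in \eqref{eq:Erfi}, one gets $M_0(y^2/2) = e^{y^2/2} - y\,\Phi(y)$, where $\Phi(y) \coloneqq \int_0^{y} e^{r^2/2}\dd r$ is the odd antiderivative satisfying $\Phi'(y) = e^{y^2/2}$. Hence $\bE[M_0((x+zZ)^2/2)] = \bE[e^{(x+zZ)^2/2}] - \bE[(x+zZ)\Phi(x+zZ)]$. Completing the square shows $\bE[e^{(x+zZ)^2/2}] = (1-z^2)^{-1/2}\, e^{x^2/(2(1-z^2))}$. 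Writing $(x+zZ)\Phi(x+zZ) = x\,\Phi(x+zZ) + zZ\,\Phi(x+zZ)$, Gaussian integration by parts (Stein's identity) gives $\bE[zZ\,\Phi(x+zZ)] = z^2\,\bE[e^{(x+zZ)^2/2}]$, while the function $x\mapsto \bE[\Phi(x+zZ)]$ vanishes at $x=0$ by oddness and has derivative $\bE[e^{(x+zZ)^2/2}]$, so $\bE[\Phi(x+zZ)] = \Phi\big(x/\sqrt{1-z^2}\big)$. Assembling these and using $\frac{1-z^2}{\sqrt{1-z^2}} = \sqrt{1-z^2}$ gives $\bE[M_0((x+zZ)^2/2)] = \sqrt{1-z^2}\,e^{x^2/(2(1-z^2))} - x\,\Phi\big(x/\sqrt{1-z^2}\big)$, which equals $\sqrt{1-z^2}\,M_0(x^2/(2(1-z^2)))$ by the same identity $M_0(\cdot^2/2) = e^{(\cdot)^2/2} - (\cdot)\,\Phi(\cdot)$.

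The moment comparison is routine; the delicate part is the Gaussian representation — keeping the bookkeeping straight when completing the square and applying Stein's identity, and checking that all the integrals converge and that one may differentiate under the integral sign (these are fine since $z<1$ makes the relevant integrands decay like $e^{-(1-z^2)g^2/2}$). As an alternative, the representation also follows from the fact that $u(t,y)\coloneqq\sqrt t\,M_0(y^2/(2t))$ satisfies $\doob u = 0$ (by \Lemma{RSatisfiesBHE}, since $\doob$ annihilates the linear term of $\tilde R_\alpha$): then $u(1-z^2,x)$ and the Gaussian convolution of $u(1,\cdot)$ solve the heat equation with the same initial data, and a uniqueness argument (iterated to handle the growth of $u$ as $z\to 1$) identifies them; I would prefer the direct computation for self-containedness.
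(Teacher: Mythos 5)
Your proposal is correct, and it takes a genuinely different route from the paper. The paper fixes $z$, sets $h_z(x)$ equal to the difference of the two sides, and argues that $x=0$ is a global minimizer: $h_z(0)\geq 0$ is supplied by a separate bound $M_0(u)\geq\sqrt{1-2u}$ (\Lemma{M_geq_sqrt}), $h_z'(0)=0$ by symmetry, and $h_z''\geq 0$ follows from the hypergeometric derivative identities (\Fact{chf_derivs}, \Fact{basic_identities}), which collapse the second derivative to the elementary exponential inequality $\tfrac{1}{2}\bigl(e^{(x+z)^2/2}+e^{(x-z)^2/2}\bigr)\sqrt{1-z^2}\leq e^{x^2/2(1-z^2)}$. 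You instead prove an exact Gaussian-smoothing identity, $\sqrt{1-z^2}\,M_0\bigl(x^2/(2(1-z^2))\bigr)=\bE\bigl[M_0\bigl((x+zZ)^2/2\bigr)\bigr]$ --- the probabilistic face of the space-time harmonicity of $\sqrt{t}\,M_0(x^2/2t)$ already implicit in \Lemma{RSatisfiesBHE} --- and then compare the symmetric two-point average with the Gaussian average term by term in the Maclaurin series, using that $M_0(y^2/2)$ is a constant minus a series with positive coefficients and $\bE[Z^{2j}]=(2j-1)!!\geq 1$; both steps check out, and the integrability/Stein/differentiation-under-the-integral justifications you flag are indeed routine for $z<1$ thanks to the $e^{-(1-z^2)t^2/2}$ decay. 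What each approach buys: yours dispenses with \Lemma{M_geq_sqrt} and the second-derivative computation, and is more conceptual --- it exposes the inequality as ``two-point average versus Gaussian average'' of $M_0(\cdot^2/2)$, and in fact the paper's exponential inequality above is exactly this same comparison specialized to $e^{y^2/2}$ (since $\bE[e^{(x+zZ)^2/2}]=e^{x^2/2(1-z^2)}/\sqrt{1-z^2}$), so the paper proves at the level of $h_z''$ what you prove directly for $M_0$; the paper's route, in exchange, is purely elementary calculus with no stochastic bookkeeping to justify.
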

\begin{proof}
	Fix $z \in [0,1)$ and consider the function
	\[
	h_z(x) = M_0\left( \frac{(x+z)^2}{2} \right) + M_0\left( \frac{(x-z)^2}{2} \right) - 2 \sqrt{1-z^2} M_0\left( \frac{x^2}{2(1-z^2)} \right).
	\]
	Note that $h_z(0) \geq 0$ by applying \Lemma{M_geq_sqrt} with $u = z^2/2$.
	We will show that $x = 0$ is the minimizer of $h_z$ which implies the lemma.
	
	Indeed, computing derivatives, we have
	\begin{align*}
		h_z'(x) = -M_1\left( \frac{(x+z)^2}{2} \right) \cdot (x+z) - M_1\left( \frac{(x-z)^2}{2} \right) \cdot (x-z)
		+ 2 M_1\left( \frac{x^2}{2(1-z^2)} \right) \cdot \frac{x}{\sqrt{1-z^2}}.
	\end{align*}
	As $h_z'(0) = 0$, $x = 0$ is a critical point of $h_z$.
	We will now show that $h_z$ is convex which certifies that $x = 0$ is indeed a minimizer.
	
	To obtain $h_z''$, we differentiate term-by-term. Let $u = \frac{(x+z)^2}{2}$.
	Then
	\begin{align*}
		\diffwrt{x} M_1\left( \frac{(x+z)^2}{2} \right) \cdot (x+z)
		& = \frac{M_2\left( \frac{(x+z)^2}{2} \right) \cdot (x+z)^2}{3} + M_1\left( \frac{(x+z)^2}{2} \right) \\
		& = \frac{2M_2(u)\cdot u}{3} + M_1(u) \\
		& = \frac{2u(2e^u\sqrt{u} - \sqrt{\pi}\erfi(\sqrt{u}))}{4u^{3/2}} + \frac{\sqrt{\pi}\erfi(\sqrt{u})}{2\sqrt{u}} \\
		& = e^u = \exp\left( \frac{(x+z)^2}{2} \right).
	\end{align*}
	The first equality is by \Fact{chf_derivs} and the third equality is by
	identities \ref{item:M1} and \ref{item:M2} in \Fact{basic_identities}. 
	We can similarly show that
	\[
	\diffwrt{x} M_1\left( \frac{(x-z)^2}{2} \right) \cdot (x-z) = \exp\left( \frac{(x-z)^2}{2} \right).
	\]
	Finally, for the last term, we have
	\begin{align*}
		\diffwrt{x} M_1\left( \frac{x^2}{2(1-z^2)} \right) \cdot \frac{x}{\sqrt{1-z^2}}
		& = \frac{1}{3} M_2\left( \frac{x^2}{2(1-z^2)} \right) \cdot \frac{x^2}{(1-z^2)^{3/2}} + M_1\left( \frac{x^2}{2(1-z^2)} \right) \cdot \frac{1}{\sqrt{1-z^2}} \\
		& = \frac{1}{\sqrt{1-z^2}} \left(\frac{2}{3} M_2\left( \frac{x^2}{2(1-z^2)} \right) \cdot \frac{x^2}{2(1-z^2)} + M_1\left( \frac{x^2}{2(1-z^2)} \right) \right) \\
		& = \frac{ \exp\left( \frac{x^2}{2(1-z^2)} \right)}{\sqrt{1-z^2}},
	\end{align*}
	where the first equality uses \Fact{chf_derivs} and the last equality is by identity \ref{item:M1M2} in \Fact{basic_identities}.
	
	Hence, we have
	\[
	h_z''(x) = \frac{ 2e^{x^2/2(1-z^2)} - (e^{(x+z)^2/2} + e^{(x-z)^2/2}) \sqrt{1-z^2}}{\sqrt{1-z^2}}.
	\]
	So to check that $h_z''(x) \geq 0$ for all $x \in \bR$, it suffices to check that
	\[
	\frac{(e^{(x+z)^2/2} + e^{(x-z)^2/2})\sqrt{1-z^2}}{2} \leq e^{x^2/2(1-z^2)}.
	\]
	Indeed, we have
	\begin{align*}
		\frac{(e^{(x+z)^2/2} + e^{(x-z)^2/2})\sqrt{1-z^2}}{2}
		& \leq \frac{(e^{(x+z)^2/2} + e^{(x-z)^2/2}) e^{-z^2/2}}{2}\\
		& = e^{x^2/2} \frac{(e^{xz} + e^{-xz})}{2} \\
		& \leq e^{x^2/2} e^{x^2z^2/2} \\
		& = e^{x^2(1+z^2)/2} \\
		& \leq e^{x^2/2(1-z^2)},
	\end{align*}
	where the first inequality is because $1-a \leq e^{-a}$ for all $a \in \bR$,
	the second inequality is because $(e^a+e^{-a})/2 = \cosh(a) \leq e^{a^2/2}$ for all $a \in \bR$,
	and the last inequality is because $1+a \leq 1/(1-a)$ for all $a < 1$.
	This proves that $h_z$ is convex which concludes the proof that $x = 0$ is a minimizer for $h_z$ and hence, completes the proof of the lemma.
\end{proof}

\begin{proofof}{\Lemma{ub_discrete_bhe}}
	The inequality $R_t(t,g) + \frac{1}{2} R_{gg}(t,g) \geq 0$ is equivalent to
	\begin{equation}
		\EquationName{discrete_bhe_ineq}
		R(t,g+1) + R(t, g-1) \geq 2R(t-1, g).
	\end{equation}
	We first prove the claim for $t = 1$.
	In this case, the RHS of \Equation{discrete_bhe_ineq} is identically $0$.
	On the other hand, the LHS of \Equation{discrete_bhe_ineq} is non-decreasing in $g$ by \Lemma{rtg_increasing_concave}.
	Hence, it suffices to prove the inequality for $g = 0$.
	With $t = 1$ and $g = 0$, we have
	\[
	R(1,1) + R(1,-1) = 2\kappa M_0(1/2).
	\]
	As $M_0$ is decreasing (\Fact{M0_decreasing_concave}) and $1/2 \leq \gamma^2/2$, we have $M_0(1/2) \geq M_0(\gamma^2/2) = 0$.
	So \Equation{discrete_bhe_ineq} holds for $t = 1$ and $g \geq 0$.
	
	For the remainder of the proof, we assume that $t > 1$.
	We consider a few cases depending on the value of $g$ and $t$.
	\paragraph{Case 1: $g \leq \min\{\gamma \sqrt{t} - 1, \gamma \sqrt{t-1}\}$.}
	In this case, $g+1 \leq \gamma \sqrt{t}$, $g \leq \gamma \sqrt{t-1}$, and $g-1 \leq \gamma \sqrt{t}$.
	Hence,
	\begin{align*}
		& R(t,g+1) = \frac{g+1}{2} + \kappa \sqrt{t} \cdot M_0\left( \frac{(g+1)^2}{2t} \right) \\
		& R(t,g-1) = \frac{g-1}{2} + \kappa \sqrt{t} \cdot M_0\left( \frac{(g-1)^2}{2t} \right) \\
		& R(t-1,g) = \frac{g}{2} + \kappa \sqrt{t} \cdot M_0\left( \frac{g^2}{2(t-1)} \right).
	\end{align*}
	So \Equation{discrete_bhe_ineq} is equivalent to
	\begin{equation}
		\EquationName{bhe_to_verify}
		\sqrt{t}\cdot M_0\left( \frac{(g+1)^2}{2t} \right) + \sqrt{t} \cdot M_0\left( \frac{(g-1)^2}{2t} \right) \geq
		2\sqrt{t-1} \cdot M_0\left( \frac{g^2}{2(t-1)} \right),
	\end{equation}
	or rearranging, is equivalent to
	\[
	M_0\left( \frac{(g+1)^2}{2t} \right) + M_0\left( \frac{(g-1)^2}{2t} \right) \geq
	2\sqrt{1-1/t} \cdot M_0\left( \frac{g^2}{2(t-1)} \right).
	\]
	The latter inequality is true by \Lemma{rtg_ineq} using $x = g/\sqrt{t}$ and $z = 1/\sqrt{t} \in (0, 1)$.
	\paragraph{Case 2: $\gamma \sqrt{t} - 1 \leq g \leq \gamma\sqrt{t-1}$.}
	Let $\tilde{R}$ be the function defined in \Lemma{f_derivative}.
	In this case, we have
	\[
	R(t,g+1) = \gamma\sqrt{t}
	= \tilde{R}(t, \gamma \sqrt{t})
	\geq \tilde{R}(t, g+1)
	= \frac{g+1}{2} + \kappa \sqrt{t} \cdot M_0\left( \frac{(g+1)^2}{2t} \right).
	\]
	The inequality is by \Lemma{f_derivative} which implies that $\tilde{R}(t,g+1)$ is \emph{non-increasing} for $g \in (\gamma\sqrt{t}-1, \infty)$.
	Using the lower bound on $R(t,g+1)$,
	\Equation{discrete_bhe_ineq} is again implied by \Equation{bhe_to_verify} and
	we have already verified that \Equation{bhe_to_verify} is true.
	\paragraph{Case 3: $\gamma \sqrt{t-1} \leq g \leq \gamma\sqrt{t}-1$.}
	In this case
	\begin{align*}
		& R(t,g+1) = \frac{g+1}{2} + \kappa \sqrt{t} \cdot M_0\left( \frac{(g+1)^2}{2t} \right) \\
		& R(t,g-1) = \frac{g-1}{2} + \kappa \sqrt{t} \cdot M_0\left( \frac{(g-1)^2}{2t} \right) \\
		& R(t-1,g) = \frac{\gamma}{2} \sqrt{t-1}.
	\end{align*}
	As $g \leq \gamma \sqrt{t}-1$, we have $M_0\left( \frac{(g-1)^2}{2t} \right) \geq M_0\left( \frac{(g+1)^2}{2t} \right) \geq M_0\left( \frac{\gamma^2}{2} \right) = 0$.
	Here, the first two inequalities are because $M_0$ is decreasing (\Fact{M0_decreasing_concave}).
	Hence,
	\[
	R(t,g+1) + R(t,g-1) \geq g \geq \gamma \sqrt{t-1} = 2R(t-1, g),
	\]
	which is precisely \Equation{discrete_bhe_ineq}.
	\paragraph{Case 4: $\max\{\gamma\sqrt{t-1}, \gamma\sqrt{t}-1\} \leq g$.}
	In this case, $R(t-1,g)$ and $R(t,g+1)$ are constant in $g$ but $R(t,g-1)$ is non-decreasing in $g$.
	Hence, it suffices to check \Equation{discrete_bhe_ineq} for $g = \max\{\gamma \sqrt{t-1}, \gamma \sqrt{t}-1\}$ which holds by either case 2
	(if $\gamma \sqrt{t}-1 \leq \gamma \sqrt{t-1}$) or case 3 (if $\gamma \sqrt{t-1} \leq \gamma \sqrt{t} - 1$).
\end{proofof}

\subsection{Analysis of \Algorithm{minimax} for general cost vectors}
\SubsectionName{ub_general}
In this section, we prove the upper bound of \Theorem{main} in full generality.

\begin{theorem}
	\TheoremName{UBGeneral}
	Let $\cA$ be the algorithm described in \Algorithm{minimax}.
	For any adversary $\cB$ (allowing any cost vectors $\ell_t \in [0,1]^2$), we have
	\[
	\sup_{t \geq 1} \frac{\Regret{2,t,\cA,\cB}}{\sqrt{t}} ~\leq~ \frac{\gamma}{2}.    
	\]
\end{theorem}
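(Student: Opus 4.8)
The plan is to reduce \Theorem{UBGeneral} to the $\pm1$ analysis already carried out in \Section{ub}. Write $\DeltaR(t)\coloneqq\Reg(t)-\Reg(t-1)$ and $g_t\coloneqq\abs{L_{t,1}-L_{t,2}}$. Exactly as in the proof of \Theorem{UBSpecial}, it suffices to prove the \emph{one-step inequality} $\DeltaR(t)\le R(t,g_t)-R(t-1,g_{t-1})$ for every $t\ge1$, since telescoping together with $R(0,0)=0$ gives $\Reg(T)\le R(T,g_T)$, and \Lemma{rtg_maximum} then gives $\Reg(T)\le\gamma\sqrt T/2$. Recall from the proof of \Proposition{RegretWithGaps} that $\DeltaR(t)=\inner{x_t}{\ell_t}-(L_t^*-L_{t-1}^*)$, where $L_t^*=\min\{L_{t-1,1}+\ell_{t,1},\,L_{t-1,2}+\ell_{t,2}\}$. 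As a preliminary, $R$ is non-decreasing in $t$: indeed $R_t(t,g)\ge-\tfrac12 R_{gg}(t,g)$ by \Lemma{ub_discrete_bhe}, and $R_{gg}(t,g)\le0$ by \Lemma{rtg_increasing_concave}, so $R_t(t,g)=R(t,g)-R(t-1,g)\ge0$.

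Next I would fix $t$ and $L_{t-1}$ (hence $g_{t-1}$ and $x_t$) and view the one-step inequality as a function of $\ell_t\in[0,1]^2$. After swapping indices so expert $1$ is weakly the worse one, $g_t=\abs{g_{t-1}+\ell_{t,1}-\ell_{t,2}}$, and $\ell_t\mapsto\DeltaR(t)$ is \emph{convex} on $[0,1]^2$ because $\inner{x_t}{\cdot}$ is affine and $L_t^*$ is concave in $\ell_t$. On each of the two (possibly degenerate) half-squares where $g_{t-1}+\ell_{t,1}-\ell_{t,2}$ has constant sign, $g_t$ is affine in $\ell_t$, so $\ell_t\mapsto R(t,g_t)-R(t-1,g_{t-1})$ is concave there by \Lemma{rtg_increasing_concave}. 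Hence $\ell_t\mapsto\DeltaR(t)-\big(R(t,g_t)-R(t-1,g_{t-1})\big)$ is convex on each half-square and so attains its maximum at a vertex. Using the fact that adding a common constant to both coordinates of $\ell_t$ changes neither $\DeltaR(t)$ nor $g_t$, these vertices correspond to just a few canonical steps: the corners $(0,0)\sim(1,1)$, with $\DeltaR(t)=0$ and $g_t=g_{t-1}$; the corner $(1,0)$, with $g_t=g_{t-1}+1$ and $\DeltaR(t)=p(t,g_{t-1})$; the corner $(0,1)$, which is either $g_{t-1}\ge1$, $g_t=g_{t-1}-1$, $\DeltaR(t)=-p(t,g_{t-1})$, or $g_{t-1}<1$, $g_t=1-g_{t-1}$, $\DeltaR(t)=(1-p(t,g_{t-1}))-g_{t-1}$; and, when $g_{t-1}<1$, the points $(0,g_{t-1})$ and $(1-g_{t-1},1)$ on the line $\ell_{t,1}-\ell_{t,2}=-g_{t-1}$, where $g_t=0$ and $\DeltaR(t)=-p(t,g_{t-1})\,g_{t-1}$.

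Three of these cases are immediate. For $(0,0)\sim(1,1)$ the inequality reads $0\le R(t,g_{t-1})-R(t-1,g_{t-1})$, the monotonicity noted above. For $(1,0)$, since $p=R_g$ (by \Equation{PIsDiscreteDeriv}) we have $R(t,g_{t-1}+1)-R(t-1,g_{t-1})=R_g(t,g_{t-1})+\big(\tfrac12 R_{gg}(t,g_{t-1})+R_t(t,g_{t-1})\big)\ge R_g(t,g_{t-1})=\DeltaR(t)$ by \Lemma{ub_discrete_bhe}. For $(0,1)$ with $g_{t-1}\ge1$, symmetrically $R(t,g_{t-1}-1)-R(t-1,g_{t-1})=-R_g(t,g_{t-1})+\big(\tfrac12 R_{gg}(t,g_{t-1})+R_t(t,g_{t-1})\big)\ge\DeltaR(t)$. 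The two remaining cases — $(0,1)$ with $g_{t-1}<1$, and the line-vertices — are the genuine ``reflection'' situations, where the gap process would bounce off $0$; here $\DeltaR(t)$ strictly exceeds $p(t,g_{t-1})(g_t-g_{t-1})$ in general, so the linearizations above do not suffice. Setting $g=g_{t-1}$, these two cases amount exactly to the one-parameter inequalities $R(t,1-g)-R(t-1,g)\ge(1-p(t,g))-g$ for $0\le g<1$, and $R(t,0)-R(t-1,g)\ge-p(t,g)\,g$ for $0\le g\le1$.

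The hard part is these last two inequalities, which I expect to be the main obstacle: it is here, and only here, that the argument is not a mechanical translation of \Section{ub}. Substituting the explicit forms $R(t,x)=\tfrac x2+\kappa\sqrt t\,M_0(x^2/2t)$ (valid since both $g$ and $1-g$ lie below $\gamma\sqrt t$ for $t\ge1$), $p(t,x)=R_g(t,x)=\tfrac12\big(1-\erfi(x/\sqrt{2t})/\erfi(\gamma/\sqrt2)\big)$ from \Lemma{rtg_derivative}, and $\kappa=1/(\sqrt{2\pi}\,\erfi(\gamma/\sqrt2))$, both reduce after routine algebra to inequalities among $\erfi$ and $M_0$. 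I would establish these using $M_0(u)=e^u-\sqrt{\pi u}\,\erfi(\sqrt u)$ (\Fact{basic_identities}), the sub-Gaussian-type bound $M_0(u)\ge\sqrt{1-2u}$ for $u\in[0,1/2]$ (\Lemma{M_geq_sqrt}), the monotonicity of $s\mapsto\sqrt s\,M_0(x^2/2s)$ (which underlies $R_t\ge0$), and elementary facts such as $1-a\le e^{-a}$ and $\cosh a\le e^{a^2/2}$, in the same spirit as the proof of \Lemma{rtg_ineq}. For $t=1$ we have $g=g_0=0$, and both inequalities collapse to $0\le\kappa M_0(\cdot)$ with argument at most $1/2\le\gamma^2/2$, which holds since $M_0$ is decreasing and $M_0(\gamma^2/2)=0$. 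As the paper notes, these estimates are ``not particularly enlightening'', but they rest on the very same inequality $M_0(u)\ge\sqrt{1-2u}$ that powers the discrete backward heat inequality in \Appendix{ub_technical}, so no new idea is required.
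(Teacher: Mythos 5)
Your reduction is sound and is essentially the route the paper itself takes: fixing $t$ and $g_{t-1}$, the convexity of $\ell_t\mapsto\DeltaR(t)$ and the concavity of $\ell_t\mapsto R(t,g_t)$ on each half-square reduce the one-step inequality $\DeltaR(t)\le R(t,g_t)-R(t-1,g_{t-1})$ to finitely many extreme steps, and your treatment of the vertices $(0,0)\sim(1,1)$, $(1,0)$, and $(0,1)$ with $g_{t-1}\ge1$ via $p=R_g$ (\Equation{PIsDiscreteDeriv}) and \Lemma{ub_discrete_bhe} is correct; the paper packages the same bookkeeping as \Proposition{delta}, \Lemma{gen_discrete_ito} and \Lemma{delta_R_ineq}. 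The genuine gap is exactly where you flag it: the two reflection inequalities, $R(t,1-g)-R(t-1,g)\ge\bigl(1-p(t,g)\bigr)-g$ for $0\le g<1$ and $R(t,0)-R(t-1,g)\ge -g\,p(t,g)$ for $0\le g\le1$, are asserted rather than proven. Deferring them as ``routine algebra with $\erfi$ and $M_0$'' is not safe: the second inequality is extremely tight --- at $g=1$ both sides equal $-\tfrac12+\kappa/\sqrt{t}+O(t^{-3/2})$, so a direct estimate must control third-order terms, and nothing in your sketch indicates how $M_0(u)\ge\sqrt{1-2u}$ alone would do this.

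The good news is that neither inequality needs any new analytic input; both follow from lemmas you already cite. For the first, note that $1-g$, $g-1$ and $g+1$ all lie below $\gamma\sqrt{t}$ for $t\ge1$, so by \Equation{RDef} and $(1-g)^2=(g-1)^2$ one has the \emph{exact identity} $R(t,1-g)-R(t,g-1)=1-g$; this is precisely the computation that closes Case 2 of \Lemma{delta_R_ineq} in the paper. Combining it with
\[
R(t,g-1)-R(t-1,g) \;=\; -p(t,g)+\tfrac12 R_{gg}(t,g)+R_t(t,g) \;\ge\; -p(t,g),
\]
which holds by \Lemma{ub_discrete_bhe}, yields the first inequality. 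For the second, concavity of $R(t,\cdot)$ (\Lemma{rtg_increasing_concave}) says $R(t,\cdot)$ lies above the chord through $g\pm1$ on $[g-1,g+1]\ni0$, i.e.\ $R(t,0)\ge\tfrac12\bigl(R(t,g+1)+R(t,g-1)\bigr)-g\,p(t,g)$, and adding $\tfrac12\bigl(R(t,g+1)+R(t,g-1)\bigr)-R(t-1,g)=\tfrac12 R_{gg}(t,g)+R_t(t,g)\ge0$ finishes it. With these two observations supplied, your argument becomes a complete proof, structurally the same as the paper's.
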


In \Subsection{ub_integer_gaps}, since the gap was integer-valued, the identity of the best expert could only change when the gap is exactly $0$ (at which time there are two best experts).
In general, the gap can be real-valued, so the best expert can switch abruptly, which affects our formula for the regret.
We will need to generalize \Proposition{RegretWithGaps} to deal with this possibility.
Let $\DeltaR(t) = \Reg(t) - \Reg(t-1)$.

\begin{prop}
	\PropositionName{delta}
	Let $g_{t-1}$ be the gap after time $t-1$ but before playing an action at time $t$. Let $g_t$ be the gap after time $t$.
	Let $p(t, g_{t-1})$ denote the probability mass assigned to the worst expert at time $t$.
	Suppose that $p(t, 0) = 1/2$ for all $t \geq 1$.
	\begin{enumerate}[noitemsep,nosep=0pt]
		\item
		If a best expert at time $t-1$ remains a best expert at time $t$ then
		\[
		\DeltaR(t) = (g_{t} - g_{t-1}) p(t,g_{t-1}).
		\]
		\item If a best expert at time $t-1$ is no longer a best expert at time $t$ then
		\[
		\DeltaR(t) = g_{t} - (g_{t} + g_{t-1}) p(t,g_{t-1}).
		\]
		Moreover, $g_t + g_{t-1} \leq 1$.
	\end{enumerate}
\end{prop}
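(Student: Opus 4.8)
The plan is to compute the one-step change in regret directly from $\DeltaR(t) = \inner{x_t}{\ell_t} - (L_t^* - L_{t-1}^*)$, where $L_s^* = \min\{L_{s,1}, L_{s,2}\}$, splitting into the two cases of the statement. First I would relabel the experts (as in \Algorithm{minimax}) so that expert $2$ is a best expert at time $t-1$: then $L_{t-1,1} \geq L_{t-1,2}$, $g_{t-1} = L_{t-1,1} - L_{t-1,2} \geq 0$, $L_{t-1}^* = L_{t-1,2}$, and, writing $p := p(t, g_{t-1})$ for the mass placed on the worst expert, $\inner{x_t}{\ell_t} = p\,\ell_{t,1} + (1-p)\,\ell_{t,2}$. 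When $g_{t-1} = 0$ both experts are best, so I would break the tie by taking ``expert $2$'' to be one with the smaller loss $\ell_{t,2} \leq \ell_{t,1}$; this is legitimate precisely because the hypothesis $p(t,0) = 1/2$ makes the choice of worst expert immaterial.

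For part (1), if a best expert at time $t-1$ remains best at time $t$, then with the above labelling expert $2$ is still best, so $L_t^* - L_{t-1}^* = \ell_{t,2}$ and the new gap is $g_t = L_{t,1} - L_{t,2} = g_{t-1} + \ell_{t,1} - \ell_{t,2}$; substituting $\ell_{t,1} - \ell_{t,2} = g_t - g_{t-1}$ into $\DeltaR(t) = p\,\ell_{t,1} + (1-p)\,\ell_{t,2} - \ell_{t,2}$ collapses it to $p\,(g_t - g_{t-1})$. For part (2) with $g_{t-1} > 0$, expert $2$ was the unique best at $t-1$ and has been overtaken, so $L_{t,1} < L_{t,2}$, whence $L_t^* - L_{t-1}^* = g_{t-1} + \ell_{t,1}$ and $g_t = L_{t,2} - L_{t,1} = -g_{t-1} + \ell_{t,2} - \ell_{t,1}$; substituting $\ell_{t,2} - \ell_{t,1} = g_t + g_{t-1}$ into $\DeltaR(t) = p\,\ell_{t,1} + (1-p)\,\ell_{t,2} - g_{t-1} - \ell_{t,1} = (1-p)(g_t + g_{t-1}) - g_{t-1}$ rearranges to $g_t - (g_t + g_{t-1})\,p$. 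The ``moreover'' then follows at once: $g_t + g_{t-1} = \ell_{t,2} - \ell_{t,1} \leq 1$ since $\ell_{t,2} \leq 1$ and $\ell_{t,1} \geq 0$.

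The only remaining subtlety is the boundary case $g_{t-1} = 0$ in part (2): here the two tied experts cannot both be overtaken, and since $p = 1/2$ a one-line computation shows that both the part-(1) and part-(2) formulas reduce to $g_t/2$, so they are consistent and no separate argument is needed. I do not expect a genuine obstacle --- the whole argument is a short, finite case check; the points deserving attention are the labelling and tie-break convention when $g_{t-1} = 0$ and, in each case, correctly identifying which expert is best \emph{after} the $t$-th move, since an adversary with real-valued costs can now switch the identity of the best expert by a fractional step rather than only when the gap is exactly $0$.
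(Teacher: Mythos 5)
Your proposal is correct and follows essentially the same route as the paper's proof: label the experts so that the best one at time $t-1$ is expert $2$, compute $\DeltaR(t) = \inner{x_t}{\ell_t} - (L_t^* - L_{t-1}^*)$ in each case using the identities $\ell_{t,1}-\ell_{t,2} = g_t - g_{t-1}$ (case 1) and $\ell_{t,2}-\ell_{t,1} = g_t + g_{t-1}$ (case 2), and note $g_t+g_{t-1} = \ell_{t,2}-\ell_{t,1} \le 1$. Your handling of the tie at $g_{t-1}=0$ via $p(t,0)=1/2$ matches the remark the paper makes right after the proposition, so nothing is missing.
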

The proof of this is very similar to that of \Proposition{RegretWithGaps} and appears in \Appendix{prop_delta_proof}
\begin{remark}
	Note that, at any specific time, the set of best experts may have size either one or two so the choice of the best expert in \Proposition{delta} may be ambiguous.
	However, note that if $g_{t-1} = 0$ (i.e., there are two best experts at time $t-1$) then $p(t, g_{t-1}) = 1/2$ so both formulas give $\DeltaR(t) = \frac{1}{2} g_{t}$.
	On the other hand, if $g_t = 0$ (i.e., there are two best experts at time $t$) then both formulas give $\DeltaR(t) = -g_{t-1} p(t, g_{t-1})$.
	Hence there is no issue with the ambiguity.
\end{remark}

We will need the following identity which is essentially the same as \Lemma{discrete_ito} but without specializing to the case where $|g_t - g_{t-1}| = 1$.
\begin{lemma}
	\LemmaName{gen_discrete_ito}
	Let $g_0, g_1, \ldots$ be a sequence of real numbers.
	Then for any function $f$ and any fixed time $T \geq 1$, we have
	\begin{equation}
		\EquationName{gen_discrete_ito}
		\begin{split}
			f(T, g_T) - f(0, g_0) &~=~
			\sum_{t=1}^T f(t, g_t) - \frac{f(t, g_{t-1} + 1) + f(t, g_{t-1} - 1)}{2} \\
			& \:+\: \sum_{t=1}^T \left(\frac{1}{2} f_{gg}(t, g_{t-1}) + f_t(t, g_{t-1}) \right).
		\end{split}
	\end{equation}
\end{lemma}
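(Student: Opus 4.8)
The plan is to follow the proof of \Lemma{discrete_ito} almost verbatim, simply declining to invoke the unit-increment hypothesis at the single point where it was used. First I would telescope: $f(T,g_T) - f(0,g_0) = \sum_{t=1}^T \big( f(t,g_t) - f(t-1,g_{t-1}) \big)$. Then, for each fixed $t \in [T]$, I would add and subtract the average $\tfrac{1}{2}\big(f(t,g_{t-1}+1) + f(t,g_{t-1}-1)\big)$, splitting the summand into two bracketed pieces exactly as in the earlier proof.

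The second bracketed piece, $\tfrac{1}{2}\big(f(t,g_{t-1}+1)+f(t,g_{t-1}-1)\big) - f(t-1,g_{t-1})$, is rearranged by adding and subtracting $f(t,g_{t-1})$, yielding $\tfrac{1}{2} f_{gg}(t,g_{t-1}) + f_t(t,g_{t-1})$ by the definitions of the discrete derivatives $f_{gg}$ and $f_t$. This manipulation never refers to $g_t$, so it does not use $|g_t - g_{t-1}| = 1$ and carries over unchanged from the proof of \Lemma{discrete_ito}; summing over $t$ produces the second sum in \eqref{eq:gen_discrete_ito}.

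The first bracketed piece is $f(t,g_t) - \tfrac{1}{2}\big(f(t,g_{t-1}+1)+f(t,g_{t-1}-1)\big)$, which in the proof of \Lemma{discrete_ito} was further rewritten as $f_g(t,g_{t-1})\,(g_t - g_{t-1})$ using precisely the fact that $g_t$ equals $g_{t-1}+1$ or $g_{t-1}-1$ (cf.\ \eqref{eq:discrete_first_deriv}). Since we no longer assume this, I would simply leave the piece in its unsimplified form; summing over $t$ gives the first sum in \eqref{eq:gen_discrete_ito}.

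There is essentially no obstacle here: the present lemma asserts strictly less about the first sum than \Lemma{discrete_ito} does, and its proof is obtained by \emph{omitting} the one simplification step in that earlier proof. Accordingly I would present this as a one-paragraph proof (or even a remark) observing that the argument for \Lemma{discrete_ito} goes through unchanged up to, and excluding, the step that used $|g_t - g_{t-1}| = 1$.
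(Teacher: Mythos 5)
Your proposal is correct and matches the paper exactly: the paper's proof of \Lemma{gen_discrete_ito} is a one-line remark that the argument for \Lemma{discrete_ito} goes through verbatim, omitting only the simplification in \eqref{eq:discrete_first_deriv} that used $|g_t - g_{t-1}| = 1$.
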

\begin{proof}
	The proof is identical to the proof of \Lemma{discrete_ito} except that we do not perform the simplification in \Equation{discrete_first_deriv}.
\end{proof}
When we assumed the gaps were integer-valued, we had
\[
\DeltaR(t) = R(t, g_t) - \frac{R(t, g_{t-1} + 1) + R(t, g_{t-1} - 1)}{2}
\]
because both sides were equal to $R_g(t, g_{t-1}) \cdot (g_t - g_{t-1})$; see \Equation{RegretInGapSpace2} and \Equation{discrete_first_deriv}.
This does not hold in the general setting, but we will be able to prove the following inequality.
\begin{lemma}
	\LemmaName{delta_R_ineq}
	For all $t \geq 1$,
	\[
	\DeltaR(t) \leq R(t, g_t) - \frac{R(t, g_{t-1} + 1) + R(t, g_{t-1} - 1)}{2}.
	\]
\end{lemma}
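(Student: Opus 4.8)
The plan is to split into the two cases of \Proposition{delta} and, in each, to exploit that $x \mapsto R(t,x)$ is concave (\Lemma{rtg_increasing_concave}): the right-hand side of the claim, viewed as a function of $g_t$, is then concave, while $\DeltaR(t)$ is affine in $g_t$. Fix $t \geq 1$, write $a := g_{t-1}$, $b := g_t$, $q := p(t,a) = R_g(t,a) = \tfrac12\big(R(t,a+1)-R(t,a-1)\big) \in [0,\tfrac12]$, and set
\[
\psi(x) \;:=\; R(t,x) - \tfrac12\big(R(t,a+1)+R(t,a-1)\big),
\]
so the claim reads $\DeltaR(t) \le \psi(b)$. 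Two elementary facts will be reused: $\psi$ is concave in $x$ by \Lemma{rtg_increasing_concave}, and $\tfrac12\big(R(t,a+1)+R(t,a-1)\big) = q + R(t,a-1)$, whence $\psi(a+1) = q$ and $\psi(a-1) = -q$. Also, since each expert's total loss changes by an amount in $[0,1]$ at every step, the reverse triangle inequality gives $|b-a| \le 1$.

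Case 1 (a best expert at time $t-1$ remains best). \Proposition{delta} gives $\DeltaR(t) = (b-a)q$, which is the value at $x=b$ of the affine function $\ell(x) := (x-a)q$. Since $\ell(a+1)=q=\psi(a+1)$ and $\ell(a-1)=-q=\psi(a-1)$, the line $\ell$ is the chord of the concave function $\psi$ through the abscissae $a-1$ and $a+1$; as $b=g_t\in[a-1,a+1]$, concavity (graph lies above chords) gives $\psi(b)\ge\ell(b)=\DeltaR(t)$, as required.

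Case 2 (the best expert switches). \Proposition{delta} gives $\DeltaR(t)=b-(a+b)q$ with $a+b\le 1$; in particular $a\le 1$, and $b=g_t$ ranges over $[0,\,1-a]$. The map $x\mapsto x-(a+x)q$ is affine and $\psi$ is concave, so $\psi-\big(x\mapsto x-(a+x)q\big)$ is concave; by \Fact{concave_ineq_check} it suffices to check $\DeltaR(t)\le\psi(b)$ at the two endpoints $b\in\{0,\,1-a\}$. At $b=0$, using $\psi(0)=R(t,0)-q-R(t,a-1)$, the desired inequality $-aq\le\psi(0)$ rearranges to $R(t,a-1)\le R(t,0)-(1-a)q$; since $R(t,\cdot)$ is concave with $\partial_g R(t,0)=\tfrac12$ (from \Lemma{rtg_derivative}), the tangent line at $0$ gives $R(t,a-1)\le R(t,0)-\tfrac12(1-a)\le R(t,0)-(1-a)q$ because $q\le\tfrac12$ and $1-a\ge 0$. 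At $b=1-a$, using $\psi(1-a)=R(t,1-a)-q-R(t,a-1)$, the desired inequality $(1-a)-q\le\psi(1-a)$ rearranges to $R(t,1-a)-R(t,a-1)\ge 1-a$, which I claim holds with equality: by \Lemma{rtg_derivative} and oddness of $\erfi$, on the interval $(-\gamma\sqrt t,\gamma\sqrt t)$ the truncation is inactive and $\partial_g R(t,x)+\partial_g R(t,-x)=1$, so for any $0\le c<\gamma\sqrt t$,
\[
R(t,c)-R(t,-c) \;=\; \int_0^c\big(\partial_g R(t,x)+\partial_g R(t,-x)\big)\dd x \;=\; c,
\]
and we apply this with $c=1-a$, which is admissible since $1-a\le 1<\gamma\le\gamma\sqrt t$ for $t\ge 1$.

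The technical heart is Case 2. The crude estimate $q\le\tfrac12$ is too lossy in the interior of the interval, so it is essential to (i) restrict to $b\in[0,1-a]$ and use concavity to reduce to the two endpoints, and (ii) notice the exact cancellation $\partial_g R(t,x)+\partial_g R(t,-x)=1$ near the origin, which pins down $R(t,1-a)-R(t,a-1)$ exactly rather than only up to an inequality. A minor but easy-to-overlook point is that $R(t,0)=\kappa\sqrt t\ne 0$ for $t>0$, so the $b=0$ computation must go through the tangent-line bound for $R$ and not through any positivity statement about $R$.
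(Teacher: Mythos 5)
Your proof is correct and follows essentially the same strategy as the paper's: split into the two cases of \Proposition{delta}, use concavity of $g \mapsto R(t,g)$ to reduce to endpoint values of $g_t$, and verify the endpoints, with the key exact identity $R(t,1-g_{t-1}) - R(t,g_{t-1}-1) = 1-g_{t-1}$ at the boundary point. The only (minor) differences are cosmetic: at $g_t = 0$ the paper simply notes the Case~2 formula coincides with Case~1 rather than running your tangent-line estimate, and it obtains the boundary identity by direct evaluation of $R$ (evenness of $M_0$'s argument) rather than by integrating $\partial_g R(t,x) + \partial_g R(t,-x) = 1$.
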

The proof of \Lemma{delta_R_ineq} appears in \Appendix{delta_R_ineq_proof}.
Given \Lemma{delta_R_ineq}, we can now prove our upper bound in general.
\begin{proofof}{\Theorem{UBGeneral}}
	Fix any $T \geq 1$. Then
	\begin{alignat*}{2}
		R(T, g_T) - R(0, g_0)
		&~=~
		\sum_{t=1}^T R(t, g_t) - \frac{R(t, g_{t-1} + 1) + R(t, g_{t-1} - 1)}{2}  \\
		& \:+\: \sum_{t=1}^T \left(\frac{1}{2} R_{gg}(t, g_{t-1}) + R_t(t, g_{t-1}) \right) && \quad\text{(\Lemma{gen_discrete_ito})} \\
		&~\geq~\sum_{t=1}^T \DeltaR(t) && \quad\text{(\Lemma{delta_R_ineq} and \Lemma{ub_discrete_bhe})} \\
		&~=~ \Reg(T).
	\end{alignat*}
	As $g_0 = 0$ and $R(0, 0) = 0$,
	we have $\Reg(T) \leq R(T, g_T) \leq \gamma \sqrt{T} / 2$, where the last inequality is by \Lemma{rtg_maximum}.
\end{proofof}

\subsubsection{Proof of \Proposition{delta}}
\AppendixName{prop_delta_proof}
\begin{proofof}{\Proposition{delta}}
	Fix $t$ and for notational convenience, let $p = p(t, g_{t-1})$ throughout the proof.
	In addition, throughout the proof, we use expert 1 to refer to the worst expert at time $t-1$ (chosen arbitrarily if the choice of worst expert is not unique) and use expert 2 to refer to the other expert.
	Let $\ell_{t,1}, \ell_{t,2} \in [0,1]$ be the respective losses at time $t$ and $L_{t,1}, L_{t,2}$ be the respective \emph{cumulative} losses up to time $t$.
	Note that $g_{t-1} = L_{t-1,1} - L_{t-1,2}$.
	Finally, we set $L_t^* = \min_{i \in \{1,2\}} L_{t,i}$.
	By assumption, $L_{t-1}^* = L_{t-1,2}$.
	
	For the first assertion we have $L_t^* = L_{t,2}$ (because a best expert remains a best expert).
	Note that $\ell_{t,1} - \ell_{t,2} = (L_{t,1} - L_{t,2}) - (L_{t-1,1} - L_{t-1,2}) = g_t - g_{t-1}$.
	So the change in the cost of the algorithm can be written as
	\[
	p\ell_{t,1} + (1-p)\ell_{t,2} = p(\ell_{t,1} - \ell_{t,2}) + \ell_{t,2} = p(g_t - g_{t-1}) + \ell_{t,2}. 
	\]
	On the other hand, the change in the cost of the best expert is $L_{t}^* - L_{t-1}^* = L_{t,2} - L_{t-1,2} = \ell_{t,2}$.
	Subtracting this from the above equation gives $\DeltaR(t) = (g_{t}-g_{t-1})p$.
	
	In the second assertion, we have $L_t^* = L_{t,1}$, so $g_t = L_{t,2} - L_{t,1}$.
	Again, the algorithm incurs cost $p\ell_{t,1} + (1-p)\ell_{t,2}$.
	This time, note that $\ell_{t,1} - \ell_{t,2} = (L_{t,1} - L_{t,2}) - (L_{t-1,1} - L_{t-1,2}) = -g_t - g_{t-1}$.
	So the algorithm incurs cost $-p(g_t + g_{t-1}) + \ell_{t,2}$.
	On the other hand, the change in the cost of the best expert is
	\[
	L_t^* - L_{t-1}^* = L_{t,1} - L_{t-1,2} = L_{t,1} - L_{t-1,1} + L_{t-1,1} - L_{t-1,2} = \ell_{t,1} + g_{t-1} = \ell_{t,2} - g_{t},
	\]
	where the last equality uses the identity $\ell_{t,1} - \ell_{t,2} = -g_t - g_{t-1}$.
	Subtracting this last quantity with the change in the algorithm's cost gives $\DeltaR(t) = g_{t} - p(g_t + g_{t-1})$.
	
	To complete the proof for the second assertion, it remains to check that $g_t + g_{t-1} \leq 1$.
	From above, we have the identity, $g_t + g_{t-1} = \ell_{t,2} - \ell_{t,1} \leq \ell_{t,2} \leq 1$, as desired.
\end{proofof}

\subsubsection{Proof of \Lemma{delta_R_ineq}}
\AppendixName{delta_R_ineq_proof}
\begin{proofof}{\Lemma{delta_R_ineq}}
	Fix $t \geq 1$.
	We will consider the two cases corresponding to the two cases in \Proposition{delta}.
	
	\paragraph{Case 1: A best expert at time $t-1$ remains a best expert at time $t$.}
	In this case, $\DeltaR(t) = (g_t - g_{t-1}) p(t, g_{t-1})$,
	so it suffices to check that
	\begin{equation}
		\EquationName{gen_ineq_1}
		p(t,g_{t-1})\cdot(g_t - g_{t-1})
		\leq 
		R(t, g_t) - \frac{R(t, g_{t-1} + 1) + R(t, g_{t-1}-1)}{2}.
	\end{equation}
	Rearranging, the above inequality is equivalent to
	\[
	R(t, g_t) - \frac{R(t, g_{t-1} + 1) + R(t, g_{t-1}-1)}{2}
	- p(t, g_{t-1}) \cdot (g_t - g_{t-1}) \geq 0.
	\]
	If $g_{t-1}$ is fixed then notice that the LHS of the above expression is concave in $g_t$.
	To see this, \Lemma{rtg_increasing_concave} implies that $R(t,g_t)$ is concave in $g_t$, the second term is constant in $g_t$, and the last term is linear in $g_t$.
	Hence, it suffices to verify the inequality when $g_t = g_{t-1} \pm 1$ (\Fact{concave_ineq_check}).
	Indeed, if $|g_t - g_{t-1}| = 1$ then, as in \Equation{discrete_first_deriv}
	\begin{align*}
		R(t,g_t) - \frac{R(t, g_{t-1}+1) + R(t, g_{t-1}-1)}{2}
		& = \frac{R(t, g_{t-1}+1) - R(t, g_{t-1}-1)}{2} \cdot (g_t - g_{t-1}) \\
		& = p(t, g_{t-1}) \cdot (g_t - g_{t-1}),
	\end{align*}
	where the second equality used the definition of $p$.
	
	\paragraph{Case 2: A best expert at time $t-1$ is no longer a best expert at time $t$.}
	This case is nearly identical to the previous case but in this case $\DeltaR(t) = g_t - (g_t + g_{t-1}) p(t, g_{t-1})$ with the promise that $g_t + g_{t-1} \leq 1$.
	Hence, the inequality we need to verify is that
	\begin{equation}
		\EquationName{gen_ineq_2}
		g_t - (g_t + g_{t-1}) p(t, g_{t-1}) \leq R(t, g_t) - \frac{R(t, g_{t-1} + 1) + R(t, g_{t-1} - 1)}{2}.
	\end{equation}
	Once again, we do this via a concavity argument.
	Fix $g_{t-1} \in [0,1]$.
	Since $g_t + g_{t-1} \leq 1$, we have $g_t \in [0, 1-g_{t-1}]$.
	Notice that the LHS of \Equation{gen_ineq_2} is linear in $g_t$ and the RHS of \Equation{gen_ineq_2} is concave in $g_t$ (by \Lemma{rtg_increasing_concave}).
	Hence, again by \Fact{concave_ineq_check}, it suffices to check the inequality assuming $g_t \in \{0, 1-g_{t-1}\}$.
	Note that the case $g_t = 0$ is handled by case 1 since the LHS of \Equation{gen_ineq_1} and \Equation{gen_ineq_2} are identical (see also the remark after \Proposition{delta}).
	
	Now assume that $g_t = 1-g_{t-1}$.
	Then \Equation{gen_ineq_2} becomes
	\[
	1-g_{t-1} - p(t, g_{t-1})
	\leq R(t, 1-g_{t-1}) - \frac{R(t, g_{t-1} + 1) + R(t, g_{t-1} - 1)}{2} 
	\]
	Recall that $p(t, g) = \frac{R(t, g+1) - R(t,g-1)}{2}$ so that the above inequality is equivalent to
	\[
	1 - g_{t-1} - \frac{R(t, g_{t-1} + 1) - R(t, g_{t-1} - 1)}{2} 
	\leq R(t, 1-g_{t-1}) - \frac{R(t, g_{t-1} + 1) + R(t, g_{t-1} - 1)}{2}.
	\]
	Rearranging the inequality becomes
	\[
	1 \leq g_{t-1} + R(t, 1 - g_{t-1}) - R(t, g_{t-1}-1). 
	\]
	Note that $g_{t-1} \leq 1 \leq \gamma \sqrt{t}$ (since $t \geq 1$ and $\gamma \geq 1$).
	Hence, by definition of $R$, the RHS of the above inequality is
	\begin{align*}
		g_{t-1} + R(t, 1 - g_{t-1}) - R(t, g_{t-1}-1)
		& = g_{t-1} + \frac{1-g_{t-1}}{2} + \kappa \sqrt{t} M_0\left( \frac{(1-g_{t-1})^2}{2} \right) \\
		& - \frac{g_{t-1} - 1}{2} - \kappa \sqrt{t} M_0\left( \frac{(g_{t-1}-1)^2}{2} \right) \\
		& = 1,
	\end{align*}
	and obviously, $1 \leq 1$.
	This proves that the desired inequality holds with equality.
\end{proofof}

\section{Lower bound}
\SectionName{lb}

The main result of this section is the following theorem, which implies the lower bound in \Theorem{main}.

\begin{theorem}
\TheoremName{LB}
For any algorithm $\cA$ and any $\epsilon > 0$, there exists an adversary $\cB_\epsilon$ such that
\begin{equation}
\EquationName{GoalInLB}
\sup_{t \geq 1} \frac{\Regret{2,t,\cA,\cB_\epsilon}}{\sqrt{t}} ~\geq~ \frac{\gamma-\eps}{2}.    
\end{equation}
\end{theorem}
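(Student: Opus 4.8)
The plan is to construct $\cB_\eps$ as a \emph{randomized} adversary, following the outline in \Subsection{techniques}. Fix the constant $c = \gamma - \eps'$ for a suitably small $\eps' > 0$ depending on $\eps$. The adversary will, independently at each step, choose the cost vector to be $[0,1]$ or $[1,0]$ each with probability $1/2$, so that the gap process $(g_t)_{t\geq 0}$ evolves as the absolute value of a standard simple random walk started at $0$. By \Equation{RegretInGapSpace2} (applied after observing that against this symmetric adversary we may assume the algorithm plays $[1/2,1/2]$ whenever $g_{t-1}=0$, as deviating cannot help its worst-case regret here), the regret satisfies $\Reg(t) = \sum_{s=1}^t p_s (g_s - g_{s-1})$. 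Using Tanaka's formula for random walks, I would rewrite this as $\Reg(t) = Z_t + g_t/2$, where $Z_t$ is a martingale with $Z_0 = 0$ and bounded increments $|Z_{t+1}-Z_t|\le K$ for an absolute constant $K$ (since $p_s\in[0,1/2]$ and $|g_s-g_{s-1}|=1$). The key point is that, although the algorithm's choice of $p_s$ is adversarial, the martingale property of $Z_t$ holds regardless because the gap increments are independent fair coin flips.

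Next I would introduce the stopping time $\tau = \tau_c := \inf\{ t \geq 1 : g_t \geq c\sqrt{t} \}$, the first time the gap process crosses the $c\sqrt{t}$ boundary. On the event $\{\tau < \infty\}$ we have $g_\tau \geq c\sqrt{\tau}$, hence $\Reg(\tau)/\sqrt{\tau} \geq g_\tau/(2\sqrt{\tau}) + Z_\tau/\sqrt{\tau} \geq c/2 + Z_\tau/\sqrt{\tau}$. The crux is to show $\tau < \infty$ almost surely and, more importantly, that $\expect{\sqrt{\tau}} < \infty$ for every $c < \gamma$. This is exactly where the results of Breiman~\cite{Breiman} and Greenwood and Perkins~\cite{GreenwoodPerkins} enter: $\gamma$ is precisely the threshold such that the random-walk analogue of $\inf\{t : |X_t| \geq c\sqrt{t}\}$ has $\expect{\sqrt{\tau_c}} < \infty$ for $c < \gamma$ and $=\infty$ for $c > \gamma$ (this is the ``slow point'' characterization). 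Granting $\expect{\sqrt{\tau}} < \infty$, I apply the strengthened optional stopping theorem (\Theorem{OST}) to the bounded-increment martingale $Z_t$ to conclude $\expect{Z_\tau} = 0$. Then, since $\tau \leq c^2 t$ would force a crossing by deterministic time and the distribution of $\tau$ has enough integrability, I get $\expect{\Reg(\tau)} \geq c\,\expect{\sqrt{\tau}}/2 > 0$, and rescaling, $\expect{\Reg(\tau)/\sqrt{\tau}} \geq c/2$. Since $\sup_{t\geq 1}\Reg(t)/\sqrt{t} \geq \Reg(\tau)/\sqrt{\tau}$ pointwise on $\{\tau<\infty\}$, an averaging (first-moment) argument shows there exists a \emph{deterministic} realization of the cost sequence — hence a deterministic adversary $\cB_\eps$ — achieving $\sup_{t\geq 1}\Reg(2,t,\cA,\cB_\eps)/\sqrt{t} \geq c/2 = (\gamma-\eps')/2 \geq (\gamma-\eps)/2$ for an appropriate choice of $\eps'$.

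One technical subtlety to handle: the random adversary above randomizes, whereas the statement asks for an adversary that can depend on the (deterministic, known) algorithm $\cA$; this is resolved by the derandomization in the last step, picking the best cost sequence in the support of the random adversary's law. A second subtlety is that $\Reg(\tau)/\sqrt{\tau}$ must be bounded below by an integrable quantity to justify passing from $\expect{Z_\tau}=0$ to the expectation bound on $\Reg(\tau)$ — this follows because $g_\tau \le g_{\tau-1}+1 \le c\sqrt{\tau-1}+1$, so $g_\tau/\sqrt{\tau}$ is bounded, and $Z_\tau = \Reg(\tau) - g_\tau/2$ is $\sqrt{\tau}$-integrable by the bounded-increments estimate $|Z_\tau| \le K\tau$ combined with $\expect{\sqrt\tau}<\infty$ — actually one needs the sharper optional-stopping hypothesis, which is exactly what \Theorem{OST} is designed to supply, so no extra work is needed there.

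The main obstacle is the input from Breiman and Greenwood--Perkins: identifying $\gamma$ as the exact critical constant for $\expect{\sqrt{\tau_c}}<\infty$, and transferring that statement from Brownian motion (where it is most naturally stated, via the confluent hypergeometric root $\alpha(1/2)=\gamma$ of \Claim{RootOfM}) to the reflected random walk. I expect the bulk of \Section{lb} to consist of (i) citing or reproving the Brownian-motion version of the slow-point threshold, (ii) a discretization/embedding argument (Skorokhod-style, or a direct random-walk computation) to get $\expect{\sqrt{\tau_c}}<\infty$ for the walk when $c<\gamma$, and (iii) verifying $\tau_c<\infty$ a.s. so that the supremum is actually attained along the trajectory. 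Everything else — Tanaka's formula, the martingale decomposition, optional stopping, and derandomization — is routine once those probabilistic facts are in hand.
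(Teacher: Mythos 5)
Your proposal follows the paper's proof essentially step for step: a random adversary making the gap a reflected random walk, Tanaka's formula to write $\Reg(t)=Z_t+g_t/2$ with $Z$ a bounded-increment martingale, the stopping time $\tau(c)=\inf\{t\ge 1: g_t\ge c\sqrt t\}$ with $c$ just below $\gamma$, the strengthened optional stopping theorem (\Theorem{OST}) requiring only $\expect{\sqrt{\tau}}<\infty$, the Breiman/Greenwood--Perkins input to get that integrability, and a first-moment derandomization. Two local steps need repair, though neither changes the architecture. First, at gap-zero times you cannot ``assume the algorithm plays $[1/2,1/2]$'' --- the algorithm is arbitrary and fixed; the paper's adversary instead plays deterministically against the heavier-weighted expert when $g_{t-1}=0$, forcing $\inner{x_t}{\ell_t}\ge 1/2$ pointwise (your fair-coin variant can be rescued by noting the conditional expectation of the step cost is $1/2$ regardless of $x_t$, so the deviation is itself a bounded-increment martingale, but that argument has to be made, not assumed away). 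Second, the ``rescaling'' from $\expect{\Reg(\tau)}\ge \frac{c}{2}\expect{\sqrt{\tau}}$ to $\expect{\Reg(\tau)/\sqrt{\tau}}\ge c/2$ is a non sequitur; the correct move (and the paper's) is to apply the probabilistic method directly to $\expect{\Reg(\tau)-\frac{c}{2}\sqrt{\tau}}\ge 0$, yielding a realization with $\Reg(\tau)\ge \frac{c}{2}\sqrt{\tau}$ and $\tau<\infty$.

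Finally, the fact you treat as a black box --- $\expect{\sqrt{\tau(c)}}<\infty$ for every $c<\gamma$ --- is where the paper does its real work: Breiman's theorem (stated for random walks, so no Skorokhod embedding is needed) gives $\prob{\tau(c)>u}\sim Ku^{a}$ when $c$ is the positive root of $z\mapsto M(a,1/2,z^2/2)$, and \Claim{RootOfM} supplies, for each $\eps>0$, a parameter $a_\eps\in(-1,-1/2)$ whose root $c_\eps\ge\gamma-\eps$, so that the tail exponent $2a_\eps<-1$ makes $\int_0^\infty \prob{\tau(c_\eps)>s^2}\dd s$ finite. Citing this threshold as known is acceptable, but your sketch of how to obtain it (Brownian version plus discretization) is not the route the paper takes, and the perturbation of $a$ below $-1/2$ is the one idea your outline leaves genuinely unspecified.
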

\noindent As remarked earlier, the $\sup$ can be replaced by a $\limsup$; see \Subsection{regretIO}.

It is common in the literature for regret lower bounds to be proven by
random adversaries; see, e.g., \cite[Theorem 3.7]{CBL}.
We will also consider a random adversary, but the novelty is the use of a non-trivial stopping time at which it can be shown that the regret is large.

\paragraph{A random adversary.}
Suppose an adversary produces a sequence of cost vectors
$\ell_1, \ell_2, \ldots \in \set{0,1}^2$ as follows.
For all $t \geq 1$,
\begin{itemize}[topsep=0pt,itemsep=0pt]
    \item If $g_{t-1}>0$ then $\ell_t$ is randomly chosen to be one of the vectors $\begin{smallbmatrix}1\\0\end{smallbmatrix}$ or $\begin{smallbmatrix}0\\1\end{smallbmatrix}$,
          uniformly and independent of $\ell_1,\ldots,\ell_{t-1}$.
          Thus $g_t-g_{t-1}$ is uniform in $\set{\pm 1}$.
    \item If $g_{t-1}=0$ then $\ell_t = \begin{smallbmatrix}1\\0\end{smallbmatrix}$ if $x_{t,1} \geq 1/2$, and $\ell_t = \begin{smallbmatrix}0\\1\end{smallbmatrix}$ if $x_{t,2} > 1/2$.
          In both cases $g_t=1$.
\end{itemize}
As remarked above, the process $( g_t )_{t \geq 0}$ has the same distribution as the absolute value of a standard random walk (which is also known as a reflected random walk).

We now obtain from \eqref{eq:RegretInGapSpace} a lower bound on the regret of any algorithm against this adversary. 
The adversary's behavior  when $g_{t-1}=0$ ensures that $\inner{x_t}{\ell_t} \geq 1/2$, showing that
\[
\Regret{T} ~\geq~ \underbrace{\sum_{t=1}^T p_t \left( g_t - g_{t-1} \right )\cdot \ind[g_{t-1} \neq 0]}_{\text{martingale}}
 ~+~ \frac{1}{2} \underbrace{\sum_{t=1}^T \ind [g_{t-1} = 0]}_{\text{local time}}
 \qquad\forall T \in \bN.
\]
(Equality holds if the algorithm sets $x_t=[1/2,1/2]$ whenever $g_{t-1}=0$.)
The first sum is a martingale indexed by $T$.
(This holds because $g_t-g_{t-1}$ has conditional expectation $0$ when $g_{t-1} \neq 0$,
and $\ind[g_{t-1} \neq 0]=0$ when $g_{t-1}=0$.)
The second sum is called the local time of the random walk.
Using Tanaka's formula \cite[Ex.~10.8]{Klenke}, the local time
can be written as $\sum_{s=1}^t \ind [g_{s-1} = 0] = g_t - Z'_t$ where $Z'_t$ is a martingale with uniformly bounded increments
and $Z'_0=0$.
Thus, combining the two martingales, we have
\begin{equation}
\EquationName{LBRegret}
\Regret{t} ~\geq~ Z_t + \frac{g_t}{2}
\qquad\forall t \in \bZ_{\geq 0},
\end{equation}
where $Z_t$ is a martingale with uniformly bounded increments and $Z_0=0$.

\paragraph{Intuition for a stopping time.} 
Optional stopping theorems assert that, under some hypotheses,
the expected value of a martingale at a stopping time equals the value at the start.
Using such a theorem, at a stopping time $\tau$ it would hold that
$\expect{\Regret{\tau}} \geq \expect{ g_{\tau} } / 2$
(under some hypotheses on $\tau$ and $Z$).
Thus it is natural to design a stopping time $\tau$ that maximizes $\expect{ g_{\tau} }$ and satisfies the hypotheses.
We know from \eqref{eq:KnownTwoExpertBounds} that the optimal anytime regret at time $t$ is $\Theta(\sqrt{t})$,
so one reasonable stopping time would be
$$
\tau(c) ~:=~ \min \setst{ t>0 }{ g_t \geq c \sqrt{t} }
$$
for some constant $c$ yet to be determined.
If $\tau(c)$ and $Z$ satisfy the hypotheses of the optional stopping theorem,
then it will hold that $\expect{\Regret{\tau(c)}} \geq \frac{c}{2} \smallexpect{ \sqrt{\tau(c)} }$.
From this, it follows, fairly easily,
that $\AnytimeValue(2) \geq c/2$; this will be argued more carefully later.

\paragraph{An optional stopping theorem.}
The optional stopping theorems appearing in standard references require one of the following hypotheses:
(i) $\tau$ is almost surely bounded, or
(ii) $\expect{\tau}$ is bounded and the martingale has bounded increments, or
(iii) the martingale is almost surely bounded and $\tau$ is almost surely finite.
See, e.g., \cite[Theorem 5.33]{BreimanBook}, \cite[Theorem 4.8.5]{Durrett}, \cite[Theorem 10.11]{Klenke}, \cite[Theorem 12.5.1]{GS}, \cite[Theorem II.57.4]{RogersWilliamsI}, or \cite[Theorem 10.10]{Williams}.
These will not suffice for our purposes.
For example, condition (ii) is the only useful hypothesis for our setting. 
It is known \cite{Breiman, Shepp} that $\expect{\tau(c)} < \infty$, with $\tau(c)$ as above, if and only if $c < 1$;
this yields a weak lower bound on the regret.
Instead, we will require the following theorem, which has a weaker hypothesis (due to the square root).
We are unable to find a reference for this theorem, although it is presumably folklore,
so we provide a proof of this theorem.

\begin{theorem}
\TheoremName{OST}
Let $Z_t$ be a martingale and $K>0$ a constant such that $|Z_t - Z_{t-1}| \leq K$ almost surely for all $t$.
Let $\tau$ be a stopping time.
If $\expect{\sqrt{\tau}} < \infty$ then
$\expect{Z_{\tau}} = \expect{Z_0}$.
\end{theorem}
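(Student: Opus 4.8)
The plan is the standard ``truncate, then pass to the limit'' strategy for optional stopping, where the only real work is a maximal inequality calibrated to the hypothesis $\expect{\sqrt{\tau}} < \infty$.

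First I would normalize. Replacing $Z_t$ by $Z_t - Z_0$ (again a martingale for the same filtration, with increments bounded by $K$, since $Z_0$ is measurable with respect to the initial $\sigma$-algebra and is integrable), it suffices to prove $\expect{Z_\tau} = 0$ in the case $Z_0 = 0$; along the way one must also verify that $Z_\tau$ is integrable. Observe that $\expect{\sqrt{\tau}} < \infty$ forces $\tau < \infty$ almost surely, so $Z_{\tau \wedge n} \to Z_\tau$ a.s.\ as $n \to \infty$ (the sequence is eventually constant on $\{\tau < \infty\}$). Since $\tau \wedge n$ is a bounded stopping time, the elementary optional stopping theorem (hypothesis (i) in the list above) gives $\expect{Z_{\tau \wedge n}} = \expect{Z_0} = 0$ for every $n$. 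It therefore remains only to justify interchanging the limit and the expectation, i.e.\ to show that the family $\{Z_{\tau \wedge n}\}_{n \geq 1}$ is uniformly integrable.

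To do this I would exhibit a single integrable random variable dominating the whole family. Let $Z^\tau_k := Z_{k \wedge \tau}$ denote the stopped martingale; its square bracket is $[Z^\tau]_\infty = \sum_{k \geq 1}(Z^\tau_k - Z^\tau_{k-1})^2 = \sum_{k=1}^{\tau}(Z_k - Z_{k-1})^2 \leq K^2 \tau$, so $[Z^\tau]_\infty^{1/2} \leq K\sqrt{\tau}$. By the Burkholder--Davis--Gundy inequality for discrete-time martingales in the case $p = 1$ (Davis's inequality), there is a universal constant $C$ with
\[
\expect{\, \sup_{k \geq 0} \abs{Z_{k \wedge \tau}} \,} ~\leq~ C \, \expect{ [Z^\tau]_\infty^{1/2} } ~\leq~ CK \, \expect{\sqrt{\tau}} ~<~ \infty .
\]
Writing $W := \sup_{k \geq 0} \abs{Z_{k \wedge \tau}}$, we then have $W \in L^1$ and $\abs{Z_{\tau \wedge n}} \leq W$ for all $n$, so dominated convergence gives $\expect{Z_{\tau \wedge n}} \to \expect{Z_\tau}$ (and in particular $\expect{\abs{Z_\tau}} \leq \expect{W} < \infty$). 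Combined with $\expect{Z_{\tau \wedge n}} = 0$ this yields $\expect{Z_\tau} = 0$; undoing the normalization gives $\expect{Z_\tau} = \expect{Z_0}$.

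The crux is the displayed maximal inequality, and the $\sqrt{\tau}$ hypothesis is exactly what makes it delicate. A naive $L^2$ argument --- Doob's inequality together with $\expect{(Z_{\tau \wedge n})^2} = \expect{[Z^\tau]_n} \leq K^2 \expect{\tau \wedge n}$ --- controls things only in terms of $\expect{\tau}$, which may be infinite. Splitting into dyadic ranges of $\tau$ (or of $[Z^\tau]_\infty$) and applying Azuma--Hoeffding concentration on each range loses a factor of order $\sqrt{\log}$ per scale, which need not be summable under only $\expect{\sqrt{\tau}} < \infty$. So the natural tool is the $p = 1$ case of BDG, and it sits right at the boundary: under $\expect{\tau^{(1+\delta)/2}} < \infty$ for some $\delta > 0$ the easier $L^{1+\delta}$ inequality would already suffice. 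For a self-contained treatment in \Appendix{lb}, one can replace the citation by a short proof of the required inequality via the Davis decomposition of the stopped martingale $Z^\tau$.
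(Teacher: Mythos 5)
Your proof is correct and follows essentially the same route as the paper: the paper also applies Davis's $p=1$ BDG inequality to the stopped martingale, bounds $[Z^\tau]_\infty^{1/2} \leq K\sqrt{\tau}$ using the bounded increments, and passes to the limit from bounded stopping times by dominated convergence. The only difference is that the paper proves a slightly more general conditional version (with two stopping times $\sigma \leq \tau$), but the core argument is identical to yours.
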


Before we prove \Theorem{OST}, some preliminary definitions are required.
For a martingale $(X_t)_{t \in \bN}$, define
its maximum process $X^*_t = \max_{0 \leq s \leq t} \abs{X_s}$
and its quadratic variation process $[X]_t = \sum_{1 \leq s \leq t} (X_s-X_{s-1})^2$.

\begin{theorem}[Davis~\cite{Davis}]
	\TheoremName{BDG}
	There exists a constant $C$ such that for any martingale $(X_t)_{t \in \bN}$ with $X_0=0$,
	$\expect{ X^*_\infty } \leq C \expect{[X]_\infty^{1/2}}$.
\end{theorem}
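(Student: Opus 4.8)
The plan is to prove this as the $L^1$ case of the Burkholder--Davis--Gundy inequality, via the classical good-$\lambda$ method. Write $d_k = X_k - X_{k-1}$ and $D_\infty = \sup_k \abs{d_k}$, and note the pointwise bound $D_\infty \le [X]_\infty^{1/2}$ (since $d_k^2 \le \sum_j d_j^2 = [X]_\infty$). I would begin with two reductions: if $\expect{[X]_\infty^{1/2}} = \infty$ there is nothing to prove; and since $X^*_\infty = \lim_n X^*_n$ and $[X]_n^{1/2} \uparrow [X]_\infty^{1/2}$, monotone convergence lets us replace $X$ by the stopped martingale $X^{\sigma_m}$ with $\sigma_m = \inf\set{n : \abs{X_n} > m}$, so that $X^*_\infty$ and $[X]_\infty$ may be assumed bounded random variables; it then suffices to prove $\expect{X^*_\infty} \le C\,\expect{[X]_\infty^{1/2}}$ with $C$ independent of $m$, and let $m \to \infty$.

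The core step is a good-$\lambda$ inequality: for a fixed $\beta > 1$ and all small enough $\delta > 0$,
\[
    \prob{\, X^*_\infty > \beta\lambda,\ \ [X]_\infty^{1/2} \le \delta\lambda \,}
    ~\le~ \frac{c\,\delta^2}{(\beta-1)^2}\,\prob{\, X^*_\infty > \lambda \,}
    \qquad\text{for all } \lambda > 0 ,
\]
with $c$ absolute. To establish it I would introduce the stopping times $\mu = \inf\set{n : \abs{X_n} > \lambda}$ and $\rho = \inf\set{n : \abs{X_n} > \beta\lambda \text{ or } [X]_n^{1/2} > \delta\lambda}$ and apply Doob's $L^2$ maximal inequality and Chebyshev to the stopped martingale $W_n = X_{n\wedge\rho} - X_{n\wedge\mu}$, whose increments are $d_k\,\ind[\mu < k \le \rho]$: on the event above one has $\mu < \rho < \infty$, $\abs{W_\infty} = \abs{X_\rho - X_\mu}$ is of order at least $(\beta-1)\lambda$, and $\expect{W_\infty^2} = \expect{[W]_\infty}$ with $[W]_\infty = \sum_{\mu < k \le \rho} d_k^2 \le [X]_\rho \le \delta^2\lambda^2 + d_\rho^2$. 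Integrating the displayed inequality in $\lambda$ over $(0,\infty)$, using $\prob{X^*_\infty > \beta\lambda} \le \prob{X^*_\infty > \beta\lambda,\, [X]_\infty^{1/2}\le\delta\lambda} + \prob{[X]_\infty^{1/2} > \delta\lambda}$ and $\int_0^\infty \prob{[X]_\infty^{1/2} > \delta\lambda}\dd\lambda = \delta^{-1}\expect{[X]_\infty^{1/2}}$, would give $\tfrac1\beta\expect{X^*_\infty} \le \tfrac{c\delta^2}{(\beta-1)^2}\expect{X^*_\infty} + \delta^{-1}\expect{[X]_\infty^{1/2}}$; then one fixes $\beta$ and takes $\delta$ small enough to absorb the first term on the right.

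The hard part is the leftover ``jump'' term $d_\rho^2 = [X]_\rho - [X]_{\rho-1}$ in the bound on $[W]_\infty$: when $X$ has unbounded increments this need not be $O(\delta^2\lambda^2)$. I would handle it by Davis's decomposition $X = Y + Z$, splitting each $d_k$ according to whether $\abs{d_k} \le 2D_{k-1}$ (where $D_{k-1} = \max_{j<k}\abs{d_j}$) and compensating each piece; then $Z$ is a martingale of ``large jumps'' and a telescoping estimate yields $\expect{Z^*_\infty} \le \expect{\sum_k \abs{Z_k - Z_{k-1}}} \le c'\,\expect{D_\infty} \le c'\,\expect{[X]_\infty^{1/2}}$, while $Y$ has increments at most $4D_{k-1} \le 4D_\infty \le 4[X]_\infty^{1/2}$, so on $\{[X]_\infty^{1/2} \le \delta\lambda\}$ all jumps of $[Y]$ are $O(\delta\lambda)$ in magnitude and the good-$\lambda$ argument above runs for $Y$ with the jump term controlled; combining, $\expect{X^*_\infty} \le \expect{Y^*_\infty} + \expect{Z^*_\infty} \le C\expect{[X]_\infty^{1/2}}$. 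For the martingale used in \Theorem{OST} the increments are bounded by a fixed constant $K$, so $d_\rho^2 \le K^2$ and the decomposition is unnecessary: even the weaker bound $\expect{X^*_\infty} \le C\expect{[X]_\infty^{1/2}} + O(K)$ is enough for the use made of the theorem there. Since the inequality is in any case classical, one may alternatively just cite \cite{Davis}; a Burkholder-style Bellman-function argument is also available but offers no real simplification here.
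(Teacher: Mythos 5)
You should first note that the paper contains no proof of \Theorem{BDG} to compare against: the inequality is quoted from Davis \cite{Davis} and used only as a black box (to justify dominated convergence in the proof of \Theorem{OST2}), so simply citing it, as you observe at the end, is exactly what the paper does. Your sketch is the classical route---a good-$\lambda$ inequality combined with Davis's decomposition---and its main ingredients are sound: the decomposition facts $\abs{y_k}\leq 4D_{k-1}$ and $\expect{\sum_k \abs{z_k}}\leq c'\,\expect{D_\infty}\leq c'\,\expect{[X]_\infty^{1/2}}$ are correct, and so is your remark that for the $K$-bounded-increment martingale actually used in \Theorem{OST} the decomposition can be skipped, since an additive $O(K)$ error is harmless for the purpose it serves there.

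Two steps would fail as written, though both have standard repairs. (i) Stopping at $\sigma_m=\inf\set{n:\abs{X_n}>m}$ does not make $X^*_\infty$ and $[X]_\infty$ bounded: the overshoot $\abs{X_{\sigma_m}}\leq m+\abs{d_{\sigma_m}}$ and the final increment of $[X]$ are uncontrolled when the increments are unbounded. What the absorption step actually needs is only $\expect{X^*}<\infty$; the clean reduction is to a finite horizon $n$, where $\expect{X^*_n}\leq\sum_{k\leq n}\expect{\abs{d_k}}<\infty$, followed by monotone convergence in $n$. (ii) More substantively, your patch for $Y$---``on $\set{[X]_\infty^{1/2}\leq\delta\lambda}$ all jumps of $[Y]$ are $O(\delta\lambda)$, so the good-$\lambda$ argument runs''---does not control the offending term, because the Chebyshev bound $\expect{W_\infty^2}=\expect{[W]_\infty}$ is an expectation over the whole space, not over the good event; this is the very same overshoot problem you correctly identified for $X$ itself (and, as you note, the displayed good-$\lambda$ inequality is false for general $X$). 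The standard fix exploits that the bound $\abs{y_k}\leq 4D_{k-1}$ is \emph{predictable}: add the condition $4D_n>\delta\lambda$ to the stopping rule defining $\rho$, so that the increment incurred at time $\rho$ is at most $4D_{\rho-1}\leq\delta\lambda$ almost surely, and enlarge the good event to include $\set{4D_\infty\leq\delta\lambda}$; upon integrating in $\lambda$, the complementary event contributes $\tfrac{4}{\delta}\expect{D_\infty}\leq\tfrac{4}{\delta}\expect{[X]_\infty^{1/2}}$, which is harmless. You also need the (easy, but unstated) comparison $\expect{[Y]_\infty^{1/2}}\leq\expect{[X]_\infty^{1/2}}+\expect{\sum_k\abs{z_k}}$ to convert the resulting bound $\expect{Y^*_\infty}\leq C\big(\expect{[Y]_\infty^{1/2}}+\expect{D_\infty}\big)$ into one involving $[X]$. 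With these repairs your outline is the standard proof of Davis's inequality.
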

\begin{proof}[Proof of \Theorem{OST}]
	Define the stopped process $Z_{t \wedge \tau}$,
	which is also a martingale \cite[Theorem 10.15]{Klenke}.
	Since $\expect{\sqrt{\tau}} < \infty$ we have
	$\prob{\tau<\infty}=1$.
	On the event $\set{\tau < \infty}$,
	$(Z_{t \wedge \tau})_{t \geq 0}$ has a well-defined limit,
	which is used as the almost sure definition of $Z_\tau$.
	
	We claim that $Z_{t \wedge \tau} \xrightarrow{L_1} Z_{\tau} \in L_1$
	from which the theorem concludes as follows.
	By optional stopping \cite[Lemma~10.10]{Klenke}, since $\tau \wedge t \leq t$,
	$\expect{Z_{t \wedge \tau}} = \expect{Z_0}$.
	This last equality holds for any fixed $t \geq 0$.
	Hence, $\expect{Z_{\tau}} = \lim_{t \to \infty} \expect{Z_{t \wedge \tau}} = \expect{Z_0}$.
	
	It remains to show that $Z_{\tau \wedge t} \xrightarrow{L_1} Z_{\tau} \in L_1$.
	The $L_1$ convergence is proven using the dominated convergence theorem~\cite[Corollary 6.26]{Klenke},
	which requires exhibiting a random variable that bounds $\abs{Z_{t \wedge \tau}}$
	for all $t$ and has finite expectation.
	For notational convenience, let $X_t = Z_{t \wedge \tau}$.
	Clearly $\abs{X_t} \leq X_t^* \leq X^*_\infty$,
	so it remains to show that $\expect{X^*_\infty} < \infty$.
	Using \Theorem{BDG} and that $Z$ has increments bounded by $K$,
	\[
	\expect{ X^*_\infty }
	~\leq~ C \expect{[X]_\infty^{1/2}}
	~=~ C \expect{\Big(\sum_{1 \leq s \leq \tau} (Z_s-Z_{s-1})^2\Big)^{1/2}}
	~\leq~ C K \expect{\tau^{1/2}}
	~<~ \infty.
	\]
	The dominated convergence theorem states that $Z_{t \wedge \tau} \overset{L_1}{\longrightarrow} Z_\tau \in L_1$, as required.
\end{proof}

\paragraph{Optimizing the stopping time.}
Since the martingale $Z_t$ defined above has bounded increments, 
\Theorem{OST} may be applied so long as $\smallexpect{\sqrt{\tau(c)}} < \infty$,
in which case the preceding discussion yields $\AnytimeValue(2) \geq c/2$.
We reiterate that the condition $\smallexpect{\sqrt{\tau(c)}} < \infty$ is a stronger assumption than $\tau(c)$ being almost surely finite.
So it remains to determine
\begin{equation}
	\EquationName{SupStoppingTime}
	\sup \smallsetst{ c \geq 0 }{ \smallexpect{\sqrt{\tau(c)}} < \infty },
\end{equation}
where $\tau(c)$ is the first time at which a standard random walk crosses the two-sided boundary $\pm c \sqrt{t}$.
We will use the following result,
in which $M$ is the confluent hypergeometric function defined in \Subsection{hyper}.

\begin{theorem}[{\cite[Theorem 2]{Breiman}, \cite[Theorem 5]{GreenwoodPerkins}}]
	\TheoremName{Breiman}
	Let $c>1$ and $a<0$ be such that $c$ is the smallest positive root of the function $x \mapsto M(a,1/2,x^2/2)$.
	Then
	$\prob{ \tau(c) > u } = u^a \pi(u)$, where $\pi$ is a slowly-varying function, i.e.~$\lim_{x \to \infty} \pi(ax) \pi(x)^{-1}$ for all $a > 0$.
\end{theorem}
\begin{fact}[{\cite[Lemma VIII.8.2]{Feller}}]
	\FactName{feller}
	Let $\pi$ be a slowly-varying function.
	Then for all $\eps > 0$ there exists $M_{\eps}$ such that $\pi(x) \leq x^{\eps}$ for all $x \geq M_{\eps}$.
\end{fact}
By combining \Theorem{Breiman} and \Fact{feller}, we see that if $c$ is the smallest positive root of the function
$x \mapsto M(a, 1/2, x^2/2)$ then for any $\delta > 0$, there exists a constant $C_{\delta}$ such that $\prob{\tau(c) > u} \leq C_{\delta} u^{a+\delta}$.

Recall the definition of $\gamma$ in \eqref{eq:GammaDef}.
For intuition, let us apply \Theorem{Breiman} with $c=\gamma$,
which is defined so that it is the root for $a=-1/2$ (see \Equation{MiDef} and \Fact{basic_identities}).
It then follows that (ignoring the slowly varying function for now),
\begin{align*}
	\expect{ \sqrt{\tau(\gamma)} }
	~=~ \int_0^{\infty} \prob{ \sqrt{\tau(\gamma)}>s } \dd s
	~=~ \int_0^{\infty} \prob{ \tau(\gamma)>s^2 } \dd s
	~\sim~ K \int_0^{\infty} s^{-1} \dd s,
\end{align*}
by \Theorem{Breiman}.
This integral is infinite, so \Theorem{OST} cannot be applied to $\tau(\gamma)$.
However, the integral is on the cusp of being finite. 
By slightly decreasing $a$ below $-1/2$, and slightly modifying $c$ to be the new root,
we should obtain a finite integral, showing that $\smallexpect{ \sqrt{\tau(c)} }$ is finite.
The following proof uses analytic properties of $M$ to show that this is possible.

\begin{proof}[Proof of \Theorem{LB}]
	Fix any $\eps > 0$ that is sufficiently small.
	Consider the random adversary and the stopping times $\tau(c)$
	described above.
	By \Claim{RootOfM}, there exists $a_\eps \in (-1,-1/2)$ and $c_\eps \geq \gamma-\eps$ such that
	$c_\eps$ is the unique positive root of $z \mapsto M(a_\eps,1/2,z^2/2)$.
	Let $\delta > 0$ be a constant such that $a_{\eps} + \delta < -1/2$.
	Then for some constant $C_{\delta}$,
	\begin{equation}
		\EquationName{FiniteIntegral}
		\expect{ \sqrt{\tau(c_\eps)} }
		~=~ \int_0^{\infty} \prob{ \tau(c_\eps)>s^2 } \dd s \\
		~\leq~ C_{\delta} \int_0^{\infty} s^{2 (a_\eps + \delta)} \dd s
		~<~ \infty,
	\end{equation}
	since $a_\eps + \delta<-1/2$.
	It follows that $\tau(c_\eps)$ is almost surely finite, and therefore 
	$\Regret{\tau(c_\eps)}$ and $g_{\tau(c_\eps)}$ are almost surely well defined.
	Applying \Theorem{OST} to the martingale $Z_t$ appearing in \Equation{LBRegret},
	we obtain that
	\begin{align*}
		\expect{\Regret{\tau(c_\eps)}}
		& ~\geq~ \expect{Z_{\tau(c_\eps)} + \frac{g_{\tau(c_\eps)}}{2}} \\
		& ~=~ \expect{Z_{0}} + \frac{1}{2} \expect{g_{\tau(c_\eps)}} \\
		& ~=~ \frac{1}{2} \expect{g_{\tau(c_\eps)}} \\
		& ~=~ \frac{1}{2} \expect{c_\eps \sqrt{\tau(c_\eps)}},
	\end{align*}
	where the second equality is because $Z_0 = 0$ deterministically.
	By the probabilistic method, there exists a finite sequence of cost vectors
	$\ell_1,\ldots,\ell_t$ (depending on $\cA$ and $\eps$)
	for which the regret of $\cA$ at time $t$ is at least $c_\eps \sqrt{t} /2$.
	The adversary $\cB_\eps$ (which knows $\cA$) provides this sequence of cost vectors to algorithm $\cA$, thereby proving \eqref{eq:GoalInLB}.
\end{proof}

\subsection{Large regret infinitely often}
\SubsectionName{regretIO}
In this subsection, we prove the following extension of \Theorem{LB}, which shows that one can achieve regret $\gamma \sqrt{t} / 2$ infinitely often.
\begin{theorem}
	\TheoremName{regretIO}
	For any algorithm $\cA$ and any $\eps > 0$, there exists an adversary
	$\cB_{\eps}$ such that
	\begin{equation}
		\EquationName{GoalInLBIO}
		\limsup_{t \geq 1} \frac{\Regret{2,t,\cA,\cB_\epsilon}}{\sqrt{t}} ~\geq~ \frac{\gamma-\eps}{2}.    
	\end{equation}
\end{theorem}
The basic idea of the proof of \Theorem{regretIO} is quite simple.
Initially, we run a reflected random walk starting at the origin and wait until it crosses the $(\gamma-\eps) \sqrt{t}$ boundary.
By the arguments in \Theorem{LB}, we know that, in expectation, the regret is large at the first instant when the random
walk crosses the boundary.
We then ``restart'' the random walk except now the starting position is the current position of the random walk instead of the origin.
The key observation is that \Theorem{Breiman} is only sensitive to the asymptotics of the boundary and not the starting position.
Thus, essentially the same arguments in \Theorem{LB} can be used to show that (i) the random walk crosses the $(\gamma - \eps) \sqrt{t}$
boundary a \emph{second} time and (ii) the regret is large at the time when the random walk crosses the boundary for the second time.

To formally prove \Theorem{regretIO}, we need a more general version of \Theorem{OST}.
Let $\cF$ be a $\sigma$-algebra and let $(\cF_t)_{t \in \bZ_{\geq 0}}$ be a filtration (i.e.~$\cF_0 \subseteq \cF_1 \subseteq \ldots$
and $\cF_t \subseteq \cF$ for all $t \geq 0$).
For a stopping time $\tau$, the stopped $\sigma$-algebra is defined as
$\cF_\tau \coloneqq \{A \in \cF \,:\, A \cap \{\tau \leq t\} \in \cF_t\, \forall t \in \bZ_{\geq 0} \}$
\cite[Definition 9.19]{Klenke}.
Finally, let $\cG \subseteq \cF$ be a sub $\sigma$-algebra.
For a random variable $X$, the conditional expectation of $X$ given $\cG$, denoted $\expectg{X}{\cG}$,
is a random variable $Y$ satisfying $\expect{ Y\ind_A } = \expect{ X \ind_A }$ for all $A \in \cG$ \cite[Definition 8.11]{Klenke}.
Here, $\ind_A$ is the indicator of the event $A$.

\begin{theorem}
	\TheoremName{OST2}
	Let $( Z_t )_{t \in \bZ_{\geq 0}}$ be a martingale with respect to a filtration $\{\cF_{t}\}$
	and $K>0$ a constant such that $|Z_t - Z_{t-1}| \leq K$ almost surely for all $t$.
	Let $\sigma \leq \tau$ be stopping times and suppose that $\expect{\sqrt{\tau}} < \infty$.
	Then the random variables $Z_{\sigma}, Z_{\tau}$ are almost surely well-defined
	and $\expectg{Z_{\tau}}{\cF_{\sigma}} = Z_{\sigma}$.
\end{theorem}
\begin{proof}
	Define the stopped process $Z_{t \wedge \tau}$,
	which is also a martingale \cite[Theorem 10.15]{Klenke}.
	Since $\expect{\sqrt{\tau}} < \infty$ we have
	$\prob{\tau<\infty}=1$.
	On the event $\set{\tau < \infty}$,
	$(Z_{t \wedge \tau})_{t \geq 0}$ has a well-defined limit,
	which is used as the almost sure definition of $Z_\tau$.
	As $\set{\tau < \infty} \subseteq \set{\sigma < \infty}$,
	the same argument shows that $(Z_{t \wedge \sigma})_{\geq 0}$
	has a well-defined limit, and we use this as the almost sure
	definition of $Z_{\sigma}$.
	
	The arguments in the proof of \Theorem{OST} show that $Z_{t \wedge \tau} \xrightarrow{L_1} Z_{\tau} \in L_1$
	and $Z_{t \wedge \sigma} \xrightarrow{L_1} Z_{\sigma} \in L_1$.
	By the definition of conditional expectation, we need to check that
	$\expect{Z_{\tau} \ind_{A}} = \expect{Z_{\sigma} \ind_{A}}$
	for all $A \in \cF_{\sigma}$.
	To that end, fix $A \in \cF_{\sigma}$ and note that
	$A \cap \set{\sigma \leq t} \in \cF_{\sigma \wedge t}$.
	For any fixed $t$, $t \wedge \sigma \leq t$ and $\tau \leq t$,
	so the optional sampling theorem \cite[Theorem 10.11]{Klenke}
	applied to the stopped process yields
	$\expectg{Z_{t \wedge \tau}}{\cF_{t \wedge \sigma}} = Z_{t \wedge \sigma}$.
	Hence,
	\begin{equation}
		\EquationName{OSTProof1}
		\expect{ Z_{\tau \wedge t} \ind_{A} \ind_{\set{\sigma \leq t}} } = \expect{Z_{\sigma \wedge t} \ind_{A} \ind_{\set{\sigma \leq t}}}.
	\end{equation}
	Since $Z_{\tau \wedge t} \xrightarrow{L_1} Z_{\tau} \in L_1$, it follows that
	$Z_{\tau \wedge t} \ind_{A} \ind_{\set{\sigma \leq t}} \xrightarrow{L_1} Z_{\tau} \ind_{A} \ind_{\set{\sigma < \infty}}$.
	This is because
	\begin{align*}
		\operatorname{E} \big[ |Z_{\tau \wedge t} \ind_{A} \ind_{\sigma \leq t} - Z_{\tau} \ind_{A} \ind_{\sigma < \infty} | \big]
		& \leq
		\expect{\abs{Z_{\tau \wedge t} \ind_{A} \ind_{\sigma \leq t} - Z_{\tau} \ind_{A} \ind_{\sigma \leq t}}}
		+ \expect{\abs{Z_{\tau} \ind_{A} \ind_{\sigma < \infty} - Z_{\tau} \ind_{A} \ind_{\sigma \leq t}}} \\
		& \leq \expect{\abs{Z_{t\wedge \tau}-Z_{\tau}}} + \expect{\abs{Z_{\tau}} \ind_{ t < \sigma < \infty}}.
	\end{align*}
	The quantity $\expect{\abs{Z_{t \wedge \tau} - Z_{\tau}}} \to 0$
	because $Z_{t \wedge \tau} \xrightarrow{L_1} Z_{\tau}$.
	Next, $Z_{\tau} \in L_1$ and $\ind_{t < \sigma < \infty} \to 0$
	a.s.~so $\expect{\abs{Z_{\tau}} \ind_{t < \sigma < \infty}} \to 0$ by dominated convergence.
	Finally, note that $Z_{\tau} \ind_{A}\ind_{\sigma < \infty} = Z_{\tau} \ind_{A}$ as $\ind_{\sigma < \infty} = 1$ a.s.
	Hence,
	\begin{equation}
		\EquationName{OSTProof2}
		\expect{Z_{\tau \wedge t} \ind_{A} \ind_{\set{\sigma \leq t}}} \xrightarrow{t \to \infty} \expect{Z_{\tau}\ind_{A}}.
	\end{equation}
	Similarly,
	\begin{equation}
		\EquationName{OSTProof3}
		\expect{Z_{\sigma \wedge t} \ind_A \ind_{\set{\sigma \leq t}}} \xrightarrow{t \to \infty} \expect{Z_{\sigma}\ind_A}.
	\end{equation}
	Combining \Equation{OSTProof1}, \Equation{OSTProof2}, and \Equation{OSTProof3}
	gives $\expect{Z_{\tau} \ind_A} = \expect{Z_{\sigma} \ind_A}$ as desired.
\end{proof}

The proof of \Theorem{regretIO} makes use of the following result which is a generalization of \Theorem{Breiman}
to the setting where the boundary is \emph{asymptotically} a square root curve.
This will allow us to consider a random walk hitting a square root boundary but where both the boundary
and the starting position of the particle may not be at the origin.

\begin{theorem}[{\cite[Theorem 5]{GreenwoodPerkins}}]
	\TheoremName{GreenwoodPerkins}
	Let $c>1$ and $a<0$ be such that $c$ is the smallest positive root of the function $x \mapsto M(a,1/2,x^2/2)$.
	Let $f(t)$ be a function such that $\lim_{t \to \infty} f(t) t^{-1/2} = c$.
	Let $\tau = \inf\setst{t > 0}{g_t \geq f(t)}$.
	Then
	$\prob{ \tau > u } = u^a \pi(u)$, where $\pi$ is a slowly-varying function.
\end{theorem}

\begin{proof}[Proof of \Theorem{regretIO}.]
	We use the same adversary as in \Theorem{LB} so that
	\[
	\Reg(t) \geq Z_t + \frac{g_t}{2},
	\]
	where $Z_t$ is a martingale with $Z_0 = 0$ and $g_t$ evolves as a reflected random walk.
	Let $\cF_t \coloneqq \sigma(g_0, \ldots, g_t)$ be the natural filtration.
	Finally, let $c_{\eps} \geq \gamma - \eps$ and $a_{\eps}$ be as in the proof of \Theorem{LB}.
	
	Define the stopping times $\tau_0 \coloneqq 0$ and
	$\tau_i \coloneqq \inf \setst{t > \tau_{i-1}}{g_t \geq c_{\eps} \sqrt{t}}$
	for $i \geq 1$.
	Note that, by the strong Markov property, for each $i \geq 1$,
	the process $\{g_{\tau_{i-1} + t}\}_{t \geq 0}$ is a reflected random walk
	started at position $g_{\tau_{i-1}} > 0$.
	Moreover, observe that $\tau_i$ is similar to the stopping time used in \Theorem{LB}
	in that the asymptotics of the boundary are the same but
	the boundary itself and starting point may be perturbed by a (random) additive constant.
	
	Let us assume that $\expect{\sqrt{\tau_{i-1}}} < \infty$ and we now show that $\expect{\sqrt{\tau_i}} < \infty$.
	Let $\delta > 0$ be a constant such that $a_{\eps} + \delta < -1/2$.
	On the event that $\{\tau_{i-1} < \infty\}$,
	\Theorem{GreenwoodPerkins} and \Fact{feller} imply that there is a (random) constant $C_{\delta}$,
	which may depend on $\tau_{i-1}$ and $g_{\tau_{i-1}}$ (which are both $\cF_{\tau_{i-1}}$-measurable), such that
	$\probg{\tau_i - \tau_{i-1} > u}{\cF_{\tau_{i-1}}} \leq C_{\delta} u^{a_{\eps} + \delta}$.
	Hence, following the proof of \Theorem{LB},
	this implies that $\expectg{\sqrt{\tau_i - \tau_{i-1}}}{\cF_{\tau_{i-1}}} < \infty$.
	Since $\expect{\sqrt{\tau_{i-1}}} < \infty$, this implies that $\expect{\sqrt{\tau_i}} < \infty$.
	Hence, we can apply \Theorem{OST2} to obtain that
	$\expectg{Z_{\tau_i}}{\cF_{\tau_{i-1}}} = Z_{\tau_{i-1}}$ for all $i \geq 1$.
	
	We will now inductively construct a sequence of events which satisfy the
	conclusions of the theorem.
	To that end, define the events
	\[
	A_i = \set{\tau_i < \infty, Z_{\tau_i} \geq \ldots \geq Z_{\tau_1} \geq 0}.
	\]
	For the base case, we have $A_1 = \set{\tau_1 < \infty, Z_{\tau_1} \geq 0}$.
	In the proof of \Theorem{LB}, we have already verified that $\prob{A_1} > 0$
	(this also follows from the previous paragraph).
	For the inductive step, suppose that $\prob{A_{i-1}} > 0$.
	The condition that
	$\expectg{Z_{\tau_i}}{\cF_{\tau_{i-1}}} = Z_{\tau_{i-1}}$
	implies that, for any $B \in \cF_{\tau_{i-1}}$ with $\prob{B} > 0$,
	the event $B \cap \set{\tau_i < \infty, Z_{\tau_i} \geq Z_{\tau_{i-1}}}$ has positive probability.
	Taking $B = A_{i-1}$ implies that $\prob{A_i} > 0$.
	
	To conclude, for any $n \geq 1$, the event $A_n$ has positive probability.
	Hence, there exists a sequence of times $T_1, \ldots, T_n < \infty$
	and loss vectors up to time $T_n$ that guarantee
	$g_{T_i} \geq c_{\eps} \sqrt{T_i}$ for all $i \in [n]$ and
	$Z_{T_n} \geq \ldots \geq Z_{T_1} \geq 0$.
	In particular, for all $i \in [n]$,
	\[
	\Reg(T_i) \geq Z_{T_i} + \frac{g_{T_i}}{2} \geq \frac{c_{\eps}}{2} \sqrt{T_i}.
	\]
	As $n \geq 1$ was arbitrary, the theorem follows.
\end{proof}

\section{Derivation of a continuous-time analogue of {\protect \Algorithm{minimax}}}
\SectionName{cts_ub}

The purpose of this section is to show how the potential function
$R$ defined in \Equation{RDef} arises naturally as the solution
of a stochastic calculus problem.
The derivation of that function is accomplished by defining, then solving,
an analogue of the regret minimization problem in continuous time.
The main advantage of considering this continuous setting is the wealth
of analytic methods available, such as stochastic calculus.

\subsection{Defining the continuous regret problem}
\paragraph{Continuous time regret problem.}
The continuous regret problem is inspired by \Equation{RegretInGapSpace2}.
Notice that, when the adversary chooses {cost vectors in $\{ \begin{smallbmatrix}1\\0\end{smallbmatrix},\begin{smallbmatrix}0\\1\end{smallbmatrix} \}$,
the sequence of gaps $g_0, g_1, g_2, \ldots$ live in the support of a reflected random walk.
The goal in the discrete case is to find an algorithm $p$ that bounds
the regret over all possible sample paths of a reflected random walk.
In continuous time it is natural to consider a stochastic integral with
respect to reflected Brownian motion, denoted $|B_t|$, instead.
Our goal now is to find a continuous-time algorithm whose regret
is small for almost all reflected Brownian motion paths.

\begin{definition}[Continuous Regret]
\DefinitionName{ContRegret}
Let $p : \bR_{>0} \times \bR_{\geq 0} \rightarrow [0,1]$ be a continuous function that satisfies $p(t,0) = 1/2$ for every $t > 0$.
Let $B_t$ be a standard one-dimensional Brownian motion.
Then, the \emph{continuous regret} of $p$ with respect to $B$
is the stochastic integral
\begin{equation}
\EquationName{ContinuousRegret}    
\ContReg{T,p,B} ~=~ \int_{0}^T p(t, \Abs{B_t}) \dd \Abs{B_t}.
\end{equation}  
\end{definition}

\begin{remark}
The condition $p(t,0) = 1/2$ is due to \Equation{ContinuousRegret} being inspired by \Equation{RegretInGapSpace2}, which requires this condition.
\end{remark}

In this definition we may think of $p$ as a continuous-time algorithm and $B$ as a continuous-time adversary.
The goal for the remainder of this section is to prove the following result.

\begin{theorem}
\TheoremName{ContsMainResult}
There exists a continuous-time algorithm $p^*$ such that
\begin{equation}
    \EquationName{ContsMainResult}
    \ContReg{T,p^*,B} ~\leq~ \frac{\gamma \sqrt{T}}{2}~
    \quad\forall T \in \bR_{\geq 0},
    ~\text{almost surely}.
 \end{equation}
\end{theorem}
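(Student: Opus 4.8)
The strategy is to reverse-engineer the potential function $R$ from the requirement that It\^o's formula produce a clean, telescoping bound on $\ContReg{T,p^*,B}$. Apply It\^o's formula (\Theorem{ItoFormula}) to the process $R(t,\abs{B_t})$, where $R$ is a $C^{1,2}$ function to be determined. Since $\abs{B_t}$ is a reflected Brownian motion, we get a decomposition of the form
\begin{equation*}
R(T,\abs{B_T}) - R(0,0) = \int_0^T R_g(t,\abs{B_t})\dd\abs{B_t} + \int_0^T\Bigl(\partial_t R + \tfrac12 \partial_{gg} R\Bigr)(t,\abs{B_t})\dd t + (\text{local-time term at }g=0).
\end{equation*}
Comparing with \Equation{ContinuousRegret}, the natural choice is $p^*(t,g) = R_g(t,g)$, so that $\ContReg{T,p^*,B}$ equals the first integral. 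If in addition $R$ satisfies the \emph{backwards heat equation} $\partial_t R + \tfrac12\partial_{gg}R = 0$ on the region $g \ge 0$, the $\dd t$ integral vanishes, and if the local-time term at $g=0$ is nonpositive (which it will be, with the right boundary behaviour), we obtain $\ContReg{T,p^*,B} \le R(T,\abs{B_T})$. Finally, if $R(t,g) \le \gamma\sqrt{t}/2$ for all $t,g$ — which it is, by the same argument as \Lemma{rtg_maximum} — we are done.

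The technical heart is therefore to exhibit a function $R$ solving the backwards heat equation with the right boundary and growth conditions. The plan is to look for a self-similar solution of the form $R(t,g) = \sqrt{t}\,\phi(g/\sqrt{t})$, which is natural given the $\sqrt{t}$-scaling of the problem. Substituting into $\partial_t R + \tfrac12\partial_{gg}R = 0$ reduces the PDE to an ODE for $\phi$ in the variable $u = g/\sqrt{t}$; this ODE is (a reparametrization of) Kummer's confluent hypergeometric equation, which explains the appearance of $M_0$ in \Equation{RDef}. I would impose three conditions: (a) $p^*(t,0) = R_g(t,0) = \tfrac12$ (forced by \Definition{ContRegret}); (b) a smooth-fit / free-boundary condition at $g = \gamma\sqrt{t}$, where $R$ transitions to the constant value $\gamma\sqrt{t}/2$ — namely $R$ and $R_g$ match across the curve, which forces $R_g = 0$ there, i.e. $\phi'(\gamma) = 0$, and $M_0(\gamma^2/2) = 0$; (c) concavity and monotonicity of $\phi$ so that $p^* \in [0,1]$. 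Solving the ODE with normalization (a) pins down the constant $\kappa$ in \Equation{RDef}, and condition (b) is exactly the defining property of $\gamma$ in \Equation{GammaDef}; so the function $R$ of \Equation{RDef} is precisely this solution.

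Once $R$ is in hand, the remaining steps are routine verifications: that $R$ is $C^{1,2}$ on $\bR_{>0}\times\bR_{\ge 0}$ (in particular the pieces glue smoothly at $g = \gamma\sqrt{t}$ thanks to the smooth-fit condition), that $p^* := R_g$ is continuous with values in $[0,1]$ and satisfies $p^*(t,0)=1/2$ (so it is a legitimate continuous-time algorithm per \Definition{ContRegret}), that the local-time contribution at the reflecting boundary $g=0$ has the right sign (here $R_g(t,0) = 1/2 > 0$ and the local time is nondecreasing, so this term \emph{adds} to $R(T,\abs{B_T})$, meaning $\ContReg{T,p^*,B} \le R(T,\abs{B_T})$ — one must be careful with the sign convention in Tanaka's formula for $\abs{B_t}$), and that the stochastic integral defining $\ContReg{T,p^*,B}$ is well-defined (immediate since $p^*$ is bounded and continuous). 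Assembling these, It\^o plus $R(T,\abs{B_T}) \le \gamma\sqrt{T}/2$ gives \Equation{ContsMainResult} almost surely.

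**Main obstacle.** The delicate point is the free-boundary analysis: one must correctly identify that the solution should be the constant $\gamma\sqrt{t}/2$ for $g \ge \gamma\sqrt{t}$ and the hypergeometric expression below, and verify that the smooth-fit condition at the moving boundary $g=\gamma\sqrt{t}$ is consistent — i.e. that the same $\gamma$ which makes $R_g$ vanish at the boundary also makes $R$ itself continuous there, which is exactly the statement that $\gamma$ is a root of $M_0(\cdot/2)$ (and requires checking that $M_0'(\gamma^2/2)$ has the right sign so that $\phi$ is concave on $[0,\gamma]$ and $\phi' \ge 0$ there, keeping $p^* \in [0,1]$). Handling the boundary term at $g=0$ via Tanaka's formula, and confirming its sign, is the other place where care is needed. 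Everything else — the reduction to the ODE, the identification with Kummer's equation, and the final scaling bound — is mechanical.
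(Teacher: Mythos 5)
Your overall architecture matches the paper's: choose $p^* = \partial_x R$ for a potential $R$ solving the backward heat equation below a moving boundary $g=\alpha\sqrt t$, use the self-similar ansatz $R(t,g)=\sqrt t\,\phi(g/\sqrt t)$ to reduce to Kummer's equation (giving $\erfi$ and $M_0$), impose $\partial_x R(t,0)=1/2$ and vanishing of $\partial_x R$ at the boundary, truncate to the constant $\gamma\sqrt t/2$ above the boundary, and conclude via $R(t,g)\le\gamma\sqrt t/2$. The choice $\alpha=\gamma$ as the optimizer is also the paper's \Equation{MinimizeBoundaryProblem}.

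However, there is a genuine gap at the step you dismiss as routine: you claim the truncated $R$ is $C^{1,2}$ because ``the pieces glue smoothly at $g=\gamma\sqrt t$ thanks to the smooth-fit condition.'' Smooth fit gives only a $C^1$ match in $x$: below the curve $\partial_{xx}\tilde R_\gamma(t,x) = -\kappa\,e^{x^2/2t}/\sqrt{2t}$, which at $x=\gamma\sqrt t$ equals $-\kappa e^{\gamma^2/2}/\sqrt{2t}\neq 0$, while above the curve $R$ is constant in $x$ so $\partial_{xx}R=0$. Hence $\partial_{xx}R$ has a jump across $\{x=\gamma\sqrt t\}$, $R\notin C^{1,2}$, and \Theorem{ItoFormula} cannot be applied to $R$ directly. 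This is exactly the obstruction the paper flags, and it is resolved there by a nontrivial approximation argument (\Lemma{ContRegUB} and \Appendix{ApproximationArgument}): mollify $R$ into smooth $R_{\gamma,n}$, apply It\^o to each, control the two error integrals supported in an $O(1/n)$-neighbourhood of the boundary via a Taylor-remainder bound, prove $L^2$ convergence of the stochastic integrals via the It\^o isometry together with Tanaka's formula for the local-time part, and take limits (also handling the singularity at $t=0$ by starting at $\eps$ and letting $\eps\to 0$). Either this limiting argument or an appeal to a time-dependent It\^o--Tanaka/Meyer-type formula for functions with a second derivative jumping across a curve is needed; neither is a routine verification, so your proof is incomplete as written. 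Two smaller points: the separate ``local-time term at $g=0$'' you introduce is not present in the paper's form of It\^o's formula (the local time is already absorbed in $\dd\abs{B_t}$, which is also the integrator defining the regret, so no sign analysis at $g=0$ is needed); and above the boundary $\doob R = \gamma/(4\sqrt t)>0$ rather than $0$, which is the correct (favourable) sign for the inequality but should be stated, since the backward heat equation holds only below the boundary.
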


\begin{remark}
A natural question arises upon reviewing the definition of continuous regret: What role does Brownian motion play in \Definition{ContRegret} and is it the ``correct'' stochastic process to consider in order to uncover the optimal algorithm?
In the analysis that follows, the only properties of reflected Brownian motion that we use are its non-negativity and that its \emph{quadratic variation} is $t$.
It turns out that one can generalize \Theorem{ContsMainResult} by allowing any non-negative, continuous semi-martingale $X$ to control the gap process, and by letting time grow at the rate of the \emph{quadratic variation} of $X$. See 
\Appendix{cont_reg_semi_mart} for more details. 
\end{remark}

\subsection{Connections to stochastic calculus and the backward heat equation}
Since $\ContReg{T}$ evolves as a stochastic integral with respect to a semi-martingale\footnote{A semi-martingale is a stochastic process that can written as the sum of a local martingale and a process of finite variation.} (namely reflected Brownian motion), It\^o's lemma provides an insightful decomposition.
The following statement of It\^o's lemma is a specialization of
\cite[Theorem~IV.3.3]{RY13}
for the special case of reflected Brownian motion.\footnote{
Specifically, we are using the statement of It\^{o}'s formula that appears in Remark 1
after Theorem~IV.3.3 in \cite{RY13} with $X_t = |B_t|$ and $A_t = t$.
Note that $y$ in their notation is $t$ in ours and $\inner{|B|}{|B|}_t = t$.
}

\paragraph{Notation.}
Up to now, we have used the symbol $g$ as the second parameter to the bivariate functions $p$ and $R$.
Henceforth, it will be more consistent with the usual notation in the literature to use $x$ to denote $g$.
We will also use the notation $C^{1,2}$ to denote the class of bivariate functions that are continuously differentiable in their first argument and twice continuously differentiable in their second argument.

\begin{theorem}[It\^o's formula]
\TheoremName{ItoFormula}
Let $f \colon \bR_{\geq 0} \times \bR \rightarrow \bR$ be $C^{1,2}$.
Then, almost surely,
\begin{align}\EquationName{ItoFormula}
    f(T, \Abs{B_T}) - f(0, \Abs{B_0}) &~=~ \int_{0}^T \partial_x f(t, \Abs{B_t}) \dd \Abs{B_t} + \int_{0}^T \Big [
    \underbrace{\partial_t f(t, \Abs{B_t}) + \smallfrac{1}{2}   \partial_{xx}f(t, \Abs{B_t})}_{\eqqcolon \doob f(t,\abs{B_t})} \Big ] \dd t.
\end{align}
\end{theorem}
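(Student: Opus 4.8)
The plan is to obtain this statement as a direct specialization of the general It\^o formula for continuous semimartingales, after using Tanaka's formula to recognize $\abs{B_t}$ as one. First I would recall Tanaka's formula, $\abs{B_t} = \int_0^t \sign(B_s)\dd B_s + L_t$, where $L_t$ denotes the local time of $B$ at the origin. The process $\beta_t := \int_0^t \sign(B_s)\dd B_s$ is a continuous local martingale with $\langle\beta,\beta\rangle_t = \int_0^t \sign(B_s)^2\dd s = t$ (in fact a standard Brownian motion, by L\'evy's characterization), while $L_t$ is continuous and non-decreasing, hence of finite variation, with $L_0 = 0$. Consequently $\abs{B}$ is a continuous semimartingale with martingale part $\beta$, and its quadratic variation is $\langle\abs{B},\abs{B}\rangle_t = \langle\beta,\beta\rangle_t = t$.

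Next I would invoke the time-dependent It\^o formula for continuous semimartingales: for $f$ of class $C^{1,2}$ and a continuous semimartingale $X$, almost surely, for all $T \geq 0$,
\[
f(T, X_T) - f(0, X_0) ~=~ \int_0^T \partial_t f(t,X_t)\dd t ~+~ \int_0^T \partial_x f(t,X_t)\dd X_t ~+~ \smallfrac{1}{2}\int_0^T \partial_{xx} f(t,X_t)\dd\langle X,X\rangle_t.
\]
This is the content of \cite[Theorem~IV.3.3]{RY13} once one incorporates the time variable as an additional finite-variation coordinate (the remark following that theorem, which notes that only $C^1$ regularity in the time argument is needed). Specializing to $X_t = \abs{B_t}$ and substituting $\langle X, X\rangle_t = t$ from the previous step turns the last integral into $\smallfrac{1}{2}\int_0^T \partial_{xx} f(t,\abs{B_t})\dd t$; grouping it with the $\partial_t f$ integral yields exactly the claimed identity, with the bracketed integrand abbreviated as $\doob f$.

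The only points needing care are the choice of the right version of It\^o's formula and the meaning of the integral $\int_0^T \partial_x f(t,\abs{B_t})\dd\abs{B_t}$. For the former, one must use the version permitting merely $C^1$ dependence on the time variable rather than $C^2$, which is standard and is precisely what the cited reference provides. For the latter, Tanaka's decomposition gives $\int_0^T \partial_x f(t,\abs{B_t})\dd\abs{B_t} = \int_0^T \partial_x f(t,\abs{B_t})\sign(B_t)\dd B_t + \int_0^T \partial_x f(t,0)\dd L_t$, and both pieces are well defined because $\partial_x f$ is continuous; the singular local-time term is simply retained inside ``$\dd\abs{B_t}$'' as written. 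Thus the main---and essentially only---task is the bookkeeping of checking that reflected Brownian motion meets the hypotheses of the semimartingale It\^o formula (continuity, the semimartingale property, quadratic variation $t$), after which the conclusion is immediate; no estimates or approximation steps are required.
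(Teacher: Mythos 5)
Your proposal is correct and follows essentially the same route as the paper: the paper's own justification (in its appendix discussing this statement) likewise invokes the time-dependent It\^o formula from Remark 1 after Theorem~IV.3.3 of Revuz--Yor with $A_t = t$, and uses Tanaka's formula to identify $\abs{B_t}$ as a continuous semimartingale with quadratic variation $\inner{\abs{B}}{\abs{B}}_t = t$. The only difference is cosmetic: you additionally note L\'evy's characterization and spell out the decomposition of $\int_0^T \partial_x f(t,\abs{B_t})\dd\abs{B_t}$ into its martingale and local-time parts, which the paper leaves implicit.
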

The integrand of the second integral is an important quantity arising in PDEs and stochastic processes
(see, e.g., \cite[pp.~263]{DoobBook}).
We will denote it by 
$\doob f(t,x) \coloneqq \partial_t f(t, x) + \frac{1}{2} \partial_{xx} f(t,x)$.
Some discussion about the statement of \Theorem{ItoFormula} appears in \Appendix{ito_discussion}.

\paragraph{Applying It\^o's formula to the continuous regret.}
Comparing these equations, it is natural to assume that $p = \partial_x f$ for a function
$f$ that is $C^{1,2}$ with $f(0,0) = 0$, $\partial_x f \in [0,1]$, and $\partial_x f (t,0) = 1/2$;
the latter two conditions are needed for 
\Definition{ContRegret} to be applicable. 
It\^o's formula then yields
\begin{align}
    \EquationName{cont_reg_ito_mix}
    \ContReg{T, p = \partial_x f, B} 
        ~=~ \int_{0}^{T} \partial_xf(t, \Abs{B_t}) \dd \Abs{B_t} 
        ~=~ f(T,\Abs{B_T}) - \int_{0}^T \doob f(t,\abs{B_t}) \dd t.
\end{align}

\paragraph{Path independence and the backward heat equation.}
At this point a useful idea arises:
as a thought experiment, suppose that $\doob f = 0$.
Then the second integral would vanish, and we would have the appealing expression
$\ContReg{T, p, B} = f(T,\abs{B_T})$.
Moreover, since $f$ is a deterministic function, the right-hand side depends only on $\abs{B_T}$ rather than the entire Brownian path $B|_{[0,T]}$.
Thus, the same must be true of the left-hand side:
at time $T$, the continuous regret of the algorithm $p$ depends only on $T$ and $\abs{B_T}$ (the gap).
We say that say that such an algorithm has
\emph{path independent regret}.
Our supposition that led to these attractive consequences was only that $\doob f = 0$, which turns out to be a well studied condition.

\begin{definition}
\DefinitionName{BHE}
Let $f \colon \bR_{> 0} \times \bR \rightarrow \bR$ be a $C^{1,2}$ function.
If $\doob f(t,x) = 0$ for all $(t,x) \in \bR_{> 0} \times \bR$ then we say that
$f$ satisfies the \emph{backward heat equation}.
A synonymous statement is that $f$ is \emph{space-time harmonic}.
\end{definition}

We may summarize the preceding discussion with the following proposition. 

\begin{proposition}
\PropositionName{FindingF}
Let $f : \bR_{> 0} \times \bR  \rightarrow \bR$ be a $C^{1,2}$ function that satisfies $\doob f = 0$ everywhere with $f(0,0)=0$.
Let $p = \partial_x f$.
Then, 
\begin{equation}
    \EquationName{FindingFConsequence}
  \int_0^T p(t,\Abs{B_t}) \dd \Abs{B_t} ~=~ f(T, \Abs{B_T}). \end{equation}
\end{proposition}

Suppose that a function $f$ satisfies the hypothesis of \Proposition{FindingF} \emph{and in addition} $p = \partial_x f \in [0,1]$ with $p(t,0) = 1/2$.
Then, we would have 
\begin{equation}
    \EquationName{ContRegIntermediate}
     \ContReg{T, p , B} = f(T, \Abs{B_T}). 
\end{equation}
We are unable to derive a function that satisfies the properties required for \Equation{ContRegIntermediate} to hold along with $\max_{x\geq 0} f(T,x) \leq \gamma \sqrt{T} / 2$.
Instead, we will begin by relaxing the constraint that $p(t,x) \in [0,1]$ and allow $p(t,x)$ to be negative.
We will overload the notation $\ContReg{\cdot}$ to include such functions. 
In the next section, we will derive a family of such functions that all achieve $\PseudoContReg{T, p, \Abs{B_T}} = f(T, \Abs{B_T}) = O(\sqrt{T})$.
This is done by setting up and solving the backwards heat equation.
Next, we use a ``smoothing'' argument to obtain a family of functions
that all achieve $\ContReg{T, p, \Abs{B_T}} = O(\sqrt{T})$,
and that \emph{do} satisfy $p(t,x) \in [0,1]$.
Finally, we will optimize $\ContReg{T,\cdot, \Abs{B_T}}$ over this family of functions to prove \Theorem{ContsMainResult}. The constant $\gamma$ will appear as a consequnce of this optimization problem.

\subsubsection{Satisfying the backward heat equation}
The main result of this section is the derivation of a family of functions $\tilde{p} : \bR_{>0} \times \bR \rightarrow \bR$
that satisfy $\tilde{p}(t,x) \leq 1$, $\tilde{p}(t,0) =1/2$ and
\begin{equation}
\EquationName{PRegretResult}
    \PseudoContReg{T, \tilde{p}, B} ~=~ f(T, \Abs{B_T}) ~=~ O( \sqrt{T} ),
\end{equation}
but do not necessarily satisfy $\tilde{p}(t,x) \geq 0$.

The first step is to find a function $f$ which satisfies the partial differential equation $\doob f = 0.$
Since the boundary condition $\tilde{p}(t,0)=1/2$ is a condition on $\tilde{p} = \partial_x f$, not on $f$ itself, it will be convenient to solve a PDE for $\tilde{p}$ instead, and then to derive $f$ by integrating.
However, some care is needed since not all antiderivates of $\tilde{p}$ (in $x$) will satisfy the backwards heat equation.
Fortunately, we have a useful lemma showing that if $\tilde{p}$ satisfies the backward heat equation, then we can construct an $f$ that also does.
\begin{lemma}
\LemmaName{BHEDerivative}
Suppose that $h : \bR_{> 0} \times \bR \rightarrow \bR$ is a $C^{1,2}$ function. 
Define $$f(t,x) := \int_{0}^x h(t,y)
\dd y - \frac{1}{2}\int_{0}^t \partial_x h(s,0) \dd s.$$
Then,
\begin{enumerate}[label=(\arabic*),noitemsep,topsep=0pt]
\item $f \in C^{1,2}$,
\item If $\doob h = 0$ over $\bR_{>0} \times \bR$ then
$\doob f = 0$ over $\bR_{>0} \times \bR$,
\item $h = \partial_x f$. 
\end{enumerate}
\end{lemma}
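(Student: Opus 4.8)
The plan is to verify the three claims in order; the only real content is a differentiation-under-the-integral-sign argument, and everything else is the fundamental theorem of calculus. I would start with claim (3), which is immediate: the second term in the definition of $f$ does not depend on $x$, so differentiating $\int_0^x h(t,y)\dd y$ in $x$ and using continuity of $y \mapsto h(t,y)$ gives $\partial_x f(t,x) = h(t,x)$.

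For claim (1), I would exhibit the three relevant partials and check each is continuous. From (3) we get $\partial_x f = h$, which is continuous, and $\partial_{xx} f = \partial_x h$, which is continuous since $h \in C^{1,2}$. For $\partial_t f$, I would differentiate under the integral sign to obtain
\[
  \partial_t f(t,x) ~=~ \int_0^x \partial_t h(t,y)\dd y \:-\: \tfrac12 \partial_x h(t,0).
\]
The interchange of $\partial_t$ and $\int_0^x$ is justified on any compact rectangle $[t_0,t_1]\times[-x_0,x_0] \subset \bR_{>0}\times\bR$ by the continuity (hence local boundedness) of $\partial_t h$ there, which supplies the hypotheses of the Leibniz rule; the right-hand side above is then continuous in $(t,x)$. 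Hence $f \in C^{1,2}$.

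For claim (2), assume $\doob h = 0$, i.e.\ $\partial_t h(t,y) = -\tfrac12 \partial_{yy} h(t,y)$. Substituting this into the formula for $\partial_t f$ from claim (1) and evaluating the integral,
\[
  \partial_t f(t,x) ~=~ -\tfrac12 \int_0^x \partial_{yy} h(t,y)\dd y \:-\: \tfrac12 \partial_x h(t,0) ~=~ -\tfrac12\big(\partial_x h(t,x) - \partial_x h(t,0)\big) - \tfrac12 \partial_x h(t,0) ~=~ -\tfrac12 \partial_x h(t,x).
\]
Combined with $\partial_{xx} f = \partial_x h$, this gives $\doob f = \partial_t f + \tfrac12 \partial_{xx} f = 0$ on $\bR_{>0}\times\bR$. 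Note that the subtracted term $\tfrac12 \int_0^t \partial_x h(s,0)\dd s$ in the definition of $f$ is precisely what is needed to cancel the boundary term $\partial_x h(t,0)$ produced by the fundamental theorem of calculus; without it, $f$ would fail the backward heat equation.

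I do not anticipate any genuine obstacle here: the one step deserving care is the Leibniz-rule justification in claim (1), and that is routine given that $h$ is $C^{1,2}$ and we only need the conclusion on compact subrectangles of $\bR_{>0}\times\bR$.
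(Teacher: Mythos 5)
Your proposal is correct and follows essentially the same route as the paper's proof: compute $\partial_x f = h$, $\partial_{xx} f = \partial_x h$, and $\partial_t f(t,x) = \int_0^x \partial_t h(t,y)\,\mathrm{d}y - \tfrac12\partial_x h(t,0)$ by differentiating under the integral (the paper cites dominated convergence where you cite the Leibniz rule, an inessential difference), then use the fundamental theorem of calculus and $\doob h = 0$ to cancel the boundary term and conclude $\doob f = 0$.
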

\begin{proof}{\Lemma{BHEDerivative}}
	First, we check that $f \in C^{1,2}.$ Let $(t,x) \in \bR_{>0} \times \bR$. It is easy to check via standard applications of the Dominated Convergence Theorem (DCT) and the Fundamental Theorem of Calculus (FTC) that
	\begin{enumerate}[label=(\arabic*),noitemsep,topsep=0pt]
		\item $\partial_t f (t,x) = \int_0^x \partial_t h(t,y) \dd y - \frac{1}{2} \partial_x h(t,0),$
		\item $\partial_x f(t,x) = h(t,x),$ and
		\item $\partial_{x x} f(t,x) = \partial_x h(t,x).$
	\end{enumerate}All of the above partial derivatives are clearly continuous since $h$ is $C^{1,2}$.
	
	Next, we show that if $\doob h(t,x) = 0 $ for all $(t,x) \in \bR_{>0}\times  \bR$, then $\doob f(t,x) = 0$ for all $\bR_{>0} \times \bR$. Indeed,
	\begin{align*}
		\doob f(t,x)
		&~=~ \left( \partial_t + \frac{1}{2} \partial_{xx} \right) f(t,x) \\
		&~=~ \int_0^x \partial_t h(t,y) \, \dd y - \frac{1}{2} \partial_x h(t,0) + \frac{1}{2} \partial_x h(t,x) &&\qquad\text{(by (1) and (3))} \\
		&~=~ \int_0^x \underbrace{\left ( \partial_t h(t,y) + \frac{1}{2}\partial_{xx} h(t,y) \right )}_{=0} \, \dd y &&\qquad\text{(by FTC)} \\
		&~=~ 0,
	\end{align*}
	as claimed.
\end{proof}

\paragraph{Defining boundary conditions for $p$.}
Obtaining a particular solution to the backward heat equation requires sufficient boundary conditions in order to uniquely identify $\tilde{p}$.
The boundary condition mentioned above is that $\tilde{p}(t,0) = 1/2$ for all $t$.
This condition together with the backward heat equation clearly do not suffice to uniquely determine $\tilde{p}$. Therefore, we impose some reasonable boundary conditions on $\tilde{p}$.

What should the value be at the boundary?
Intuitively, $x \mapsto \tilde{p}(t,x)$ should be a decreasing function because $\tilde{p}$ represents the weight placed on the worst expert as a function of the gap.
Therefore, it is natural to consider an ``upper boundary'' which specifies the point at which the difference in experts' total costs is so great that the algorithm places zero weight on the worst expert. 
The upper boundary can be specified by a curve, $\setst{(t,\phi(t))}{t > 0}$ for some continuous function $\phi : \bR_{>0} \rightarrow \bR_{> 0}.$
We will incorporate this idea by requiring $\tilde{p}(t, \phi(t)) = 0$ for all $t > 0$.

Where should the boundary be?
One reasonable choice for the boundary is to use $\phi_{\alpha}(t) = \alpha \sqrt{t}$ for some constant $\alpha>0$, as this is similar to the boundary used by the random adversary in the lower bound of \Section{lb}.
For now, we leave $\alpha$ as an unknown parameter whose value can be optimized later.
These conditions are combined into the following partial differential equation:
\begin{alignat}{2}
\EquationName{BHEU}
    \text{\small(backward heat equation)}\qquad
    &\partial_t u(t,x) + \smallfrac{1}{2}\partial_{xx} u(t,x) ~=~0 &&\qquad\text{for all $(t,x) \in \bR_{>0 }\times \bR$}\\
    \text{\small(upper boundary)}\qquad
    \EquationName{boundary_constraint}
    & u(t, \alpha\sqrt{t}) ~=~ 0
    &&\qquad\text{for all $t > 0$}\\
    \EquationName{x_axis_constraint}
    \text{\small(lower boundary)}\qquad
    & u(t,0) ~=~ \smallfrac{1}{2}
    &&\qquad\text{for all $t > 0$}.
\end{alignat}
Next we show that the following function solves this PDE.
Define $\tilde{p}_{\alpha} : \bR_{> 0} \times \bR \rightarrow \bR$ by 
\begin{equation}
\EquationName{palpha_def}
\tilde{p}_{\alpha}(t,x)
    ~\coloneqq~
    \frac{1}{2}\left ( 1 - \frac{\erfi \left ( \sfrac{x}{\sqrt{2t}}\right )}{\erfi\left( \sfrac{\alpha}{\sqrt{2}}\right)} \right ).
\end{equation}

\begin{lemma}
\LemmaName{p_pde_solution}
$\tilde{p}_{\alpha}$ satisfies the following properties:
\begin{enumerate}[label=(\arabic*),noitemsep,topsep=0pt]
\item $\tilde{p}_{\alpha}$ is $C^{1,2}$ over $\bR_{>0} \times \bR$,
\item $\tilde{p}_{\alpha}$ satisfies the constraints in \Equation{BHEU}, \Equation{boundary_constraint} and \Equation{x_axis_constraint}, and
\item For all $t>0$ and all $x \geq 0$, $\tilde{p}_{\alpha}(t,x) \leq 1/2$.
\end{enumerate}
\end{lemma}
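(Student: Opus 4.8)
The plan is to establish all three claims by direct calculation, writing $c \coloneqq \erfi(\alpha/\sqrt{2})$, which is a strictly positive constant since $\erfi$ is strictly increasing on $\bR$ with $\erfi(0) = 0$ and $\alpha > 0$. All the computations are organized around the substitution $u = u(t,x) \coloneqq x/\sqrt{2t}$, so that $\tilde{p}_{\alpha}(t,x) = \frac{1}{2} - \frac{1}{2c}\,\erfi(u)$, together with the elementary facts $\erfi'(z) = \frac{2}{\sqrt{\pi}}e^{z^2}$, $\partial_x u = 1/\sqrt{2t}$, and $\partial_t u = -u/(2t)$.

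For part (1), I would note that $\erfi$ is real-analytic on $\bR$ (being an antiderivative of the entire function $e^{z^2}$) and that $(t,x) \mapsto x/\sqrt{2t}$ is $C^\infty$ on $\bR_{>0} \times \bR$; hence the composition $\tilde{p}_\alpha$ is $C^\infty$ there, and in particular $C^{1,2}$.

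For part (2), the two boundary conditions \Equation{boundary_constraint} and \Equation{x_axis_constraint} follow by substitution: $\erfi(0) = 0$ gives $\tilde{p}_\alpha(t,0) = \tfrac12$, and $u(t,\alpha\sqrt{t}) = \alpha/\sqrt{2}$ gives $\tilde{p}_\alpha(t,\alpha\sqrt{t}) = \tfrac12(1 - 1) = 0$. For the backward heat equation \Equation{BHEU}, I would compute $\partial_x \tilde{p}_\alpha = -\frac{1}{c\sqrt{\pi}}\cdot\frac{1}{\sqrt{2t}}\,e^{u^2}$, then differentiate once more in $x$ using $\partial_x(u^2) = 2u/\sqrt{2t}$ to get $\partial_{xx}\tilde{p}_\alpha = -\frac{2}{c\sqrt{\pi}}\cdot\frac{u}{2t}\,e^{u^2}$, while $\partial_t \tilde{p}_\alpha = -\frac{1}{c\sqrt{\pi}}\,e^{u^2}\,\partial_t u = \frac{1}{c\sqrt{\pi}}\cdot\frac{u}{2t}\,e^{u^2}$; adding these gives $\partial_t \tilde{p}_\alpha + \tfrac12 \partial_{xx}\tilde{p}_\alpha = 0$.

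For part (3), for every $t > 0$ and $x \geq 0$ we have $u \geq 0$, and $\erfi$ is non-negative on $[0,\infty)$ (again since $\erfi(0) = 0$ and $\erfi' > 0$), so $\frac{1}{2c}\erfi(u) \geq 0$ and therefore $\tilde{p}_\alpha(t,x) = \tfrac12 - \frac{1}{2c}\erfi(u) \leq \tfrac12$. I do not anticipate a genuine obstacle here: the only step requiring more than a one-line substitution is the verification of the backward heat equation, and even that collapses to a short computation once the identity $\partial_t u = -u/(2t)$ is recorded.
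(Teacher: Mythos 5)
Your proof is correct, and all the computations check out: with $u = x/\sqrt{2t}$ and $c = \erfi(\alpha/\sqrt{2})$ one indeed gets $\partial_x \tilde{p}_\alpha = -\tfrac{1}{c\sqrt{\pi}}\tfrac{e^{u^2}}{\sqrt{2t}}$, $\partial_{xx}\tilde{p}_\alpha = -\tfrac{2}{c\sqrt{\pi}}\tfrac{u}{2t}e^{u^2}$, and $\partial_t \tilde{p}_\alpha = \tfrac{1}{c\sqrt{\pi}}\tfrac{u}{2t}e^{u^2}$, so $\doob\tilde{p}_\alpha = 0$, and the boundary values, smoothness, and the bound $\tilde{p}_\alpha \leq 1/2$ for $x\geq 0$ all follow as you say. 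The route is slightly different from the paper's: you verify the stated formula directly by differentiation, whereas the paper \emph{derives} it, positing a self-similar ansatz $u(t,x) = v(x/\sqrt{t})$, reducing the backward heat equation to the ODE $v''(z) = z\,v'(z)$, solving to get $v(z) = C\erfi(z/\sqrt{2}) + D$, and then fixing $C,D$ from the two boundary conditions. The paper's derivation explains where the formula for $\tilde{p}_\alpha$ comes from (which is its real purpose in the narrative of \Section{cts_ub}), while your verification is, if anything, a more direct proof of the lemma as stated; notably, you also check property (3) explicitly, which the paper's written proof leaves implicit (it is immediate from $\erfi \geq 0$ on $[0,\infty)$, exactly as you argue).
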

\begin{proof}{\Lemma{p_pde_solution}}
	Let us assume that we can write $u(t,x) = v(x/\sqrt{t})$.
	Then, we have $\partial_t u(t,x)  = - \frac{x}{2 t^{3/2}} v'(x/\sqrt{t}),$ and $\frac{1}{2}\partial_{xx} u(t,x) = \frac{1}{2t}v^{''}(x/\sqrt{t}).$ The backward heat equation enforces that $v''(x/\sqrt{t}) = \frac{x}{\sqrt{t}}v'(x/\sqrt{t})$.
	By a change of variables $(z = x/\sqrt{t})$, we obtain the following ordinary differential equation
	\begin{equation}
		\EquationName{BHEODE}
		v''(z) ~=~ z\cdot v'(z).
	\end{equation}
	Hence, $v'(z) = C\cdot e^{\frac{z^2}{2}}$ for some constant $C$. We can then integrate to obtain $v(z) = \int_{0}^{z}C e^{y^2/2} \,\dd y  + D = \int_0^{z/\sqrt{2}} \sqrt{2} C e^{r^2}\, \dd r + D$, for some constant $D$.
	For the last equality, we made the change of variables $r = y/\sqrt{2}$ in the integral.
	Therefore, by the definition of $\erfi$ (and a different constant $C$), we have $v(z) = C \erfi(z/\sqrt{2}) + D$.
	Hence, for some constants $C, D \in \bR$, we have
	\[
	u(t,x) ~=~ C \erfi(x/\sqrt{2t}) + D.
	\]
	Plugging in the boundary condition at $x = 0$ and recalling that $\erfi(0) = 0$ we see that $D = 1/2.$ Plugging in the boundary condition that $u(t,\alpha\sqrt{t}) = 0$ and using that $D=1/2$ we see that $C = -\frac{1}{2\erfi\left( \alpha / \sqrt{2} \right )}.$ Therefore, we have that the following function
	\[
	u(t,x) ~=~ \frac{1}{2}\left( 1 - \frac{\erfi \left( x/\sqrt{2t} \right )}{\erfi \left (\alpha/\sqrt{2} \right )} \right )
	\]
	satisfies the backwards heat equation and the boundary conditions.
	Moreover, $u \in C^{1,2}$ on $\bR_{> 0} \times \bR$.
\end{proof}
\Lemma{p_pde_solution} shows that $\tilde{p}_{\alpha}(t,x)$ nearly defines a valid continuous time algorithm,
in that it satisfies the conditions of \Definition{ContRegret} except for non-negativity.
Next, we will integrate $\tilde{p}_{\alpha}$ as described in \Lemma{BHEDerivative}.
Define the function $\tilde{R}_{\alpha} \colon \bR_{> 0} \times \bR \to \bR$ as
\begin{align}
    \EquationName{RtildeDef}
    \tilde{R}_{\alpha}(t,x) = 
    \frac{x}{2} + \kappa_{\alpha}\sqrt{t} \cdot M_0\left(\frac{x^2}{2t}\right) 
    \qquad\text{where}\qquad
    \kappa_{\alpha} = \frac{1}{\sqrt{2 \pi} \erfi(  \alpha/\sqrt{2})}.
\end{align}
\begin{lemma}
    \LemmaName{RIntegral}
    $\tilde{R}_{\alpha}(t,x) = \int_0^x \tilde{p}_{\alpha}(t,y)\, \dd y - \frac{1}{2} \int_0^t \partial_x \tilde{p}_{\alpha}(s,0) \, \dd s$.
\end{lemma}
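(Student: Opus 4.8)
The plan is a direct computation: evaluate the two integrals on the right-hand side and simplify until the result matches the definition of $\tilde{R}_{\alpha}$ in \eqref{eq:RtildeDef}. The only inputs are the definition \eqref{eq:palpha_def} of $\tilde{p}_{\alpha}$, the definitions \eqref{eq:Erfi} of $\erfi$ and $M_0$, the identity $\erfi'(u) = \tfrac{2}{\sqrt{\pi}} e^{u^2}$, and elementary calculus. There is no conceptual obstacle; the main thing to watch is the bookkeeping of the constants $\sqrt{2}$ and $\sqrt{\pi}$.

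First I would handle $\int_0^x \tilde{p}_{\alpha}(t,y)\,\dd y$. Splitting off the constant term in \eqref{eq:palpha_def} gives $\tfrac{x}{2} - \tfrac{1}{2\erfi(\alpha/\sqrt{2})}\int_0^x \erfi(y/\sqrt{2t})\,\dd y$. Substituting $u = y/\sqrt{2t}$ and integrating by parts with $\erfi'(u) = \tfrac{2}{\sqrt{\pi}} e^{u^2}$ yields $\int_0^w \erfi(u)\,\dd u = w\,\erfi(w) - \tfrac{1}{\sqrt{\pi}}(e^{w^2}-1)$, hence
\[
\int_0^x \erfi(y/\sqrt{2t})\,\dd y ~=~ x\,\erfi\!\left(\tfrac{x}{\sqrt{2t}}\right) - \sqrt{\tfrac{2t}{\pi}}\left(e^{x^2/(2t)}-1\right).
\]
Since $u \mapsto u\,\erfi(u)$ is even (as $\erfi$ is odd), this formula, and everything below, holds for all real $x$, not just $x \geq 0$.

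Next I would compute the second integral. Differentiating \eqref{eq:palpha_def} gives $\partial_x \tilde{p}_{\alpha}(s,x) = -\tfrac{1}{\erfi(\alpha/\sqrt{2})}\cdot \tfrac{e^{x^2/(2s)}}{\sqrt{2\pi s}}$, so $\partial_x \tilde{p}_{\alpha}(s,0) = -\tfrac{1}{\erfi(\alpha/\sqrt{2})\sqrt{2\pi s}}$, and using $\int_0^t s^{-1/2}\,\dd s = 2\sqrt{t}$ we obtain $-\tfrac{1}{2}\int_0^t \partial_x \tilde{p}_{\alpha}(s,0)\,\dd s = \tfrac{\sqrt{t}}{\sqrt{2\pi}\,\erfi(\alpha/\sqrt{2})}$.

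Finally I would add the two pieces. The ``$-1$'' inside $(e^{x^2/(2t)}-1)$ produces a term $-\tfrac{1}{2\erfi(\alpha/\sqrt{2})}\sqrt{2t/\pi}$, which equals $-\tfrac{\sqrt{t}}{\sqrt{2\pi}\,\erfi(\alpha/\sqrt{2})}$ by the identity $\tfrac12\sqrt{2t/\pi} = \sqrt{t}/\sqrt{2\pi}$, and therefore cancels exactly against the second integral. What remains is
\[
\frac{x}{2} - \frac{x}{2\,\erfi(\alpha/\sqrt{2})}\,\erfi\!\left(\tfrac{x}{\sqrt{2t}}\right) + \frac{1}{2\,\erfi(\alpha/\sqrt{2})}\sqrt{\tfrac{2t}{\pi}}\,e^{x^2/(2t)}.
\]
Using $\kappa_{\alpha} = \tfrac{1}{\sqrt{2\pi}\,\erfi(\alpha/\sqrt{2})}$, the definition $M_0(w) = e^w - \sqrt{\pi w}\,\erfi(\sqrt{w})$ with $\sqrt{\pi x^2/(2t)} = \tfrac{|x|\sqrt{\pi}}{\sqrt{2t}}$, and again $|x|\,\erfi(|x|/\sqrt{2t}) = x\,\erfi(x/\sqrt{2t})$, one checks termwise that this equals $\tfrac{x}{2} + \kappa_{\alpha}\sqrt{t}\,M_0(x^2/(2t)) = \tilde{R}_{\alpha}(t,x)$, completing the proof.
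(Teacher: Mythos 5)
Your proof is correct, and it takes a genuinely more computational route than the paper's. The paper works in the differentiation direction: it cites \Lemma{uncap_rtg_deriv_cts} (whose proof rests on the confluent hypergeometric facts $M_0'=-M_1$ and the closed form of $M_1$) to get $\partial_x \tilde{R}_{\alpha}=\tilde{p}_{\alpha}$, so the fundamental theorem of calculus immediately gives $\int_0^x \tilde{p}_{\alpha}(t,y)\,\dd y = \tilde{R}_{\alpha}(t,x)-\tilde{R}_{\alpha}(t,0)$ with $\tilde{R}_{\alpha}(t,0)=\kappa_{\alpha}\sqrt{t}$, and it then checks that the time integral contributes exactly $\kappa_{\alpha}\sqrt{t}$. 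You work in the integration direction: you evaluate both integrals in closed form, using $\int_0^w \erfi(u)\,\dd u = w\,\erfi(w)-\tfrac{1}{\sqrt{\pi}}(e^{w^2}-1)$ for the space integral and $\int_0^t s^{-1/2}\,\dd s = 2\sqrt{t}$ for the time integral, and then match the sum termwise against $\tfrac{x}{2}+\kappa_{\alpha}\sqrt{t}\,M_0(x^2/2t)$ via the explicit representation $M_0(w)=e^w-\sqrt{\pi w}\,\erfi(\sqrt{w})$ of \Fact{basic_identities}(1). What your route buys is self-containment (no hypergeometric derivative calculus, only the elementary antiderivative of $\erfi$) and an explicit treatment of $x<0$ via oddness of $\erfi$, a point the paper glosses over even though the lemma is needed for all $x\in\bR$; what the paper's route buys is brevity, since \Lemma{uncap_rtg_deriv_cts} is needed elsewhere anyway (e.g.\ in \Appendix{ApproximationArgument}). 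One reassurance: your constant $\partial_x \tilde{p}_{\alpha}(s,0) = -\tfrac{1}{\erfi(\alpha/\sqrt{2})\sqrt{2\pi s}} = -\kappa_{\alpha}/\sqrt{s}$ is the correct one; the factors $-\kappa_{\alpha}e^{x^2/2t}/\sqrt{2t}$ in part (2) of \Lemma{uncap_rtg_deriv_cts} and $\kappa_{\alpha}/(2\sqrt{s})$ in the paper's proof of the present lemma contain constant-factor slips, though the paper's final identity $-\tfrac{1}{2}\int_0^t \partial_x\tilde{p}_{\alpha}(s,0)\,\dd s=\kappa_{\alpha}\sqrt{t}$ agrees with yours.
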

First we need to compute some derivatives.
\begin{lemma}
	\LemmaName{uncap_rtg_deriv_cts}
	The following identities hold for every $\alpha > 0$.
	\begin{enumerate}
		\item $\partial_x \tilde{R}_{\alpha}(t,x) = \tilde{p}_{\alpha}(t,x) = \frac{1}{2} \left( 1 - \frac{\erfi(x/\sqrt{2t})}{\erfi(\alpha/\sqrt{2t})}\right)$.
		\item $\partial_{xx} \tilde{R}_{\alpha}(t,x) = \partial_x \tilde{p}_{\alpha}(t,x) = -\kappa_{\alpha} \cdot \frac{\exp(x^2/2t)}{\sqrt{t}}$.
	\end{enumerate}
\end{lemma}
\begin{proof}
	The proof is a straightforward calculation.
	We have
	\begin{align*}
		\partial_x \tilde{R}_{\alpha}(t,x)
		& = \frac{1}{2} - \kappa_{\alpha} \frac{x}{\sqrt{t}} \cdot M_1\left( \frac{x^2}{2t} \right) \\
		& = \frac{1}{2} - \frac{1}{\sqrt{2\pi} \erfi(\alpha/\sqrt{2})} \cdot \frac{x}{\sqrt{t}} \cdot \frac{\sqrt{\pi} \erfi(x/\sqrt{2t})}{2 \cdot x/\sqrt{2t}} \\
		& = \frac{1}{2} \left( 1 - \frac{\erfi(x/\sqrt{2t})}{\erfi(\alpha/\sqrt{2})} \right),
	\end{align*}
	where the first equality uses \Fact{chf_derivs}
	and the second equality uses the identity \ref{item:M1} in \Fact{basic_identities}.
	This proves the first identity.
	
	For the second identity, using the definition of $\erfi(\cdot)$, we have
	\[
	\partial_{xx} \tilde{R}_{\alpha} = \partial_x \tilde{p}_{\alpha}(t,x)
	= -\frac{\exp(x^2/2t)}{\sqrt{2\pi} \erfi(\alpha/\sqrt{2}) \sqrt{t}}
	= -\kappa_{\alpha} \cdot \frac{\exp(x^2/2t)}{\sqrt{t}}. \qedhere
	\]
\end{proof}

\begin{proofof}{\Lemma{RIntegral}}
	By the first identity in \Lemma{uncap_rtg_deriv_cts}, we have
	\begin{equation}
		\EquationName{RIntegral1}
		\int_0^{x} \tilde{p}_{\alpha}(t,y) \, \dd y = \tilde{R}_{\alpha}(t,x) - \tilde{R}_{\alpha}(t,0)
	\end{equation}
	Note that $\tilde{R}_{\alpha}(t,0) = \kappa_{\alpha} \sqrt{t}$.
	Next, the second identity of \Lemma{uncap_rtg_deriv_cts}
	implies that $-\partial_x \tilde{p}_{\alpha}(s,0) = \frac{\kappa_{\alpha}}{\sqrt{s}}$.
	Hence,
	\begin{equation}
		\EquationName{RIntegral2}
		-\frac{1}{2}\int_0^t \partial_x \tilde{p}_{\alpha}(s,0) \, \dd s = \kappa_{\alpha}\sqrt{t} = \tilde{R}_{\alpha}(t,0).
	\end{equation}
	Summing \Equation{RIntegral1} and \Equation{RIntegral2} gives
	\[
	\int_0^x \tilde{p}_{\alpha}(t,y) \, \dd y - \frac{1}{2}\int_0^t \partial_x \tilde{p}_{\alpha}(s,0) \, \dd s = \tilde{R}_{\alpha}(t,x) - \tilde{R}_{\alpha}(t,0) + \tilde{R}_{\alpha}(t,0) = \tilde{R}_{\alpha}(t,x). \qedhere
	\]
\end{proofof}

By \Lemma{p_pde_solution}, the function $\tilde{p}_{\alpha}$ satisfies the hypothesis of the function $h$ in \Lemma{BHEDerivative}.
Hence, we can apply \Lemma{BHEDerivative} with $h = \tilde{p}_{\alpha}$ and $f = \tilde{R}_{\alpha}$ to assert the following properties on $\tilde{R}_{\alpha}$.
\begin{lemma}
\LemmaName{RSatisfiesBHE}
$\tilde{R}_{\alpha}$ satisfies the following properties:
\begin{enumerate}[label=(\arabic*),noitemsep,topsep=0pt]
\item $\tilde{R}_{\alpha}$ is $C^{1,2}$,
\item $\tilde{R}_{\alpha}$ satisfies $\doob \tilde{R}_\alpha = 0$ over $\bR_{>0} \times \bR$,
\item $\partial_x \tilde{R}_{\alpha}(t,x) = \tilde{p}_\alpha(t,x)$.
\end{enumerate}
\end{lemma}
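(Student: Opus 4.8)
The plan is to obtain \Lemma{RSatisfiesBHE} as an immediate consequence of \Lemma{BHEDerivative}, applied with $h = \tilde{p}_\alpha$ and $f = \tilde{R}_\alpha$. The only hypothesis that \Lemma{BHEDerivative} demands of $h$ is that it be $C^{1,2}$ on $\bR_{>0}\times\bR$, and this is supplied verbatim by \Lemma{p_pde_solution}(1). So the first step is simply to record that check, after which \Lemma{BHEDerivative} manufactures the function
\[
f(t,x) ~=~ \int_0^x \tilde{p}_\alpha(t,y)\dd y \;-\; \tfrac12\int_0^t \partial_x \tilde{p}_\alpha(s,0)\dd s .
\]

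Next I would identify this $f$ with $\tilde{R}_\alpha$ as defined in \eqref{eq:RtildeDef}; that identification is precisely the statement of \Lemma{RIntegral}, which I would cite. Granting it, conclusion (1) of \Lemma{BHEDerivative} gives $\tilde{R}_\alpha \in C^{1,2}$ and conclusion (3) gives $\partial_x \tilde{R}_\alpha = \tilde{p}_\alpha$, which are parts (1) and (3) of the present lemma. For part (2), conclusion (2) of \Lemma{BHEDerivative} states that $\doob f = 0$ on $\bR_{>0}\times\bR$ whenever $\doob h = 0$ there; but $\doob \tilde{p}_\alpha(t,x) = \partial_t\tilde{p}_\alpha(t,x) + \tfrac12\partial_{xx}\tilde{p}_\alpha(t,x) = 0$ is exactly the backward heat equation \eqref{eq:BHEU}, which holds by \Lemma{p_pde_solution}(2). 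Hence $\doob\tilde{R}_\alpha = 0$, finishing the proof.

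The only place with genuine content is the cited \Lemma{RIntegral}: one must verify that the closed form \eqref{eq:RtildeDef}, written in terms of $M_0$ and $\erfi$, is indeed the antiderivative prescribed above — this requires integrating \eqref{eq:palpha_def} in $x$ and evaluating $\partial_x\tilde{p}_\alpha(s,0)$. That hypergeometric bookkeeping is the main (and essentially only) obstacle, which is why it is deferred to the appendix; everything in the present lemma is then just a matter of matching the hypotheses and conclusions of \Lemma{BHEDerivative}, \Lemma{p_pde_solution}, and \Lemma{RIntegral}.
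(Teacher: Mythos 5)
Your proposal is correct and follows exactly the paper's route: the paper also obtains this lemma by applying \Lemma{BHEDerivative} with $h = \tilde{p}_\alpha$ and $f = \tilde{R}_\alpha$, using \Lemma{p_pde_solution} to supply the hypotheses (smoothness and the backward heat equation for $\tilde{p}_\alpha$) and \Lemma{RIntegral} to identify $\tilde{R}_\alpha$ with the prescribed antiderivative. Nothing is missing.
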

\Lemma{RSatisfiesBHE} shows that $\tilde{R}_{\alpha}$ satisfies the hypotheses of \Proposition{FindingF}.
Hence, we have
\[
    \PseudoContReg{T, \tilde{p}_{\alpha}, B} ~=~ \tilde{R}_{\alpha}(T, \Abs{B_T}).
\]
Since $\erfi(\cdot)$ is a strictly increasing function with $\erfi(0) = 0$, observe that $\partial_x \tilde{R}_{\alpha} = \tilde{p}_{\alpha}$ has exactly one root at $\alpha\sqrt{t}$.
In particular, for any fixed $T > 0$, the function $\tilde{R}_{\alpha}(T, x)$ is maximized at $x = \alpha \sqrt{T}$.
Therefore, for every $T$ we have
\[
    \tilde{R}_{\alpha}(T, \Abs{B_T}) ~\leq~ \max_{x \geq 0}\tilde{R}_{\alpha}(T, x) ~\leq~
    \tilde{R}_{\alpha}(T, \alpha \sqrt{T}) ~=~ \left ( \frac{\alpha}{2} + \kappa_{\alpha}M_0\left(\frac{\alpha^2}{2}\right) \right ) {\sqrt{T}},
\]
where the equality is by definition of $\tilde{R}_{\alpha}$ in \Equation{RtildeDef}.
To summarize, we have shown that
\begin{equation}
\EquationName{R_bound}
    \PseudoContReg{T, \tilde{p}_{\alpha}, B} ~\leq~ \left ( \frac{\alpha}{2} + \kappa_{\alpha}M_0\left(\frac{\alpha^2}{2}\right) \right ) {\sqrt{T}}.
\end{equation} 
This establishes \eqref{eq:PRegretResult}, as desired.

\subsubsection{Resolving the non-negativity issue}
\SubsectionName{truncation}
The only remaining step is to modify $\tilde{p}_{\alpha}$ so that it lies in the interval $[0,1/2]$.
We modify $\tilde{p}_{\alpha}$ in the most natural way: by modifying all negative values to be zero.
Specifically, we set
\begin{equation}
    \EquationName{p_definition}
    p_{\alpha}(t,x)
    ~\coloneqq~
    \begin{cases}
    0 &~~\text{($t = 0$)} \\
    (\tilde{p}_{\alpha}(t,x))_+
    &~~\text{($t > 0$)}
    \end{cases}
    ~=~
    \begin{cases}
    0 &~~\text{($t = 0$)} \\
    \frac{1}{2}\left( 1 - \frac{\erfi(\sfrac{x}{\sqrt{2t}})}{\erfi(\sfrac{\alpha}{\sqrt{2}})} \right)_+ &~~\text{($t > 0$)}
    \end{cases}.
\end{equation}
Here, we use the notation $(x)_+ = \max\{0,x\}$.
Note that $p_{\alpha}(t,0) = 1/2$ for all $t > 0$ and $p_{\alpha}(t,x) \in [0,1/2]$ for all $t,x \geq 0$.
So $p_{\alpha}$ defines a valid continuous-time algorithm.
From \Equation{p_definition}, we obtain a truncated version of $\tilde{R}_{\alpha}$ as
\begin{equation}
    \EquationName{R_definition}
    R_{\alpha}(t,x)
    ~\coloneqq~
    \begin{cases}
        0 &\quad(t = 0) \\
        \tilde{R}_{\alpha}(t,x) &\quad(t > 0 \wedge x \leq \alpha \sqrt{t}) \\
        \tilde{R}_{\alpha}(t,\alpha\sqrt{t}) &\quad(t > 0 \wedge x \geq \alpha \sqrt{t})
    \end{cases}.
\end{equation}
It is straightforward to verify that $\partial_x R_{\alpha} = p_{\alpha}$.
This is because for $x \leq \alpha \sqrt{t}$, $p_{\alpha}(t,x) = \tilde{p}_{\alpha}(t,x)$ and $R_{\alpha}(t,x) = \tilde{R}_{\alpha}(t,x)$ (we have computed the derivatives in \Lemma{RSatisfiesBHE}).
In addition, $R_{\alpha}(t,x)$ is constant (in $x$) for $x \geq \alpha\sqrt{t}$ so its derivative (in $x$) is $0$.

If $R_{\alpha}$ were sufficiently smooth then we could immediately apply It\^{o}'s formula (\Theorem{ItoFormula})
to obtain a formula for the regret
of $p_{\alpha}$.
For $x < \alpha \sqrt{t}$, we have $\doob R_{\alpha}(t,x) = 0$ by \Lemma{RSatisfiesBHE}
and for $x > \alpha \sqrt{t}$, it is not difficult to verify that $\doob R_{\alpha}(t,x) > 0$.
It\^{o}'s formula would then suggest that $\ContReg{T, p_{\alpha}, B} \leq \Reg(T, |B_T|)$.
The only flaw is that $\partial_{xx} R_{\alpha}$ is
not well-defined on the curve $\setst{(t, \alpha \sqrt{t})}{t > 0}$
so $R_{\alpha}$ is not in $C^{1,2}$ and \Theorem{ItoFormula} cannot be applied directly.
The reader who believes that this issue is unlikely to be problematic may wish to take \Lemma{ContRegUB} on faith and skip ahead to \Subsection{OptBdry}.

\begin{figure}[ht]
    \centering
    \includegraphics[scale=0.109]{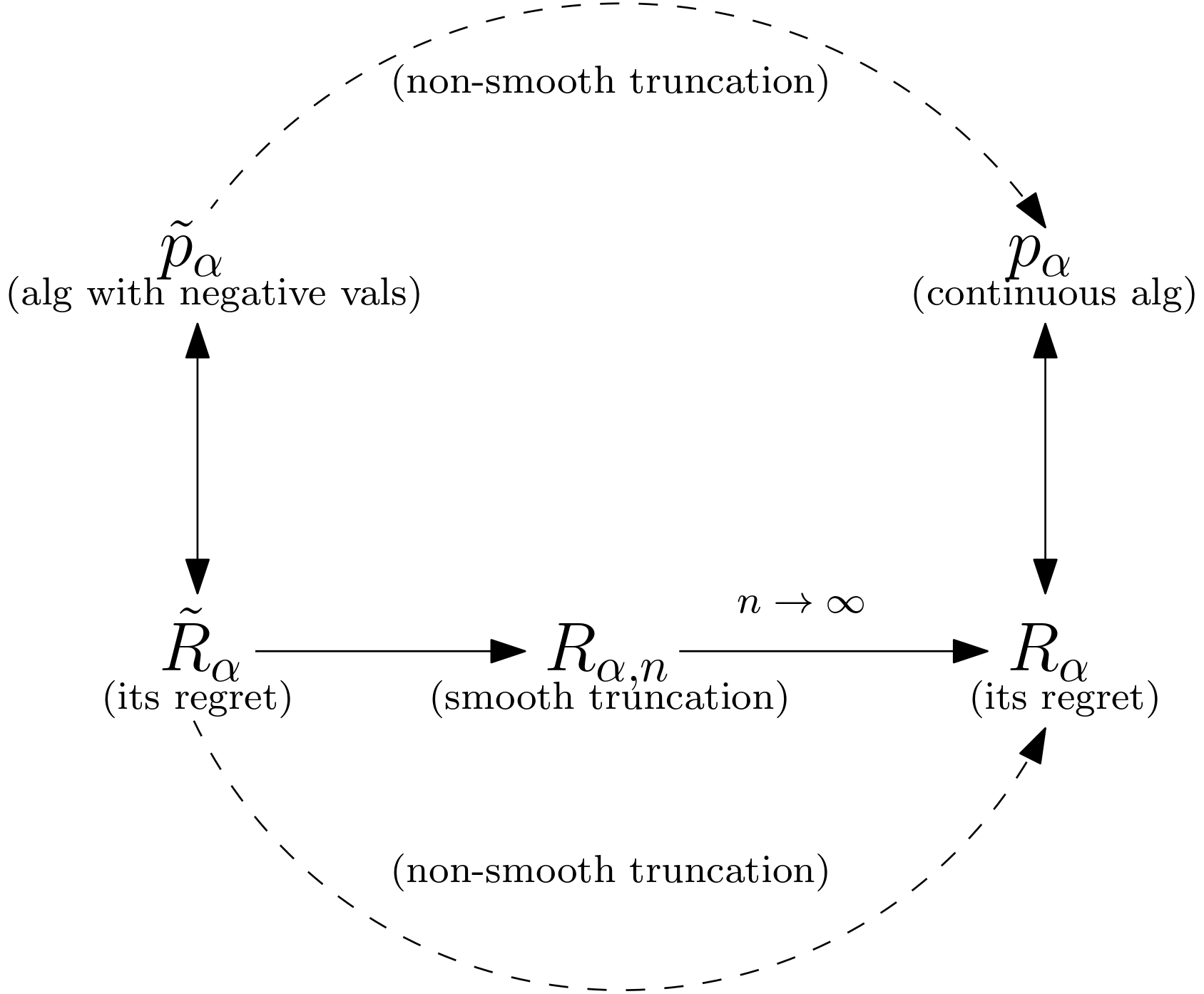}
    \caption[The relationships between $\tilde{p}_{\alpha}$, $\tilde{R}_{\alpha}$, $R_{\alpha,n},$  $p_{\alpha}$, and $R_{\alpha}$.]{
        The relationships between $\tilde{p}_{\alpha}$, $\tilde{R}_{\alpha}$, $R_{\alpha,n},$  $p_{\alpha}$, and $R_{\alpha}$.
        Since $R_{\alpha}$ is not sufficiently smooth, It\^{o}'s formula (\Theorem{ItoFormula}) cannot be applied.
        Instead, we show that $R_{\alpha}$ is the limit of $R_{\alpha, n}$ which are smooth truncations of $\tilde{R}_{\alpha}$.
        Since each $\tilde{R}_{\alpha, n}$ is smooth, It\^{o}'s formula can be applied to each of them.
    }
    \label{fig:my_label}
\end{figure}

\begin{restatable}{lemma}{ContRegUB}
    \LemmaName{ContRegUB}
    Fix $\alpha > 0$.
    Then, almost surely, for all $T \geq 0$, $\ContReg{T, p_{\alpha}, B} \leq R_{\alpha}(T, |B_T|)$.
\end{restatable}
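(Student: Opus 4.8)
The obstruction to applying \Theorem{ItoFormula} directly is that $R_\alpha$ is not $C^{1,2}$: writing $\Gamma\coloneqq\setst{(t,\alpha\sqrt t)}{t>0}$ for the parabola separating the two non-trivial branches of \Equation{R_definition}, the derivative $\partial_x R_\alpha=p_\alpha$ is continuous, but $\partial_{xx}R_\alpha$ jumps from the strictly negative value $\partial_{xx}\tilde R_\alpha$ to $0$ across $\Gamma$, and $\partial_t R_\alpha$ blows up like $t^{-1/2}$ as $t\to0^+$. The plan is to approximate $R_\alpha$ by a sequence of genuinely $C^{1,2}$ functions $R_{\alpha,n}$ (the object labelled $R_{\alpha,n}$ in \Figure{my_label}), apply \Theorem{ItoFormula} to each on an interval $[\eps,T]$, and then let $\eps\to0$ and $n\to\infty$, using throughout that the correction term produced by It\^o's formula has the right sign.

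The key preliminary step is to extend $R_\alpha$ to $x<0$ by its tangent line at $x=0$: set $R^{\mathrm{ext}}_\alpha(t,x)\coloneqq R_\alpha(t,0)+x/2$ for $x\le0$ (equivalently, extend $p_\alpha$ by the constant $1/2$ on $x<0$). One checks that $R^{\mathrm{ext}}_\alpha$ agrees with $R_\alpha$ on $\bR_{>0}\times\bR_{\ge0}$, is continuous on $\bR_{\ge0}\times\bR$ with $R^{\mathrm{ext}}_\alpha(0,\cdot)\le0$, is $C^1$ on $\bR_{>0}\times\bR$ (the first-order derivatives of the three branches match along $\Gamma$ and along $\{x=0\}$), has $\partial_x R^{\mathrm{ext}}_\alpha\in[0,\tfrac{1}{2}]$, and satisfies
\[
\doob R^{\mathrm{ext}}_\alpha(t,x)~\ge~0\qquad\text{for Lebesgue-a.e.\ }(t,x)\in\bR_{>0}\times\bR.
\]
Indeed $\doob R^{\mathrm{ext}}_\alpha=\doob\tilde R_\alpha=0$ on $x<\alpha\sqrt t$ by \Lemma{RSatisfiesBHE}(2); on $x>\alpha\sqrt t$, $R^{\mathrm{ext}}_\alpha(t,x)=\tilde R_\alpha(t,\alpha\sqrt t)=c_\alpha\sqrt t$ with $c_\alpha=\tfrac{\alpha}{2}+\kappa_\alpha M_0(\alpha^2/2)=\max_{x\in\bR}\tilde R_\alpha(t,x)/\sqrt t\ge \tilde R_\alpha(t,0)/\sqrt t=\kappa_\alpha>0$, so $\doob R^{\mathrm{ext}}_\alpha=c_\alpha/(2\sqrt t)>0$; and on $x<0$, $R^{\mathrm{ext}}_\alpha(t,x)=\kappa_\alpha\sqrt t+x/2$, so $\doob R^{\mathrm{ext}}_\alpha=\kappa_\alpha/(2\sqrt t)>0$. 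Because $\partial_x R^{\mathrm{ext}}_\alpha$ is continuous, $\partial_{xx}R^{\mathrm{ext}}_\alpha$ carries no singular (Dirac) part along $\Gamma$ or $\{x=0\}$, so the inequality in fact holds as a distributional inequality.

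Now let $(\phi_n)$ be smooth symmetric probability densities with $\operatorname{supp}\phi_n\subseteq[-1/n,1/n]$, and set $R_{\alpha,n}(t,x)\coloneqq\int_\bR R^{\mathrm{ext}}_\alpha(t,x-y)\,\phi_n(y)\,\dd y$ and $p_{\alpha,n}\coloneqq\partial_x R_{\alpha,n}$. Since mollification in $x$ commutes with $\doob$, $R_{\alpha,n}$ is $C^{1,2}$ on $\bR_{>0}\times\bR$ and continuous on $\bR_{\ge0}\times\bR$, $p_{\alpha,n}\in[0,\tfrac{1}{2}]$, $\doob R_{\alpha,n}=(\doob R^{\mathrm{ext}}_\alpha)\ast\phi_n\ge0$, and as $n\to\infty$ both $R_{\alpha,n}\to R^{\mathrm{ext}}_\alpha$ and $p_{\alpha,n}\to\partial_x R^{\mathrm{ext}}_\alpha$ locally uniformly on $\bR_{>0}\times\bR$, with $R_{\alpha,n}(0,0)=-\tfrac{1}{2}\int_0^\infty y\,\phi_n(y)\,\dd y\to0$. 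Applying \Theorem{ItoFormula} to $R_{\alpha,n}$ on $[\eps,T]$ (with $0<\eps<T$; this is \Theorem{ItoFormula} for $s\mapsto R_{\alpha,n}(\eps+s,\cdot)$ and $B_{\eps+s}$) and dropping the non-negative term $\int_\eps^T\doob R_{\alpha,n}(t,\Abs{B_t})\,\dd t$ gives, almost surely,
\[
R_{\alpha,n}(T,\Abs{B_T})~\ge~R_{\alpha,n}(\eps,\Abs{B_\eps})+\int_\eps^T p_{\alpha,n}(t,\Abs{B_t})\,\dd\Abs{B_t}.
\]
Letting $\eps\to0$ for fixed $n$: the first term tends to $R_{\alpha,n}(0,0)$ (continuity and $\Abs{B_\eps}\to0$), and $\int_0^\eps p_{\alpha,n}(t,\Abs{B_t})\,\dd\Abs{B_t}\to0$ a.s.\ --- decomposing $\dd\Abs{B_t}=\sign(B_t)\,\dd B_t+\dd L_t$ by Tanaka's formula, the martingale part tends to $0$ in $L^2$ (integrand bounded by $\tfrac{1}{2}$) and the local-time part is at most $\tfrac{1}{2}L_\eps\to0$. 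Then letting $n\to\infty$: the left side tends to $R^{\mathrm{ext}}_\alpha(T,\Abs{B_T})=R_\alpha(T,\Abs{B_T})$ since $\Abs{B_T}\ge0$, $R_{\alpha,n}(0,0)\to0$, and $\int_0^T p_{\alpha,n}(t,\Abs{B_t})\,\dd\Abs{B_t}\to\int_0^T p_\alpha(t,\Abs{B_t})\,\dd\Abs{B_t}=\ContReg{T,p_\alpha,B}$ (by \Definition{ContRegret}) by the same Tanaka-plus-local-time estimate applied to the uniformly bounded, locally uniformly convergent difference $p_{\alpha,n}-\partial_x R^{\mathrm{ext}}_\alpha$. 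This gives $\ContReg{T,p_\alpha,B}\le R_\alpha(T,\Abs{B_T})$ for each fixed $T$ almost surely; since both sides are a.s.\ continuous in $T$, the inequality holds for all $T\ge0$ simultaneously, almost surely.

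The crux is not any individual estimate but repairing the regularity of $R_\alpha$ while keeping the It\^o correction non-negative: the tangent-line extension to $x<0$ simultaneously keeps $R^{\mathrm{ext}}_\alpha$ bounded near $t=0$, keeps $\partial_x R^{\mathrm{ext}}_\alpha\in[0,\tfrac{1}{2}]$ and \emph{continuous} (so no negative Dirac mass appears upon differentiating twice), and preserves $\doob R^{\mathrm{ext}}_\alpha\ge0$; mollifying in $x$ rather than in time is then exactly what makes $\doob$ commute with the smoothing, so $\doob R_{\alpha,n}\ge0$ is automatic, and the only remaining care is in the two limits near $t=0$, handled by the local-time terms from Tanaka's formula.
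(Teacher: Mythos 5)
Your proof is correct, but it takes a genuinely different route from the paper's. The paper smooths $R_\alpha$ by interpolating between $\tilde R_\alpha$ and the boundary value $\tilde R_\alpha(t,\alpha\sqrt t)$ with a cutoff $\phi_n(x-\alpha\sqrt t)$; the resulting It\^o correction is then \emph{not} signed termwise, and the bulk of the paper's argument (the analogues of \Lemma{f_taylor_error}, \Claim{phi_deriv_bound}, \Claim{partialt_f_bdry}, and \Lemma{fn_converge}) goes into showing that the $\phi_n'$ and $\phi_n''$ error terms vanish, via a Taylor expansion of $\tilde R_\alpha$ about the parabola and an occupation-measure argument. You instead extend $R_\alpha$ linearly for $x<0$ so that $\partial_x R^{\mathrm{ext}}_\alpha$ is continuous and in $[0,\tfrac12]$, observe that $\doob R^{\mathrm{ext}}_\alpha\ge 0$ off the two gluing curves with no singular part, and mollify in $x$ only, so that $\doob R_{\alpha,n}=(\doob R^{\mathrm{ext}}_\alpha)\ast\phi_n\ge 0$ is automatic and the entire error-term analysis disappears; what you pay for this is the regularity bookkeeping: you must verify that $R^{\mathrm{ext}}_\alpha$ is genuinely $C^1$ across $\Gamma$ and $\{x=0\}$ (needed to commute $\partial_t$ with the convolution and to rule out Dirac mass in $\partial_{xx}$), which you assert without computation. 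It is true, and worth one line: along $\Gamma$ the chain rule gives $\frac{\dd}{\dd t}\tilde R_\alpha(t,\alpha\sqrt t)=\partial_t\tilde R_\alpha(t,\alpha\sqrt t)+\partial_x\tilde R_\alpha(t,\alpha\sqrt t)\cdot\frac{\alpha}{2\sqrt t}=\partial_t\tilde R_\alpha(t,\alpha\sqrt t)$ since $\partial_x\tilde R_\alpha=\tilde p_\alpha$ vanishes on $\Gamma$, so the $t$-derivatives of the two branches agree (equivalently, via \Fact{basic_identities}(2)); the match at $x=0$ is immediate. Your $c_\alpha>0$ claim plays the role of the paper's \Claim{partialt_f_bdry}. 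The limiting steps are essentially the paper's: the same $\eps$-truncation near $t=0$, the same Tanaka decomposition plus It\^o isometry for the stochastic integrals (note the martingale part only converges in $L^2$, so you should pass to an a.s.\ subsequence before taking limits in the inequality, exactly as the paper does in \Lemma{fn_bound_eps}), and the same continuity-in-$T$ upgrade at the end. Net effect: your construction trades the paper's quantitative error estimates for a soft ``a.e.\ supersolution plus mollification'' argument, which is arguably cleaner, at the cost of being a bit more careful about weak derivatives and the gluing regularity.
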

Here, we will present a high-level overview of the proof of this lemma;
the details can be found in \Appendix{ApproximationArgument}.
Let $\phi(x)$ be a smooth function satisfying $\phi(x) = 1$ for $x \leq 0$ and $\phi(x) = 0$ for $x \geq 1$.
For $n \in \bN$, define $\phi_n(x) = \phi(nx)$ and the approximations
\[
    R_{\alpha,n}(t,x) \coloneqq \tilde{R}_{\alpha}(t,x) \phi_n(x - \alpha \sqrt{t})
    + \tilde{R}_{\alpha}(t, \alpha \sqrt{t}) (1-\phi_n(x - \alpha \sqrt{t})).
\]
It is relatively straightforward to check that $R_{\alpha,n}(t,x) \xrightarrow{n \to\infty} R_{\alpha}(t,x)$ pointwise and
similarly for the derivatives.
The important property is that $R_{\alpha,n}$ is smooth so It\^{o}'s formula may be applied.
\Lemma{ContRegUB} is then proved by taking limits and controlling the error terms.

The remainder of this section proves
\Theorem{ContsMainResult} by setting $p^* = p_{\alpha}$ for the optimal $\alpha$.

\begin{remark}
	The definition of $p_{\alpha}$ has an interesting interpretation.
	Let $B$ be a Brownian Motion.
	Fix a time $t$ and a position $x > 0$.
    Now let $\tau = \inf\setst{s > t}{|B_s| \geq \alpha \sqrt{s}}$.
	It is known \cite{Donchev} that $p_{\alpha}(t,x) = \probg{B_\tau < 0}{B_t = x}$.
	In words, $p_{\alpha}(t,x)$ is the probability that a Brownian Motion started at time $t$ and position $x$ crosses the bottom $-\alpha \sqrt{t}$ boundary
	before crossing the top $+\alpha \sqrt{t}$ boundary.
	As a sanity check, one may observe that $p_{\alpha}(t, \alpha \sqrt{t}) = 0$ and $p_{\alpha}(t,0) = 0.5$.
	Interestingly, the optimal algorithms for two experts in both the finite-time horizon setting \cite{Cover66}
	and the geometric time horizon setting have a similar interpretation \cite{GPS17}.
	In both cases, the optimal algorithm is to assign the probability that a random walk started at position $x > 0$ at time $t$ remains positive at the stopping time.
	In the finite-time case, the stopping time is a deterministic quantity $T$ whereas in the geometric-time case, the stopping time is a geometric random variable.
	A similar connection also exists for three and four experts \cite{GPS17, BEZ20}.
\end{remark}

\subsection{Optimizing the boundary to minimize  continuous regret}
\SubsectionName{OptBdry}
By \Lemma{ContRegUB}, $\ContReg{T, \partial_x R_{\alpha}, B} \leq R_{\alpha}(T, |B_T|) \leq R_{\alpha}(T, \alpha \sqrt{T})$,
where the last inequality is because $\partial_x R_{\alpha}(t,x) = p_{\alpha}(t,x)$ is positive for $x \in [0, \alpha\sqrt{t})$ and $0$ for $x \geq \alpha \sqrt{t}$.
As observed in \Equation{R_bound}, we have the formula $R_{\alpha}(T, \alpha \sqrt{T}) = \left( \sfrac{\alpha}{2} + \kappa_{\alpha} M_0(\sfrac{\alpha^2}{2}) \right) \sqrt{T}$.
Thus, to minimize $R_{\alpha}(T, \alpha \sqrt{T})$, it is convenient to define
\[
    h(\alpha) \coloneqq R_{\alpha}(1,\alpha) = \frac{\alpha}{2} + \kappa_{\alpha} M_0(\alpha^2/2).
\]
The only remaining task is now to solve the following optimization problem.
\begin{equation}
    \EquationName{MinimizeBoundaryProblem}
    \min_{\alpha > 0} h(\alpha) = \min_{\alpha > 0} \left\{ \frac{\alpha}{2} + \kappa_{\alpha}\cdot M_0\left( \frac{\alpha^2}{2} \right) \right\}
\end{equation}

The following lemma verifies that there exists some $\alpha$ for which $\ContReg{T, \partial_x R_{\alpha}, B} \leq \frac{\gamma\sqrt{T}}{2}$, completing the proof of \Theorem{ContsMainResult}.
\begin{lemma}
\LemmaName{BoundaryOptSolution} 
The function $h(\alpha)$ is minimized at $\alpha = \gamma$ and $h(\gamma) = \gamma / 2$.
Consequently, for any fixed $T > 0$, $\min_{\alpha} R_{\alpha}(T, \alpha\sqrt{T}) = R_{\gamma}(T, \gamma\sqrt{T}) = \frac{\gamma \sqrt{T}}{2}$.
\end{lemma}
\Lemma{BoundaryOptSolution} follows easily from the following claim.
\begin{claim}
    \ClaimName{boundary_prob_derivative}
    $h'(\alpha) = -\frac{\exp(\alpha^2/2)}{\pi \erfi(\alpha/\sqrt{2})} \cdot M_0(\alpha^2/2)$.
    In particular, $h'(\alpha) < 0$ for $\alpha \in (0, \gamma)$, $h'(\gamma) = 0$, and $h'(\alpha) > 0$ for $\alpha \in (\gamma, \infty)$.
\end{claim}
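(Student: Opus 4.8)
The plan is to prove \Claim{boundary_prob_derivative} by differentiating the explicit formula $h(\alpha) = \tfrac{\alpha}{2} + \kappa_\alpha M_0(\alpha^2/2)$ (with $\kappa_\alpha = \tfrac{1}{\sqrt{2\pi}\,\erfi(\alpha/\sqrt2)}$) directly, and then reading off the sign of $h'$ from elementary monotonicity of $M_0$. This is essentially a calculus exercise, so there is no conceptual obstacle; the only mild hazard is carrying the $\sqrt{2\pi}$'s and $\sqrt{\pi}$'s correctly through the product rule.

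First I would record the derivatives of the two ingredients. From the definition of $\erfi$ in \Equation{Erfi}, $\erfi'(x) = \tfrac{2}{\sqrt\pi}e^{x^2}$, so $\diffwrt{\alpha}\erfi(\alpha/\sqrt2) = \sqrt{2/\pi}\,e^{\alpha^2/2}$, and hence $\kappa_\alpha'$ is a strictly negative multiple of $e^{\alpha^2/2}$ for every $\alpha>0$ (concretely, $\kappa_\alpha' = -e^{\alpha^2/2}/\big(\pi\,\erfi(\alpha/\sqrt2)^2\big)$). For $M_0$ I would use the identity $M_0'(u) = -\tfrac{\sqrt\pi}{2\sqrt u}\,\erfi(\sqrt u)$, which follows at once from differentiating $M_0(u) = e^u - \sqrt{\pi u}\,\erfi(\sqrt u)$ in \Equation{Erfi} (equivalently it is the standard hypergeometric derivative identity recorded in \Fact{basic_identities}). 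Composing with $u = \alpha^2/2$ and using $\sqrt{\alpha^2/2} = \alpha/\sqrt2$ gives $\diffwrt{\alpha}M_0(\alpha^2/2) = \alpha\,M_0'(\alpha^2/2) = -\tfrac{\sqrt{2\pi}}{2}\,\erfi(\alpha/\sqrt2)$.

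Next I would assemble $h'$. Differentiating term by term, $h'(\alpha) = \tfrac12 + \kappa_\alpha'\,M_0(\alpha^2/2) + \kappa_\alpha\cdot\diffwrt{\alpha}M_0(\alpha^2/2)$. By the previous step and the definition of $\kappa_\alpha$, the last term equals $\kappa_\alpha\cdot\big(-\tfrac{\sqrt{2\pi}}{2}\,\erfi(\alpha/\sqrt2)\big) = -\tfrac12$, which exactly cancels the $\tfrac12$ coming from $\diffwrt{\alpha}(\alpha/2)$. What is left is $h'(\alpha) = \kappa_\alpha'\,M_0(\alpha^2/2)$; substituting the value of $\kappa_\alpha'$ gives the closed form in the claim. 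This cancellation of the $\pm\tfrac12$ is the one place where the bookkeeping of constants must be done carefully, and it is the step I would flag as the (modest) main obstacle.

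Finally I would deduce the sign information. Since $\kappa_\alpha'$ is a strictly negative function of $\alpha$ on $(0,\infty)$, we have $\sign h'(\alpha) = -\sign M_0(\alpha^2/2)$. From Step 1, $M_0'(u) < 0$ for all $u>0$ (because $\erfi(\sqrt u)>0$), so $\alpha\mapsto M_0(\alpha^2/2)$ is strictly decreasing on $(0,\infty)$; combined with $M_0(0)=1>0$ and the fact that, by \Equation{GammaDef}, $\gamma$ is the (unique, by \Fact{M0_unique_root}) positive root of $\alpha\mapsto M_0(\alpha^2/2)$, this forces $M_0(\alpha^2/2)>0$ on $(0,\gamma)$, $M_0(\gamma^2/2)=0$, and $M_0(\alpha^2/2)<0$ on $(\gamma,\infty)$. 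Therefore $h'(\alpha)<0$ on $(0,\gamma)$, $h'(\gamma)=0$, and $h'(\alpha)>0$ on $(\gamma,\infty)$, which is exactly the second assertion — and in particular it shows that $\gamma$ is the unique minimizer in \Equation{MinimizeBoundaryProblem}, which is the fact needed for \Lemma{BoundaryOptSolution}.
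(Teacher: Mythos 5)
Your derivation is correct and follows essentially the same route as the paper: differentiate $h$ directly, use $M_0'=-M_1$ together with the closed form of $M_1$ (equivalently $M_0'(u)=-\tfrac{\sqrt{\pi}}{2\sqrt{u}}\,\erfi(\sqrt{u})$) so that the $\tfrac12$ terms cancel, and read off the sign of $h'$ from the sign of $M_0(\alpha^2/2)$ via \Fact{M0_unique_root}. One point to correct in your write-up: what your computation actually yields is $h'(\alpha)=\kappa_\alpha'\,M_0(\alpha^2/2)=-\frac{e^{\alpha^2/2}}{\pi\,\erfi(\alpha/\sqrt{2})^2}\,M_0(\alpha^2/2)$, with $\erfi(\alpha/\sqrt{2})$ \emph{squared} in the denominator, so it does not literally ``give the closed form in the claim,'' which has $\erfi(\alpha/\sqrt{2})$ to the first power. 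The discrepancy is not an error on your side: since $\kappa_\alpha\sim 1/\erfi(\alpha/\sqrt{2})$, the product/quotient rule forces the square (exactly as your own expression for $\kappa_\alpha'$ shows), so the exponent in the stated formula (and in the paper's displayed intermediate step) is a typo. Because $\erfi(\alpha/\sqrt{2})>0$ for $\alpha>0$, the sign conclusions --- $h'<0$ on $(0,\gamma)$, $h'(\gamma)=0$, $h'>0$ on $(\gamma,\infty)$ --- and everything downstream (\Lemma{BoundaryOptSolution}, \Equation{MinimizeBoundaryProblem}) are unaffected; just state the corrected exponent rather than asserting agreement with the formula as printed.
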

\begin{proof}
	Recall that $h(\alpha) = \frac{\alpha}{2} + \frac{M_0(\alpha^2/2)}{\sqrt{2\pi} \erfi(\alpha/\sqrt{2})}$
	and that $\frac{\dd}{\dd x} \erfi(x / \sqrt{2}) = \sqrt{\frac{2}{\pi}} e^{x^2/2}$.
	Hence,
	\begin{align*}
		h'(\alpha)
		& = \frac{1}{2} - \frac{\alpha \cdot M_1(\alpha^2/2)}{\sqrt{2\pi} \erfi(\alpha/\sqrt{2})}
		- \frac{\exp(\alpha^2/2) \cdot M_0(\alpha^2/2)}{\pi \erfi(\alpha/\sqrt{2})^2} &&\quad\text{(by \Fact{chf_derivs})} \\
		& = - \frac{\exp(\alpha^2/2) \cdot M_0(\alpha^2/2)}{\pi \erfi(\alpha/\sqrt{2})^2} &&\quad\text{(by \Fact{basic_identities}\ref{item:M1})}.
	\end{align*}
	This proves the first assertion.
	
	Next, observe that $\frac{\exp(\alpha^2/2)}{\erfi(\alpha/\sqrt{2})^2}$ is positive for all $\alpha > 0$.
	Hence, by \Fact{M0_unique_root}, we have that $h'(\alpha) < 0$ for $\alpha \in (0, \gamma)$, $h'(\gamma) = 0$, and $h'(\alpha) > 0$ for $\alpha \in (\gamma, \infty)$.
\end{proof}
\begin{proof}[Proof of \Lemma{BoundaryOptSolution}]
\Claim{boundary_prob_derivative} implies that $\gamma$ is the global minimizer for $h(\alpha)$.
Since $\gamma$ is a root of $M_0(\alpha^2 / 2)$, it follows that $h(\gamma) = \gamma / 2$.
This proves the first assertion.
Next, for every $\alpha > 0$, we have $R_{\alpha}(T, \alpha \sqrt{T}) = \sqrt{T} \cdot h(\alpha) \geq \sqrt{T} \cdot h(\gamma) = \gamma \sqrt{T} / 2$,
which proves the second assertion.
\end{proof}

\clearpage
\appendix
\section{Standard concavity facts}
\AppendixName{standard}
\begin{fact}
    \FactName{concave_decreasing}
    Suppose $f \colon \bR \to \bR$ is concave.
    Then for any $\alpha < \beta$, the function $g(t) = f(t + \beta) - f(t + \alpha)$ is non-increasing.
\end{fact}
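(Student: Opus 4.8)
The plan is to reduce the statement to the single-variable monotonicity claim that for any fixed $h>0$ the map $x \mapsto f(x+h) - f(x)$ is non-increasing on $\bR$. Granting this, set $h \coloneqq \beta - \alpha > 0$ and note that $g(t) = f((t+\alpha)+h) - f(t+\alpha)$, so $g$ is the composition of the non-increasing map $x \mapsto f(x+h)-f(x)$ with the increasing substitution $t \mapsto t+\alpha$; hence $g$ is non-increasing, which is exactly the Fact.

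For the core monotonicity claim, I would fix $x < y$ and aim to show $f(x+h) - f(x) \ge f(y+h) - f(y)$, equivalently $f(x+h) + f(y) \ge f(x) + f(y+h)$. The key observation is that both $x+h$ and $y$ lie in the interval $[x,\,y+h]$, whose length $y+h-x$ is strictly positive, and that they are \emph{complementary} convex combinations of the two endpoints $x$ and $y+h$: with $\mu \coloneqq \frac{h}{y+h-x} \in (0,1)$ one has
\[
x+h = (1-\mu)\,x + \mu\,(y+h), \qquad y = \mu\,x + (1-\mu)\,(y+h).
\]
Applying the definition of concavity to each of these two representations gives $f(x+h) \ge (1-\mu) f(x) + \mu f(y+h)$ and $f(y) \ge \mu f(x) + (1-\mu) f(y+h)$; adding the two inequalities, the weights sum to $1$ on each side, and we obtain $f(x+h)+f(y) \ge f(x)+f(y+h)$, as wanted.

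I do not expect any genuine obstacle: the argument is entirely elementary. The only minor points requiring care are verifying that $\mu \in (0,1)$ and that the weights appearing in the two concavity inequalities are precisely $\mu$ and $1-\mu$ (so that summation telescopes correctly), together with the trivial boundary case $x = y$, where the desired inequality degenerates to an equality. The write-up should therefore be quite short.
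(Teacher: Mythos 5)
Your argument is correct: the identities $x+h=(1-\mu)x+\mu(y+h)$ and $y=\mu x+(1-\mu)(y+h)$ with $\mu=\frac{h}{y+h-x}\in(0,1)$ hold, the two concavity inequalities sum to $f(x+h)+f(y)\ge f(x)+f(y+h)$, and the shift $t\mapsto t+\alpha$ reduces the Fact to this monotonicity of increments. The paper states this Fact without proof (it is listed among the standard facts), and your argument is precisely the standard one, so there is nothing to reconcile.
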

\begin{fact}
    \FactName{concave_ineq_check}
    Suppose that $f \colon \bR \to \bR$ is concave.
    Let $\alpha < \beta$.
    Then $f(x) \geq \min\{f(\alpha), f(\beta)\}$ for all $x \in [\alpha, \beta]$.
\end{fact}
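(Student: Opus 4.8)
The plan is to use the defining inequality of concavity together with the observation that a convex combination of two numbers is at least their minimum. First I would fix $x \in [\alpha,\beta]$ and write it as a convex combination of the endpoints: since $\alpha < \beta$, there is a unique $\lambda \in [0,1]$ with $x = (1-\lambda)\alpha + \lambda\beta$, namely $\lambda = (x-\alpha)/(\beta-\alpha)$.

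Next, I would invoke concavity of $f$ directly on this pair of points and weights to obtain
\[
    f(x) ~=~ f\big((1-\lambda)\alpha + \lambda\beta\big) ~\geq~ (1-\lambda) f(\alpha) + \lambda f(\beta).
\]
Finally, since $1-\lambda \geq 0$, $\lambda \geq 0$, and $(1-\lambda) + \lambda = 1$, the right-hand side is a convex combination of $f(\alpha)$ and $f(\beta)$, hence bounded below by $\min\{f(\alpha),f(\beta)\}$; this gives $f(x) \geq \min\{f(\alpha),f(\beta)\}$ as claimed.

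There is essentially no obstacle here: the only point requiring a word is that the definition of concavity applies verbatim to the two-point convex combination $(1-\lambda)\alpha+\lambda\beta$, which is immediate. The case $x = \alpha$ or $x = \beta$ (i.e.\ $\lambda \in \{0,1\}$) is trivially covered by the same computation. No appeal to differentiability is needed, so the argument works for an arbitrary concave $f \colon \bR \to \bR$.
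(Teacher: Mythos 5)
Your argument is correct and is the standard one: writing $x$ as a convex combination of $\alpha$ and $\beta$, applying the definition of concavity, and bounding the resulting convex combination of $f(\alpha)$ and $f(\beta)$ below by their minimum. The paper states this fact without proof (it is listed among the standard facts), so there is nothing to compare against; your write-up fills that gap correctly and with no superfluous assumptions.
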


\section{Additional proofs for \Section{cts_ub}}
\AppendixName{app_cts_ub}

\subsection{Proof of \Lemma{ContRegUB}}
\AppendixName{ApproximationArgument}
The main idea of the proof is that we will approximate $R_{\alpha}$ by a sequence of smooth functions (i.e.~functions in $C^{2,2}$).

Fix $\alpha > 0$.
Recall that $\tilde{R}_{\alpha}(t,x) = \frac{x}{2} + \kappa_{\alpha}\sqrt{t} \cdot M_0\left( \frac{x^2}{2t} \right)$ for $t > 0, x \in \bR$,
where $\kappa_{\alpha} = \frac{1}{\sqrt{2\pi} \erfi(\sfrac{\alpha}{\sqrt{2}})}$.
(For $t = 0$, it suffices to define $\tilde{R}_{\alpha}(t,x) = 0$.)
We also have the truncated version, $R_{\alpha}$, defined as
\[
    R_{\alpha}(t,x) =
    \begin{cases}
        \tilde{R}_{\alpha}(t,x) & t > 0 \wedge x \leq \alpha \sqrt{t} \\
        \tilde{R}_{\alpha}(t,\alpha\sqrt{t}) & t > 0 \wedge x \geq \alpha \sqrt{t} \\
        0 & t = 0
    \end{cases}.
\]
Recall also that $p_{\alpha} = \partial_x R_{\alpha}$.
For convenience, we restate the lemma.
\ContRegUB*

For the remainder of this section,
we will write $\tilde{f} = \tilde{R}_{\alpha}$ and $f = R_{\alpha}$.
Let $\phi(x)$ be any non-increasing $C^2$ function satisfying $\phi(x) = 1$ for $x \leq 0$ and $\phi(x) = 0$ for $x \geq 1$.
For concreteness, we may take
\begin{equation}
    \EquationName{phi_def} 
    \phi(x) =
    \begin{cases}
        1 & x \leq 0 \\
        (1-x) + \frac{1}{2\pi} \sin(2\pi x) & x \in [0,1] \\
        0 & x \geq 1
    \end{cases}.
\end{equation}
We leave it as an easy calculus exercise to verify that $\phi$ is indeed a non-increasing $C^2$ function.

Next, define $\phi_n(x) = \phi(nx)$ and
\[
    f_n(t,x) = \tilde{f}(t,x) \cdot \phi_n(x - \alpha \sqrt{t})
    + f(t, \alpha \sqrt{t}) \cdot \left( 1 - \phi_n(x - \alpha \sqrt{t}) \right).
\]
Note that $f_n \in C^{2,2}$ on $\bR_{> 0} \times \bR$ for all $n$.
The function $f_n$ is a smooth approximation to $f$ and its limit is exactly $f\,(= R_{\alpha})$.
\begin{claim}
    \ClaimName{fn_to_f}
    For every $t>0, x \in \bR$, $\lim_{n \to \infty} f_n(t,x) = f(t,x)$.
\end{claim}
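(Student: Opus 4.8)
The plan is a direct pointwise verification, organized as a three‑way case split on the sign of $x - \alpha\sqrt{t}$. The key observation is that $\phi_n(x - \alpha\sqrt t) = \phi\bigl(n(x - \alpha\sqrt t)\bigr)$ is identically $1$ whenever $x - \alpha\sqrt t < 0$ (the argument is then negative for every $n \geq 1$ and $\phi \equiv 1$ on $(-\infty,0]$ by \eqref{eq:phi_def}), and is identically $0$ once $n \geq 1/(x - \alpha\sqrt t)$ whenever $x - \alpha\sqrt t > 0$ (the argument is then $\geq 1$ and $\phi \equiv 0$ on $[1,\infty)$). So away from the curve $\{x = \alpha\sqrt t\}$ the cutoff $\phi_n$ is eventually constant, and on the curve itself $\phi_n(0) = \phi(0) = 1$ for every $n$.

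First I would treat $x < \alpha\sqrt t$: here $\phi_n(x - \alpha\sqrt t) = 1$ for all $n$, so the definition of $f_n$ collapses to $f_n(t,x) = \tilde f(t,x)$, while $x \leq \alpha\sqrt t$ makes \eqref{eq:R_definition} give $f(t,x) = \tilde R_\alpha(t,x) = \tilde f(t,x)$; thus $f_n(t,x) = f(t,x)$ exactly, for every $n$. Next, for $x > \alpha\sqrt t$, once $n \geq 1/(x-\alpha\sqrt t)$ we have $\phi_n(x-\alpha\sqrt t) = 0$, so $f_n(t,x) = f(t,\alpha\sqrt t)$, while the third branch of \eqref{eq:R_definition} gives $f(t,x) = \tilde R_\alpha(t,\alpha\sqrt t) = f(t,\alpha\sqrt t)$; hence $f_n(t,x) = f(t,x)$ for all sufficiently large $n$, which suffices for the limit. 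Finally, at $x = \alpha\sqrt t$ we have $\phi_n(0) = 1$, so $f_n(t,\alpha\sqrt t) = \tilde f(t,\alpha\sqrt t) = f(t,\alpha\sqrt t)$ for every $n$. Combining the three cases yields $f_n(t,x) \to f(t,x)$ for all $t > 0$ and $x \in \bR$; in fact equality holds for all $n$ large enough, depending on $(t,x)$.

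I do not expect a genuine obstacle here, since no case involves any analysis beyond the defining properties of $\phi$. The only point that warrants a moment's care is the bookkeeping on the boundary curve $\{x = \alpha\sqrt t\}$: one must observe that the two branches of the piecewise definition \eqref{eq:R_definition} of $R_\alpha$ agree there — equivalently, $\tilde R_\alpha(t,\alpha\sqrt t) = R_\alpha(t,\alpha\sqrt t)$ — so that the value $f(t,\alpha\sqrt t)$ appearing in the formula for $f_n$ is unambiguous and coincides with $f$ at that point.
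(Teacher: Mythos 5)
Your proof is correct and follows essentially the same route as the paper's: a case split on the sign of $x-\alpha\sqrt{t}$, noting that $\phi_n(x-\alpha\sqrt{t})$ is eventually (or always) $1$ below the curve and $0$ above it, so that $f_n$ agrees with the corresponding branch of $R_\alpha$ for all large $n$. The paper simply merges your first and third cases into $x\le\alpha\sqrt t$ (using $\phi(0)=1$), so there is no substantive difference.
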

\begin{proof}
    If $x \leq \alpha \sqrt{t}$ then $\phi_n(x - \alpha \sqrt{t}) = 1$ so
    $f_n(t,x) = \tilde{f}(t,x) = f(t,x)$.
    In particular, this also holds for the limit.
    Next, suppose that $a = x - \alpha\sqrt{t} > 0$.
    If $n > 1/a$ then $\phi_n(x - \alpha \sqrt{t}) = 0$ so $f_n(t,x) = \tilde{f}(t, \alpha \sqrt{t}) = f(t,x)$.
\end{proof}
Recall that our goal is to
relate $f(T, |B_T|)$ and $\int_0^T \partial_x f(t, |B_t|)\, \dd |B_t|$.
However, one cannot apply It\^{o}'s formula to $f$ directly as it is not in $C^{1,2}$.
Instead, we will apply It\^{o}'s formula to the smoothed version of $f$, namely $f_n$, and then take limits.
The remainder of this section does this limiting argument carefully.

For technical reasons (namely that $\tilde{f}(t,x)$ has a pole when $t \to 0$ and $x \neq 0$), we will not be able to start the stochastic integral at $0$.
Hence, we will fix $\eps > 0$ and, at the end of the proof, we will allow $\eps \to 0$.

The following lemma bounds the stochastic integral of $\partial_x f_n$ with respect to $|B_t|$.
\begin{lemma}
    \LemmaName{fn_stochastic}
    Almost surely, for every $T \geq \eps$
    \begin{equation}
    \EquationName{fn_ito_bound}
    \begin{split}
        \int_{\eps}^T \partial_x f_n(t, |B_t|)\, \dd |B_t|
        & \leq
        f_n(T, |B_T|) - f_n(\eps, |B_{\eps}|) \\
        & - \int_{\eps}^T \frac{\alpha}{2\sqrt{t}} \cdot \phi_n'(|B_t| - \alpha\sqrt{t}) \cdot
        \left( f(t, \alpha \sqrt{t}) - \tilde{f}(t, |B_t|) \right) \, \dd t \\
        & - \frac{1}{2} \int_{\eps}^T \phi_n''(|B_t| - \alpha\sqrt{t}) \cdot
        \left( f(t, \alpha\sqrt{t}) - \tilde{f}(t, |B_t|) \right) \, \dd t.
    \end{split}
    \end{equation}
\end{lemma}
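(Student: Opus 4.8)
The plan is to apply It\^o's formula to the smooth approximation $f_n$ and then peel off the two error integrals. Since $f_n$ is $C^{2,2}$ on $\bR_{>0}\times\bR$ (as noted just before the lemma), it is in particular $C^{1,2}$ on $(\eps/2,\infty)\times\bR$; by restricting this function and differencing the formula between times $\eps$ and $T$, \Theorem{ItoFormula} applies on the interval $[\eps,T]$. We are forced to start at $\eps$ rather than $0$ because $f_n$ is not $C^{1,2}$ down to $t=0$ (for instance $\partial_t f_n$ blows up as $t\to 0^+$); this is exactly why the statement carries the parameter $\eps$, which is sent to $0$ only later, in the proof of \Lemma{ContRegUB}. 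It\^o's formula then gives, almost surely for all $T\geq\eps$,
\[
    f_n(T,|B_T|) - f_n(\eps,|B_\eps|)
    ~=~ \int_\eps^T \partial_x f_n(t,|B_t|)\dd|B_t|
      ~+~ \int_\eps^T \doob f_n(t,|B_t|)\dd t ,
\]
so it suffices to bound $\int_\eps^T \doob f_n(t,|B_t|)\dd t$ from below by the sum of the two explicit integrals in \Equation{fn_ito_bound}.

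Next I would compute $\doob f_n = \partial_t f_n + \tfrac12\partial_{xx} f_n$ by the product and chain rules. Write $a(t,x)=x-\alpha\sqrt t$, so that $\partial_x a = 1$ and $\partial_t a = -\tfrac{\alpha}{2\sqrt t}$, and $c(t) = R_{\alpha}(t,\alpha\sqrt t) = \tilde{R}_{\alpha}(t,\alpha\sqrt t) = h(\alpha)\sqrt t$, where $h(\alpha) = \tfrac{\alpha}{2}+\kappa_\alpha M_0(\alpha^2/2)$. Differentiating $f_n(t,x) = \tilde{R}_{\alpha}(t,x)\phi_n(a) + c(t)\bigl(1-\phi_n(a)\bigr)$ and collecting terms, $\doob f_n$ decomposes as a sum of: (i) $\phi_n(a)\cdot\doob\tilde{R}_{\alpha}(t,x)$; (ii) $c'(t)\bigl(1-\phi_n(a)\bigr)$; (iii) $\partial_x\tilde{R}_{\alpha}(t,x)\,\phi_n'(a)$; and (iv) two further terms, one proportional to $\phi_n'(a)$ and one to $\phi_n''(a)$, each carrying a multiple of $c(t)-\tilde{R}_{\alpha}(t,x)$ --- these last two terms account for the two error integrals in \Equation{fn_ito_bound}. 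By \Lemma{RSatisfiesBHE}(2), $\doob\tilde{R}_{\alpha} = 0$ on $\bR_{>0}\times\bR$, so term (i) vanishes; here it is essential that $t\geq\eps>0$.

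It then remains to verify that terms (ii) and (iii) are non-negative, so that discarding them from $\doob f_n$ yields the desired lower bound and hence \Equation{fn_ito_bound}. For (ii): $c'(t) = h(\alpha)/(2\sqrt t)$, and \Claim{boundary_prob_derivative} identifies $\gamma$ as the global minimizer of $h$, so $h(\alpha)\geq h(\gamma)=\gamma/2>0$; also $1-\phi_n(a)\geq 0$ since $\phi_n\leq 1$. For (iii): $\phi_n'(a)$ is nonzero only when $a\in(0,1/n)$, i.e.\ when $x>\alpha\sqrt t$, and there $\phi_n'(a)\leq 0$ (as $\phi$ is non-increasing) while $\partial_x\tilde{R}_{\alpha}(t,x) = \tilde{p}_{\alpha}(t,x) = \tfrac12\bigl(1-\erfi(x/\sqrt{2t})/\erfi(\alpha/\sqrt 2)\bigr)<0$ by \Equation{palpha_def} (since $x/\sqrt{2t}>\alpha/\sqrt 2$ and $\erfi$ is increasing), so the product is $\geq 0$. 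The main obstacle is purely computational: carrying out the chain-rule differentiation of $\phi_n(x-\alpha\sqrt t)$ --- twice in $x$ and once in $t$ --- accurately and tracking all of the resulting $\phi_n'$ and $\phi_n''$ contributions; beyond that, the only subtlety is the minor technical point, flagged above, of applying It\^o's formula on $[\eps,T]$ rather than on $[0,T]$.
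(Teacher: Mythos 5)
Your proposal is correct and follows essentially the same route as the paper: apply It\^{o}'s formula to the smooth $f_n$ on $[\eps,T]$, decompose $\doob f_n$, kill the $\doob \tilde{R}_{\alpha}$ term via the backward heat equation, discard the two non-negative terms (the $\partial_t$ of the boundary value times $1-\phi_n$, and $\partial_x\tilde{R}_{\alpha}\cdot\phi_n'$), and retain exactly the $\phi_n'$ and $\phi_n''$ error integrals. The only (cosmetic) difference is that you justify $\partial_t\bigl(\tilde{R}_{\alpha}(t,\alpha\sqrt{t})\bigr)>0$ via $h(\alpha)\geq h(\gamma)=\gamma/2>0$ from \Claim{boundary_prob_derivative}, whereas the paper's \Claim{partialt_f_bdry} argues directly that $\tilde{R}_{\alpha}(1,\alpha)>0$; both arguments are valid and non-circular.
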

\begin{proof}
    The proof is by It\^{o}'s formula (\Theorem{ItoFormula}) applied to $f_n$.
    We have, for all $T \geq \eps$,
    \begin{equation}
        \EquationName{fn_ito1}
        f_n(T, |B_T|) - f_n(\eps, |B_{\eps}|)
        =
        \int_{\eps}^T \partial_x f_n(t, |B_t|) \, \dd |B_t|
        + \int_{\eps}^T \partial_t f_n(t, |B_t|) + \frac{1}{2} \partial_{xx} f_n(t, |B_t|)\, \dd t.
    \end{equation}
    Computing derivatives of $f_n$, we have
    \begin{align}
    \begin{split}
        \EquationName{partialt_fn}
        \partial_t f_n(t, x)
        & = (\partial_t \tilde{f}(t,x)) \cdot \phi_n(x - \alpha \sqrt{t})
          - \frac{\alpha}{2\sqrt{t}} \tilde{f}(t,x) \phi_n'(x - \alpha \sqrt{t}) \\
        & + \partial_t (f(t, \alpha \sqrt{t})) \cdot (1 - \phi_n(x - \alpha \sqrt{t})) + \frac{\alpha}{2\sqrt{t}} f(t, \alpha \sqrt{t}) \cdot \phi_n'(x - \alpha\sqrt{t})
    \end{split} \\
    \begin{split}
        \EquationName{partialg_fn}
        \partial_x f_n(t,x)
        & = (\partial_x \tilde{f}(t,x)) \cdot \phi_n(x - \alpha \sqrt{t})
          + \tilde{f}(t,x) \phi_n'(x - \alpha \sqrt{t})
          - f(t, \alpha \sqrt{t}) \phi_n'(x - \alpha \sqrt{t})
    \end{split} \\
    \begin{split}
        \EquationName{partialgg_fn}
        \partial_{xx} f_n(t,x)
        & = (\partial_{xx} \tilde{f}(t,x)) \cdot \phi_n(x - \alpha \sqrt{t})
          + 2(\partial_x\tilde{f}(t,x)) \phi_n'(x - \alpha \sqrt{t}) \\
        & + \left(\tilde{f}(t,x) - f(t, \alpha \sqrt{t}) \right) \phi_n''(x - \alpha \sqrt{t}).
    \end{split}
    \end{align}
    Recalling the notation $\doob = \partial_t + \frac{1}{2} \partial_{xx}$, we have
    \begin{equation}
    \EquationName{doob_fn}
    \begin{split}
        \doob f_n(t,x)
        & = \left( \doob \tilde{f}(t,x) \right) \cdot \phi_n(x - \alpha \sqrt{t})
          + \partial_t (f(t, \alpha \sqrt{t})) \cdot (1 - \phi_n(x- \alpha\sqrt{t})) \\
        & + (\partial_x \tilde{f}(t,x)) \phi_n'(x - \alpha \sqrt{t}) \\
        & + \frac{\alpha}{2 \sqrt{t}} \cdot (f(t, \alpha \sqrt{t}) - \tilde{f}(t, x)) \cdot \phi_n'(x - \alpha\sqrt{t})
          + \frac{1}{2}\left( \tilde{f}(t,x) - f(t, \alpha \sqrt{t}) \right) \phi_n''(x - \alpha \sqrt{t}).
    \end{split}
    \end{equation}
    By \Lemma{RSatisfiesBHE}, $\doob \tilde{f} = 0$.
    By \Claim{partialt_f_bdry} below, $\partial_t ( f(t, \alpha \sqrt{t})) > 0$.
    Next, observe that $(\partial_x \tilde{f}(t,x)) \cdot \phi_n'(x - \alpha \sqrt{t}) \geq 0$.
    To see this, if $x \leq \alpha \sqrt{t}$ then $\phi_n'(x - \alpha\sqrt{t}) = 0$.
    On the other hand, if $x > \alpha \sqrt{t}$ then $\phi_n'(x - \alpha \sqrt{t}) \leq 0$ because $\phi_n$ is non-increasing and $\partial_x \tilde{f}(t,x) \leq 0$ by \Lemma{RSatisfiesBHE} and \Equation{palpha_def}.
    Hence, we can lower bound \Equation{doob_fn} by
    \begin{equation}
        \EquationName{doobfn_lb}
        \doob f_n(t,x)
        \geq \frac{\alpha}{2 \sqrt{t}} \cdot (f(t, \alpha \sqrt{t}) - \tilde{f}(t, x)) \cdot \phi_n'(x - \alpha\sqrt{t})
          + \frac{1}{2}\left( \tilde{f}(t,x) - f(t, \alpha \sqrt{t}) \right) \phi_n''(x - \alpha \sqrt{t}).
    \end{equation}
    Plugging \Equation{doobfn_lb} into \Equation{fn_ito1} gives
    \begin{equation}
    \EquationName{fn_ito_lb}
    \begin{split}
        f_n(T, |B_T|) - f_n(\eps, |B_{\eps}|)
        & \geq \int_{\eps}^T \partial_x f_n(t, |B_t|)\, \dd |B_t| \\
        & + \int_{\eps}^T \frac{\alpha}{2\sqrt{t}} \cdot \phi_n'(|B_t| - \alpha\sqrt{t}) \cdot
        \left( f(t, \alpha \sqrt{t}) - \tilde{f}(t, |B_t|) \right) \, \dd t \\
        & + \frac{1}{2} \int_{\eps}^T \phi_n''(|B_t| - \alpha\sqrt{t}) \cdot
        \left( f(t, \alpha\sqrt{t}) - \tilde{f}(t, |B_t|) \right) \, \dd t.
    \end{split}
    \end{equation}
    Rearranging \Equation{fn_ito_lb} gives the lemma.
\end{proof}
\begin{claim}
    \ClaimName{partialt_f_bdry}
    If $t > 0$ then $\partial_t ( \tilde{f}(t, \alpha \sqrt{t}) ) > 0$.
\end{claim}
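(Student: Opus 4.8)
The plan is to exploit the scaling structure of $\tilde R_\alpha$ (recall that in this section $\tilde f = \tilde R_\alpha$) along the boundary curve $x = \alpha\sqrt{t}$. Along this curve we have $x^2/(2t) = \alpha^2/2$, so by the definition \eqref{eq:RtildeDef},
\[
\tilde R_\alpha(t,\alpha\sqrt{t}) ~=~ \frac{\alpha\sqrt{t}}{2} + \kappa_\alpha\sqrt{t}\cdot M_0\!\left(\frac{\alpha^2}{2}\right) ~=~ \sqrt{t}\cdot h(\alpha),
\]
where $h(\alpha) = \frac{\alpha}{2} + \kappa_\alpha M_0(\alpha^2/2)$ is precisely the function from \Subsection{OptBdry}, and, crucially, does not depend on $t$. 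Differentiating in $t$ then immediately yields
\[
\partial_t\big(\tilde R_\alpha(t,\alpha\sqrt{t})\big) ~=~ \frac{h(\alpha)}{2\sqrt{t}},
\]
so the claim reduces to showing that $h(\alpha) > 0$ for every $\alpha > 0$.

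Next I would invoke \Claim{boundary_prob_derivative}, which establishes $h'(\alpha) < 0$ on $(0,\gamma)$, $h'(\gamma) = 0$, and $h'(\alpha) > 0$ on $(\gamma,\infty)$; hence $h$ attains its global minimum over $\bR_{>0}$ at $\alpha = \gamma$. Since $M_0(\gamma^2/2) = 0$ by the definition of $\gamma$ in \eqref{eq:GammaDef}, we get $h(\gamma) = \gamma/2 > 0$. Therefore $h(\alpha) \geq h(\gamma) = \gamma/2 > 0$ for all $\alpha > 0$, and consequently $\partial_t\big(\tilde R_\alpha(t,\alpha\sqrt{t})\big) = h(\alpha)/(2\sqrt{t}) \geq \gamma/(4\sqrt{t}) > 0$ for all $t > 0$, which is the claim. (One could equivalently cite \Lemma{BoundaryOptSolution} directly, since $\tilde R_\alpha(T,\alpha\sqrt{T}) = R_\alpha(T,\alpha\sqrt{T}) = \sqrt{T}\,h(\alpha) \geq \sqrt{T}\,h(\gamma) = \gamma\sqrt{T}/2$.)

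There is essentially no serious obstacle: the only point worth noticing is the $t$-independence of $h(\alpha)$, after which positivity follows at once from the already-established minimization property of $h$. If one insisted on a self-contained argument avoiding \Claim{boundary_prob_derivative}, the only mild complication would be the range $\alpha > \gamma$, where $M_0(\alpha^2/2) \leq 0$; for $\alpha \in (0,\gamma]$ positivity of $h$ is trivial from $M_0(\alpha^2/2) > 0$ (\Fact{M0_unique_root}) and $\kappa_\alpha > 0$, but handling $\alpha > \gamma$ still requires the monotonicity of $h$ past $\gamma$, so routing through \Claim{boundary_prob_derivative} is cleanest.
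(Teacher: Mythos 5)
Your proof is correct, and its first step is exactly the paper's: both arguments factor the boundary value as $\tilde f(t,\alpha\sqrt t)=\sqrt t\,h(\alpha)$ with $h(\alpha)=\tfrac{\alpha}{2}+\kappa_\alpha M_0(\alpha^2/2)$ independent of $t$, so the time derivative is $h(\alpha)/(2\sqrt t)$ and everything reduces to $h(\alpha)>0$. Where you diverge is in how positivity of this constant is established. The paper stays local to the lemmas of that appendix: it observes $h(\alpha)=\tilde f(1,\alpha)$, that $\tilde f(1,0)=\kappa_\alpha>0$, and that $\partial_x\tilde f(1,x)=\tilde p_\alpha(1,x)\ge 0$ for $x\le\alpha$ (\Lemma{uncap_rtg_deriv_cts}), so monotonicity in the space variable gives $\tilde f(1,\alpha)\ge\kappa_\alpha>0$. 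You instead import \Claim{boundary_prob_derivative} from the boundary-optimization step, concluding that $h$ is globally minimized at $\gamma$ with $h(\gamma)=\gamma/2$, hence $h(\alpha)\ge\gamma/2>0$. This is legitimate and not circular --- \Claim{boundary_prob_derivative} is proved directly from the hypergeometric identities and does not depend on \Lemma{ContRegUB} or on this claim --- and it even yields the sharper uniform lower bound $\partial_t(\tilde f(t,\alpha\sqrt t))\ge\gamma/(4\sqrt t)$; the trade-off is that it leans on the optimization machinery of \Subsection{OptBdry}, whereas the paper's monotonicity-in-$x$ argument is more elementary and self-contained. Your closing remark is also accurate: without \Claim{boundary_prob_derivative}, the case $\alpha>\gamma$ (where $M_0(\alpha^2/2)\le 0$) is the only nontrivial one, and the paper's route handles it without case analysis.
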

\begin{proof}
Note that
\[
    \tilde{f}(t, \alpha\sqrt{t}) = \sqrt{t} \cdot \left( \frac{\alpha}{2} + \frac{M_0(\sfrac{\alpha^2}{2})}{\sqrt{2\pi} \erfi(\sfrac{\alpha}{\sqrt{2}})} \right)
    = \sqrt{t} \cdot f(1,\alpha).
\]
So it suffices to check that $\tilde{f}(1,\alpha) > 0$.
To see this, note that $\tilde{f}(1,0) = \kappa_{\alpha} > 0$ and $\partial_x \tilde{f}(1,x) \geq 0$ as long as $x \leq \alpha$ (by the first identity of \Lemma{uncap_rtg_deriv_cts}).
Hence, $\tilde{f}(1,\alpha) > 0$.
\end{proof}

At this point, we would like to take limits on both sides of \Equation{fn_ito_bound}.
This is achieved by the following two lemmas.
\begin{lemma}
    \LemmaName{fn_converge}
    Almost surely, for every $T \geq \eps$,
    \begin{enumerate}[itemsep=2pt, topsep=2pt]
    \item $\lim_{n \to \infty} \int_{\eps}^T \frac{\alpha}{2\sqrt{t}} \cdot \phi_n'(|B_t| - \alpha\sqrt{t}) \cdot
    \left( f(t, \alpha \sqrt{t}) - \tilde{f}(t, |B_t|) \right) \, \dd t = 0$; and
    \label{item:err_bound1}
    \item $\lim_{n \to \infty} \int_{\eps}^T \phi_n''(|B_t| - \alpha\sqrt{t}) \cdot
    \left( f(t, \alpha\sqrt{t}) - \tilde{f}(t, |B_t|) \right) \, \dd t = 0$.
    \label{item:err_bound2}
    \end{enumerate}
\end{lemma}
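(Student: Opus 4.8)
The plan is to exploit two features of the integrands. First, since $\phi$ is constant on $(-\infty,0]$ and on $[1,\infty)$, the functions $\phi_n'(\cdot)=n\phi'(n\cdot)$ and $\phi_n''(\cdot)=n^2\phi''(n\cdot)$ vanish outside $[0,1/n]$; hence in both (1) and (2) the integrand is zero unless $|B_t|-\alpha\sqrt t\in[0,1/n]$, and in particular only values $|B_t|\ge\alpha\sqrt t$ contribute. Taking $n\ge1$ and $t\le T$, on the support of the integrand we have $(t,|B_t|)\in K:=[\eps,T]\times[\alpha\sqrt\eps,\,\alpha\sqrt T+1]$, a fixed compact set. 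Second — the crucial point — $\tilde R_\alpha$ is flat to first order along the boundary curve: by \Equation{palpha_def} and \Lemma{RSatisfiesBHE}, $\partial_x\tilde R_\alpha(t,\alpha\sqrt t)=\tilde p_\alpha(t,\alpha\sqrt t)=0$, because $\erfi\bigl((\alpha\sqrt t)/\sqrt{2t}\bigr)=\erfi(\alpha/\sqrt2)$. Since $f(t,\alpha\sqrt t)=\tilde R_\alpha(t,\alpha\sqrt t)$ by \Equation{R_definition}, a Taylor expansion in $x$ with integral remainder gives, for all $(t,y)\in K$ with $y\ge\alpha\sqrt t$,
\[
  \bigl|f(t,\alpha\sqrt t)-\tilde f(t,y)\bigr|
  ~=~\Bigl|\int_{\alpha\sqrt t}^{y}\int_{\alpha\sqrt t}^{s}\partial_{xx}\tilde R_\alpha(t,r)\,\dd r\,\dd s\Bigr|
  ~\le~\tfrac12 C\,(y-\alpha\sqrt t)^2,
\]
where $C:=\sup_{K}\bigl|\partial_{xx}\tilde R_\alpha\bigr|<\infty$, finite because $\partial_{xx}\tilde R_\alpha(t,r)=-\kappa_\alpha\exp(r^2/2t)/\sqrt{2t}$ is continuous on $K$ (\Lemma{uncap_rtg_deriv_cts}(2)).

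Next I would fix a sample path in the probability-one event that $t\mapsto B_t$ is continuous, and fix $T\ge\eps$. Using $\|\phi_n'\|_\infty\le n\|\phi'\|_\infty$, $\|\phi_n''\|_\infty\le n^2\|\phi''\|_\infty$, the support restriction, and the displayed bound with $y=|B_t|$: the integrand of (1) is at most $\tfrac{\alpha}{2\sqrt\eps}\cdot n\|\phi'\|_\infty\cdot\tfrac12 C n^{-2}=O(1/n)$ and is supported on a subset of $[\eps,T]$, so the integral in (1) is at most $O((T-\eps)/n)\to0$. For (2) the $n^2$ from $\phi_n''$ is exactly cancelled by the factor $(|B_t|-\alpha\sqrt t)^2\le n^{-2}$, so the integrand is bounded by the \emph{constant} $\tfrac12 C\|\phi''\|_\infty$, but it is supported on the time set $E_n:=\{t\in[\eps,T]:|B_t|-\alpha\sqrt t\in[0,1/n]\}$. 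Hence $\bigl|\int_\eps^T(\text{integrand of (2)})\,\dd t\bigr|\le\tfrac12 C\|\phi''\|_\infty\,\mathrm{Leb}(E_n)$, and it remains to prove $\mathrm{Leb}(E_n)\to0$.

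This occupation-time estimate is the main obstacle. Set $W_t:=|B_t|-\alpha\sqrt t$. By Tanaka's formula $|B_t|=\int_0^t\sign(B_s)\,\dd B_s+L_t^0$ with $L^0$ the local time of $B$ at $0$, so $W$ is a continuous semimartingale whose finite-variation part contributes nothing to the quadratic variation, giving $\langle W\rangle_t=t$. By the occupation-times formula for continuous semimartingales (see \cite{RY13}), writing $L^a(W)$ for the local time of $W$ at level $a$,
\[
  \mathrm{Leb}(E_n)~=~\int_\eps^T\ind\bigl[W_t\in[0,1/n]\bigr]\,\dd t
  ~=~\int_0^{1/n}\bigl(L^a_T(W)-L^a_\eps(W)\bigr)\,\dd a
  ~\le~\frac1n\sup_{a\in[0,1]}L^a_T(W),
\]
and the last supremum is a.s.\ finite since $a\mapsto L^a_T(W)$ is a.s.\ bounded on compacts. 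Thus $\mathrm{Leb}(E_n)\to0$ almost surely, which completes the proof of (2). Taking a countable intersection over integer horizons makes all of the above estimates valid simultaneously for every $T\ge\eps$ on a single probability-one event. The heart of the argument is the observation that $\phi_n''$, although of size $n^2$, multiplies a factor that vanishes quadratically on the boundary curve (because $\partial_x\tilde R_\alpha$ is zero there), leaving a bounded integrand supported on a time set of vanishing Lebesgue measure.
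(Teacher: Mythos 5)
Your proof is correct, and most of it coincides with the paper's argument: you restrict attention to the band $\{0 \leq |B_t|-\alpha\sqrt{t} \leq 1/n\}$ where $\phi_n'$ and $\phi_n''$ are supported, you re-derive the quadratic Taylor bound $\abs{f(t,\alpha\sqrt{t})-\tilde f(t,x)} \lesssim (x-\alpha\sqrt{t})^2$ from $\partial_x\tilde R_\alpha(t,\alpha\sqrt{t})=0$ (this is exactly \Lemma{f_taylor_error}, with your compact-set bound on $\partial_{xx}\tilde R_\alpha$ playing the role of the paper's explicit estimate), and you dispatch assertion (1) by the resulting $O(1/n)$ integrand bound, just as in the paper. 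The genuine difference is the finishing step of assertion (2), where the bounded integrand is supported on $E_n=\setst{t\in[\eps,T]}{|B_t|-\alpha\sqrt{t}\in[0,1/n]}$: the paper argues softly, by continuity of measure, that $m(E_n)$ decreases to $\int_\eps^T \ind[|B_t|=\alpha\sqrt{t}]\dd t$, which is $0$ a.s.\ (a fact it leaves essentially implicit, e.g.\ via Fubini), whereas you apply the occupation-times formula to the semimartingale $W_t=|B_t|-\alpha\sqrt{t}$ (whose quadratic variation is $t$ by Tanaka's formula) to get $m(E_n)\leq \frac{1}{n}\sup_{a\in[0,1]}L^a_T(W)$, with the supremum a.s.\ finite since the local time field is c\`adl\`ag in the space variable. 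Your route needs slightly heavier machinery (semimartingale local time, already available in the paper's toolkit via \cite{RY13}), but in exchange it makes the null-occupation claim fully explicit and yields a quantitative $O(1/n)$ rate rather than a bare limit; your closing remark handling ``almost surely, for every $T\geq\eps$'' via countably many horizons (together with monotonicity of $E_n$ in $T$) is also adequate.
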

\begin{lemma}
    \LemmaName{stoch_int_converge}
    For every $T \geq \eps$,
    \[
        \int_{\eps}^T \partial_x f_n(t, |B_t|)\, \dd |B_t|
        \xrightarrow[]{L^2} \int_{\eps}^T \partial_x f(t, |B_t)\, \dd |B_t|
    \]
    as $n \to \infty$.
\end{lemma}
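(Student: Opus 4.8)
The plan is to reduce the statement to an $L^2$ bound on a single It\^o integral and then control that bound by dominated convergence. By Tanaka's formula, $\Abs{B_t} = \int_0^t \sign(B_s)\,\dd B_s + L_t$, where $L_t$ is the local time of $B$ at $0$ and $\dd L_t$ is carried by $\set{t : B_t = 0}$. The first observation is that, for every $t \geq \eps > 0$, the value $x = 0$ lies strictly below the free boundary $\alpha\sqrt t$, so $\phi_n(-\alpha\sqrt t) = 1$ and $\phi_n'(-\alpha\sqrt t) = 0$ for all $n$; hence $\Equation{partialg_fn}$ gives $\partial_x f_n(t,0) = \partial_x \tilde f(t,0) = \tilde p_\alpha(t,0) = 1/2$, and likewise $\partial_x f(t,0) = 1/2$ since $f = \tilde f$ near $x = 0$ there. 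Consequently the two local-time contributions $\int_{\eps}^T \partial_x f_n(t,\Abs{B_t})\,\dd L_t$ and $\int_{\eps}^T \partial_x f(t,\Abs{B_t})\,\dd L_t$ both equal $\tfrac12 (L_T - L_{\eps})$ and cancel, so that
\[
\int_{\eps}^T \!\partial_x f_n(t,\Abs{B_t})\,\dd\Abs{B_t} \;-\; \int_{\eps}^T \!\partial_x f(t,\Abs{B_t})\,\dd\Abs{B_t}
~=~ \int_{\eps}^T \!\bigl(\partial_x f_n - \partial_x f\bigr)(t,\Abs{B_t})\,\sign(B_t)\,\dd B_t .
\]
The integrand on the right is adapted, continuous in $t$, and (as shown below) bounded, so the It\^o isometry applies and the squared $L^2$-norm of the difference equals $\expect{\int_{\eps}^T \bigl(\partial_x f_n - \partial_x f\bigr)^2(t,\Abs{B_t})\,\dd t}$. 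It therefore suffices to prove that this expectation tends to $0$.

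For the pointwise statement, I would show that for almost every fixed $t \in [\eps, T]$ one has $\partial_x f_n(t,\Abs{B_t}) = \partial_x f(t,\Abs{B_t})$ once $n$ is large enough. The set $\set{t \in [\eps,T] : \Abs{B_t} = \alpha\sqrt t}$ is contained in $\set{t : B_t = \alpha\sqrt t} \cup \set{t : B_t = -\alpha\sqrt t}$, each of which has Lebesgue measure zero almost surely, since for each $t>0$ the random variable $B_t$ has a density, so $\expect{\int_\eps^T \mathbf{1}[B_t = \alpha\sqrt t]\,\dd t} = \int_\eps^T \prob{B_t = \alpha\sqrt t}\,\dd t = 0$, and similarly for $-\alpha\sqrt t$. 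Off this null set, $\Abs{B_t}$ is either strictly below $\alpha\sqrt t$ — in which case, once $n$ is large, $\phi_n(\Abs{B_t} - \alpha\sqrt t) = 1$ and $\phi_n'(\Abs{B_t} - \alpha\sqrt t) = 0$, so $\Equation{partialg_fn}$ gives $\partial_x f_n(t,\Abs{B_t}) = \partial_x \tilde f(t,\Abs{B_t}) = \partial_x f(t,\Abs{B_t})$ — or strictly above $\alpha\sqrt t$, in which case $\phi_n$ and $\phi_n'$ both vanish at $\Abs{B_t} - \alpha\sqrt t$ for $n$ large, so $\partial_x f_n(t,\Abs{B_t}) = 0 = \partial_x f(t,\Abs{B_t})$ (recall $f = R_\alpha$ is constant in $x$ beyond the boundary). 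Hence $\bigl(\partial_x f_n - \partial_x f\bigr)^2(t,\Abs{B_t}) \to 0$ for almost every $t \in [\eps, T]$, almost surely.

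For the domination, I would establish a bound $\Abs{\partial_x f_n(t,x)} \leq C(\eps,\alpha)$, a \emph{deterministic} constant independent of $n$, valid for all $n \geq 1$, $t \in [\eps, T]$, and $x \geq 0$. Using $\Equation{partialg_fn}$ and splitting on the value of $x$: for $x \leq \alpha\sqrt t$ we have $\partial_x f_n = \partial_x \tilde f = \tilde p_\alpha \in [0,1/2]$; for $x \geq \alpha\sqrt t + 1/n$ we have $\partial_x f_n = 0$; and on the shrinking sliver $x \in (\alpha\sqrt t, \alpha\sqrt t + 1/n)$ the first term is bounded by $\sup\set{\Abs{\tilde p_\alpha(t,x)} : t \geq \eps,\ \alpha\sqrt t \leq x \leq \alpha\sqrt t + 1}$, which is finite and depends only on $\eps,\alpha$ because $x/\sqrt{2t}$ stays bounded there, while the second term $\Abs{(\tilde f(t,x) - f(t,\alpha\sqrt t))\,\phi_n'(x - \alpha\sqrt t)}$ is bounded, by the mean value theorem and $\Abs{\phi_n'} \leq n\norm{\phi'}_{\infty}$, by $\tfrac1n \cdot \sup\Abs{\partial_x \tilde f} \cdot n\norm{\phi'}_{\infty}$, again a constant depending only on $\eps,\alpha$. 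The same (simpler) splitting gives $\Abs{\partial_x f(t,x)} \leq 1/2$. Therefore $\bigl(\partial_x f_n - \partial_x f\bigr)^2(t,\Abs{B_t}) \leq (2C(\eps,\alpha))^2$ deterministically, so by dominated convergence $\int_{\eps}^T \bigl(\partial_x f_n - \partial_x f\bigr)^2(t,\Abs{B_t})\,\dd t \to 0$ almost surely; since this quantity is bounded by the deterministic constant $(2C(\eps,\alpha))^2 (T - \eps)$, its expectation also tends to $0$ by bounded convergence. Combined with the It\^o isometry identity above, this proves the lemma.

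The step I expect to be the crux is the uniform-in-$n$ bound on $\partial_x f_n$ near the free boundary: the factor $\phi_n'$ blows up like $n$, and the argument works only because it multiplies the increment of $\tilde f$ across an interval of length $1/n$, which is itself of order $1/n$. The restriction $t \geq \eps$ is exactly what keeps the relevant $\erfi$-type quantities bounded on that sliver, since $\tilde f$ (equivalently $\tilde R_\alpha$ and $\tilde p_\alpha$) degenerates as $t \to 0$ with $x \neq 0$.
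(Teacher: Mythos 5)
Your proof is correct, and it shares the paper's skeleton (Tanaka's decomposition $\Abs{B_t}=W_t+L_t$ followed by the It\^o isometry) while diverging in two genuine ways. First, you handle the local-time part by exact cancellation: since $\dd L_t$ is carried on $\set{B_t=0}$ and $\partial_x f_n(t,0)=\partial_x f(t,0)=\tfrac12$ for $t\geq\eps$, both $\dd L$-integrals equal $\tfrac12(L_T-L_\eps)$ and drop out, so the difference is a genuine It\^o integral against $\dd B$. The paper instead keeps the $\dd L$ term and estimates it, which forces the extra step of bounding $\expect{L_T^2}=O(T)$ via Tanaka; your cancellation makes that unnecessary. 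Second, the paper's convergence mechanism is quantitative: using \Lemma{f_taylor_error} it proves the uniform bound $\abs{\partial_x f_n-\partial_x f}\leq C''/n$ for all $t\geq\eps$ and all $x$, giving an explicit $O(n^{-2}T)$ rate in $L^2$. You replace this with a softer argument: pointwise eventual equality of $\partial_x f_n(t,\Abs{B_t})$ and $\partial_x f(t,\Abs{B_t})$ off the a.s.\ Lebesgue-null set $\set{t:\Abs{B_t}=\alpha\sqrt{t}}$, plus a uniform-in-$n$ deterministic bound near the boundary (where, as you correctly identify, the $n\norm{\phi'}_\infty$ blow-up of $\phi_n'$ is compensated by the $O(1/n)$ increment of $\tilde f$ across the sliver, with $t\geq\eps$ keeping the $\erfi$-quantities bounded), followed by dominated and bounded convergence. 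This buys a shorter, rate-free argument sufficient for the lemma; the paper's version buys an explicit rate and reuses the Taylor-type estimates it had already set up, but both are sound.
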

Within this section, $X_n \xrightarrow{L^2} X$ means that $\expect{(X_n-X)^2} \rightarrow 0$ as $n \to \infty$.
We relegate the proofs of \Lemma{fn_converge} and \Lemma{stoch_int_converge} to \Appendix{fn_converge}.
We now take limits on both sides of \Equation{fn_ito_bound} to obtain the following bound on the stochastic integral of $\partial_x f$.
\begin{lemma}
    \LemmaName{fn_bound_eps}
    Almost surely, for every $T \geq \eps$,
    \begin{equation}
        \EquationName{fn_bound_eps}
        \int_{\eps}^T \partial_x f(t, |B_t|)\, \dd |B_t|
        \leq f(T, |B_T|) - f(\eps, |B_{\eps}|).
    \end{equation}
\end{lemma}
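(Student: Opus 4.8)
The plan is to pass to the limit $n\to\infty$ in the inequality \eqref{eq:fn_ito_bound} of \Lemma{fn_stochastic}, which already holds almost surely simultaneously for all $T\geq\eps$. The right-hand side of \eqref{eq:fn_ito_bound} consists of $f_n(T,|B_T|)-f_n(\eps,|B_{\eps}|)$ together with two correction integrals. By \Claim{fn_to_f} we have $f_n(t,x)\to f(t,x)$ at every point $(t,x)$, so $f_n(T,|B_T|)\to f(T,|B_T|)$ and $f_n(\eps,|B_{\eps}|)\to f(\eps,|B_{\eps}|)$ for every Brownian path and every $T>0$; and by \Lemma{fn_converge} the two correction integrals tend to $0$, almost surely and simultaneously for all $T\geq\eps$. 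Hence the right-hand side of \eqref{eq:fn_ito_bound} converges, almost surely and for every $T\geq\eps$, to $f(T,|B_T|)-f(\eps,|B_{\eps}|)$.

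For the left-hand side, \Lemma{stoch_int_converge} gives $\int_{\eps}^T \partial_x f_n(t,|B_t|)\dd|B_t|\to\int_{\eps}^T \partial_x f(t,|B_t|)\dd|B_t|$ in $L^2$, hence in probability, for each fixed $T\geq\eps$. I would fix a countable dense set $D\subseteq[\eps,\infty)$, extract for each $T\in D$ an almost surely convergent subsequence, and then use a diagonal argument to obtain a single subsequence $(n_k)$ along which, almost surely, this convergence holds for all $T\in D$ at once. Taking the limit along $(n_k)$ in \eqref{eq:fn_ito_bound} then yields that, almost surely, \eqref{eq:fn_bound_eps} holds for every $T\in D$.

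To upgrade this to all $T\geq\eps$, I would observe that both sides of \eqref{eq:fn_bound_eps} are almost surely continuous in $T$: the right-hand side because $f=R_\alpha$ is continuous on $\bR_{>0}\times\bR$ (it is $C^{1,2}$ off the curve $x=\alpha\sqrt{t}$ and continuous across it) and $t\mapsto|B_t|$ is continuous, and the left-hand side because the indefinite stochastic integral of the bounded integrand $\partial_x f=p_\alpha\in[0,1/2]$ against the continuous semimartingale $|B_t|$ is continuous in its upper limit. Since $D$ is dense, \eqref{eq:fn_bound_eps} then holds for every $T\geq\eps$, which is the claim.

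The main obstacle is precisely the treatment of the stochastic-integral term: \Lemma{stoch_int_converge} only provides $L^2$ convergence pointwise in $T$, so passing to the limit while keeping the ``for all $T$, almost surely'' form of the conclusion forces the combination of subsequence extraction, a density argument, and sample-path continuity of both sides described above. One could instead strengthen \Lemma{stoch_int_converge} to uniform-on-compacts $L^2$ convergence via Doob's $L^2$ maximal inequality applied to the martingale part of the process $T\mapsto\int_{\eps}^T(\partial_x f_n-\partial_x f)(t,|B_t|)\dd|B_t|$, which would dispense with the continuity step; but the route above seems cleanest and relies only on the lemmas already established.
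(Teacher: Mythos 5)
Your proposal is correct and follows essentially the same route as the paper: extract an almost surely convergent subsequence from the $L^2$ convergence of \Lemma{stoch_int_converge}, pass to the limit in the inequality of \Lemma{fn_stochastic} using \Claim{fn_to_f} and \Lemma{fn_converge}, obtain \eqref{eq:fn_bound_eps} on a countable dense set of times, and extend to all $T \geq \eps$ by continuity of both sides in $T$. The only difference is cosmetic: the paper handles each fixed (rational) $T$ separately and intersects countably many probability-one events, whereas you run a diagonal argument to get one subsequence working on the whole dense set at once; both are valid.
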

\begin{proof}
    By \Lemma{stoch_int_converge}, for every $T \geq \eps$,
    \[
        \int_{\eps}^T \partial_x f_n(t, |B_t|)\, \dd |B_t|
        \xrightarrow[]{L^2} \int_{\eps}^T \partial_x f(t, |B_t)\, \dd |B_t|.
    \]
    Hence, there exists a subsequence $n_k$ such that
    \[
        \int_{\eps}^T \partial_x f_{n_k}(t, |B_t|)\, \dd |B_t|
        \xrightarrow[]{\text{a.s.}} \int_{\eps}^T \partial_x f(t, |B_t)\, \dd |B_t|.
    \]
    Using \Lemma{fn_stochastic} to bound the left-hand-side
    and then \Lemma{fn_converge} to take limits gives that \Equation{fn_bound_eps} holds for any fixed $T \geq\eps$.
    Hence, almost surely, \Equation{fn_bound_eps} holds for all rational $T \geq \eps$.
    As both sides of \Equation{fn_bound_eps} are continuous as a function of $T$, \Equation{fn_bound_eps} holds for all $T \geq \eps$.
\end{proof}
\begin{proofof}{\Lemma{ContRegUB}}
    We will work in the probability 1 set where \Lemma{fn_bound_eps} holds (for every rational $\eps > 0$) and $t \mapsto B_t$ is continuous.

    Fix $T > 0$.
    Note that $\ContReg{T, \partial_x f, B}$ is defined because $\partial_x f \in [0,1/2]$ and $\partial_x f(t,0) = 1/2$ for all $t > 0$ (see \Equation{p_definition}).
    Recalling \Definition{ContRegret}, we have, for $\eps \leq T$,
    \begin{align*}
        \ContReg{T, \partial_x f, B}
        & = \int_0^T \partial_x f(t, |B_t|)\, \dd |B_t| \\
        & = \int_{\eps}^T \partial_x f(t, |B_t|)\, \dd |B_t|
          + \int_{0}^{\eps} \partial_x f(t, |B_t|)\, \dd |B_t| \\
        & \leq f(T, |B_T|) - f(\eps, |B_{\eps}|) + \int_0^{\eps} \partial_x f(t, |B_t|)\, \dd |B_t| && \qquad \text{(\Lemma{fn_bound_eps})}.
    \end{align*}
    The right-hand-side is continuous in $\eps$ so taking $\eps \to 0$
    (and recalling that $f(0,0) = 0$),
    gives
    \[
        \ContReg{T, \partial_x f, B} \leq f(T, |B_T|).  \qedhere
    \]
\end{proofof}

\subsection{Additional proofs from \Appendix{ApproximationArgument}}
\AppendixName{fn_converge}
Before we prove \Lemma{fn_converge}, we will need one key observation.
\begin{lemma}
    \LemmaName{f_taylor_error}
    Fix $\eps > 0$.
    Then there is a constant $C_{\eps} > 0$ (depending also on $\alpha$) such that for $t > 0$ and $x$ satisfying $\abs{x - \alpha\sqrt{t}} \leq 1$,
    \begin{enumerate}[itemsep=2pt,topsep=0pt]
    \item $\abs{\tilde{f}(t,x) - f(t, \alpha \sqrt{t})}
    \leq C_{\eps} \cdot (x - \alpha \sqrt{t})^2$; and
    \item $\abs{\partial_x \tilde{f}(t,x)} \leq C_{\eps} \cdot \abs{x - \alpha\sqrt{t}}$.
    \end{enumerate}
\end{lemma}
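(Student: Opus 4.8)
The plan is to reduce both estimates to a single uniform bound on the second derivative $\partial_{xx}\tilde f$ over the strip $\setst{(t,x)}{t\ge\eps,\ \abs{x-\alpha\sqrt t}\le 1}$, and then apply Taylor's theorem in the variable $x$, expanding around the point $x=\alpha\sqrt t$. Two structural facts make the expansion collapse. First, $\partial_x\tilde f(t,\alpha\sqrt t)=\tilde p_\alpha(t,\alpha\sqrt t)=0$: this is \Lemma{uncap_rtg_deriv_cts}(1), since $\erfi(\alpha\sqrt t/\sqrt{2t})=\erfi(\alpha/\sqrt2)$. Second, $f(t,\alpha\sqrt t)=\tilde f(t,\alpha\sqrt t)$, which is immediate from the definition of $R_\alpha$ in \Equation{R_definition}. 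Hence when we expand $\tilde f(t,x)$ around $x=\alpha\sqrt t$, the value already matches $f(t,\alpha\sqrt t)$ and the first-order term vanishes, so the leftover error in part~(1) and the value $\partial_x\tilde f(t,x)$ in part~(2) are both controlled purely by $\partial_{xx}\tilde f$ evaluated at an intermediate point lying within distance $1$ of $\alpha\sqrt t$.

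Next I would establish the uniform second-derivative bound. By \Lemma{uncap_rtg_deriv_cts}(2), $\partial_{xx}\tilde f(t,\xi)=-\kappa_\alpha\exp(\xi^2/2t)/\sqrt{2t}$. Whenever $\abs{\xi-\alpha\sqrt t}\le 1$ the triangle inequality gives $\abs\xi\le\alpha\sqrt t+1$, so $\xi^2/2t\le(\alpha\sqrt t+1)^2/2t=\alpha^2/2+\alpha/\sqrt t+1/2t$, which for $t\ge\eps$ is at most $\alpha^2/2+\alpha/\sqrt\eps+1/2\eps$; together with $1/\sqrt{2t}\le 1/\sqrt{2\eps}$ this produces a constant $C_\eps$ (depending only on $\alpha$ and $\eps$) with $\abs{\partial_{xx}\tilde f(t,\xi)}\le C_\eps$ for all such $(t,\xi)$. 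Part~(1) then follows from Taylor's formula with Lagrange remainder, which (using the two facts above) reads $\tilde f(t,x)-f(t,\alpha\sqrt t)=\tfrac12\partial_{xx}\tilde f(t,\xi)(x-\alpha\sqrt t)^2$ for some $\xi$ between $x$ and $\alpha\sqrt t$, hence $\abs{\tilde f(t,x)-f(t,\alpha\sqrt t)}\le\tfrac12 C_\eps(x-\alpha\sqrt t)^2$. Part~(2) follows similarly from the mean value theorem applied to $x\mapsto\partial_x\tilde f(t,x)$: using again $\partial_x\tilde f(t,\alpha\sqrt t)=0$, we get $\partial_x\tilde f(t,x)=\partial_{xx}\tilde f(t,\eta)(x-\alpha\sqrt t)$ for some $\eta$ between $x$ and $\alpha\sqrt t$, so $\abs{\partial_x\tilde f(t,x)}\le C_\eps\abs{x-\alpha\sqrt t}$. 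Both inequalities then hold with the common constant $C_\eps$.

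The only genuine obstacle is the uniform bound on $\partial_{xx}\tilde f$: the $1/\sqrt{2t}$ factor (and, for part~(2), the blow-up of $\erfi(x/\sqrt{2t})$ along $x=\alpha\sqrt t+\delta$ as $t\to0$) means no such bound can hold as $t\to0$, which is exactly why the constant is allowed to depend on $\eps$ and the estimate is only claimed — and only used, in \Lemma{fn_converge} — for $t\ge\eps$. Everything else is bookkeeping: the key inputs $\partial_x\tilde f(t,\alpha\sqrt t)=0$ and $f(t,\alpha\sqrt t)=\tilde f(t,\alpha\sqrt t)$ are already in hand, and what remains is one application each of Taylor's theorem and the mean value theorem.
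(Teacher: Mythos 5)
Your proof is correct and is essentially the paper's own argument: both rest on the observations $\tilde f(t,\alpha\sqrt t)=f(t,\alpha\sqrt t)$ and $\partial_x\tilde f(t,\alpha\sqrt t)=0$, followed by Taylor's theorem (resp.\ the mean value theorem) and the uniform bound on $\partial_{xx}\tilde f(t,x)=-\kappa_\alpha e^{x^2/2t}/\sqrt{2t}$ over $t\ge\eps$, $\abs{x-\alpha\sqrt t}\le 1$ from \Lemma{uncap_rtg_deriv_cts}. Your closing remark is also on point: the bound genuinely requires restricting to $t\ge\eps$ (as the paper's proof implicitly does, and as the lemma is used in \Lemma{fn_converge}), since $\partial_{xx}\tilde f$ blows up as $t\to 0$.
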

\begin{proof}
    The key observation is that $f(t, \alpha \sqrt{t})$ is already a first-order Taylor expansion of $\tilde{f}(t,x)$ (in $x$) about the point $\gamma \sqrt{t}$.
    Indeed, $\tilde{f}(t,\alpha \sqrt{t}) = f(t, \alpha \sqrt{t})$ and $(\partial_x \tilde{f})(t, \alpha, \sqrt{t}) = 0$.
    Hence, by Taylor's Theorem (see e.g.~\cite[Theorem~5.15]{RudinPrinciples})
    \[
        \abs{\tilde{f}(t,x) - f(t, \alpha\sqrt{t})}
        \leq
        \frac{1}{2} \cdot (x - \alpha\sqrt{t})^2 \cdot
        \sup_{t \geq \eps, \abs{x-\alpha\sqrt{t}} \leq 1} \abs{\partial_{xx} \tilde{f}(t,x)}
    \]
    By the second identity in \Lemma{uncap_rtg_deriv_cts}, we have
    \[
        \abs{\partial_{xx} \tilde{f}(t,x)} = \frac{\kappa_{\alpha}\exp(\sfrac{x^2}{2t})}{\sqrt{2t}}. 
    \]
    Since $t \geq \eps$ and $x \leq 1 + \alpha \sqrt{t}$, we have
    \begin{align*}
        \abs{\partial_{xx} \tilde{f}(t,x)}
        & \leq \frac{\kappa_{\alpha}\exp(\sfrac{(1+\alpha\sqrt{t})^2}{2t})}{\sqrt{2\eps}} \\
        & = \frac{\kappa_{\alpha}\exp(\alpha^2/2 + \alpha/\sqrt{t} + 1/t)}{\sqrt{2\eps}} \\
        & \leq
        \frac{\kappa_{\alpha}\exp(\alpha^2/2 + \alpha/\sqrt{\eps} + 1/\eps)}{\sqrt{2\eps}}.
    \end{align*}
    So one can take $C_{\eps} =
    \frac{\kappa_{\alpha}\exp(\alpha^2/2 + \alpha/\sqrt{\eps} + 1/\eps)}{\sqrt{2\eps}}$.
    This gives the first assertion.
    
    The second assertion is similar.
    Indeed, since $(\partial_x \tilde{f})(t,\alpha\sqrt{t}) = 0$, we have
    \begin{align*}
        \abs{(\partial_x \tilde{f})(t,x)}
        & =
        \abs{(\partial_x \tilde{f})(t,x)-(\partial_x \tilde{f})(t,\alpha\sqrt{t})} \\
        & \leq \abs{x-\alpha\sqrt{t}}\cdot \sup_{t \geq \eps, |x-\alpha\sqrt{t}| \leq 1} |\partial_{xx}\tilde{f}(t,x)| \\
        & \leq C_{\eps} \cdot \abs{x-\alpha\sqrt{t}}. \qedhere
    \end{align*}
\end{proof}

We also need a simple claim which bounds the value of $\abs{\phi_n'(x)}$ and $\abs{\phi_n''(x)}$.
\begin{claim}
    \ClaimName{phi_deriv_bound}
    There is an absolute constant $C > 0$ such that $\abs{\phi_n'(x)} \leq Cn$
    and $\abs{\phi_n''(x)} \leq Cn^2$.
\end{claim}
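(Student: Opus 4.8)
The plan is to reduce everything to the chain rule together with the boundedness of the derivatives of the fixed cutoff function $\phi$. Since $\phi_n(x) = \phi(nx)$ and $\phi \in C^2$ by construction in \eqref{eq:phi_def}, differentiating gives $\phi_n'(x) = n\,\phi'(nx)$ and $\phi_n''(x) = n^2\,\phi''(nx)$. So the whole claim amounts to showing that $\phi'$ and $\phi''$ are globally bounded by an absolute constant, which is then absorbed into $C$; the key point to watch is that this constant must not depend on $n$, and indeed it does not, because rescaling the argument $x \mapsto nx$ does not change $\sup_x|\phi'|$ or $\sup_x|\phi''|$.

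Next I would simply read off the derivatives of $\phi$ from its explicit formula. Outside $[0,1]$ the function $\phi$ is constant, so $\phi' = \phi'' = 0$ there. On $[0,1]$ we have $\phi'(x) = -1 + \cos(2\pi x)$ and $\phi''(x) = -2\pi \sin(2\pi x)$ (one may also note $\phi'(0)=\phi'(1)=0$ and $\phi''(0)=\phi''(1)=0$, which is exactly why $\phi$ is $C^2$). Hence $\abs{\phi'(x)} \leq 2$ and $\abs{\phi''(x)} \leq 2\pi$ for every $x \in \bR$. Taking $C = 2\pi$ then yields $\abs{\phi_n'(x)} = n\,\abs{\phi'(nx)} \leq Cn$ and $\abs{\phi_n''(x)} = n^2\,\abs{\phi''(nx)} \leq Cn^2$ for all $x$ and all $n$, as claimed.

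There is no genuine obstacle here. If one prefers to avoid the explicit trigonometric formulas entirely, the argument can be phrased abstractly: $\phi$ is a $C^2$ function that is eventually constant, hence $\phi'$ and $\phi''$ are continuous and compactly supported, therefore bounded; setting $C = \max\{1,\ \sup_x \abs{\phi'(x)},\ \sup_x \abs{\phi''(x)}\}$ gives the same conclusion via the chain rule. Either way the proof is a couple of lines.
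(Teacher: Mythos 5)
Your proof is correct and matches the paper's argument: both use the chain rule $\phi_n'(x)=n\phi'(nx)$, $\phi_n''(x)=n^2\phi''(nx)$ and then bound $\phi'$, $\phi''$ either by differentiating \eqref{eq:phi_def} explicitly or by the continuity-plus-compact-support observation. Your explicit constant $C=2\pi$ (and the endpoint checks) is just a more detailed version of the same two-line proof.
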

\begin{proof}
    Note that $\phi_n'(x) = n\cdot \phi'(x)$ and $n^2 \cdot \phi''(x)$.
    It is easy to see, from differentiating \Equation{phi_def} or by continuity and compact arguments,
    that there exists $C > 0$ such that $\abs{\phi'(x)}, \abs{\phi''(x)} \leq C$ for all $x \in \bR$.
\end{proof}

\begin{proofof}{\Lemma{fn_converge}}
We start with the second assertion.
The first assertion is similar but simpler.
We claim that there exists a constant $C'$ (depending on $\eps$ and $\alpha$) such that
\begin{equation}
    \EquationName{fn_converge_1}
    \left|\phi_n''(|B_t| - \alpha \sqrt{t}) \cdot \left( f(t, \alpha\sqrt{t}) - \tilde{f}(t, |B_t|) \right)\right|
    \leq C' \ind[ |B_t| - \alpha \sqrt{t} \in [0,\sfrac{1}{n}] ]
\end{equation}
Indeed, if $|B_t| - \alpha\sqrt{t} \notin [0,1/n]$ then $\phi_n''(|B_t| - \alpha\sqrt{t}) = 0$
so both sides of \Equation{fn_converge_1} are equal to $0$.
On the other hand, if $|B_t| - \alpha\sqrt{t} \in [0,1/n]$ then \Lemma{f_taylor_error}
shows that $|f(t,\alpha\sqrt{t}) - \tilde{f}(t, |B_t|)| \leq C_{\eps} / n^2$ where $C_{\eps}$ is the constant from \Lemma{f_taylor_error}.
Next, \Claim{phi_deriv_bound} gives $|\phi_n''(|B_t| - \alpha \sqrt{t})| \leq Cn^2$.
So taking $C' = C_{\eps} \cdot C$ gives \Equation{fn_converge_1}.
Hence,
\begin{multline*}
    \left|\int_{\eps}^T \phi_n''(|B_t| - \alpha\sqrt{t}) \cdot \left( f(t, \alpha\sqrt{t}) - \tilde{f}(t, |B_t|) \right) \, \dd t\right|
    \leq \int_{\eps}^T C' \cdot \ind[ |B_t| - \alpha\sqrt{t} \in [0,1/n]]\, \dd t \\
    = C'\cdot m\left( \setst{t \in [\eps, T]}{ |B_t| - \alpha \sqrt{t} \in [0,1/n]} \right),
\end{multline*}
where $m$ denotes the Lebesgue measure.
By continuity of measure, we have
\[
    \lim_n m\left( \setst{t \in [\eps, T]}{ |B_t| - \alpha \sqrt{t} \in [0,1/n]} \right)
    = \int_{\eps}^T \ind\left[ |B_t| = \alpha\sqrt{t} \right]\, \dd t = 0 \quad \text{a.s.}
\]
This proves the second assertion.

For the first assertion, we can use the bound (from \Lemma{f_taylor_error} and \Claim{phi_deriv_bound})
\begin{equation}
    \EquationName{fn_converge_2}
    \left|\phi_n'(x - \alpha \sqrt{t}) \cdot \left( f(t, \alpha\sqrt{t}) - \tilde{f}(t, x) \right)\right|
    \leq \frac{C'}{n} \ind[ x - \alpha \sqrt{t} \in [0,\sfrac{1}{n}] ] \leq \frac{C'}{n}.
\end{equation}
Hence,
\begin{align*}
    \left|\int_{\eps}^T \frac{\alpha}{2\sqrt{t}} \cdot \phi_n'(|B_t| - \alpha\sqrt{t}) \cdot
    \left( f(t, \alpha \sqrt{t}) - \tilde{f}(t, |B_t|) \right) \, \dd t \right|
    &
    \leq \int_{\eps}^T \frac{\alpha}{2\sqrt{t}} \frac{C'}{n} \, \dd t \\
    & \leq C'\alpha \sqrt{T} / n \rightarrow 0. \qedhere
\end{align*}
\end{proofof}
\begin{proofof}{\Lemma{stoch_int_converge}}

By \Equation{partialg_fn}, we have
\begin{equation}
\EquationName{partialg_fn2}
\begin{split}
    \partial_x f_n(t,x) - \partial_x f(t,x)
    & = \left(\partial_x \tilde{f}(t,x) \phi_n(x-\alpha\sqrt{t}) - \partial_x f(t,x)\right) \\
    & + \left(\phi_n'(x-\alpha\sqrt{t}) \cdot \left( \tilde{f}(t,x) - f(t,\alpha\sqrt{t}) \right)\right).
\end{split}
\end{equation}
For the first bracketed term, since $\partial_x \tilde{f}(t,x) = \partial_x f(t,x)$ when $x \leq \alpha\sqrt{t}$
and $\partial_x f(t,x)= 0$ when $x \geq \alpha\sqrt{t}$,
we have
\begin{align*}
    \left|\partial_x \tilde{f}(t,x) \phi_n(x-\alpha\sqrt{t})\right|
    & = \left|\partial_x \tilde{f}(t,x)\phi_n(x-\alpha\sqrt{t})\right| \ind[x-\alpha\sqrt{t} \in [0,1/n]] \\
    & \leq \frac{C'}{n},
\end{align*}
where the final inequality is by the second assertion in \Lemma{f_taylor_error}.
The second bracketed term has been bounded in \Equation{fn_converge_2},
and so we have proved
\begin{equation}
\label{detderbnd}
\Bigl|\partial_xf_n(t,x)-\partial_x f(t,x)\Bigr|\le \frac{C''}{n}\text{ for all }t\ge \varepsilon\text{ and all }x.
\end{equation}
Tanaka's formula (see \cite[Theorem~IV.43.3]{RogersWilliamsII}) states that
\begin{equation*}
    |B_t|=\int_0^t\text{sign}(B_s)\, \dd B_s+L_t \eqqcolon W_t+L_t,
\end{equation*}
where $L$ is the local time at zero of $B$ and $W$ is a Brownian motion.
Recall that $t \mapsto L_t$ is a continuous non-decreasing random process which increases only on the set $\setst{t}{B_t=0}$.
Therefore by the It\^{o} isometry property, for any $T\ge\varepsilon$, 
\begin{align*}
\operatorname{E} \Bigl[&\Bigl(\int_\varepsilon^T\partial_xf_n(t,|B_t|)\, \dd |B|_t-\int_\varepsilon^T\partial_xf(t,|B_t|)\, \dd |B|_t\Bigr)^2\Bigr]\\
&\le 2\expect{\Bigl(\int_\varepsilon^T(\partial_xf_n-\partial_x f)(t,|B_t|)\, \dd W_t\Bigr)^2}
+
2\expect{\Bigl(\int_\varepsilon^T(\partial_xf_n-\partial_x f)(t,|B_t|))\, \dd L_t\Bigr)^2}\\
&=2\expect{\int_\varepsilon^T(\partial_xf_n-\partial_xf)(t,|B_t|)^2\, \dd t}+
2\expect{\Bigl(\int_\varepsilon^T(\partial_xf_n-\partial_x f)(t,0)\, \dd L_t\Bigr)^2}.
\end{align*}
Now use \eqref{detderbnd} to bound the right-hand side by
\[
    2(C''/n)^2T+2(C''/n)^2\expect{L_T^2}\le C'''n^{-2}T,
\]
where the last inequality uses Tanaka's formula (and the fact that $W_t$
is also a standard Brownian motion) to bound
\[
    \expect{L_T^2} = \expect{(|B_T| - W_T)^2}
    \leq 2\expect{|B_T|^2} + 2\expect{|W_T|^2}
    = 4\expect{|B_T|^2} = O(T).
\]
The result follows.
\end{proofof}

\subsection{Discussion on the statement of  \protect{\Theorem{ItoFormula}}}
\AppendixName{ito_discussion}
In this paper, we use the version of It\^{o}'s formula that appears in Remark 1 after Theorem IV.3.3 in \cite{RY13}.
It states that if $f \in C^{1,2}$,
$X$ is a continuous semimartingale\footnote{
    A continuous semimartingale $X$ is a process
    that can be written as $X = M + N$ where $M$ is a
    continuous local martingale and $N$ is a continuous adapted
    process of finite variation.
}
and $A$ is a process with bounded variation then
\begin{equation}
\EquationName{RYIto}
\begin{split}
    f(A_T, X_T) - f(A_0, X_0)
    & = \int_0^T \partial_x f(A_t, X_t)\, \dd X_t
      + \int_0^T \partial_t f(A_t, X_t)\, \dd A_t \\
    & + \frac{1}{2} \int_0^T \partial_{xx} f(A_t, X_t) \, \dd \inner{X}{X}_t.
\end{split}
\end{equation}
In our setting, we take $X_t = |B_t|$ and $A_t = t$.
We now explain the notation $\inner{X}{X}$.
\begin{enumerate}[label=(\arabic*),noitemsep,topsep=0pt]
    \item For a continuous local martingale $M$, $\inner{M}{M}$ is the unique increasing continuous process vanishing at $0$ such that $M^2 - \inner{M}{M}$ is a martingale \cite[Theorem IV.1.8]{RY13}.
    \item If $X$ is a continuous semimartingale with $M$ being the (continuous) local martingale part
    then $\inner{X}{X} = \inner{M}{M}$ \cite[Definition IV.1.20]{RY13}.
\end{enumerate}
Tanaka's formula \cite[Theorem~IV.43.3]{RogersWilliamsII} asserts that $|B_t| = W_t + L_t$ where $W_t$ is a Brownian Motion
and $L_t$ is the local time of $B_t$ at 0, which is an increasing, continuous, adapated process.
Hence, $|B_t|$ is a semimartingale with $\inner{|B|}{|B|}_t = \inner{W}{W}_t = t$.
Plugging these into \Equation{RYIto} gives
\[
    f(T, \Abs{B_T}) - f(0, \Abs{B_0}) ~=~ \int_{0}^T \partial_x f(t, \Abs{B_t}) \dd \Abs{B_t} + \int_{0}^T \Big [
    \partial_t f(t, \Abs{B_t}) + \smallfrac{1}{2}   \partial_{xx}f(t, \Abs{B_t}) \Big] \dd t,
\]
which is what appears in \Theorem{ItoFormula}.

\subsection{Continuous regret against any continuous semi-martingale}
\AppendixName{cont_reg_semi_mart}
Recall that the continuous regret upper bound (\Theorem{ContsMainResult}) involved the adversary evolving the gap process as a reflected Brownian motion, which is a continuous semi-martingale. In this section, we generalize the definition of continuous regret to allow arbitrary, non-negative, continuous semi-martingales to control the gap process, and derive an analogue of \Theorem{ContsMainResult} in this generalized setting. We use the notation $[X]_t$ to refer to $\inner{X}{X}_t$, the quadratic variation process of $X$, which was introduced in \Appendix{ito_discussion}.

We begin with a generalized definition of continuous regret.

\begin{definition}[Continuous Regret]
\DefinitionName{ContRegret2}
Let $p : \bR_{>0} \times \bR_{\geq 0} \rightarrow [0,1]$ be a continuous function that satisfies $p(t,0) = 1/2$ for every $t > 0$.
Let $X_t$ be a continuous, non-negative, semi-martingale.
Then, the \emph{continuous regret} of $p$ with respect to $X$
is the stochastic integral
\begin{equation}
\EquationName{ContinuousRegret2}    
\ContReg{T,p,X} ~=~ \int_{0}^T p(t, X_t) \dd X_t.
\end{equation}  
\end{definition}

The main result for this generalized setting is as follows.

\begin{theorem}
\TheoremName{ContsMainResult2}
There exists a continuous-time algorithm $p^*$ such that for any continuous, non-negative, semi-martingale $X$,
\begin{equation}
    \EquationName{ContsMainResult2}
    \ContReg{T,p^*,X} ~\leq~ \frac{\gamma}{2}\sqrt{[X]_T}~
    \quad\forall T \in \bR_{\geq 0},
    ~\text{almost surely}.
 \end{equation}
\end{theorem}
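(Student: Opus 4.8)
The plan is to take $p^* = p_\gamma$, the same continuous-time algorithm that proves Theorem \ref{thm:ContsMainResult}, and to observe that the argument behind Lemma \ref{lem:ContRegUB} never really used any property of reflected Brownian motion beyond two facts: that $|B_t| \geq 0$, and that its quadratic variation is $t$. The generalization is therefore obtained by replacing ``time $t$'' in the potential $R_\gamma$ by ``accumulated quadratic variation $[X]_t$''. Concretely, set $\widehat R(t,x) := R_\gamma([X]_t, x)$ (as a process) and aim to show that, almost surely and for all $T \geq 0$,
\[
    \ContReg{T, p_\gamma, X} ~=~ \int_0^T p_\gamma([X]_t, X_t)\, \dd X_t ~\leq~ R_\gamma([X]_T, X_T) ~\leq~ \frac{\gamma}{2}\sqrt{[X]_T},
\]
where the last inequality is exactly Lemma \ref{lem:rtg_maximum}. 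The continuous-time algorithm used here is $p^*(t,x) = p_\gamma(t,x)$ but fed the argument $[X]_t$ in place of $t$; since $[X]$ is a continuous non-decreasing adapted process, this is legitimate.

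The key technical step is to re-run the approximation argument of Appendix \ref{app:ApproximationArgument} with the general sem-martingale $X$ and the clock $A_t = [X]_t$. First I would recall the general form of It\^o's formula, equation \eqref{eq:RYIto}, which already accommodates an arbitrary bounded-variation clock $A$ and an arbitrary continuous semi-martingale $X$; this is exactly the setting we need, so no new analytic input is required. Applying \eqref{eq:RYIto} to the smooth approximants $R_{\gamma,n}$ (defined exactly as in Appendix \ref{app:ApproximationArgument} but with $\alpha = \gamma$) with $X_t$ in place of $|B_t|$ and $[X]_t$ in place of $t$, the drift term is
\[
    \int_0^T \Big( \partial_t R_{\gamma,n}([X]_t, X_t) + \tfrac{1}{2}\partial_{xx} R_{\gamma,n}([X]_t, X_t) \Big)\, \dd [X]_t ~=~ \int_0^T (\doob R_{\gamma,n})([X]_t, X_t)\, \dd [X]_t,
\]
since $\dd\inner{X}{X}_t = \dd[X]_t$. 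The pointwise lower bound $\doob R_{\gamma,n}(s,x) \geq (\text{error terms})$ established in Lemma \ref{lem:fn_stochastic} is a statement about the function $R_{\gamma,n}$ and holds at every point $(s,x) \in \bR_{>0}\times\bR$, so it applies verbatim with $s = [X]_t$; the error terms are then integrated against $\dd[X]_t$ rather than $\dd t$. Taking limits as $n \to \infty$ requires re-examining Lemmas \ref{lem:fn_converge} and \ref{lem:stoch_int_converge}: the deterministic derivative bound \eqref{detderbnd} is unchanged, the first error integral is bounded by $C'\gamma[X]_T/n \to 0$ using $[X]$ in place of arc length, and the second error integral converges to $\int_0^T \ind[X_t = \gamma\sqrt{[X]_t}]\, \dd[X]_t$, which is almost surely zero by the occupation-times formula for $X$ (the set $\{t : X_t = \gamma\sqrt{[X]_t}\}$ has zero $\dd[X]_t$-measure a.s.). The $L^2$-convergence of the stochastic integrals, Lemma \ref{lem:stoch_int_converge}, goes through using the Doob--Meyer decomposition $X = M + N$ of the semi-martingale in place of Tanaka's decomposition of $|B|$, together with the It\^o isometry for $M$ and a crude bound on the finite-variation part $N$; one needs a mild integrability assumption (e.g. $\expect{[X]_T} < \infty$, or a localization argument) to make these $L^2$ estimates rigorous, after which a.s.\ convergence along a subsequence recovers the pathwise inequality.

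I expect the main obstacle to be exactly this last point: making the approximation/limiting argument rigorous for a general semi-martingale, where one no longer has the explicit control that Tanaka's formula gave for $|B_t|$ (in particular the clean bound $\expect{L_T^2} = O(T)$). The cleanest route is probably to prove the inequality first under a localizing sequence of stopping times that makes $M$ a bounded martingale and $N$ of bounded total variation, establish the pathwise bound on each localized interval, and then pass to the limit using that both sides of the target inequality are continuous in $T$ and that $[X]$ is finite on $[0,T]$ almost surely. Everything else --- the choice $p^* = p_\gamma$, the backward-heat-equation structure $\doob \tilde R_\gamma = 0$, the truncation at the boundary $x = \gamma\sqrt{[X]_t}$, and the final bound $R_\gamma([X]_T, X_T) \leq \tfrac{\gamma}{2}\sqrt{[X]_T}$ --- transfers from the Brownian case with only cosmetic changes.
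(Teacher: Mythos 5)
Your proposal is correct and follows essentially the same route as the paper: use the same potential time-changed by the quadratic variation, apply It\^{o}'s formula \eqref{eq:RYIto} with the clock $A_t=[X]_t$ so that the drift term is integrated against $\dd [X]_t$ and is eliminated (or controlled) by the backward heat equation, and conclude from $R_\gamma(s,x)\le\frac{\gamma}{2}\sqrt{s}$. The only difference is one of emphasis: the paper's argument is a sketch that works with the untruncated $\tilde p_\gamma$ (so that $\tilde R_\gamma$ is $C^{1,2}$ and It\^{o} applies directly) and defers the truncation to the argument of \Subsection{truncation}, whereas you attack the truncated $p_\gamma$ head-on by redoing the approximation argument with the new clock --- and the replacements you flag (an occupation-times/local-time argument, most cleanly applied to the semimartingale $X_t-\gamma\sqrt{[X]_t}$, for the null-set step, and the Doob--Meyer decomposition with localization in place of Tanaka's explicit $\expect{L_T^2}=O(T)$ bound) are exactly what is needed to make that step rigorous.
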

We provide an overview of the proof of this result below. For the sake of exposition, we sketch the proof of \Theorem{ContsMainResult2} in the setting where we allow $p^*$ to take values in $(-\infty, 1].$ Truncating $p^*$ as was done in \Subsection{truncation} yields \Theorem{ContsMainResult2} as stated.

\begin{proofsketch}
Let $p^*(t,x) \coloneqq \tilde{p}_{\gamma}([X]_t, x)$ and $R(t,x) \coloneqq \tilde{R}_{\gamma}(t,x)$. (See Eq.~\Equation{palpha_def} and Eq.~\Equation{RtildeDef} for definitions of $\tilde{p}_{\gamma}$ and $\tilde{R}_{\gamma}).$ Recall the following three important properties of $R$ from \Lemma{RSatisfiesBHE}:
\begin{enumerate}[label=(\arabic*),noitemsep,topsep=0pt]
\item $R$ is $C^{1,2}$,
\item $R$ satisfies $\doob R = 0$ over $\bR_{>0} \times \bR$,
\item $\partial_x R(t,x) = \tilde{p}_{\gamma}(t,x)$.
\end{enumerate}
Since $R$ is $C^{1,2}$, we may apply It\^o's formula (specifically Eq.~\Equation{RYIto} with $A_t = [X]_t$, which is a bounded variation process since it is increasing) to obtain
\begin{align*}
    R([X]_T, X_T) 
        &~=~ \int_0^T \partial_x R([X]_t,X_t) \dd X_t + \int_0^T \partial_t R([X]_t,X_t) + \frac{1}{2} \partial_{xx} R([X]_t,X_t) \dd [X]_t \\
        &~=~ \int_0^T p^*(t,X_t) \dd X_t + \int_0^T \underbrace{\partial_t R([X]_t,X_t) + \frac{1}{2} \partial_{xx} R([X]_t,X_t)}_{= \doob R([X]_t, X_t)} \dd [X]_t \quad(\partial_x R = \tilde{p}_{\gamma})\\
        &~=~ \int_0^T p^*(t,X_t) \dd X_t \qquad(\doob R = 0)\\
        &~=~ \ContReg{T, p^*, X}.
\end{align*}  
Next, recall the upper bound on $R$ from Eq.~\Equation{R_bound}:
\[  R(t,x) ~=~ R_{\gamma}(t,x) ~\leq~ \left( \frac{\gamma}{2} + \kappa_{\gamma}M_0\left(  \frac{\gamma^2}{2} \right) \right)\sqrt{t} ~=~ \frac{\gamma}{2}\sqrt{t}, \]where the final equality is because $\gamma$ is a root of $M_0\left(\frac{x^2}{2}\right).$ Putting everything together, we have
\[  \ContReg{T, p^*, X} ~=~ R([X]_T, X_T) ~\leq~ \frac{\gamma}{2}\sqrt{[X]_T}, \]as desired.
\end{proofsketch}

\section{Remark on oblivious adversaries}
\AppendixName{oblivious}
In this section, we consider the following model.
At each time step $t$, the algorithm $\cA$ chooses a probability vector $x_t \in [0,1]^n$ and then draws a random expert $I_t \in [n]$ such that $\Pr[I_t = i] = x_{t,i}$.
The adversary $\cB$ then chooses a loss vector $\ell_t \in [0,1]^n$ given $\{x_s\}_{s \leq t}$, $\{\ell_s\}_{s \leq t}$, and $\{I_s\}_{s < t}$. (Crucially, $\cB$ does not know $I_t$ at time $t$.)
In this setting, we consider the following notion of regret defined as
\[
    \Regret{n, T, \cA, \cB} = \sum_{t=1}^T \ell_{t, I_t} - \min_{i \in [n]} \sum_{t=1}^T \ell_{t, i}.
\]
The following theorem shows that, with this definition, any algorithm must incur $\Omega(\sqrt{t \log \log t})$ anytime regret.
\begin{theorem}
    \TheoremName{oblivious_lb}
    For any algorithm $\cA$, there exists an adversary $\cB$ such that for all $T \geq 1$,
    \[
        \expect{ \sup_{t \geq T} \frac{\Regret{2, t, \cA, \cB}}{\sqrt{(t/2) \log \log (t/2)}}} \geq \frac{1}{2}.
    \]
\end{theorem}
For the rest of this section, we write $\Reg(T) = \Regret{2, T, \cA, \cB}$.
The adversary $\cB$ that achieves \Theorem{oblivious_lb} is extremely simple.
At time $t$, the adversary chooses an index $i^* \in \argmax_{i \in [2]} x_{t,i}$.\footnote{
    For concreteness, we break ties in lexicographical order but the exact tie-breaking does not play a role.}
It sets a cost of $1$ for expert $i^*$ and a cost of $0$ for expert $3-i^*$ (the other expert).
More precisely,
\[
    \ell_t =
    \begin{cases}
        \begin{smallbmatrix} 1 \\ 0 \end{smallbmatrix} & \text{if $x_{t,1} \geq x_{t,2}$} \\
        \begin{smallbmatrix} 0 \\ 1 \end{smallbmatrix} & \text{if $x_{t,1} < x_{t,2}$}
    \end{cases}.
\]

We now analyze this adversary.
To do so, we set up some notation that is reminiscent of that used in \Section{ub} and \Section{lb}.
Let $L_{t,1} = \sum_{s \leq t} \ell_{s,1}$ and $L_{t,2} = \sum_{s \leq t} \ell_{s,2}$.
Let $g_t = |L_{t,1} - L_{t,2}|$ be the gap between the cumulative losses of the two experts.
Note that $|g_t - g_{t-1}| = 1$ because for all $t$, exactly one of $\ell_{t,1}, \ell_{t,2}$ is equal to $1$ while the other is equal to $0$.
If $g_t = 0$, let $p_t = \max\{ x_{t,1}, x_{t,2} \}$ and if $g_t > 0$, let $p_t$ be the probability mass placed on the worst expert
(i.e.~the expert with the highest cumulative cost at time $t$).
More precisely, if $g_t > 0$, we set
\[
    p_t =
    \begin{cases}
        x_{t,1} & \text{if $L_{t,1} > L_{t,2}$} \\
        x_{t,2} & \text{if $L_{t,1} < L_{t,2}$}
    \end{cases}.
\]
Let $\Ber(p)$ denote the Bernoulli distribution with parameter $p$.
In particular, if $X \sim \Ber(p)$ then $\Pr[X = 1] = p$ and $\Pr[X=0] = 1-p$.
\begin{prop}
    \PropositionName{ObliviousRegretGap}
    Suppose that whenever $g_{t-1} = 0$, the adversary sets a loss of $1$ for an expert in $\argmax_{i \in [2]} \{x_{t,1}, x_{t,2}\}$ and a loss of $0$ for the other expert.
    Suppose further that $|g_t - g_{t-1}| = 1$ for all $t \geq 1$.
    Then $\Reg(T) = \sum_{t=1}^T (g_t - g_{t-1}) \cdot R_t$ where $R_t \sim \Ber(p_t)$.
\end{prop}
We remark that \Proposition{ObliviousRegretGap} does not use the adversary we have defined above and is true for any adversary that satifies the conditions of the hypothesis.
We also remark that the proof of \Proposition{ObliviousRegretGap} is nearly identical to the proof of \Proposition{RegretWithGaps}.
\begin{proof}
Define $\DeltaR(t) = \Regret{t}-\Regret{t-1}$.
The total cost of the best expert at time $t$ is $L_t^* \coloneqq \min \set{L_{t,1}, L_{t,2}}$.
The change in regret at time $t$ is the cost incurred by the algorithm
minus the change in the total cost of the best expert,
so $\DeltaR(t) = L_{I_t} - (L_t^* - L_{t-1}^*)$, where $I_t \in [2]$ indicates which expert was chosen by the algorithm at time $t$.

\paragraph{Case 1: $g_{t-1} \neq 0$.}
In this case, the best expert at time $t-1$ remains a best expert at time $t$. Note that this uses the assumption that $g_t - g_{t-1} \in \{\pm 1\}$ so $g_{t-1} \geq 1$.
If the worst expert incurs cost $1$ then with probability $p_t$ the algorithm follows the worst expert and incurs cost $1$ and with probability $1-p_t$,
the algorithm follows the best expert and incurs cost $0$.
In other words, the algorithm's cost is given by $R_t \sim \Ber(p_t)$.
On the other hand, the best expert incurs cost $0$, so $\DeltaR(t) = R_t$ and $g_t - g_{t-1} = 1$.

Next, if the best expert incurs cost $1$ then with probability $p_t$ the algorithm follows the worst expert and incurs cost $0$ and with probability $1-p_t$,
the algorithm follows the best expert and incurs cost $1$.
In this case, the algorithm's cost is $1-R_t$.
On the other hand, the best expert incurs cost $1$,
so $\DeltaR(t) = -R_t$ and $g_t-g_{t-1}=-1$.

For either choice of cost, we see that $\DeltaR(t) = R_t \cdot (g_t-g_{t-1})$.

\paragraph{Case 2: $g_{t-1} = 0.$}
Both experts are best, but one incurs no cost, so $L_t^* = L_{t-1}^*$.
Recall that $p_t = \max\{x_{t,1}, x_{t,2}\}$ and that we assume the adversary sets a loss of $1$ for an expert in $\argmax_{i \in [2]} \{x_{t,1}, x_{t,2}\}$.
Without loss of generality, we assume $p_t = x_{t,1}$.
Hence, the algorithm's cost is given by $R_t$; it is equal to $1$ with probability $p_t$ and and $0$ with probability $1-p_t$.
We conclude that $\Delta_R(t) = R_t = R_t \cdot (g_t - g_{t-1})$.
\end{proof}

For the remainder of this section, we work with the adversary that is described early in this section.
Recall that the adversary sets a loss of $1$ on an expert in $\argmax_{i \in [2]} \{x_{t,1}, x_{t,2}\}$ and a loss of $0$ on the other expert.
Before we proceed, we make a couple of simple observations.
First, the hypothesis of \Proposition{ObliviousRegretGap} holds and we make use of this below.
Second, we have that if $g_t = g_{t-1} + 1$ then $p_t \geq 1/2$ and if $g_t = g_{t-1} - 1$ then $p_t \leq 1/2$.
The observation is trivial when $g_{t-1} = 0$ since $g_t = 1$ and $p_t = \max\{x_{t,1}, x_{t,2}\} \geq 1/2$.
Now suppose that $g_{t-1} \geq 1$ and $g_t = g_{t-1} + 1$.
We claim that this implies $p_t \geq 1/2$.
For the sake of contradiction, suppose $p_t < 1/2$ (recall that $p_t$ is the mass on the \emph{worst} expert).
Then the adversary sets a loss of $1$ on the \emph{best} expert which \emph{decreases} the gap at from time $t-1$ to $t$ so that $g_{t} = g_{t-1} - 1$.
This contradicts that the gap increases from $t-1$ to $t$.
A similar argument shows that $g_t = g_{t-1} -1$ implies $p_t \leq 1/2$.

For notation, we also let $M_T = \card{\setst{t \in [T]}{g_t = g_{t-1} + 1}}$ be the number of times that the gap increases by time $T$
and $N_T = \card{\setst{t \in [T]}{g_t = g_{t-1} - 1}}$ be the number of times that the gap decreases by time $T$.
Note that at any time $T \geq 1$, we have $g_T = M_T - N_T$.
In particular, $M_T \geq N_T$ which implies the following proposition (since $M_T + N_T = T$ and $M_T, N_T$ are non-negative integers).
\begin{proposition}
    \PropositionName{MTNT}
    For any time $T \geq 1$, we have $M_T \geq \lceil T/2 \rceil$ and $N_T \leq \lfloor T/2 \rfloor$.
\end{proposition}

Next, we show that for this simple adversary, we have a simple lower bound on $\Reg(T)$.
\begin{claim}
    \ClaimName{ObliviousLB1}
    Let $X_1, X_2, \ldots$ and $Y_1, Y_2, \ldots$ be sequences of i.i.d.~$\Ber(1/2)$ random variables.
    There is a coupling between $\{\Reg(t)\}_{t \geq 1}$ and $\{(X_t), (Y_t)\}_{t \geq 1}$ such that for all $T \geq 1$,
    \begin{equation}
        \EquationName{ObliviousLB1}
        \Reg(T) \geq \sum_{t=1}^{M_T} X_t - \sum_{t=1}^{N_T} Y_t.
    \end{equation}
\end{claim}
\begin{proof}
    From \Proposition{ObliviousRegretGap}, we have that $\Reg(T) = \sum_{t=1}^T R_t \cdot (g_t - g_{t-1})$.
    We show that there is a coupling between $\{\Reg(t)\}_{t \geq 1}$ and $\{(X_t), (Y_t)\}_{t \geq 1}$ such that for all $t \geq 1$:
    \begin{enumerate}[noitemsep, topsep=0pt]
        \item if $g_t = g_{t-1} + 1$ then $R_t \cdot (g_t - g_{t-1}) = R_t \geq X_{M_t}$
        \item if $g_t = g_{t-1} - 1$ then $R_t \cdot (g_t - g_{t-1}) = -R_t \geq -Y_{N_t}$.
    \end{enumerate}
    Fix a $t \geq 1$.
    We start with the case where $g_t = g_{t-1} + 1$.
    As mentioned above, we have $p_t \geq 1/2$.
    To define the coupling, let $A_t \sim \Ber(1/2)$ and $B_t \sim \Ber(2p_t - 1)$ be independent.
    We then set $X_{M_t} = A_t$ and $R_t = A_t + (1-A_t) \cdot B_t$.
    Clearly, $R_t \cdot (g_t - g_{t-1}) = R_t \geq X_{M_t}$.
    So it remains to check that $R_t$ has the desired distribution.
    Indeed, $R_t = 0$ if and only if $A_t = 0$ and $B_t = 0$.
    So $\Pr[R_t = 0] = \Pr[A_t = 0] \Pr[B_t = 0] = 0.5 \cdot (2-2p_t) = 1-p_t$ and $\Pr[R_t = 1] = p_t$ as desired.

    Next, suppose $g_t = g_{t-1} - 1$ in which case $p_t \leq 1/2$.
    Let $A_t \sim \Ber(1/2)$ and $B_t \sim \Ber(2p_t)$ be independent.
    We set $Y_{N_t} = A_t$ and $R_t = A_t B_t$.
    Clearly, $R_t \leq Y_{N_t}$ (equivalently, $-R_t \geq -Y_{N_t}$).
    So it remains to check that $R_t$ has the desired distribution.
    Indeed, $R_t = 1$ if and only if $A_t = B_t = 1$.
    So $\Pr[R_t = 1] = \Pr[A_t = 1] \Pr[B_t = 1] = p_t$ and $\Pr[R_t = 0] = 1-p_t$ as desired.
\end{proof}
Although the RHS of \Equation{ObliviousLB1} seems simpler to work than $\Reg(T)$, one annoyance is that it still depends on how the adversary and the algorithm interact.
However, we can combine \Proposition{MTNT} and \Claim{ObliviousLB1} to establish a lower bound on $\Reg(T)$ which does \emph{not} depend on the interaction between the adversary and the algorithm.
\begin{claim}
    \ClaimName{ObliviousLB2}
    Let $X_1, X_2, \ldots$ and $Y_1, Y_2, \ldots$ be sequences of i.i.d.~$\Ber(1/2)$ random variables.
    There is a coupling between $\{\Reg(t)\}_{t \geq 1}$ and $\{(X_t), (Y_t)\}_{t \geq 1}$ such that for all $T \geq 1$,
    \[
        \Reg(T) \geq \sum_{t=1}^{\lceil T/2 \rceil} X_t - \sum_{t=1}^{\lfloor T/2 \rfloor} Y_t.
    \]
\end{claim}
\begin{proof}
    \Proposition{MTNT} shows that $M_T \geq \lceil T/2 \rceil$ and $N_T \leq \lfloor T/2 \rfloor$ while \Claim{ObliviousLB1} shows that
    \[
        \Reg(T) \geq \sum_{t=1}^{M_T} X_t - \sum_{t=1}^{N_T} Y_t.
    \]
    The claim follows from the fact that for $j \in \{1, 2, \ldots, M_T - \lceil T/2 \rceil\}$, we have $X_{\lceil T/2 \rceil + j} \geq -Y_{N_{T} + j}$.
\end{proof}
The following claim completes the proof of \Theorem{oblivious_lb} since
\[
    \expect{ \sup_{t \geq T} \frac{\Reg(t)}{\sqrt{(t/2) \log \log (t/2)}} }
    \geq \expect{ \sup_{t \geq T} \frac{ \sum_{s=1}^{\lceil t/2 \rceil} X_s - \sum_{s=1}^{\lfloor t/2 \rfloor} Y_s }{\sqrt{(t/2) \log \log(t/2)}}}.
\]
\begin{claim}
    Let $X_1, X_2, \ldots$ and $Y_1, Y_2, \ldots$ be sequences of i.i.d.~$\Ber(1/2)$ random variables.
    Then, for any $T \geq 1$, there is a stopping time $\tau \geq T$ such that
    \[
        \expect{\frac{\sum_{t=1}^\tau X_t - \sum_{t=1}^\tau Y_t}{\sqrt{\tau \log \log \tau}}} \geq 1/2.
    \]
\end{claim}
\begin{proof}
    Let $\tau = \inf\setst{t \geq T}{\sum_{s=1}^t (X_s-1/2) \geq \frac{1}{2} \sqrt{t \log \log t}}$.
    By the law of the iterated logarithm (\Theorem{LIL} below), $\tau$ is finite a.s.
    In addition, since $\{Y_t - 1/2\}_{t \geq 1}$ are mean-zero random variables that are independent of $\tau$, we have $\expect{ \frac{\sum_{t=1}^\tau (Y_t - 1/2)}{\sqrt{\tau \log \log \tau}} } = 0$.
    Hence,
    \begin{align*}
        \expect{\frac{\sum_{t=1}^\tau X_t - \sum_{t=1}^\tau Y_t}{\sqrt{\tau \log \log \tau}}}
        & =
        \expect{\frac{\sum_{t=1}^\tau (X_t-1/2)}{\sqrt{\tau \log \log \tau}}} \\
        & \geq
        \expect{\frac{\frac{1}{2} \sqrt{\tau \log \log \tau}}{\sqrt{\tau \log \log \tau}}} \\
        & = \frac{1}{2}. \qedhere
    \end{align*}
\end{proof}
\begin{theorem}[Law of the iterated logarithm {\protect \cite[Theorem 22.11]{Klenke}}]
    \TheoremName{LIL}
    Let $X_1, X_2, \ldots$ be i.i.d.~real random variables such that $\expect{X_1} = 0$ and $\Var{X_1} = 1$.
    Let $S_n = \sum_{i=1}^n X_i$ for $n \in \bN$.
    Then, almost surely,
    \[
        \limsup_{n \to \infty} \frac{S_n}{\sqrt{2n \log \log n}} = 1.
    \]
\end{theorem}

\bibliographystyle{plain}
\bibliography{references}

\begin{thebibliography}{10}

\bibitem{AbbasiBG17}
Yasin Abbasi{-}Yadkori, Peter~L. Bartlett, and Victor Gabillon.
\newblock Near minimax optimal players for the finite-time 3-expert prediction
  problem.
\newblock In {\em Advances in Neural Information Processing Systems 30: Annual
  Conference on Neural Information Processing Systems 2017, 4-9 December 2017,
  Long Beach, CA, {USA}}, pages 3033--3042, 2017.

\bibitem{AbernethyFW12}
Jacob~D. Abernethy, Rafael~M. Frongillo, and Andre Wibisono.
\newblock Minimax option pricing meets black-scholes in the limit.
\newblock In {\em Proceedings of the 44th Symposium on Theory of Computing
  Conference}, pages 1029--1040. {ACM}, 2012.

\bibitem{Abramowitz}
Milton Abramowitz and Irene~A. Stegun.
\newblock {\em Handbook of mathematical functions: with formulas, graphs, and
  mathematical tables}, volume~55.
\newblock Courier Corporation, 1965.

\bibitem{Andoni}
Alexandr Andoni and Rina Panigrahy.
\newblock A differential equations approach to optimizing regret trade-offs,
  May 2013.
\newblock arXiv:1305.1359.

\bibitem{AHK12}
Sanjeev Arora, Elad Hazan, and Satyen Kale.
\newblock The multiplicative weights update method: a meta-algorithm and
  applications.
\newblock {\em Theory of Computing}, 8(1):121--164, 2012.

\bibitem{BEZ20}
Erhan Bayraktar, Ibrahim Ekren, and Xin Zhang.
\newblock Finite-time 4-expert prediction problem.
\newblock {\em Communications in Partial Differential Equations}, pages 1--44,
  2020.

\bibitem{BEZ19}
Erhan Bayraktar, Ibrahim Ekren, and Yili Zhang.
\newblock On the asymptotic optimality of the comb strategy for prediction with
  expert advice.
\newblock {\em arXiv preprint arXiv:1902.02368}, 2019.

\bibitem{Breiman}
Leo Breiman.
\newblock First exit times for a square root boundary.
\newblock In {\em Proceedings of the Fifth Berkeley Symposium on Mathematical
  Statistics and Probability, Volume 2: Contributions to Probability Theory,
  Part 2}, pages 9--16. University of California Press, 1967.

\bibitem{BreimanBook}
Leo Breiman.
\newblock {\em Probability}.
\newblock SIAM, 1992.

\bibitem{BL02}
Monica Brezzi and Tze~Leung Lai.
\newblock Optimal learning and experimentation in bandit problems.
\newblock {\em Journal of Economic Dynamics and Control}, 27(1):87--108, 2002.

\bibitem{BubeckNotes}
S\'{e}bastien Bubeck.
\newblock Introduction to online optimization, December 2011.
\newblock unpublished.

\bibitem{Cesa99}
Nicol{\`{o}} Cesa-Bianchi.
\newblock Analysis of two gradient-based algorithms for on-line regression.
\newblock {\em Journal of Computer and System Sciences}, 59(3):392--411, 1999.

\bibitem{CFHHSW97}
Nicol{\`{o}} Cesa-Bianchi, Yoav Freund, David Haussler, David~P. Helmbold,
  Robert~E. Schapire, and Manfred~K. Warmuth.
\newblock How to use expert advice.
\newblock {\em Journal of the ACM (JACM)}, 44(3):427--485, 1997.

\bibitem{CBL}
{Nicol\`{o}} Cesa-Bianchi and G\'{a}bor Lugosi.
\newblock {\em Prediction, learning, and games}.
\newblock Cambridge University Press, 2006.

\bibitem{NormalHedge}
Kamalika Chaudhuri, Yoav Freund, and Daniel~J. Hsu.
\newblock A parameter-free hedging algorithm.
\newblock In {\em Advances in Neural Information Processing Systems 22}, pages
  297--305, 2009.

\bibitem{Chernoff}
Herman Chernoff.
\newblock Optimal stochastic control.
\newblock {\em {Sankhy{\={a}}: The Indian Journal of Statistics, Series A}},
  30:221--252, 1968.

\bibitem{CLRS}
Thomas~H. Cormen, Charles~E. Leiserson, Ronald~L. Rivest, and Clifford Stein.
\newblock {\em Introduction to Algorithms}.
\newblock MIT Press, third edition, 2009.

\bibitem{Cover66}
Thomas~M. Cover.
\newblock Behavior of sequential predictors of binary sequences.
\newblock In {\em Proceedings of the 4th Prague Conference on Information
  Theory, Statistical Decision Functions, Random Processes}. Publishing House
  of the Czechoslovak Academy of Sciences, Prague, 1965.

\bibitem{Davis}
Burgess Davis.
\newblock On the intergrability of the martingale square function.
\newblock {\em Israel Journal of Mathematics}, 8:187--190, 1970.

\bibitem{Davis76}
Burgess Davis.
\newblock On the {$L_p$} norms of stochastic integrals and other martingales.
\newblock {\em Duke Math. J}, 43(4):697--704, 1976.

\bibitem{DeMarzoKM06}
Peter~M. DeMarzo, Ilan Kremer, and Yishay Mansour.
\newblock Online trading algorithms and robust option pricing.
\newblock In {\em Proceedings of the 38th Annual {ACM} Symposium on Theory of
  Computing}, pages 477--486. {ACM}, 2006.

\bibitem{Donchev}
Doncho~S. Donchev.
\newblock Brownian motion hitting probabilities for general two-sided
  square-root boundaries.
\newblock {\em Methodology and Computing in Applied Probability}, 12:237--245,
  2010.

\bibitem{DoobBook}
J.~L. Doob.
\newblock {\em Classical Potential Theory and Its Probabilistic Counterparts}.
\newblock Springer-Verlag, 1984.

\bibitem{drenska}
Nadeja Drenska.
\newblock {\em A {PDE} approach to a Prediction Problem Involving Randomized
  Strategies}.
\newblock PhD thesis, New York University, 2017.

\bibitem{DK20}
Nadejda Drenska and Robert~V Kohn.
\newblock Prediction with expert advice: A {PDE} perspective.
\newblock {\em Journal of Nonlinear Science}, 30(1):137--173, 2020.

\bibitem{Durrett}
Rick Durrett.
\newblock {\em Probability: Theory and Examples}.
\newblock Cambridge University Press, fifth edition, 2019.

\bibitem{Feller}
William Feller.
\newblock {\em An Introduction to Probability Theory and Its Applications}.
\newblock John Wiley \& Sons, second edition, 1971.

\bibitem{Freund09}
Yoav Freund.
\newblock A method for {H}edging in continuous time.
\newblock {\em arXiv preprint arXiv:0904.3356}, 2009.

\bibitem{Fujita08}
Takahiko Fujita.
\newblock A random walk analogue of {L\'{e}vy's Theorem}.
\newblock {\em Studia Scientiarum Mathematicarum Hungarica}, 45(2):223--233,
  2008.

\bibitem{Ger11}
S\'{e}bastien Gerchinovitz.
\newblock {\em Prediction of individual sequences and prediction in the
  statistical framework: some links around sparse regression and aggregation
  techniques}.
\newblock PhD thesis, Universit\'{e} Paris-Sud, 2011.

\bibitem{GKP}
Ronald~L. Graham, Donald~E. Knuth, and Oren Patashnik.
\newblock {\em Concrete Mathematics}.
\newblock Addison-Wesley, second edition, 1994.

\bibitem{gravin2016towards}
Nick Gravin, Yuval Peres, and Balasubramanian Sivan.
\newblock Towards optimal algorithms for prediction with expert advice.
\newblock In {\em Proceedings of the twenty-seventh annual ACM-SIAM symposium
  on Discrete algorithms}, pages 528--547. SIAM, 2016.

\bibitem{GPS17}
Nick Gravin, Yuval Peres, and Balasubramanian Sivan.
\newblock {Tight Lower Bounds for Multiplicative Weights Algorithmic Families}.
\newblock In {\em 44th International Colloquium on Automata, Languages, and
  Programming (ICALP 2017)}, volume~80, pages 48:1--48:14, 2017.

\bibitem{GreenwoodPerkins}
Priscilla Greenwood and Edwin Perkins.
\newblock A conditioned limit theorem for random walk and brownian local time
  on square root boundaries.
\newblock {\em Annals of Probability}, 11:227--261, 1983.

\bibitem{GS}
Geoffrey Grimmett and David Stirzaker.
\newblock {\em Probability and Random Processes}.
\newblock Oxford University Press, third edition, 2001.

\bibitem{Hannan57}
James Hannan.
\newblock Approximation to {B}ayes risk in repeated play.
\newblock {\em Contributions to the Theory of Games}, 3:97--139, 1957.

\bibitem{KP17}
Anna~R. Karlin and Yuval Peres.
\newblock {\em Game Theory, Alive}.
\newblock American Mathematical Society, 2017.

\bibitem{Klenke}
Achim Klenke.
\newblock {\em Probability Theory: A Comprehensive Course}.
\newblock Springer, 2008.

\bibitem{KKW19a}
Vladimir~A. Kobzar, Robert~V. Kohn, and Zhilei Wang.
\newblock New potential-based bounds for prediction with expert advice.
\newblock {\em arXiv preprint arXiv:1911.01641}, 2019.

\bibitem{KKW19b}
Vladimir~A. Kobzar, Robert~V. Kohn, and Zhilei Wang.
\newblock New potential-based bounds for the geometric-stopping version of
  prediction with expert advice.
\newblock {\em arXiv preprint arXiv:1912.03132}, 2019.

\bibitem{Kudzma82}
R.~Kud\u{z}ma.
\newblock Ito's formula for a random walk.
\newblock {\em Lithuanian Mathematical Journal}, 22:302--306, 1982.

\bibitem{LW94}
Nick Littlestone and Manfred~K. Warmuth.
\newblock The weighted majority algorithm.
\newblock {\em Information and computation}, 108(2):212--261, 1994.

\bibitem{LuoSchapire14}
Haipeng Luo and Robert~E. Schapire.
\newblock Towards minimax online learning with unknown time horizon.
\newblock In {\em Proceedings of ICML}, 2014.

\bibitem{AdaNormalHedge}
Haipeng Luo and Robert~E. Schapire.
\newblock Achieving all with no parameters: {A}da{N}ormal{H}edge.
\newblock In {\em Proceedings of The 28th Conference on Learning Theory},
  volume~40, pages 1286--1304, 2015.

\bibitem{Nesterov09}
Yurii Nesterov.
\newblock Primal-dual subgradient methods for convex problems.
\newblock {\em Mathematical Programming}, 120(1):221--259, 2009.

\bibitem{Perkins}
Edwin Perkins.
\newblock On the {H}ausdorff dimension of the {B}rownian slow points.
\newblock {\em Zeitschrift f\"{u}r Wahrscheinlichkeitstheorie und Verwandte
  Gebiete}, 64:369--399, 1983.

\bibitem{Peskir}
Goran Peskir and Albert Shiryaev.
\newblock {\em Optimal Stopping and Free-Boundary Problems}.
\newblock Birkh{\"{a}}user Verlag, 2006.

\bibitem{RY13}
Daniel Revuz and Marc Yor.
\newblock {\em Continuous martingales and Brownian motion}, volume 293.
\newblock Springer Science \& Business Media, 2013.

\bibitem{RogersWilliamsI}
L.~C.~G. Rogers and David Williams.
\newblock {\em Diffusions, Markov Processes and Martingales. Volume 1:
  Foundations}.
\newblock Cambridge University Press, second edition, 2000.

\bibitem{RogersWilliamsII}
L.~C.~G. Rogers and David Williams.
\newblock {\em Diffusions, Markov Processes and Martingales. Volume 2: It{\^o}
  Calculus}, volume~2.
\newblock Cambridge University Press, second edition, 2000.

\bibitem{RudinPrinciples}
Walter Rudin.
\newblock {\em Principles of Mathematical Analysis}.
\newblock John Wiley \& Sons, third edition, 1976.

\bibitem{SS}
Shai Shalev-Shwartz.
\newblock {\em Online Learning and Online Convex Optimization}.
\newblock Foundations and Trends in Machine Learning, 2011.

\bibitem{Shepp}
L.~A. Shepp.
\newblock A first passage problem for the {W}iener process.
\newblock {\em The Annals of Mathematical Statistics}, 38(6):1912--1914, 1967.

\bibitem{Vovk90}
Volodimir~G. Vovk.
\newblock Aggregating strategies.
\newblock {\em Proc. of Computational Learning Theory, 1990}, 1990.

\bibitem{Williams}
David Williams.
\newblock {\em Probability with Martingales}.
\newblock Cambridge University Press, 1991.

\end{thebibliography}

\end{document}